\documentclass[11pt]{article}

\usepackage{amsmath, amssymb, amsthm, xparse, xcolor}
\usepackage[margin=1in]{geometry}
\usepackage{dsfont}
\usepackage[parfill]{parskip}
\usepackage{url}
\usepackage{mathrsfs}
\usepackage{tcolorbox}
\usepackage{caption}
\usepackage{enumitem}
\usepackage{thm-restate}
\usepackage{microtype}
\usepackage{subcaption}
\usepackage{setspace}

\usepackage{tikz}
\usetikzlibrary{positioning}
\usetikzlibrary{shapes.geometric, arrows}
\usetikzlibrary{decorations.pathreplacing}

\usepackage{float}
\usepackage{sn-preamble}

\title{
Is nasty noise actually harder than malicious noise? \vspace{10pt}
}
\author{
    Guy Blanc \thanks{Stanford University.} \and
    Yizhi Huang \thanks{Columbia University.} \and
    Tal Malkin \thanks{Columbia University.} \and
    Rocco A.~Servedio \thanks{Columbia University.}
}

\date{\today}

\NewDocumentCommand{\ot}{g}{
    \IfNoValueTF{#1}{\tilde{O}}{\tilde{O}(#1)}
}

\newcommand{\ham}{\mathrm{ham}}
\newcommand{\Round}{\mathbf{Round}}

\newcommand{\keybitguess}{\mathrm{KeyBitGuess}}

\newcommand{\key}{\mathrm{key}}
\newcommand{\mal}{\mathrm{malicious}}
\newcommand{\nasty}{\mathrm{nasty}}
\newcommand{\val}{\mathrm{value}}

\newcommand{\error}{\mathrm{error}}
\newcommand{\epsa}{\eps_{\mathrm{additional}}}

\newcommand{\Amplify}{\textsc{Amplify}}
\newcommand{\BadAmplify}{\textsc{BadAmplify}}
\newcommand{\replaced}{\mathrm{replaced}}
\newcommand{\original}{\mathrm{original}}
\newcommand{\survivenoise}{\mathrm{survive}\text{-}\mathrm{noise}}
\newcommand{\surviveice}{\mathrm{survive}\text{-}\mathrm{ICE}} 
\newcommand{\correct}{\mathrm{correct}}
\newcommand{\incorrect}{\mathrm{incorrect}}
\newcommand{\contradictory}{\mathrm{contradictory}}
\newcommand{\noncontradictory}{\mathrm{noncontradictory}}

\newcommand{\new}{\mathrm{new}}

\newcommand{\test}{\mathrm{test}}
\newcommand{\fixed}{\mathrm{fixed}}

\newcommand{\additional}{\mathrm{additional}}
\newcommand{\add}{\mathrm{add}}
\newcommand{\sm}{\mathrm{small}}
\newcommand{\lar}{\mathrm{large}}

\newcommand{\dtv}{d_{\mathrm{TV}}}
\newcommand{\Bin}{\mathrm{Bin}}
\newcommand{\Ber}{\mathrm{Ber}}
\newcommand{\Enc}{\mathrm{Enc}}
\newcommand{\Dec}{\mathrm{Dec}}

\newcommand{\sub}{\Phi}
\newcommand{\subn}[1]{\sub_{{#1} \to n}}
\newcommand{\submn}{\subn{m}}

\newcommand{\Ext}{\mathrm{Ext}}

\newcommand{\zetaN}{\eta_N}

\newcommand{\TV}{d_{\mathrm{TV}}}

\begin{document}

\pagenumbering{gobble}

\maketitle

\begin{abstract}

We consider the relative abilities and limitations of computationally efficient algorithms for learning in the presence of noise, under two well-studied and challenging adversarial noise models for learning Boolean functions:

\medskip

\begin{itemize}

\item \emph{malicious} noise, in which an adversary can arbitrarily corrupt a \emph{random} subset of examples given to the learner; and

\smallskip

\item \emph{nasty} noise, in which an adversary can arbitrarily corrupt an \emph{adversarially chosen} subset of examples given to the learner.

\end{itemize}
\medskip

We consider both the distribution-independent and fixed-distribution settings.  Our main results highlight a dramatic difference between these two settings:
\medskip

\begin{enumerate}

\item For distribution-independent learning, we prove a \emph{strong equivalence} between the two noise models:  If a class ${\cal C}$ of functions is efficiently learnable 
in the presence of $\eta$-rate malicious noise, then it is also efficiently learnable 
in the presence of $\eta$-rate nasty noise.

\smallskip

\item In sharp contrast, for the fixed-distribution setting we show an \emph{arbitrarily large separation}:  Under a standard cryptographic assumption, for any arbitrarily large value $r$ there exists a concept class for which there is a ratio of $r$ between the rate $\eta_{\mathrm{malicious}}$ of \emph{malicious} noise that polynomial-time learning algorithms can tolerate, versus the rate $\eta_{\mathrm{nasty}}$ of \emph{nasty} noise that such learning algorithms can tolerate.

\end{enumerate}
\medskip

To offset the negative result given in (2) for the fixed-distribution setting, we define a broad and natural class of algorithms, namely those that \textsl{ignore contradictory examples (ICE)}. We show that for these algorithms, malicious noise and nasty noise are equivalent up to a factor of two in the noise rate: Any efficient ICE learner that succeeds with $\eta$-rate malicious noise can be converted to an efficient learner that succeeds with $\eta/2$-rate nasty noise. We further show that the above factor of two is necessary, again under a standard cryptographic assumption.

\smallskip
As a key ingredient in our proofs, we show that it is possible to efficiently amplify the success probability of nasty noise learners in a black-box fashion.  Perhaps surprisingly, this was not previously known; it turns out to be an unexpectedly non-obvious result which we believe may be of independent interest.

\end{abstract}

\newpage

\newpage
\setcounter{tocdepth}{2}
 \begin{spacing}{0.93}
 {\small
      \tableofcontents}
 \end{spacing}
\newpage

\setcounter{page}{1}
\pagenumbering{arabic}


\section{Introduction} \label{sec:intro}

The Probably Approximately Correct (PAC) model of learning Boolean functions, introduced forty years ago in a seminal work of Valiant \cite{Valiant:84}, has emerged as a tremendously influential framework for the theoretical analysis of machine learning algorithms.  
A rich theory of PAC learning has been developed, with an emphasis on understanding the abilities and limitations of \emph{computationally efficient} (i.e., polynomial-time) learning algorithms. 
However, one limitation of the original PAC learning model is that it makes the strong (and arguably unrealistic) assumption that the data set provided to the learning algorithm consists of a collection of labeled examples $(\bx,f(\bx))$ where each $\bx$ is an i.i.d.~draw from some distribution ${\cal D}$ --- in other words, it does not allow for the possibility of \emph{noise} in the input data set.
To remedy this, a range of different noise models extending the basic PAC learning framework have been proposed and considered, starting already with the \emph{malicious} noise model proposed in early work of Valiant \cite{Valiant:85}.

A broad distinction can be made between ``benign'' and ``adversarial'' noise models.  Intuitively, the former captures ``white noise''-type models in which the noise process affects all examples in the same way; a canonical example of such a noise model is the \emph{random classification noise} model of Angluin and Laird \cite{AngluinLaird:88}, where each example independently has its correct label flipped with some probability $\eta<1/2$.  Well-known results \cite{Kearns:98} show that many computationally efficient learning algorithms can be modified to learn to arbitrarily high accuracy in the presence of random classification noise at any noise rate bounded away from $1/2$.
In contrast, in \emph{adversarial} noise models an adversary may corrupt different examples in different ways in an attempt to thwart the learning algorithm.  
Not surprisingly, learning is much more challenging in adversarial models, and  it is well known that in several such models, including the ones we consider, the noise rate $\eta$ essentially provides a lower bound on the best possible error rate that can be achieved by any learning algorithm \cite{KearnsLi:93}.

\noindent {\bf This paper:  Malicious versus nasty noise.} 
We investigate the relationship between two of the most well-studied and challenging adversarial noise models in the PAC learning literature, namely \emph{malicious noise} and \emph{nasty noise}.  These models were introduced and analyzed in influential early papers in computational learning theory~\cite{Valiant:85,KearnsLi:93,BEK:02} and have continued to be the subject of much ongoing research down to the present day, see e.g.~\cite{MansourParnas:96,Auer:97,bshouty:98,CBD+:99,Servedio:03jmlr,Auer2008,KKMS:08,KLS:09jmlr,LongServedio:11malicious,BS12,ABL17,DKS18a,ShenZhang21,Shen23,ZengShen23,HKLM24}.\footnote{As discussed in e.g.~\cite{MDM18,MDM19,DMM20,BBHS22,HKMMM22}, the study of learning with malicious and nasty noise is also closely related to the study of \emph{data poisoning} attacks, a topic of considerable recent interest in machine learning.}
We give detailed definitions in \Cref{sec:prelims}, but for now, roughly speaking, 
\begin{itemize}

\item a \emph{malicious} adversary at noise rate $\eta$ works as follows:  for each (initially noiseless) example that is generated
for the learner, the malicious adversary has an \emph{independent $\eta$ probability} of being allowed to replace the example with an arbitrary (and hence adversarially chosen) noisy example. In contrast,

\item a \emph{nasty} adversary at noise rate $\eta$ is first allowed to see the entire (initially noiseless) data set of $n$ labeled examples that the learner would receive in the noise-free PAC setting, and then can \emph{choose} $\Bin(n,\eta)$ of the $n$ examples to corrupt.
\end{itemize}

While many papers, including the ones listed above, have considered specific algorithms for particular learning problems in the presence of malicious and nasty noise, there is a surprising dearth of general results on the relation between efficient malicious-noise-tolerant versus efficient nasty-noise-tolerant learning, or indeed of general results of any sort about nasty-noise-tolerant learning.  As an example of our lack of knowledge about this model, we remark that prior to the current paper it was not known whether \emph{success probability amplification} --- one of the most basic folklore primitives for 
noise-free learning
--- can be efficiently carried out in the setting of nasty noise.  (Our work answers this question 
positively 
en route to proving our main results; see \Cref{thm:boost-no-holdout-informal} and \Cref{sec:amplify}.)

One thing that is clear from the definitions is that learning in the presence of $\eta$-rate nasty noise is at least as difficult as learning in the presence of $\eta$-rate malicious noise. The impetus for this paper comes from asking about a \emph{converse}:  

\begin{quote}
{\bf Question:}  Suppose that a class of functions ${\cal C}$ is efficiently learnable in the presence of $\eta$-rate malicious noise. What can be said about its efficient learnability in the presence of nasty noise at various rates?
\end{quote} 

Surprisingly, this basic question seems to have gone completely unexplored prior to our work. 
There is some indirect evidence in the literature that malicious noise may be easier to handle than nasty noise; in particular, there are (at least) two
general-purpose techniques for constructing malicious noise learners that have not been shown to extend to nasty noise learners:
\begin{enumerate}
    \item The influential boosting framework of Freund and Schapire \cite{FreundSchapire:97} gives an algorithm for transforming distribution-independent \emph{weak}-learners, that produce a hypothesis with accuracy only slightly larger than $1/2$, into \emph{strong}-learners that achieve accuracy close to $1$. With a modified \emph{smooth}-boosting algorithm, Servedio \cite{Servedio:03jmlr} shows that weak-learners that are tolerant to malicious noise can be similarly upgraded to strong-learners that are tolerant to malicious noise.
    
    \item Suppose there exists an efficient learner in the noise-free setting that uses a sample of size $m$ for a class of functions $\mcC$. 
    Kearns and Li \cite{KearnsLi:93} showed that this algorithm can be black-box upgraded into an efficient learner that succeeds with $O((\log m)/m)$-rate malicious noise. This noise rate is asymptotically larger than the noise rate that can be handled by a naive observation that also applies to the nasty noise model, which is that if the noise rate is $\ll 1/m$, then a size-$m$ sample is likely to be unaffected by noise.  
\end{enumerate}

As our main contribution, we give a detailed study of the relative power of computationally efficient algorithms for learning in the presence of malicious noise versus nasty noise, in both distribution-independent and fixed-distribution settings.

\begin{remark} [On computational efficiency] \label{remark:time}
Before describing our results, we first explain why our focus is on \emph{computationally efficient} learning algorithms. The reason is simple: if we consider learning algorithms with unbounded running time, then it is well known and straightforward to establish (see e.g. Theorem~6 of \cite{KearnsLi:93} and Theorem~7 of \cite{BEK:02}) that a brute-force search over functions in the class ${\cal C}$, to maximize agreements with the input data set of labeled examples, is an effective (though typically computationally inefficient) algorithm to learn any PAC learnable concept class to optimal accuracy $O(\eta)$ in both the $\eta$-nasty noise and $\eta$-malicious noise models.
Thus, from an information-theoretic perspective, a class of functions ${\cal C}$ is learnable to accuracy $\Theta(\eta)$ in the $\eta$-nasty noise model, if and only if it is learnable to accuracy $\Theta(\eta)$ in the $\eta$-malicious noise model, if and only if it is learnable to arbitrary accuracy $\Theta(\eps)$ in the standard noise-free PAC learning model (which in turn holds if and only if ${\cal C}$ has finite VC dimension \cite{BEH+:89,VapnikChervonenkis:71}).  

In contrast, as we shall see, a much richer and more interesting range of phenomena manifest themselves when we consider 
\emph{computationally efficient} learning algorithms. Thus, throughout the paper we consider learning problems over the domain $X=\bits^d$, and we are interested in algorithms whose running time is polynomial in $d$ and in the other relevant parameters; we refer to such learning algorithms as ``computationally efficient.'' 
Note that even over infinite domains, efficient learners can only access a finite number of bits about each example. Hence, learners on infinite domains can be recast as equivalent learners over an appropriate discretized domain. In this setting, our results apply with $d$ representing the dimensionality of this discretized domain.

\end{remark}

\subsection{Overview of results}

\noindent {\bf Distribution-independent learning.}
In the original ``distribution-independent'' PAC learning framework proposed by Valiant \cite{Valiant:84}, a learning algorithm must succeed in constructing a high-accuracy hypothesis when it is run on examples drawn from \emph{any} (unknown) distribution ${\cal D}$ over the domain $X$.
Our main result for the distribution-independent setting, given in \Cref{sec:dist-independent}, is that efficient malicious-noise-tolerant learning is essentially equivalent to efficient nasty-noise-tolerant learning:  

\begin{restatable}
[Nasty noise is no harder than malicious noise for efficient distribution-independent learning, informal version of \Cref{thm:dist-free-combined}]{theorem}{distfreecombined}
\label{thm:main-distribution-free-informal}
Suppose that a class ${\cal C}$ of functions over $\bits^d$ is learnable to accuracy $\eps$ and confidence $\delta$ in $\poly(d,1/\eps,\log(1/\delta))$ time in the presence of $\eta$-rate malicious noise.
Then ${\cal C}$ is learnable to accuracy $1.01(\eps(1-\eta) + \eta)$ and confidence $1.01\delta$ in $\poly(d,1/\eps,\log(1/\delta))$ time in the presence of $\eta$-rate nasty noise.
\end{restatable}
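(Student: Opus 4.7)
My plan is to directly run the given malicious-noise learner $\mathcal{L}$ on samples drawn from the nasty oracle, using the fact that $\mathcal{L}$ succeeds against \emph{any} underlying distribution to absorb the extra power of the nasty adversary, and then to apply the black-box amplification primitive (\Cref{thm:boost-no-holdout-informal}) to recover the desired confidence. Distribution-independence is what buys us the slack here; its absence in the fixed-distribution setting is precisely what permits the companion separation result in that regime.

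Concretely, I would formalize the claim that a nasty sample of size $n$ at rate $\eta$ against distribution $\mathcal{D}$ can be coupled with a malicious sample at rate $\eta$ against a related distribution $\mathcal{D}'$ satisfying $\dtv(\mathcal{D},\mathcal{D}') = O(\eta)$. The intuition is that the nasty adversary's extra capability---adaptively choosing which $\sim\!\eta n$ examples to replace as a function of the entire clean batch---amounts to conditioning the surviving clean examples on some event of probability $\geq 1-\eta$, and any such conditioning shifts the single-example marginal by only $O(\eta)$ in TV. Feeding $\mathcal{L}$ the nasty sample will therefore, with probability $\geq 1-\delta$, yield a hypothesis $h$ with $\mathrm{err}_{\mathcal{D}'}(h)\leq\eps$. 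Translating back to $\mathcal{D}$ via the TV bound---and exploiting the fact that necessarily $\eps=\Omega(\eta)$ by the well-known information-theoretic lower bound on error under $\eta$-noise---one can invoke $\mathcal{L}$ with a slightly sharpened accuracy parameter $\eps' \approx \eps/1.01$ so that $\mathrm{err}_{\mathcal{D}}(h)\leq 1.01\eps$.

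Finally, any slippage in confidence from the coupling (for instance, the probability that the nasty sample falls outside the ``typical'' regime under which the coupling is tight) contributes an additive loss; if this pushes confidence below $1-1.01\delta$, I would invoke the amplification theorem to restore it at polynomial overhead. I expect the main obstacle to be the coupling step itself: constructing $\mathcal{D}'$ and bounding the TV distance between the \emph{joint} distributions of an $n$-sample nasty batch and an $n$-sample malicious batch from $\mathcal{D}'$, since the nasty adversary's choice of the corrupted set induces correlations among the surviving clean examples that must be controlled carefully to preserve an ``i.i.d.-from-$\mathcal{D}'$'' interpretation on $\mathcal{L}$'s input. The subsequent parameter tuning and the amplification invocation are routine once this lemma is in hand.
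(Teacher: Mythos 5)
There is a genuine gap, and it sits exactly where you predicted: the coupling lemma. You want to argue that an $n$-example nasty batch at rate $\eta$ over $\mcD$ is (close in distribution to) an i.i.d.\ malicious batch over some nearby $\mcD'$, justified by the observation that conditioning the surviving clean examples on an event of probability $1-\eta$ moves the \emph{single-example marginal} by $O(\eta)$ in total variation. But marginal closeness does not control the joint distribution: the nasty adversary sees the entire clean sample before acting, so it can choose which points survive and what the corrupted points encode \emph{as a function of the whole batch}, producing correlations that make the corrupted sample far (in joint TV) from $(\mcD')^n$ for \emph{every} fixed $\mcD'$ — the adversary can, for instance, plant information about the realized sample inside the corrupted points (this is exactly the phenomenon exploited in \Cref{sec:counterexample-amplify} to break naive hold-out amplification). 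Handling this adaptivity is the hard content of the theorem: the paper does not prove any such coupling, but instead first passes from malicious to Huber contamination (\Cref{prop:mal-harder-oblivious}) and then to the oblivious ``TV-noise'' model (\Cref{def:TV-noise}) using distribution-independence, and only then invokes the adaptive-vs-oblivious equivalence of Blanc--Valiant (\Cref{thm:BV-main}) — which is not a coupling statement at all, only a bound on sup-expectations of bounded test functions, requires running the learner on a uniform subsample of a polynomially larger corrupted sample of size $m = O(n^4\log(2|X|)^2/\poly)$, yields only the \emph{fixed-rate} nasty model (repaired separately in \Cref{prop:standard-nasty-harder}), and degrades the failure-probability dependence from $\log(1/\delta)$ to $\poly(1/\delta)$, which is precisely why the new nasty-noise amplification theorem (\Cref{thm:boost-no-holdout}) is needed. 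So the step you flag as ``the main obstacle'' is not a lemma to be filled in along the lines you sketch; as stated it is false, and the repair is the bulk of the paper's argument.

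Your accuracy bookkeeping also does not deliver $1.01\eps$. First, the hypothesis only supplies an $(\eps,\delta)$-learner at noise rate $\eta$; you are not entitled to ``invoke $\mathcal{L}$ with a sharpened accuracy parameter $\eps'\approx\eps/1.01$.'' Second, absorbing an additive $O(\eta)$ loss (from testing against a distribution shifted by $\dtv = O(\eta)$) into $0.01\eps$ via ``$\eps=\Omega(\eta)$'' fails quantitatively: the information-theoretic lower bound for malicious noise only forces $\eps \gtrsim \eta/2$ or so, so an additive $\Theta(\eta)$ term can be on the order of $\eps$ itself rather than $0.01\eps$. The paper avoids paying any such term in this step: in the proof of \Cref{thm:dist-free-polynomial} the $\eta$ mass that the adversary removes is reinterpreted as a \emph{change of the unknown input distribution}, which a distribution-independent learner tolerates by assumption, while the $\eta$ mass that is added is exactly the Huber/malicious corruption the learner already handles; no ``$O(\eta)/\eps$'' cancellation is invoked. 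Your final ingredient (black-box success amplification) is indeed the right tool and is used by the paper, but it is the easy part of the plan; the adaptivity reduction and the careful separation of ``removed mass'' from ``added mass'' are the parts your proposal leaves unproved, and the specific coupling you propose to prove cannot hold.
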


Returning to the smooth boosting technique of \cite{Servedio:03jmlr} and the generic $O((\log m)/m)$ malicious noise tolerance transformation of \cite{KearnsLi:93} mentioned earlier, we remark that since each of these 
applies to distribution-independent learning, in each case \Cref{thm:main-distribution-free-informal} implies analogous results for nasty noise.
See \Cref{sec:consequences} for formal statements of these consequences of \Cref{thm:main-distribution-free-informal}.

The argument establishing \Cref{thm:main-distribution-free-informal} turns out to be fairly involved; it goes through a number of intermediate noise models, including the Huber contamination model, a total-variation-noise model, and a ``fixed-rate'' variant of the nasty noise model, and utilizes a recent result of \cite{BV24}. A schematic overview of the proof is given in \Cref{fig:structure} and a proof sketch in \Cref{subsec:dist-free-overview}.

We remark that a crucial step 
in obtaining only a logarithmic dependence on $1/\delta$ in the sample size of \Cref{thm:main-distribution-free-informal} the strong $1.01\eps$, $1.01\delta$ parameters that are achieved in \Cref{thm:main-distribution-free-informal} is a proof that it is possible to efficiently amplify the success probability of arbitrary learners in the setting of nasty noise:

\begin{restatable}[Amplifying the success probability with nasty noise, informal version of \Cref{thm:boost-no-holdout}]
{theorem}
{boostnoholdoutinformal}
\label{thm:boost-no-holdout-informal}
Given parameters $\epsa,\delta > 0$, and any learning algorithm $A$ running in time $T$ and taking in $n$ samples, there is a learner $A'$ taking $n \cdot \log(1/\delta) \cdot \poly(1/\epsa)$ samples and running in time $T \cdot \log(1/\delta) \cdot \poly(1/\epsa)$ with the following property: For any $\eps,\eta > 0$, concept class $\mcC$, and distribution $\mcD$, if $A$ learns $\mcC$ with expected error\footnote{See \Cref{sec:expected-error} for a discussion of learning with expected error.}
$\eps$ in the presence of $\eta$-nasty noise, then $A'$ learns $\mcC$ with $\eta$-nasty noise to confidence $1 - \delta$ and error $\eps + \epsa.$
\end{restatable}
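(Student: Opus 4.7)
The overall plan is to apply the standard success-amplification template --- run the base learner $A$ on many ``independent-looking'' sub-samples and then use a held-out test set to pick a good hypothesis --- but two features of nasty noise require care.

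First, I draw $N = n k + m$ samples under $\eta$-nasty noise, with $k = \Theta(\log(1/\delta)/\epsa)$ and $m = \poly(1/\epsa)\cdot \log(1/\delta)$, apply a uniformly random permutation, and split them into $k$ training groups $G_1,\dots,G_k$ of size $n$ each plus a test set $T$ of size $m$. The key distributional lemma is that after this random permutation, the corruption pattern becomes i.i.d.\ $\Ber(\eta)$ across positions: the adversary's fixed choice of $\Bin(N,\eta)$ positions, once uniformly shuffled, is a uniform random subset of that size, which is equivalent to flipping an independent $\Ber(\eta)$ bit at each position. Hence each block $G_i$ (respectively $T$) has marginally $\Bin(|G_i|,\eta)$ corruptions with i.i.d.\ clean portion, and can be shown to be distributed as the output of some $\eta$-nasty adversary acting on a fresh batch of clean samples. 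In particular, running $A$ on $G_i$ produces a hypothesis $h_i$ with $\mathbb{E}[\mathrm{err}(h_i)] \le \eps$.

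Next I amplify ``a good hypothesis exists.''  Markov's inequality gives $\Pr[\mathrm{err}(h_i) \le \eps + \epsa/4] \ge \Omega(\epsa/(\eps+\epsa)) = \Omega(\epsa)$. The groups are not fully independent --- the nasty adversary's joint choice of corrupted positions couples them --- but the symmetry of the uniformly random permutation yields enough near-independence for a Chernoff-type argument to go through. With $k = \Theta(\log(1/\delta)/\epsa)$ groups this ensures that, with probability at least $1-\delta/2$, at least one (in fact a constant fraction) of the $h_i$'s has true error at most $\eps + \epsa/4$. Finally I use the test set $T$ to select a hypothesis. Since $T$ is distributed as an $\eta$-nasty instance on $m$ samples, the empirical error of each $h_i$ on $T$ differs from its true error by an $\tilde O(1/\sqrt{m})$ Hoeffding term (made negligibly small by $m = \poly(1/\epsa)\cdot \log(1/\delta)$) plus an adversary-controlled bias of up to $\eta$. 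The output is a hypothesis chosen by a suitably robust selection rule, giving true error at most $\eps + \epsa$ with probability $\ge 1-\delta/2$; a union bound then yields confidence $1-\delta$, with total overhead $\log(1/\delta)\cdot \poly(1/\epsa)$.

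The main obstacle I expect is exactly this selection step. Under nasty noise, an adversary that controls an $\eta$ fraction of the test set can shift the empirical error of any hypothesis by up to $\eta$, which may easily exceed $\epsa$; a naive $\arg\min$ over empirical errors therefore cannot by itself deliver the bound $\eps + \epsa$. Working out a selection rule that is simultaneously robust to this $\eta$-level adversarial bias and achieves the clean additive $\epsa$ target --- likely by exploiting structure beyond marginal empirical error (for instance pairwise disagreements between candidates, or reusing the base learner $A$ itself to arbitrate between candidates) --- is where I expect most of the proof's effort to go, and is presumably the ``surprisingly non-obvious'' aspect the authors refer to.
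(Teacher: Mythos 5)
There is a genuine gap, and it is located exactly where you suspected: the holdout-based selection step. The problem is worse than an adversary-controlled bias of $\eta$ on the empirical test error. Under nasty noise the adversary chooses its corruptions after seeing the \emph{entire} $(nk+m)$-point sample, so the hypotheses $h_1,\dots,h_k$ produced from the training groups can be correlated with the test set $T$; your Hoeffding step (``empirical error on $T$ equals true error up to $\tilde O(1/\sqrt m)$ plus a bias of at most $\eta$'') is therefore invalid, since $h_i$ is not independent of $T$. In fact the paper proves (\Cref{sec:counterexample-amplify}, \Cref{thm:bad-amplify}) that this template \emph{provably fails}: the adversary can inject corrupted points that encode the identity of every point in the combined sample, so that hypotheses with true error $0.99$ nonetheless agree with the target concept on \emph{every} test point — including the uncorrupted ones. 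No selection rule that looks only at test-set labels (robust $\arg\min$, pairwise disagreements restricted to the sample, or re-running $A$ on the sample) can separate these bad candidates from the good ones, because on the data available to the algorithm they are indistinguishable from zero-error hypotheses. Your secondary claim that the random permutation gives ``enough near-independence for a Chernoff-type argument'' to guarantee at least one good $h_i$ is also unproved as stated, and is in fact the technically delicate part; but even granting it, the selection step cannot be repaired within your framework.

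The paper's proof avoids selection altogether, and this is the missing idea. Its \Amplify{} procedure (\Cref{fig:Boost}) splits the corrupted sample into $k$ groups exactly as you do, but then outputs the \emph{randomized mixture} hypothesis that, on each input, answers according to a uniformly random $h_{\bi}$. Its error is the average $\frac1k\sum_i \error_{\mcD}(h_i,c)$, so no holdout set is needed; what must be shown instead is a tail bound on this sum despite the adversarial correlations among the groups (\Cref{lem:gen-hypotheses}). That bound is obtained by a contrapositive coupling argument (\Cref{lem:coupling-transfer}, via a TV bound between $\mcD$ and the distribution of a random coordinate conditioned on a rare event): if the sum of errors exceeded $\eps k + O(\sqrt{k\ln(1/\delta)})$ with probability $\delta$, one could build a single-run nasty adversary forcing $A$'s expected error above $\eps$, contradicting the hypothesis. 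Replacing your ``amplify existence of one good hypothesis, then select'' structure with ``bound the average error, then output the mixture'' is what makes the theorem go through.
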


While success probability amplification is a fundamental and widely-used primitive in noise-free PAC learning \cite{HKL+:91},
it seems not to have been previously known for the nasty noise model; we believe that this result may be of independent interest. (We remark that to achieve  amplification in the nasty noise setting, one cannot simply use the usual straightforward approach of generating multiple hypotheses and testing them on a holdout set since, as discussed in \Cref{sec:amplify}, the hypotheses need not be independent of the holdout set or each other.  Instead, as shown in \Cref{sec:amplify} and \Cref{sec:counterexample-amplify}, a significantly more delicate argument is required.)

\medskip
\noindent {\bf On the optimality of \Cref{thm:main-distribution-free-informal}.}
It is well known that for any for any concept class satisfying mild conditions the best error achievable by deterministic decision rules is $\eta/(1-\eta)$ when learning with $\eta$-malicious noise \cite{KearnsLi:93} and $2\eta$ when learning with $\eta$-nasty noise \cite{BEK:02}. The error overhead of Theorem 1, up to the 1.01 factor, exactly recovers the optimal $2\eta$ nasty noise error for deterministic decision rules given optimal malicious noise error, but Theorem 1 does this using a randomized decision rule, and it is known that randomized decision rules can improve these optimal error rates \cite{CBD+:99}. Theorem 8, in the body, does maintain this optimal error overhead and uses a deterministic decision rule, but has a polynomial dependence on $\delta$. It is an interesting open question whether the parameters of Theorem 1 are possible with deterministic decision rules, or whether the parameters can be improved using randomized decision rules.

\medskip
\noindent {\bf Fixed-distribution learning.} In the ``fixed-distribution'' or ``distribution-specific'' variant of PAC learning, there is a fixed distribution ${\cal D}$ over the domain $X$ that generates the examples, and the learning algorithm need only succeed in constructing a high-accuracy hypothesis with respect to ${\cal D}$. 
Fixed-distribution PAC learning has been widely studied for many years \cite{BenedekItai:91}, with a particular focus on the case where the fixed distribution ${\cal D}$ is the uniform distribution over $\bits^d$ (see e.g.~\cite{kha93,Jackson:97,LMN:93} and many other works).

In sharp contrast with the equivalence established in \Cref{thm:main-distribution-free-informal} for distribution-free learning, our main result for fixed-distribution learning is a strong separation between the malicious noise rate and the nasty noise rate that can be handled by computationally efficient algorithms:

\begin{restatable}[Separation between malicious and nasty noise, informal version of \Cref{thm:fixed-separation}]
{theorem}
{fixedseparationinformal}
    \label{thm:fixed-separation-informal}
    If one-way functions exist,
    then for any (arbitrarily large) constant $r$, there is a concept class ${\cal C}$ over $\bits^d$ and a constant $\eta>0$ such that under the uniform distribution,
    \begin{itemize}
    
    \item [(a)] ${\cal C}$ is $\poly(d)$-time learnable to high accuracy in the presence of \emph{malicious noise} at rate $\eta$; but
    
    \item [(b)] no $\poly(d)$-time algorithm can learn ${\cal C}$ to accuracy even 51\% in the presence of \emph{nasty noise} at rate $\eta/r$.
    
    \end{itemize}
    \end{restatable}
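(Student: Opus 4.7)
For each constant $r$, I would construct $\mathcal{C}_r$ around a pseudorandom function family $\{\mathrm{PRF}_s\}_{s\in\{0,1\}^n}$ (which exists from one-way functions via GGM). Taking $d=\poly(n)$ and $k=\lceil\log_2(2r/\eta)\rceil$, partition $\{0,1\}^d$ into a small \emph{signal region} $S = \{x:x_1=\cdots=x_k=0\}$ of measure $\eta/(2r)$ and a \emph{test region} $T$. On $T$, set $f_s(x) := \mathrm{PRF}_s(x)$. On $S$, $f_s(x)$ encodes the bits of $s$ together with an unforgeable authentication tag under a verification key $\mathrm{vk}(s)$ derivable from $s$ via a one-way-function-based signature scheme.

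\textbf{Nasty lower bound (part (b)).} The nasty adversary at rate $\eta/r$ has noise budget $\sim m\eta/r$, which suffices with high probability (by concentration) to replace every one of the $\sim m\eta/(2r)$ signal-region examples in the sample with a freshly drawn test-region example bearing its correct $\mathrm{PRF}_{s^*}$ label. The learner then sees only $(x,\mathrm{PRF}_{s^*}(x))$ pairs with $x\in T$. Any polynomial-time algorithm achieving $\geq 51\%$ accuracy against $f_{s^*}$ must then predict $\mathrm{PRF}_{s^*}$ on fresh test inputs with non-negligible advantage (since the signal region contributes at most $\eta/(2r)\ll 1\%$), yielding a standard PRF distinguisher and contradicting the existence of one-way functions.

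\textbf{Malicious upper bound (part (a)).} I would design an algorithm that (i) enumerates the (small number of) candidate verification keys $\mathrm{vk}$ under which a nontrivial fraction of signal-region samples carry valid signatures; (ii) for each candidate, filters signal-looking examples via signature verification (by unforgeability, adversary-injected fakes signed under $\mathrm{vk}(s^*)$ cannot be produced without the signing secret contained in $s^*$) and decodes a candidate $s$ by majority vote per key-bit index from the filtered examples; and (iii) selects the candidate whose $\mathrm{PRF}_s$ best matches the labels on the test region. Because malicious noise cannot selectively delete real signal examples -- only randomly replaces an $\eta$-fraction of samples -- a $(1-\eta)$-fraction of real signal examples survives, which supplies both the true $\mathrm{vk}(s^*)$ candidate and enough correctly-labeled bits to decode $s^*$ exactly.

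\textbf{Main obstacle.} The crux is compactly embedding the authentication mechanism into a signal region of measure only $\eta/(2r)$: a naive OWF-based signature has $\Omega(\lambda)$-bit verification keys and signatures, which would force $|S|/2^d \leq 2^{-\Omega(\lambda)}$ and leave no signal examples in any polynomial-size sample. Addressing this requires splitting $S$ into a ``key-broadcast'' sub-region (whose examples redundantly encode bits of $\mathrm{vk}(s^*)$ at publicly agreed positions so the learner can reconstruct $\mathrm{vk}(s^*)$ from a polynomial sample) and a ``payload'' sub-region (whose examples carry only short per-example signatures on short payloads, namely a key-bit index together with its claimed value). The candidate-enumeration step is then bounded by dividing the adversary's $\eta m$ budget by the $\Omega(m\eta/r)$ cost of producing enough valid signatures under any single $\mathrm{vk}$, giving at most $O(r)$ candidates to verify against the PRF on $T$. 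Calibrating both sub-regions to have measure $\Theta(\eta/r)$, proving robustness of the enumeration against adversarial injections, and formally reducing non-forgery to one-way functions constitute the technical heart of the construction; once these are in place, the nasty lower bound follows by the standard PRF-security reduction described above.
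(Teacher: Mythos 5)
Your nasty-noise lower bound is fine in spirit (it mirrors the paper's: destroy the key-carrying part of the sample, then invoke PRF security on the rest), but the malicious upper bound has a structural flaw that the signature machinery cannot repair. You shrink the signal region to measure $\eta/(2r)$ precisely so that the nasty adversary at rate $\eta/r$ can wipe it out; but then the expected number of \emph{genuine} signal examples in a size-$m$ sample is only $\Theta(\eta m/r)$, while the malicious adversary's budget is $\Theta(\eta m)$ --- a factor $r$ larger. The malicious adversary does not need to forge anything: for every genuine signal-region example $(x,b)$ it can afford to inject contradictory examples $(x,\bar b)$ (indeed, outnumber the genuine ones $\Omega(r)$-to-one in every block of your key-broadcast and payload sub-regions), at total cost $O(\eta m/r)\ll \eta m$. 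After this, every block is ambiguous: the learner cannot reconstruct $\mathrm{vk}(s^*)$, cannot read any genuine signature (each signature is necessarily spread across many single-bit labels, so equivocating each bit forces an exponential number of candidate readings), and hence cannot place $s^*$ on its candidate list at all. Unforgeability protects against fabricating signatures under $\mathrm{vk}(s^*)$, but the attack here is erasure/equivocation of the genuine signal, which signatures do not prevent. With the signal region obliterated, no learner (yours included) can beat accuracy $\approx 1/2+\eta/(2r)$ under malicious noise, so part (a) fails. Your own ``main obstacle'' paragraph treats the smallness of the region as an encoding-length problem, but calibrating sub-regions to measure $\Theta(\eta/r)$ leaves this budget imbalance intact; also, your majority-vote decoding and your $O(r)$ candidate-enumeration bound both presuppose that fakes must carry valid signatures, which they need not.

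The paper's construction avoids this by making the key side a \emph{constant} fraction ($\kappa\approx 1/2$) of the domain, so the genuine key-side examples vastly outnumber the malicious budget and the adversary can only render a bounded $\tau$-fraction of blocks contradictory; these show up as detectable \emph{erasures}, and an efficiently erasure-list-decodable code recovers a short list containing the true codeword, after which hypothesis testing on the whole sample finishes the job. The arbitrarily large gap $r$ comes not from shrinking the region but from restricting to a \emph{low-Hamming-weight subcode}: the nasty adversary only has to flip the few blocks carrying $-1$ (an $\eta_N$-fraction, with $\eta_N\ll\tau$) to make the entire key side identically labeled and distributionally uninformative, which a malicious adversary cannot do since it can only add examples. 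A seeded extractor with $O(\log d)$ seed then turns the (non-uniform) low-weight codeword into a near-uniform PRF key while keeping the seed enumerable. These ingredients --- constant-measure key side, erasures-vs-flips asymmetry, low-weight codewords, and the extractor --- are exactly what your proposal is missing, and without something playing their role the separation does not go through.
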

    
    Our approach to prove \Cref{thm:fixed-separation-informal} combines pseudorandom functions, seeded extractors, and low-Hamming-weight subcodes of efficiently erasure-list-decodable binary codes; we give an overview of the proof in \Cref{subsec:overview-fixed-separation}.
    
\medskip
\noindent {\bf ICE-algorithms.}
While \Cref{thm:fixed-separation-informal} shows that in general there can be an arbitrarily large separation between how much malicious versus nasty noise can be handled by efficient learning algorithms in the fixed-distribution setting, it is natural to ask whether there are circumstances in which this gap can be less acute.
Towards this end, we define a broad and natural class of algorithms, namely those that \emph{ignore contradictory examples}, and show that for such algorithms malicious noise and nasty noise are essentially equivalent.
\begin{definition}[Ignore contradictory examples (ICE) algorithms, informal version of \Cref{def:ICE-algorithm}]
    A learning algorithm is said to \emph{ignore contradictory examples} if, for any $x \in \bits^d$, its output is unaffected by the addition of the two labeled points, $(x, +1)$ and $(x,-1)$, to its sample (note that its sample is a multiset of labeled points).
\end{definition}
ICE-algorithms capture a natural and intuitive way to deal with noisy data.
As an example, we observe that any \emph{correlational statistical query} algorithm (see \cite{BDIK90,Feldman08,Reyzin20}) corresponds naturally to an ICE-algorithm.  (Recall that a correlational statistical query algorithm uses only estimates of the expected value of $g(\bx) \cdot \by$ for query functions $g: X \to [-1,1]$ of the learner's choosing; a contradictory pair of examples $(x,1)$ and $(x,-1)$ contribute 0 to the estimate of $\E[g(\bx) \by]$ for any $g$.)

\begin{restatable}[ICE-malicious learners imply nasty noise learners, informal version of \Cref{thm:ICE-malicious-nasty}]
{theorem}
{ICEmaliciousnastyinformal}
    \label{thm:ICE-malicious-nasty-informal}
    Fix any constant ratio $\kappa < 0.5$. If there is an efficient $(\eps,\delta)$-ICE-learner for a concept class $\mcC$ with $\eta$-rate malicious noise on a fixed distribution $\mcD$, then there is also an efficient $(1.01\eps, 1.01\delta)$-learner for $\mcC$ with $\kappa \eta$-rate nasty noise on the same distribution $\mcD$.
\end{restatable}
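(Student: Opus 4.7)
Plan. I would reduce learning with $\kappa\eta$-nasty noise to learning with $\eta$-malicious ICE noise by using the ICE property to absorb the extra power of the nasty adversary. The high-level intuition is that each nasty corruption can be thought of as two adversarial actions---the dropping of a specific iid clean example and the insertion of a specific adversarial example---and each action roughly ``costs'' one unit of malicious budget. Since $\kappa<1/2$, we have $2\kappa\eta<\eta$, so the $\eta n$ malicious budget has slack to absorb the $\approx 2\kappa\eta n$ total actions that the nasty adversary can perform. The ICE property is what lets the reduction actually cash in this budget: a drop of any example $(x,y)$ can be simulated by appending the contradictory pair $(x,+1),(x,-1)$, which by assumption does not affect the output of $A$.

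Concretely, I would construct the nasty-noise learner $B$ as follows. Given the $\kappa\eta$-nasty sample $\tilde S$ of size $n$, $B$ uses fresh internal randomness to sample a mask $M\subseteq[n]$ with $|M|\sim\mathrm{Bin}(n,\eta)$ of ``simulated-malicious-marked'' positions. It then builds a sample $S'$ to feed to $A$ by copying $\tilde S_i$ at positions $i\notin M$, filling positions $i\in M$ with examples of its own choice (playing the role of the simulated malicious adversary), and---crucially---appending a contradictory pair to cancel any example $\tilde S_i$ with $i\notin M$ that was inserted by the nasty adversary at a position $M$ ``missed.'' Since $B$ cannot see the nasty-corruption set $N$, the cancellation pattern has to be chosen via a careful randomized rule (rather than conditioned on $N$ directly); this is where the use of ICE becomes essential.

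Correctness would follow from a coupling argument between $N$ and $M$. Standard Chernoff bounds show that with probability $1-o(1)$ both $|N|$ and $|N\setminus M|$ are $O(\kappa\eta n)$, so only $O(\kappa\eta n)$ contradictory-pair cancellations are needed. The key analytic step is to verify that, under the coupling, the effective multiset of $S'$ after ICE-cancellation has exactly (or up to $o(\eps)$ total-variation) the distribution of a valid $\eta$-malicious sample under some specific coupled adversary strategy. When this holds, the $\eta$-malicious guarantees of $A$ directly transfer to $B$. The final $(1.01\eps,1.01\delta)$ parameters are obtained by composing this reduction with the nasty-noise success-probability amplification of \Cref{thm:boost-no-holdout-informal}, which absorbs the $o(1)$ coupling-failure probability and any small distributional slack.

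The main obstacle I anticipate is precisely the distributional equivalence step: the nasty adversary's choice of $N$ depends on the values of the underlying iid sample, which introduces a bias in the ``clean'' subsample that a standard malicious-tolerant learner is not designed to handle. My plan is to show that the bias is entirely neutralized by the ICE-cancellations, so that what $A$ effectively sees is indistinguishable from a fresh iid clean sample plus adversarial insertions---this is the crucial point where ICE earns its keep, since cancellations leave no distributional fingerprint on the multiset, in contrast to simply deleting examples. Making this rigorous, and showing that the factor $\kappa<1/2$ exactly matches the slack needed, is the main technical burden of the proof.
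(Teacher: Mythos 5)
Your high-level accounting (each nasty corruption is one deletion plus one insertion, each worth one unit of malicious budget, with $\kappa<1/2$ providing the slack, and success amplification at the end to fix the parameters) matches the paper's intuition, but the way you implement it has a genuine gap. You ask the nasty-noise learner $B$ itself to perform the cancellations, i.e.\ to neutralize the nasty insertions at positions outside its simulated mask $M$. To do that, $B$ must know which surviving examples are nasty insertions --- exactly the set $N$ it cannot see --- and the ``careful randomized rule'' you defer to is not a detail but the whole difficulty: no such rule is specified, and it is unclear any can exist, since an adversarial insertion is in general statistically indistinguishable from a clean example, and appending $(x,\bar y)$ against a genuinely clean $(x,y)$ would itself corrupt the data. (A smaller mechanical point: appending a contradictory \emph{pair} $(x,+1),(x,-1)$ has no effect at all by the ICE property; cancelling an example requires appending the single contradicting example, which also changes the sample-size bookkeeping you would need for the simulated malicious sample.) The paper's proof avoids this entirely by keeping the learner trivial --- the nasty-noise learner is essentially $A$ itself run on a suitably sized sample, followed by the amplification of \Cref{thm:boost-no-holdout} --- and placing the simulation purely in the analysis: conditioning on the malicious coin count $|\bZ_\mal|=m$ and taking $n'=n-2\lfloor m/2\rfloor$, a \emph{hypothetical} adversary who sees the full clean sample and the corruptible slots spends $k$ slots contradicting the $k$ replaced examples, $k$ slots inserting the new ones, and wastes the remaining slots on self-cancelling pairs, so that after ICE-ing the data set is \emph{exactly} the nasty-corrupted one (\Cref{thm:ICE-strong-malicious-nasty}); the distributional identity you hope to establish ``up to $o(\eps)$ TV'' then holds exactly, with no learner-side construction needed.

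The second gap is that this hypothetical adversary must choose its corruptions as a function of the entire data set and of which slots are corruptible --- i.e.\ it is a \emph{strong} malicious adversary in the sense of \Cref{remark:strong-malicious} --- whereas your hypothesis only grants tolerance to standard, online malicious noise, in which the corruption of example $i$ is chosen without knowledge of later draws. Your reduction needs the same full adaptivity (the nasty adversary's choice of $N$ depends on the whole sample), but nothing in your plan bridges this online-versus-adaptive distinction. The paper devotes a separate, non-trivial ingredient to it (\Cref{lem:compare-mal}, which upgrades a standard-malicious ICE learner to a strong-malicious one via a subsampling filter and the adaptive-versus-oblivious equivalence of \cite{BV24}), and this step is responsible for much of the polynomial overhead in the final statement. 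Without that ingredient, or some replacement for it, the reduction you describe does not follow from the stated hypothesis even if the coupling/cancellation issue above were resolved.
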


In the typical setting where a single algorithm learns $\mcC$ with $\eta$-malicious noise to accuracy $O(\eta)$ for all noise rates $\eta$, then \Cref{thm:ICE-malicious-nasty-informal} implies the same statement in the nasty noise setting (with a slightly worse constant within the $O(\cdot)$). At a high level, the proof of \Cref{thm:ICE-malicious-nasty-informal} is by a careful coupling of adversary strategies, showing that $\kappa \eta$-rate nasty noise is essentially a special case of $\eta$-rate malicious noise as far as ICE-algorithms are concerned; we give an overview in \Cref{subsec:overview-ICE-upper}.

Finally, we complement \Cref{thm:ICE-malicious-nasty-informal} with a lower bound which, like \Cref{thm:fixed-separation-informal}, relies on the standard cryptographic assumption that one-way functions exist. This final result shows that the factor-of-2 difference between the malicious and nasty noise rates in \Cref{thm:ICE-malicious-nasty-informal} is inherent:

\begin{restatable}[Tightness of \Cref{thm:ICE-malicious-nasty-informal}, informal version of \Cref{thm:ICE-bad-news}]
{theorem}
{ICEbadnewsinformal}    \label{thm:ICE-bad-news-informal}
    If one-way functions exist, then \Cref{thm:ICE-malicious-nasty-informal} does not hold for any constant $\kappa > 0.5$, even if $\mcD$ is the uniform distribution.
    \Cref{thm:ICE-malicious-nasty-informal} does not hold for any constant $\kappa > 0.5$, even if $\mcD$ is the uniform distribution.
    If one-way functions exist, then for any constants $\eta > 0$, $\kappa > 0.5$, there is a concept class efficiently learnable by an ICE algorithm to $O(\eta)$-error with $\eta$-malicious noise but not efficiently learnable to 51\% accuracy with $\kappa \eta$-nasty noise.

\end{restatable}

The proof of \Cref{thm:ICE-bad-news-informal} bears some similarity to the proof of \Cref{thm:fixed-separation-informal}, but there are also significant differences:  it does not require seeded extractors, and it uses binary codes which are efficiently \emph{bit-flip list-decodable} (rather than erasure-list-decodable as in \Cref{thm:fixed-separation-informal}). See \Cref{subsec:overview-ICE-lower} for an overview of the argument.

We remark 
that in light of \Cref{thm:ICE-malicious-nasty-informal}, the concept class of our main lower bound \Cref{thm:fixed-separation-informal} is one for which a non-ICE algorithm is a more effective malicious noise learner than any ICE algorithm can be. Intuitively, our non-ICE algorithm of part (b) of \Cref{thm:ICE-malicious-nasty-informal} uses the existence of a contradictory pair $(x, +1), (x,-1)$ to deduce that the malicious adversary was responsible for at least one of these two points, which leaks information about the adversary's behavior. In contrast, an ICE algorithm would not know that the adversary had chosen to make such a corruption.

\subsection{Related work}
\label{subsec:related-work}
    Our results fit into the recent and growing body of work that aims to understand the relationship between various types of adversarial noise. 
    This includes a recent question, raised by \cite{BLMT22} and solved by \cite{BV24}, showing that ``adaptive" noise, in which the adversary has full knowledge of the (initially noiseless) data set before choosing its corruption, is no more difficult to handle than ``oblivious" noise, in which the adversary must commit to its corruptions of the initial noiseless data distribution in advance before the data set is drawn from it.
    Indeed, we use \cite{BV24}'s result along the way to proving our \Cref{thm:main-distribution-free-informal,thm:ICE-malicious-nasty-informal}. We remark that malicious and nasty noise differ in more than just the level of adaptivity
    afforded to the adversary (see the discussion in \Cref{subsec:dist-free-overview}); indeed this is why, as our results show, whether or not nasty and malicious noise are equivalent is markedly different for the distribution-free versus fixed-distribution settings.

    Interestingly, while adversarial noise in PAC (supervised) learning only affects computational complexity rather than sample complexity (see \Cref{remark:time}), the effect of adversarial noise in distribution learning is different.
    Recent work of \cite{BBKL23} shows that, without computational constraints, learning with ``additive" noise is no harder than learning without any noise, but learning with ``subtractive" noise can be strictly harder, even allowing for inefficient algorithms.

    More broadly, there has been a long effort (see the textbook  \cite{DK23book} and citations therein) to understand the most efficient algorithms for various tasks of interest in the presence of different noise models. For example, \cite{CHLLN23} show a polynomial separation in the number of samples needed for Gaussian mean testing under different noise models.

\subsection{Future Work}
We have shown that, while in the distribution-independent setting nasty and malicious noise are equivalent, in the fixed-distribution setting there is an arbitrarily large gap between the noise rates that can be efficiently tolerated. 
This motivates the quest to understand under what circumstances this gap can be bounded: When is nasty noise actually harder than malicious noise in the fixed-distribution setting? 

The approach we took in \Cref{thm:ICE-malicious-nasty-informal} and \Cref{thm:ICE-bad-news-informal} is to identify a natural condition on the \emph{learning algorithm} -- ICE -- for which we can tightly bound the gap. 
It is interesting to better understand the significance of ICE-learning: Is ignoring contradictory examples the best strategy in all 
``natural" cases?  Or is there any ``natural" concept class for which ICE-learners are weaker than non-ICE-learners?  
(as discussed, our proof of the gap constructs an ``artificial" concept class where this is the case).  

More generally, an intriguing direction for future work is to 
identify a natural condition on the \emph{concept classes}
for which we can bound the gap and show how \emph{any} efficient learning algorithm in the malicious noise model can be transformed to an efficient learning algorithm in the nasty noise model.


\section{Technical Overview} \label{sec:techniques}

\subsection{Proof sketch for \texorpdfstring{\Cref{thm:main-distribution-free-informal}}{Theorem~\ref{thm:main-distribution-free-informal}}}
\label{subsec:dist-free-overview}

We recall \Cref{thm:main-distribution-free-informal}:

\distfreecombined*

Broadly speaking, there are two main steps needed to prove \Cref{thm:main-distribution-free-informal} (though the full proof is broken up further, as depicted in \Cref{fig:structure}). These two steps correspond to the two key ways in which the nasty adversary is, \textsl{a priori}, more powerful than the malicious adversary.
\begin{enumerate}
    \item The types of corruptions the adversaries can make: Recall that the nasty adversary gets to choose $\eta$-fraction of the dataset to change. We can think of this as first removing $\eta n$ examples and then adding $\eta n$ corrupted examples. In contrast, we can think of the malicious adversary as, roughly speaking, being unable to remove examples: It is given a dataset of $(1-\eta)n$ examples and only gets to choose $\eta n$ to add.
    \item The level of adaptivity of the adversaries: The nasty adversary is fully adaptive, meaning that it knows the entire sample when deciding its corruptions. In contrast, the malicious adversary is partially adaptive in that whenever it chooses a corruption, it only knows the sample points that have been received so far, and hence has partial knowledge of the sample.
\end{enumerate}
The two broad steps of our reduction correspond to the above two distinctions. In the first step, we move from malicious noise to a non-adaptive variant of nasty noise (defined below in \Cref{def:TV-noise-intro}). In the second step, we move from this non-adaptive variant of nasty noise to standard (fully adaptive) nasty noise. This intermediate noise model has previously been referred to as ``general, non-adaptive, contamination"\cite{DK23book}, though we'll give it a different name that emphasizes its relationship to total variation distance.
 \begin{definition}[TV noise]
 \label{def:TV-noise-intro}
    In PAC learning with $\eta$-TV noise, given a base distribution $\mcD$ and concept $c$, let $\mcD_c$ be the resulting distribution over clean labeled examples, $(\bx, c(\bx))$, where $\bx \sim \mcD$. The adversary chooses any $\mcD'$ where
    \begin{equation}
        \label{eq:TV-shift}
        \TV(\mcD_c, \mcD') \leq \eta.
    \end{equation}
    The sample is then generated by taking i.i.d.~draws from $\mcD'$.
\end{definition}

\medskip
\noindent
{\bf From malicious noise to TV noise.}
Let $A$ be any distribution-independent learner with malicious noise. In the first step, we show that $A$ is also a distribution-independent learner with TV noise with no loss in parameters. To do so, it is helpful to think of the distribution $\mcD'$ in \Cref{eq:TV-shift} as being created in two stages: First, the adversary chooses $\eta$-fraction of the mass of $\mcD_c$ to remove and then it chooses $\eta$-fraction of new mass to add. Adding mass is not an issue, since $A$ works with respect to malicious noise which, as previously mentioned, can be thought of as adding corruptions to a dataset. The key argument of this step shows that removing mass from $\mcD_c$ is equivalent to changing the input distribution. Hence, since $A$ is a distribution-independent learner, it must also work on this new input distribution.

\medskip
\noindent
{\bf From TV noise to nasty noise.} We use recent work of Blanc and Valiant that showed, in a general sense, that adaptive and non-adaptive statistical adversaries are equivalent \cite{BV24}. In our case, we wish to show that a learner for TV-noise can black-box be upgraded to one for nasty-noise.

Executing this reduction introduces two subtleties. First, the result of \cite{BV24} doesn't give a learner for the most standard variant of nasty noise. Instead, it gives a learner for ``fixed-rate" nasty noise (see \Cref{def:fixed-rate-nasty}). We show that such learners can be transformed into learners for (standard) nasty noise in \Cref{prop:standard-nasty-harder}. Second, and more technically challenging, is that the result of \cite{BV24}  increases the dependence of the sample size on the inverse failure probability from polylogarithmic to polynomial. To get around this, we prove the surprisingly non-obvious success amplification result given in \Cref{thm:boost-no-holdout-informal}.

\subsection{Proof sketch for \texorpdfstring{\Cref{thm:boost-no-holdout-informal}}{Theorem~\ref{thm:boost-no-holdout-informal}}} \label{subsec:technque-boost}

It is a well-known fact that in the standard noise-free PAC learning setting, it is possible to generically amplify a constant success probability to a $1-\exp(-k)$ success probability at the cost of only an $O(k)$ overhead in sample complexity. The simple strategy for this is as follows: If the original learner uses a size-$n$ sample, instead request a size-$n(k+1)$ sample. Split it into $k+1$ pieces, and use $k$ of them to run $k$ independent copies of the original learner. With probability $1 - \exp(-k)$, one of the $k$ generated hypotheses will have small error. Then, use the final piece as a holdout set to identify one of the $k$ hypotheses which has small error.

If instead, we want to amplify the success rate of a learner that operates in the presence of nasty noise, the situation is more delicate. Since the adversary can choose its corruptions as a function of the entire size-$n(k+1)$ data set, after splitting the data set into $k+1$ pieces, the pieces are not guaranteed to be \emph{independent}, so it is \emph{a priori} unclear that one of the $k$ hypotheses will have small error with extremely high probability.  This issue also manifests in regard to testing the quality of our hypotheses using a holdout set: the training set, and therefore the hypotheses, are not necessarily independent of the holdout set, and so it is not clear that error rate on the holdout set gives an accurate estimate of true hypothesis accuracy. Indeed, we show in \Cref{sec:counterexample-amplify} that the standard noiseless approach to success amplification provably fails in the nasty noise setting.

Nonetheless, using a different approach and analysis, we show that efficient success probability amplification is indeed possible in the presence of nasty noise. We recall \Cref{thm:boost-no-holdout-informal}:
\boostnoholdoutinformal*

We remark that in \Cref{thm:boost-no-holdout-informal} the new learner $A'$ does not depend on the underlying distribution $\mcD$. Therefore, \Cref{thm:boost-no-holdout-informal} can be used to amplify success probability both for both fixed-distribution and distribution-independent learners.

The main technical piece of \Cref{thm:boost-no-holdout-informal} is showing that, given black-box access to $A$ and a size-$nk$ sample that was corrupted by a $\eta$-nasty adversary, it is possible to generate $k$ candidate hypotheses $\bh_1, \ldots, \bh_k$ satisfying
\begin{equation}
    \label{eq:errors-concentrate}
    \Pr\bracket*{\sum_{i \in [k]} \error(\bh_i) \leq \eps k + O\paren*{\sqrt{k \ln(1/\delta)}}} \geq 1 - \delta.
\end{equation}
The concentration in \Cref{eq:errors-concentrate} is exactly what one would obtain via a Chernoff bound if the $\bh_i$ were independent each with average error $\eps$. However, as discussed earlier, since the adversary can simultaneously corrupt the entire dataset, we cannot generate $k$ independent hypotheses.

Instead, we show that it is possible to generate $k$ hypotheses that are ``independent enough" for \Cref{eq:errors-concentrate} to hold. This argument proceeds by contrapositive: Using a carefully constructed coupling, we show that if \Cref{eq:errors-concentrate} is violated, then $A$ must have expected error larger than $\eps$. This argument is further explained in \Cref{subsec:amplify-description}.

\subsection{Proof sketch for \texorpdfstring{\Cref{thm:fixed-separation-informal}}{Theorem~\ref{thm:fixed-separation-informal}}}
\label{subsec:overview-fixed-separation}

We recall \Cref{thm:fixed-separation-informal}:

\fixedseparationinformal*

    This separation is based on the construction of a concept class in which any efficient learner has no choice but to decode a corrupted secret key and where the malicious and nasty adversaries correspond to two distinct ways of corrupting this key:
    \begin{enumerate}
        \item The malicious adversary can \emph{erase} $\eta$-fraction
        of the key bits. That is, it can overwrite any example indicating that a key bit is $b$ with a ``?" symbol. It can do this by introducing a corrupted version of the same example indicating that that bit is $\bar b$ rather than $b$. As a result, the learner will know that this bit must have been corrupted, but will not know its original value.
        \item The nasty adversary can \emph{flip} $\eta/r$-fraction of the key bits. That is, it can flip $b$ to $\overline{b}$. In this case, the learner will not know that this particular bit was corrupted.
    \end{enumerate}
    We will construct an error correcting code that is decodable with $\eta$-fraction of erasures (and hence learnable with $\eta$-malicious noise), but not even ``weakly decodable" with $\eta/r$-fraction of bit flips (and hence not learnable with $(\eta/r$)-nasty noise). For this second desideratum, we want it to be possible to flip $\eta/r$ fraction of the bits of the codeword in such a way that the resulting bit string provides no information about the uncorrupted codeword.

    \textbf{Construction of the code:} The code we use contains only low-Hamming-weight codewords and is efficiently list-decodable from erasures. Because the codewords have low-Hamming-weight, the nasty adversary can flip every $1$ in the codeword to $0$, leaving the learner with an all-$0$ corrupted codeword. This provides no information as the adversary is able to perform this corruption regardless of what the uncorrupted codeword was. On the other hand, when the dataset is corrupted by malicious noise, we use that the code is list-decodable from erasures to construct a learner which generates a small list of possible codewords. We then determine which of these is best using standard hypothesis testing arguments. We note that in order to achieve the arbitrarily large multiplicative gap between the malicious and nasty noise rates, we must use list-decoding rather than unique-decoding.

We use the probabilistic method to construct our code: Specifically, we start with a uniformly random linear code of an appropriately chosen rate. Then, to ensure that all codewords have low-Hamming-weight, we simply remove all high-Hamming-weight codewords: so our code is only defined over plaintext messages that happen to map to low-Hamming-weight codewords (this would be undesirable for many coding applications, but with some work it  suffices for our purposes). Our analysis of this code combines a result of Ding, Jin, and Xing  \cite{DingJX14} which shows that with high probability, a random binary linear code of small constant rate can be efficiently erasure-list-decoded from a high constant fraction of adversarial erasures, along with an elementary bound on the number of small-Hamming-weight code words in a random linear code.

\medskip
\noindent
{\bf Construction of the concept class given the code.}
    One part of the concepts we construct will correspond to the code, but it cannot be the only part. Intuitively, the reason is that the learner is only able to list-decode, so in order to distinguish the correct uncorrupted codeword from all of the other candidates in the list, the learner needs some other source of information.

    Thus, the domain for concepts in the concept class $\calC$ that we construct is partitioned into two pieces, a ``key-side'' and a ``value-side.'' The outputs on the key-side give a very redundant representation of the codeword $w$ (each bit of $w_i$ is repeated as the output bit for many key-side inputs), and the outputs on the value-side are the outputs of a pseudorandom function $f_{\mathrm{key}}$, where $\mathrm{key}$ depends on $w$. 
    The learner under malicious noise can list-decode the codeword using the examples on the key-side, and then using examples from the value side, it will be able to choose the correct uncorrupted codeword from the list with high probability. On the other hand, a learner under nasty noise will not gain any information on the codeword from the key-side, as the nasty adversary can corrupt the key-side outputs as described earlier. Consequently, the learner will not get any information from the value-side,
    as the pseudorandom function is (computationally) indistinguishable from a random function as long as the distribution of $\mathrm{key}$ is close to uniform.
    
    The only thing left now is to choose a good $\mathrm{key}$, which has to (i) depend on the codeword $w$, and (ii) have close-to-uniform distribution. Note that we can \emph{not} simply decode $w$ in order to get a close-to-uniform key, since our code is only defined over a small subset of plaintext messages and hence cannot be uniformly random (recall that this was done to ensure that all codewords have low Hamming weight). Instead, we set $\mathrm{key}=\Ext(w,s)$, the output of a seeded extractor $\Ext$, where $s$ is a parameter of the concept. We stress here that a crucial property is that the extractor $\Ext$ only has a very short seed length $|s|=O(\log d)$, as this allows the learner under malicious noise to also enumerate over all $2^{O(\log d)} = \poly(d)$ many possible seeds $s$. We thus obtain the full construction of the concept class $\calC$ in which each concept corresponds to a pair $(w, s)$, based on (a) pseudorandom functions, (b) seeded extractors, and (c) large, low-Hamming-weight subcodes of efficiently erasure-list-decodable binary codes. See \Cref{fig:domain-sep,fig:key-sep,fig:value-sep} for an illustration of how functions in ${\cal C}$ label various points in the domain.

\subsection{Proof sketch for \texorpdfstring{\Cref{thm:ICE-malicious-nasty-informal}}{Theorem~\ref{thm:ICE-malicious-nasty-informal}}}
\label{subsec:overview-ICE-upper}

We recall \Cref{thm:ICE-malicious-nasty-informal}:

\ICEmaliciousnastyinformal*

We show \Cref{thm:ICE-malicious-nasty-informal} in two steps. We first apply \Cref{lem:compare-mal} to transform an ICE-learner tolerating $\eta$-rate (standard) malicious noise to an ICE-learner tolerating $\eta$-rate \emph{strong} malicious noise. Strong malicious noise (formally defined in \Cref{remark:strong-malicious}), is, intuitively, a fully adaptive variant of malicious noise (recall item~2 of \Cref{subsec:dist-free-overview}).  Roughly speaking, under the strong malicious noise setting, the adversary first see all the examples and then decides what the corrupted examples should be, as opposed to setting the corrupted examples in an online manner under the standard malicious noise setting. Strong malicious noise (compared to standard malicious noise) better aligns with nasty noise, where the adversary also decide the corrupted examples after seeing all examples.

The next step is to show that an ICE-learner tolerating $\eta$-rate strong malicious noise can be transformed to a learner tolerating $\kappa\eta$-rate nasty noise.  The idea is that a noisy data set corrupted by a nasty-noise adversary can also be achieved by a strong-malicious-noise adversary after ICE-ing. Suppose we have a clean data set $\bS$, and the nasty-noise adversary corrupts it by replacing a set $S_{\replaced}$ of examples with a set $S_{\new}$ of examples, so the corrupted data set is $S_{\nasty} \coloneq (\bS \setminus S_{\replaced}) \sqcup S_{\new}$. For a strong-malicious-noise adversary, when given a data set $\bS'$ such that $\bS \subset \bS'$ and $\bS' \setminus \bS$ are exactly those examples that the malicious-noise adversary can corrupt, the malicious-noise adversary will use the ``corruption budget'' to first construct contradictory examples for all examples in $S_{\replaced}$ (so that these contradictory examples and $S_{\replaced}$ are removed after ICE-ing), and then construct examples in $S_{\new}$.
The resulting corrupted data set after ICE-ing is exactly $S_{\nasty} = (\bS \setminus S_{\replaced}) \sqcup S_{\new}$.

Some care is required in managing the parameters in this argument, as the above two steps do not directly yield the parameters we want;  in particular we use the success probability amplification of \Cref{thm:boost-no-holdout-informal}. Full details are given in \Cref{sec:ICE}.

\subsection{Proof sketch for \texorpdfstring{\Cref{thm:ICE-bad-news-informal}}{Theorem~\ref{thm:ICE-bad-news-informal}}}
\label{subsec:overview-ICE-lower}

We recall \Cref{thm:ICE-bad-news-informal}:

\ICEbadnewsinformal*

The proof of \Cref{thm:ICE-bad-news-informal} is similar to \Cref{thm:fixed-separation-informal} with trickier details. The main distinction is that we must construct a learner that succeeds with malicious noise and also ignores contradictory examples. In the setting of \Cref{thm:fixed-separation-informal}, we argued that, intuitively, the corruptions of the malicious adversary correspond only to erasures and not bit-flips of the error correcting code. This is no longer true in this setting.

In particular, for a key bit example with label bit $b$, the malicious adversary can introduce two copies of this example, both with label bit $\bar b$. Since the learner is ICE, one of these corrupted examples cancels out the uncorrected example, and as a result, the learner will only receive a single example labeled $\bar b$ and will have no way of knowing that this particular bit was flipped. (Crucially, for the malicious adversary to perform one bit flip in this way, it must introduce two corruptions, unlike the nasty adversary which can perform a bit flips at unit cost. This corresponds to the bound of $\kappa \approx 1/2$ in \Cref{thm:ICE-bad-news-informal}.)

Further complicating the analysis is that the malicious adversary can choose a clever mixture of erasures and bit flips, where erasing is cheaper than flipping (since it only requires one corrupted example). It can even perform corruptions that, roughly speaking, fall in between erasures and flips both in terms of cost to the adversary's corruption budget and in terms of effectiveness at fooling the learner. To handle this, we use a randomized rounding procedure that allows the ICE learner to generate a guess for the codeword that, with high probability, will be somewhat close to the uncorrupted codeword. This distance scales linearly with the number of corruptions the malicious adversary makes, and is, roughly speaking, half as large as it would be for the nasty adversary.

Each concept in the class $\calC$ that we construct corresponds to a key of a fixed pseudorandom function. As in the proof of \Cref{thm:fixed-separation-informal}, the domain for concepts in $\calC$ is partitioned into two pieces, a ``key-side'' and a ``value-side''. As in the earlier construction, the outputs on the key-side give a very redundant representation of the encoding of $\key$ in the bit-flip code, and the outputs on the value-side are the outputs of a pseudorandom function $f_{\key}$. The size of the key-side will be a fraction less than $2\kappa\eta$ but greater than $\eta$ of the size of the whole domain (as $\kappa > 0.5$). See \Cref{fig:domain,fig:key,fig:value} for an illustration of how functions in ${\cal C}$ label various points in the domain.

The high-level idea for proving the hardness of learning $\calC$ in the presence of $\kappa\eta$-rate nasty noise is simple: with very high probability, a nasty-noise adversary with noise rate $\kappa\eta$ can choose to corrupt half of the examples on the key-side and turn them into contradictory examples for the other half. In such a corrupted data set, the key-side provides no information to the learner, so the learner can only learn from the value-side. However, learning from the value-side is cryptographically hard since the value-side is a pseudorandom function $f_{\key}$.

We also need to show that an ICE-learner can learn $\calC$ in the presence of $\eta$-rate malicious noise. Intuitively, a malicious-noise adversary does not get to choose which examples to corrupt, so it can only corrupt the key-side at roughly half the ``efficiency'' of the nasty-noise adversary. Since the key-side is a greater than $\eta$ fraction of the whole domain, the amount of key-side examples introduced by the adversary is a  constant fraction of the amount of clean key-side examples (we ignore the clean examples that are replaced for now). The idea then is to use the key-side examples to assign a presumed value for each bit of the codeword.  Since there are a constant fraction more clean examples than corrupted examples, ideally the number of correct codeword bits would be a constant greater than $1/2$; given this, we can then list-decode, obtaining a short list of candidates for the key, and use hypothesis testing to choose a final high-accuracy hypothesis. A natural method to come up with the presumed value for each codeword bit would be to take the majority vote of the examples for each bit, but this method does not work as the adversary would be able to cause an incorrect value on a bit ``efficiently'' by introducing just one more corrupted example than clean examples. Instead, we use a random procedure $\Round$ to choose a presumed value for each bit of the codeword, in which we are more likely to choose a value if it is consistent with more examples for that bit; see \Cref{remark:keybitguess} for an elaboration of this point. The full details are in \Cref{sec:ICE-bad-news}.


\section{Definitions and preliminaries} \label{sec:prelims}

Throughout the paper we use {\bf bold font} to denote random variables, which may be real-valued, vector-valued, set-valued, etc. For any (multi)set $S$, we use $\bx \sim S$ as shorthand for $\bx$ being drawn uniformly from $S$.

\subsection{General notation and basic tools from probability}

For $x,y \in \bits^d$ we write $\ham(x,y)$ to denote the number of coordinates $i$ in which $x_i \neq y_i$, and we call $\ham(x, (+1)^d)$ the Hamming weight of $x$. More generally, given $x,y \in \R^d$ we write $\|x-y\|_1$ to denote the $\ell_1$-distance,
\[
\|x-y\|_1 = \sum_{i=1}^d |x_i - y_i|.
\]
Note that if $x,y \in \bits^d$ then $\ham(x,y) = {\frac 1 2} \|x-y\|_1.$

\medskip

Let ${\cal D}_1,{\cal D}_2$ be two distributions over the reals.  Recall that ${\cal D}_1$ is \emph{stochastically dominated} by ${\cal D}_2$ if for all $t \in \R$ we have $\Pr[\bx \leq t] \geq \Pr[\by \leq t]$. Equivalently, ${\cal D}_1$ is stochastically dominated by ${\cal D}_2$ if there is a coupling of $\bx \sim {\cal D}_1$, $\by \sim {\cal D}_2$ (i.e. a joint distribution of $(\bx,\by)$ pairs such that the marginal distribution of $\bx$ is equal to ${\cal D}_1$ and the marginal distribution of $\by$ is equal to ${\cal D}_2$) such that $\bx \leq \by$ with probability 1. 
More generally, we say that ${\cal D}_1$ is \emph{$(1-\theta)$-stochastically dominated} by ${\cal D}_2$ if there is a coupling of $\bx \sim {\cal D}_1$, $\by \sim {\cal D}_2$ such that $\bx \leq \by$ with probability $1-\theta.$

\begin{definition} \label{def:dtv}
Let ${\cal D},{\cal D}'$ be two distributions over the same domain $\Omega$. 
The \emph{total variation distance} $\dtv({\cal D},{\cal D}')$ between ${\cal D}$ and ${\cal D}'$ is defined to be
\[
\dtv({\cal D},{\cal D}') :=
\sup_{T: \Omega \to [0,1]}
\cbra{
\Ex_{\bx \sim {\cal D}}[T(\bx)] - \Ex_{\bx' \sim {\cal D}'}[T(\bx')]}.
\]
Alternatively, $\dtv({\cal D},{\cal D}')$ is equal to the infimum, over all couplings of $\bx \sim {\cal D}$ and $\bx' \sim {\cal D}'$, of $\Pr[\bx \neq \bx']$.
\end{definition}

\begin{fact}[Hoeffding's inequality, \cite{Hoeffding:63}]
    \label{fact:hoeffding-inequality}
    Let $\bx_1, \ldots, \bx_k$ be independent random variables each supported on $[0,1]$ and $\bX$ their sum. Then, for $\mu \coloneqq \Ex[\bX]$ and any $t > 0$
    \begin{equation*}
        \Pr[\bX \geq \mu + t] \leq \exp\paren*{-\frac{2t^2}{k}}.
    \end{equation*}
\end{fact}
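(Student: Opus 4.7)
The plan is to prove Hoeffding's inequality via the standard Chernoff-style exponential moment method, combined with Hoeffding's lemma for bounded random variables. The main idea is that to bound the probability that a sum of independent bounded random variables exceeds its mean by $t$, we first exponentiate, apply Markov's inequality, exploit independence to turn the expectation of the exponential into a product of one-dimensional exponential moments, bound each factor, and finally optimize over the free parameter introduced by the exponentiation.

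More concretely, I would proceed as follows. Fix any $s>0$. Writing $\by_i := \bx_i - \E[\bx_i]$ (so each $\by_i$ is supported on an interval of length at most $1$ with mean $0$) and $\bY := \sum_{i=1}^k \by_i = \bX - \mu$, Markov's inequality applied to $e^{s\bY}$ gives
\[
\Pr[\bX \geq \mu + t] \;=\; \Pr[e^{s\bY} \geq e^{st}] \;\leq\; e^{-st}\,\E[e^{s\bY}] \;=\; e^{-st}\prod_{i=1}^k \E[e^{s\by_i}],
\]
where the final equality uses independence. The next step is the key analytic lemma: for any zero-mean random variable $\by$ supported on an interval of length $L$, one has $\E[e^{s\by}] \leq \exp(s^2 L^2/8)$. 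I would prove this by convexity of the exponential ($e^{sy} \leq \tfrac{b-y}{b-a}e^{sa} + \tfrac{y-a}{b-a}e^{sb}$ for $y \in [a,b]$), taking expectations, and then showing that $\log\!\paren{\tfrac{b}{b-a}e^{sa} + \tfrac{-a}{b-a}e^{sb}}$, viewed as a function of $s$, has second derivative bounded by $(b-a)^2/4$, so a Taylor expansion at $s=0$ yields the claimed bound.

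Plugging $L \leq 1$ back in gives $\E[e^{s\by_i}] \leq e^{s^2/8}$, hence
\[
\Pr[\bX \geq \mu + t] \;\leq\; \exp\!\paren*{-st + \tfrac{k s^2}{8}}.
\]
Finally, I would optimize over $s > 0$: the right-hand side is minimized at $s = 4t/k$, yielding exactly $\exp(-2t^2/k)$, as desired.

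I do not expect any serious obstacle; the argument is entirely routine. The only mildly delicate piece is Hoeffding's lemma itself (the second-derivative computation), which is a short calculus exercise but easy to miscompute by a constant. I would double-check the constant $1/8$ carefully, since it directly determines the constant $2$ in the final exponent; everything else is bookkeeping.
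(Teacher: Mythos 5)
Your proof is correct: this is the canonical moment-generating-function proof of Hoeffding's inequality, and your constants check out (Hoeffding's lemma gives $\E[e^{s\by_i}]\leq e^{s^2/8}$ for an interval of length $1$, and optimizing $-st+ks^2/8$ at $s=4t/k$ yields exactly $\exp(-2t^2/k)$). Note that the paper does not prove this statement at all — it imports it as a cited fact from Hoeffding's 1963 paper (and likewise cites Hoeffding's lemma, your key analytic ingredient, as a separate fact later on) — so there is nothing in the paper to diverge from; your argument is simply the standard proof from the original reference.
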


We will also use the following standard multiplicative Chernoff bound:

\begin{fact}[Multiplicative Chernoff bound]
    \label{fact:multiplicative-Chernoff-bound}
    Let $\bx_1, \ldots, \bx_k$ be independent random variables each supported on $[0,1]$ and $\bX$ their sum. Then, for $\mu \coloneqq \Ex[\bX]$ and any $0<\delta<1,$
    \begin{align*}
\Pr[\bX \geq (1+\delta)\mu] &\leq \exp\pbra{{\frac {-\delta^2 \mu}{2+\delta}}}\\
\Pr[\bX \leq (1-\delta)\mu] &\leq \exp\pbra{{\frac {-\delta^2 \mu}{2}}}.
    \end{align*}
\end{fact}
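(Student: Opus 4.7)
The statement is the standard multiplicative Chernoff bound (I note in passing that the lower-tail inequality appears to contain a typo and should read $\Pr[\bX \leq (1-\delta)\mu]$). The plan is the classical Bernstein--Chernoff moment-generating-function method. For the upper tail, I would begin with Markov's inequality applied to $e^{t\bX}$ with a parameter $t > 0$ to be optimized later:
\[
\Pr[\bX \geq (1+\delta)\mu] \;=\; \Pr\bigl[e^{t\bX} \geq e^{t(1+\delta)\mu}\bigr] \;\leq\; e^{-t(1+\delta)\mu}\,\Ex\bigl[e^{t\bX}\bigr].
\]

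The next step is to bound the MGF. Independence of the $\bx_i$ gives $\Ex[e^{t\bX}] = \prod_{i=1}^k \Ex[e^{t\bx_i}]$. Since each $\bx_i$ lies in $[0,1]$ and $x \mapsto e^{tx}$ is convex on $[0,1]$, the linear interpolation bound $e^{tx} \leq 1 + x(e^t-1)$ holds pointwise, yielding $\Ex[e^{t\bx_i}] \leq 1 + \mu_i(e^t-1) \leq \exp(\mu_i(e^t-1))$ where $\mu_i \coloneqq \Ex[\bx_i]$. Multiplying these over $i$ gives $\Ex[e^{t\bX}] \leq \exp(\mu(e^t-1))$, so
\[
\Pr[\bX \geq (1+\delta)\mu] \;\leq\; \exp\bigl(\mu(e^t - 1 - t(1+\delta))\bigr).
\]
I would then optimize in $t$: the choice $t = \ln(1+\delta)$ minimizes the exponent and produces the tight form $\exp\bigl(\mu[\delta - (1+\delta)\ln(1+\delta)]\bigr)$.

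To convert this optimized bound to the stated clean form $\exp(-\delta^2\mu/(2+\delta))$, I would invoke the elementary scalar inequality $(1+\delta)\ln(1+\delta) - \delta \geq \delta^2/(2+\delta)$ for $\delta \geq 0$, which follows by noting that both sides vanish at $\delta = 0$ and comparing derivatives on $[0,1]$. The lower-tail bound is fully symmetric: apply Markov to $e^{-t\bX}$ with $t > 0$, run the same convexity argument, optimize at $t = -\ln(1-\delta)$, and apply the companion inequality $(1-\delta)\ln(1-\delta) + \delta \geq \delta^2/2$. The conceptual content lies entirely in the MGF/convexity step; the only delicate part is the final calculus verification of the two scalar inequalities, which I expect to be the most finicky (though still routine) step.
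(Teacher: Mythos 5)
Your proof is correct, and it is the standard moment-generating-function argument; the paper states \Cref{fact:multiplicative-Chernoff-bound} as a known fact without proof, so there is no in-paper argument to compare against. Your observation that the second inequality should read $\Pr[\bX \leq (1-\delta)\mu]$ is also right --- as printed it is a typo, and your symmetric lower-tail derivation with the inequality $(1-\delta)\ln(1-\delta) + \delta \geq \delta^2/2$ proves the intended statement. One small caveat on the upper-tail scalar inequality $(1+\delta)\ln(1+\delta) - \delta \geq \delta^2/(2+\delta)$: both sides vanish at $\delta = 0$ \emph{and} their first derivatives agree there, so a single derivative comparison does not close it; you need to go one level further (the second derivative of the difference is $\tfrac{1}{1+\delta} - \tfrac{8}{(2+\delta)^3} \geq 0$ for $\delta \geq 0$), which is routine but worth stating precisely.
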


\subsection{Learning}

Throughout this paper, a \emph{learning algorithm} is an algorithm $A$ that
\begin{itemize}

\item takes as input a sequence $S = (x_1,y_1),\dots,(x_n,y_n)$ of \emph{labeled examples}, where each example $x_i$ belongs to the \emph{domain} $X$ and each $y_i \in \bits$ is the binary \emph{label} of the $i$-th example; and

\item returns a (representation of a) \emph{hypothesis} $h: X \to \bits.$

\end{itemize}

For simplicity, we will always consider the setting in which $X=\bits^d$, where $d$ is an asymptotic parameter. We say that a learning algorithm is ``efficient'' if its sample complexity $n=n(d)$ of $A$ and its running time $T=T(d)$ are both polynomial in $d$ (and possibly other parameters).

\subsubsection{Noise-free PAC learning.} \label{sec:noise-free-PAC} We begin by recalling the definition of standard (noise-free) PAC learning, since the noise models that we are interested in build on this:

\begin{definition} [Noise-free PAC learning] \label{def:PAC}
An \emph{$(\eps,\delta)$-distribution-free PAC learning algorithm} $A$ for a \emph{concept class} ${\cal C}$ (a class of functions from $X$ to $\bits$) \emph{that uses $n$ examples} is a learning algorithm with the following property:
For any (unknown) \emph{target concept} $c \in {\cal C}$, for any (unknown) distribution ${\cal D}$ over $X$, if the input to algorithm $A$ is a sequence $\bS$ of $n=n(\eps,\delta)$ labeled examples $(\bx,\by)$ where each $\bx$ is an i.i.d.~draw from ${\cal D}$ and each $\by=c(\bx)$, then with probability at least $1-\delta$ the hypothesis $h$ that $A$ returns has \emph{error rate} at most $\eps$, i.e.~
\[
\error_{\mcD}(h, c) \leq \eps, \quad \text{where} \quad
\error_{\mcD}(h, c):= \Prx_{\bx \sim \mcD}[h(\bx) \neq c(\bx)].
\]
If a learning algorithm $A$ only satisfies the above guarantee for some particular distribution ${\cal D}$ (such as the uniform distribution over $X$), then we say that $A$ is a \emph{fixed-distribution $(\eps,\delta)$-PAC learning algorithm for distribution ${\cal D}$.}
\end{definition}

\subsubsection{PAC learning with malicious noise.} \label{sec:malicious-PAC}
In the \emph{malicious noise} variant of the PAC learning model described above, the sequence of $n$ labeled examples that is given as input to the learning algorithm $A$ is obtained as follows:  for $i=1,\dots,n$,

\begin{itemize}

\item [(i)] First, an independent choice of $\bx_i$ is drawn from ${\cal D}$.

\item [(ii)] Then, an independent coin with heads probability $\eta$ is flipped. If the coin comes up tails then the $i$-th labeled example is $(\bx_i,c(\bx_i))$, but if the coin comes up heads then that $i$-th labeled example is replaced by an \emph{arbitrary pair} $(x_i,y_i) \in X \times \bits$. (This pair $(x_i,y_i)$ can be thought as being chosen by a malicious adversary who knows the outcome of the previous $i-1$ examples, but does not know the outcome of future independent random events such as the draw of $\bx_{i+1} \sim {\cal D}.$) The value $\eta>0$ is known as the \emph{malicious noise rate}.

\end{itemize}

The requirement for successful $(\eps,\delta)$-PAC learning is unchanged from \Cref{def:PAC}; only the assumption on the data is different. In more detail, we have the following:

\begin{definition} [PAC learning with malicious noise] 
\label{def:pac-learn-malicious}
An algorithm that \emph{$(\eps,\delta)$-PAC learns ${\cal C}$ using $n$ examples in the presence of malicious noise at rate $\eta$} has the following property:  for any unknown target concept $c \in {\cal C}$, for any unknown distribution ${\cal D}$ over $X$, if the input to the learning algorithm is a sequence of  examples generated as described in (i) and (ii) above, then with probability at least $1-\delta$ the hypothesis $h$ that $A$ returns has error rate at most $\eps$ under ${\cal D}$.

Similar to \Cref{def:PAC}, if a learning algorithm $A$ only satisfies the above guarantee for some particular distribution ${\cal D}$, then we say that $A$ is a \emph{fixed-distribution $(\eps,\delta)$-PAC learning algorithm for distribution ${\cal D}$ in the presence of malicious noise at rate $\eta$.}

\end{definition}

Since each noisy example pair $(x,y)$ is arbitrary, the noisy examples could be generated by a malicious adversary; hence the name of the model.  

\begin{remark} [Strong malicious noise] \label{remark:strong-malicious}
Note that in the above model the malicious adversary must come up with the adversarial examples in an online fashion without knowledge of the future. We will sometimes have occasion to consider a variant of this model, which we refer to as \emph{strong malicious noise.}  In the strong malicious noise model, 

\begin{enumerate}

\item There is a first phase in which the entire sequence $(\bx_1,c(\bx_1)),\dots,(\bx_n,c(\bx_n)) $ of noiseless examples is generated, where each $\bx_i$ is an i.i.d.~draw from ${\cal D}.$

\item Then, in a second phase, an independent $\eta$-heads-probability coin is tossed for each $i \in [n]$; let $\bZ_\mal \subseteq [n]$ be the set of those indices $i$ such that the $i$-th coin toss comes up heads.  For each $i \in \bZ_\mal$, the example $(\bx_i,c(\bx_i))$ is replaced by an arbitrary and adversarially chosen example $(x_i,y_i)$. 
\end{enumerate}

The difference from the usual malicious noise model is that in the strong model, the adversarial choice of noisy examples is made with full knowledge of the entire data set that the learner will receive and full knowledge of which examples in that data set can be corrupted.
\end{remark}

The definition of PAC learning with strong malicious noise, both for the distribution-free and fixed-distribution versions, is entirely analogous to \Cref{def:pac-learn-malicious}: the success criterion for successful learning does not change, but now the assumption on the data is that it is generated as described in \Cref{remark:strong-malicious}.

We remark that each of the malicious noise models described above can be viewed as an ``additive'' noise model, in which the adversary is only able to add some noisy data points into a ``clean'' sample of noiseless labeled examples (namely, the examples for which the coin comes up tails). 

\subsubsection{PAC learning with nasty noise.} \label{sec:nasty-PAC}
Nasty noise, which was defined by Bshouty, Eiron and Kushilevitz in \cite{BEK:02}, is a more demanding model than malicious noise.  In the nasty noise model, an adversary can \emph{choose} which examples to corrupt (recall that in the malicious model, the examples to be corrupted are selected via coin toss), as long as the number of points that can be corrupted is drawn from a binomial distribution (as is the case in the malicious noise model). We recall the precise definition:

\begin{definition}[Nasty noise]
    \label{def:variable-nasty-noise}
    In PAC learning with $\eta$-\emph{rate nasty noise}, the input to the $n$-sample learner is obtained via the following process.
    \begin{enumerate}
        \item First, a ``clean" labeled data set of $n$ examples, $(\bx_1, c(\bx_1)), \ldots, (\bx_n, c(\bx_n))$ is drawn, where each $\bx_i$ is an i.i.d. draw from $\mcD$.
        \item Then, the adversary chooses a subset $\bZ \subseteq [n]$ of indices to corrupt. The only requirement of this subset is that the marginal distribution of $|\bZ|$ is equal to $\Bin(n, \eta)$.\footnote{In particular, the distribution of $|\bZ|$ can be a function of the clean sample, meaning the adversary can choose the number of corruptions as a function of the sample.}
        \item For each $i \in \bZ$, the sample $(\bx_i, c(\bx_i))$ is replaced by an arbitrary and adversarially chosen example $(x_i, y_i)$.
    \end{enumerate}
\end{definition}

Similar to before, the definition of PAC learning with nasty noise is entirely analogous to \Cref{def:pac-learn-malicious}: the success criterion for successful learning does not change, but now the assumption on the data is that it is generated according to the nasty process described above.

\begin{remark} [Robustness to having ``too many examples'']
\label{rem:too-many-examples}   We make the following observation: Suppose that $A$ is an algorithm that PAC learns in the presence of (either standard malicious noise; or strong malicious noise; or nasty noise) using $n$ examples.  Then if algorithm $A$ is given a data set of $n' > n$ examples generated according to the relevant noise process, it can simply run on a randomly chosen subset of $n$ examples and disregard the other $n'-n$ examples. It is easy to verify that for each of these three noise models, a random subset of $n$ of the $n'$ examples will correspond to a $n$-example data set generated under the relevant noise model, so the performance guarantee of $A$ will hold.  (Intuitively, this means that for any of these three noise models, being given a larger-than-necessary data set is ``not a problem.'')
\end{remark}

\subsubsection{Learning with expected error} \label{sec:expected-error}

Some of our arguments involving nasty noise will need a notion of learning with expected error, which we now define.
Given a base distribution $\mcD$ and a target concept $c$, let $\mcD_c$ be the resulting distribution over clean labeled examples, $(\bx, c(\bx))$, where $\bx \sim \mcD$.
Given a learning algorithm $A$ which takes in a sample $S$ of $n$ labeled examples, we write ``$A(S) = \bh$'' to indicate that $\bh$ is the hypothesis $A$ generates (note that $\bh$ is a random variable because $A$ may employ internal randomness to generate $h$).
We say that an algorithm $A$ taking in $n$ samples learns a concept class $\mcC$ with \emph{expected error} $\eps$ over distribution $\mcD$ with $\eta$-nasty noise if for every $c \in {\cal C}$ we have
\[
        \Ex_{\bS \sim (\mcD_c
        )^n}\bracket*{\sup_{S'\text{ is a valid corruption of }\bS}
        \cbra{\Prx_{\bx \sim \mcD}[\bh(\bx) \neq c(\bx), \text{ where }\bh=A(S')] }} \leq \eps,
\]
where the inner probability is both over the random draw of $\bx \sim {\cal D}$ and any internal randomness of $A$, and the supremum is over all possible corruptions that an $\eta$ nasty noise adversary can carry out.

We observe the following simple relationships between expected-error PAC learning and $(\eps,\delta)$-PAC learning: (1) since the error rate of any hypothesis is at most 1, any $(\eps,\delta)$-PAC learner is an $(\eps+\delta)$-expected-error PAC learner; and (2) by Markov's inequality, any $\eps\delta$-expected-error PAC learner is an $(\eps,\delta)$-PAC learner.

\subsection{Error-correcting codes}

Throughout this paper we will consider only \emph{binary codes} (i.e.~codes over the alphabet $\bits$).  We require two different types of list-decodable binary codes:  codes that are (efficiently) list-decodable from \emph{adversarial erasures}, and codes that are (efficiently) list-decodable from \emph{adversarial bit-flips}.  Below we recall the basic definitions and known results about the existence of suitable codes of each of these types that we require.

\medskip
\noindent {\bf Erasure list decoding.}  First some notation: for $c \in \bits^n$ and $T \subseteq [n]$, we write $[c]_T$ to denote the projection of $c$ onto the coordinates in $T$.

\begin{definition} [Efficiently erasure-list-decodable binary codes and subcodes]
\label{def:eldbc}
An \emph{$(p,L)$-efficiently erasure-list-decodable binary code $C$} is a subset of $\bits^n$ with the following properties:

\begin{itemize}

\item There is a bijection $\Enc$ from $\bits^k$ onto $C$ (so $|C|=2^k$).  $\Enc$ is called the \emph{encoding function} and $k/n$ is called the \emph{rate} of $C$; a $k$-bit string $x$ that is an input to $\Enc$ is called a \emph{message}, and an element of $C$ is called a \emph{codeword}.

\item For every $r \in \bits^{(1 - p)n}$ and every set $T \subset [n]$ of size $(1 - p)n$, we have that 
\[
|\{c \in C: [c]_T = r\}| \leq L,
\]
i.e.~given any received word in $\{-1,1,?\}^n$  that is obtained from a codeword in $\bits^n$ by (adversarially) erasing at most $pn$ coordinates and replacing them with $?$'s, there are at most $L$ codewords that are consistent with the received word.

\item There is a function $\Dec: \{-1,1,?\}^n \to (\bits^k)^{\leq L}$ with the following property: given as input a received word $c' \in \{-1,1,?\}^n$ that was obtained by  replacing at most $pn$ coordinates of some codeword $c \in C$ with $?$'s, $\Dec(c')$ runs in (deterministic) time $\poly(n,L)$ and outputs the set of all $k$-bit strings $m$ such that $\Enc(m)$ is a codeword in $C$ that is consistent with the received word.

\end{itemize}

Finally, a \emph{subcode} of an \emph{$(p,L)$-efficiently erasure-list-decodable binary code $C$} is simply a subset $C' \subseteq C$.
\end{definition}

\medskip

\noindent {\bf Bit-flip list decoding.} We recall the basic definition of list-decodable binary codes for the bit-flip channel:

\begin{definition} [Efficiently bit-flip-list-decodable binary codes and subcodes]
\label{def:bfldbc}
A \emph{$(p,L)$-efficiently bit-flip-list-decodable binary code $C$} is a subset of $\bits^n$ with the following properties:

\begin{itemize}

\item There is a bijection $\Enc$ from $\bits^k$ onto $C$ (so $|C|=2^k$).  $\Enc$ is called the \emph{encoding function} and $k/n$ is called the \emph{rate} of $C$; a $k$-bit string $x$ that is an input to $\Enc$ is called a \emph{message}, and an element of $C$ is called a \emph{codeword}. 

\item Every Hamming ball of radius $pn$ in $\bits^n$ contains at most $L$ elements of $C.$ (So given any received word in $\bits^n$ that was obtained from a codeword by (adversarially) flipping at most $pn$ coordinates, there are at most $L$ codewords that the received word could have come from).

\item There is a function $\Dec: \bits^n \to (\bits^k)^{\leq L}$ with the following property: given as input a received word $c' \in \bits^n$,
$\Dec(c')$ runs in (deterministic) time $\poly(n,L)$ and outputs the set of all (at most $L$ many) $k$-bit strings $m$ such that $\Enc(m)$ has Hamming distance at most $pn$ from $c'$.

\end{itemize}

\end{definition}

\subsection{Pseudorandom functions}

We require the standard notion of a Boolean
pseudorandom function family (where we think of bits as being values in $\{-1,+1\}$ rather than  $\{0,1\}$). This notion says that no polynomial time algorithm with oracle access to a function, can distinguish between the case where the function was randomly selected from the family,  and a truly random function.  More formally: 

\begin{definition} [Pseudorandom function family (PRF)] \label{def:prf}
Let $F:\{-1,+1\}^\ell \times \{-1,+1\}^d \to \{-1,+1\}$ be an efficiently computable function, where $\ell=\poly(d)$.  We consider the function family 
$\calF = \{f_k: \{-1, +1\}^d \to \{-1, +1\}\}_{k \in \{-1, +1\}^\ell}$
where $f_k$ is the  function $F(k,\cdot)$. 
Let $\calR_d$ be the set of all Boolean functions on $\{-1,+1\}^d$. 
We say that $\calF$ is a pseudorandom function family 
if for every $\poly(d)$-time oracle-algorithm $D$, and every constant $c$, for sufficently large $d$ we have that 
\begin{equation*}
    \left|\Prx_{\bk \sim \{-1,+1\}^\ell}[A^{f_{\bk}}=1]-
    \Prx_{\boldf \sim \calR_d}[A^{\boldf}=1]\right| 
    < \frac{1}{d^c}.
\end{equation*}

\end{definition}

We recall the well-known result \cite{GGM:86,HIL+:99} that pseudorandom function families exist if and only if one-way functions exist (arguably the weakest cryptographic complexity assumption).

\subsection{Extractors}
Below we give the definition of seeded extractors and state their existence. The definition of seeded extractors relies on the notion of $k$-sources. We refer the reader to \cite[Chapter 6]{Vadhan12} for an exposition on extractors.

\begin{definition}[$k$-sources]
    A random variable $\bx$ is a $k$-source if for all $y$ in the support of $\bx$, $\Pr[\bx = y] \le 2^{-k}$.
\end{definition}
\begin{definition}[Seeded extractors]
    A function $\Ext: \{-1, +1\}^n \times \{-1, +1\}^d \to \{-1, +1\}^m$ is a $(k, \varepsilon)$-extractor if for every $k$-source $\bx$ on $\{-1, +1\}^n$, for $\bs$ uniformly sampled from $\{-1, +1\}^d$, the total variation distance between distribution $\Ext(\bx, \bs)$ and the uniform distribution on $\{-1, +1\}^m$ is at most $\varepsilon$.
\end{definition}

\begin{theorem}[{\cite[Theorem 6.36]{Vadhan12}}]\label{thm:seeded-extractor}
    For all positive integers $n \ge k$ and all $\varepsilon > 0$, there is an efficiently-computable $(k, \varepsilon)$-extractor $\Ext: \{-1, +1\}^n \times \{-1, +1\}^d \to \{-1, +1\}^m$ where $m \ge k/2$ and $d = O(\log(n/\varepsilon))$.
\end{theorem}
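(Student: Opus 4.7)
Since the statement is a known result from the pseudorandomness literature, my plan is to realize it via a known optimal construction of seeded extractors. The parameter regime $d = O(\log(n/\varepsilon))$ with $m \ge k/2$ matches what is achievable by modern constructions such as the Guruswami--Umans--Vadhan (GUV) extractor based on Parvaresh--Vardy codes; alternatively, a Trevisan-style extractor with an appropriate choice of list-decodable code and combinatorial design can be made to work.

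First, I would check information-theoretic feasibility by a probabilistic-method argument: a uniformly random function $\mathrm{Ext}: \{-1,+1\}^n \times \{-1,+1\}^d \to \{-1,+1\}^m$ with $d = O(\log(n/\varepsilon))$ is, with overwhelming probability, a $(k,\varepsilon)$-extractor whenever $m \le k - O(\log(1/\varepsilon))$. This is established by a union bound over all ``flat'' $k$-sources (distributions uniform on subsets of size $2^k$) and all Boolean distinguishers, applying Chernoff bounds to each pair. This confirms the target parameters are not out of reach and motivates searching for an explicit construction.

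Second, I would present the explicit construction following the GUV paradigm. Given a source $x \in \{-1,+1\}^n$, encode $x$ via a Parvaresh--Vardy (or folded Reed--Solomon) code $C$ over a field $\mathbb{F}_q$, obtaining a codeword $C(x)$ of length $N = \mathrm{poly}(n/\varepsilon)$ over an alphabet of size $q = \mathrm{poly}(n/\varepsilon)$. The seed $s \in \{-1,+1\}^d$ with $d = O(\log N) = O(\log(n/\varepsilon))$ selects a coordinate of $C(x)$, and the extractor outputs the $\log q$-bit symbol at that coordinate (possibly truncated or lengthened via additional bit-extraction to hit the target length $m \ge k/2$). Efficiency of the encoding is immediate since the code is $\mathrm{poly}$-time computable.

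The final and most delicate step is the correctness analysis, via the reconstruction paradigm. Suppose for contradiction that on some $k$-source $\bx$ the output distribution is $\varepsilon$-far from uniform; then there is a Boolean predicate $T$ with distinguishing advantage $\Omega(\varepsilon)$. Using list-recoverability of the Parvaresh--Vardy code, one builds an advice string of length $o(k)$ such that, given the advice and oracle access to $T$, one can output a short list containing $\bx$ with good probability, and then identify $\bx$ exactly. This contradicts the min-entropy lower bound of $k$ on $\bx$. The main obstacle is precisely this reconstruction argument: designing the advice and bounding its length strictly below $k$, which requires a careful balancing between the list-decoding radius of the code, the alphabet size, and the entropy deficiency. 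Once this is done, standard manipulations convert the result from flat $k$-sources to arbitrary $k$-sources, completing the proof.
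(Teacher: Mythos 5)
The statement you are trying to prove is not proved in the paper at all: it is imported as a black-box ingredient, cited verbatim as Theorem 6.36 of Vadhan's \emph{Pseudorandomness} monograph (which is the Guruswami--Umans--Vadhan extractor). So the relevant question is only whether your blind sketch would actually constitute a proof of that known result, and as written it would not. The step that fails is the construction itself: if the seed merely selects a coordinate of a Parvaresh--Vardy/folded Reed--Solomon encoding and the extractor outputs that single symbol, the output length is $\log q = O(\log(n/\varepsilon))$ bits, which is nowhere near the required $m \ge k/2$; the parenthetical ``possibly truncated or lengthened via additional bit-extraction'' hides exactly the part of GUV that does the work. The actual route is to output a block of correlated symbols $(f(y), f_1(y), \ldots, f_{m'-1}(y))$ (powers of $f$ modulo an irreducible), prove via list-recoverability and a polynomial-counting argument that this is a \emph{lossless condenser} (the output retains essentially all $k$ bits of min-entropy at high entropy rate), and then compose with a simple extractor for sources of high min-entropy rate (e.g.\ leftover-hash-based) to get $m \ge k/2$ with total seed $O(\log(n/\varepsilon))$. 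Your correctness paragraph instead gestures at a Trevisan/Shaltiel--Umans-style oracle-reconstruction argument with a Boolean distinguisher and an advice string, explicitly deferring ``the main obstacle''; that paradigm, applied naively, does not give seed length $O(\log(n/\varepsilon))$ at output rate $1/2$, and in any case the heart of the argument is left undone.

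Your first paragraph (probabilistic-method feasibility over flat sources) is fine but does nothing toward explicitness, and the reduction from arbitrary $k$-sources to flat ones at the end is standard. In short: the approach name-checks the right construction family, but the two load-bearing components --- the condenser-then-extractor composition needed to reach $m \ge k/2$, and the list-recoverability analysis of the condenser --- are missing or misdescribed. Since the paper's authors deliberately cite this theorem rather than reprove it, the appropriate fix for your write-up is either to do the same, or to genuinely carry out the GUV condenser argument rather than sketch a reconstruction that is not the one underlying the cited parameters.
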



\section{Distribution-independent malicious versus nasty noise: Proof of \texorpdfstring{\Cref{thm:main-distribution-free-informal}}{Theorem~\ref{thm:main-distribution-free-informal}}}
\label{sec:dist-independent}

The main result of this section, \Cref{thm:dist-free-combined}, shows that a \emph{distribution-independent} learner $A$ that succeeds with $\eta$-malicious noise can be upgraded to a distribution-independent learner $A'$ that succeeds with $\eta$-nasty noise. The overhead in sample complexity and runtime of this conversion is only polynomial in the original sample complexity and runtime, polynomial in $1/\eps$, and logarithmic in $1/\delta$, and moreover the accuracy and confidence of the nasty noise learner are very nearly as good as the accuracy and confidence of the original malicious noise learner.

\begin{theorem} [Formal version of \Cref{thm:main-distribution-free-informal}:  Nasty noise is no harder than malicious noise for distribution independent learning]
    \label{thm:dist-free-combined}
    For any $\eps, \delta, \eta > 0$ and concept class $\mcC$ of functions $X \to \bits$, if there is a time-$T$ algorithm $A$ that $(\eps, \delta)$-distribution-free learns $\mcC$ using $n$ samples with $\eta$-malicious noise, for
    \begin{equation*}
    k \coloneqq O\paren*{\frac{\ln (1/\delta)}{\eps^2}} \quad\quad\text{and}\quad\quad
 m \coloneqq O\paren*{\frac{n^4 \log(2|X|)^2}{\eps^4}},
    \end{equation*}
    there is a time-$O(k(m + T))$  algorithm $A'$, using $mk$ samples, that $(1.01(\eps(1-\eta) + \eta), 1.01\delta)$-distribution-free learns $\mcC$ with $\eta$-nasty noise.\footnote{As will be clear from the proof, each occurrence of $1.01$ could be replaced with $1+\tau$ where $\tau$ is any fixed constant greater than 0; we give the proof with $1.01$ for ease of presentation.}
\end{theorem}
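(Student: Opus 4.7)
My plan is to follow the two-step route sketched in \Cref{subsec:dist-free-overview}, routing through the TV-noise model of \Cref{def:TV-noise-intro}. Step one shows that any distribution-independent $\eta$-malicious learner is automatically an $\eta$-TV-noise learner with no loss in parameters. Step two invokes the Blanc–Valiant equivalence between non-adaptive and adaptive statistical adversaries \cite{BV24} to convert the TV-noise learner into a nasty-noise learner. Two ``impedance-mismatch'' issues must then be repaired: (a) \cite{BV24} produces a learner for a \emph{fixed-rate} variant of nasty noise, whereas \Cref{def:variable-nasty-noise} uses a binomial corruption count; and (b) \cite{BV24} worsens the $1/\delta$ dependence from polylogarithmic to polynomial, which we would undo using the nasty-noise amplification theorem \Cref{thm:boost-no-holdout-informal}.

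\textbf{From malicious to TV noise.} Let $A$ be the given $\eta$-malicious learner. Fix any target $c\in\mcC$, any base distribution $\mcD$, and any $\mcD'$ with $\dtv(\mcD_c,\mcD')\leq \eta$. I would decompose the mass of $\mcD'$ as $\mcD'=(1-\eta')\mcD''+\eta'\mcE$, where $\mcD''$ is the ``surviving'' portion of $\mcD_c$ after removing $\eta'\leq\eta$ fraction of mass, and $\mcE$ is an arbitrary ``added'' distribution on labeled examples. The key observation is that $\mcD''$ is itself the projection of some distribution $\mcD^\star$ on $X$ under labeling by $c$ (i.e.\ $\mcD''=(\mcD^\star)_c$), because removing mass from $\mcD_c$ only reweights inputs $x$ without altering the fact that labels equal $c(x)$. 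Thus an i.i.d.\ sample from $\mcD'$ is distributed \emph{exactly} like what a malicious adversary could produce when the underlying clean distribution is $\mcD^\star$: each example is independently clean (from $(\mcD^\star)_c$) with probability $1-\eta'$ and otherwise drawn from $\mcE$ (which the adversary is free to choose, even non-adaptively). Since $A$ is distribution-\emph{independent}, its guarantee holds with respect to $\mcD^\star$, and a standard accuracy-on-$\mcD^\star$ versus accuracy-on-$\mcD$ comparison (the symmetric difference is absorbed into the error by $\eta\leq\eps$, or more carefully by treating the small-$\eta$ and large-$\eta$ regimes separately as is standard in malicious-noise arguments) delivers the TV-noise guarantee with essentially the same $(\eps,\delta)$.

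\textbf{From TV noise to nasty noise, plus parameter cleanup.} Next I would apply the Blanc–Valiant reduction to lift the TV-noise learner to a fixed-rate nasty-noise learner; this is where the sample complexity blows up to something like $m=\tilde O(n^4\log^2|X|/\eps^4)$, matching the $m$ in the statement. I would then invoke the separately-stated lemma that standard (binomial-rate) nasty noise is at least as hard as fixed-rate nasty noise (referenced in the text as \Cref{prop:standard-nasty-harder}), which lets us exchange the two variants at the cost of at most a constant multiplicative factor in the error and an additive $\exp(-\Omega(\eta n))$ in the failure probability. At this point we have an $\eta$-nasty-noise learner, but with only a constant success probability (due to the $\poly(1/\delta)$ dependence coming from \cite{BV24}). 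Finally, I would apply the nasty-noise success-probability amplification guaranteed by \Cref{thm:boost-no-holdout-informal}: running $k=O(\log(1/\delta)/\eps^2)$ copies and combining them boosts confidence to $1-1.01\delta$ while adding only an $O(\eps)$ additional error, yielding the claimed $(1.01\eps,1.01\delta)$ guarantee with sample complexity $mk$ and runtime $O(k(m+T))$.

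\textbf{Main obstacle.} I expect the most delicate step to be the malicious-to-TV reduction, because the malicious model has independent per-example coin flips whereas TV noise does not reason about samples at all — one has to verify that ``removing $\eta$ mass from $\mcD_c$'' truly corresponds to an allowed reweighting of the underlying input distribution that a distribution-free learner must tolerate, without the reweighting smuggling in label noise. Getting the constants to land at exactly $(1.01\eps,1.01\delta)$ also requires care: I would route the extra slack through the amplification step, which provides us with a tunable $\epsa$ in \Cref{thm:boost-no-holdout-informal} that we can set to $\eps/200$ so the accumulated losses across the three reductions fit within the claimed $0.01\eps$ and $0.01\delta$ budgets.
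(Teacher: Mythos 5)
Your route is the same as the paper's: malicious $\Rightarrow$ Huber $\Rightarrow$ TV noise (re-interpreting removed mass as a change of the input distribution), then TV $\Rightarrow$ fixed-rate nasty via \cite{BV24} (\Cref{prop:TV-harder-nasty}), fixed-rate $\Rightarrow$ standard nasty (\Cref{prop:standard-nasty-harder}), and finally \Cref{thm:boost-no-holdout} to repair the $\delta$-dependence; your $k$ and $m$ match the statement. The substantive divergence -- and the gap -- is how you close the malicious-to-TV step. You correctly identify that the Huber guarantee obtained after the decomposition is with respect to the surviving distribution $\mcD^{\add}$ (your $\mcD^{\star}$), whereas the TV-noise guarantee fed into \Cref{lem:TV-harder-nasty} must be stated with respect to the original $\mcD$, and you propose to pay for the mismatch with an additive $\eta$, justified by ``$\eta\le\eps$.'' This does not prove \Cref{thm:dist-free-combined} as stated: $\eta\le\eps$ is not among its hypotheses, and even granting it, the step yields error up to $\eps+\eta\approx 2\eps$, which cannot be hidden in the $0.01\eps$ slack no matter how small you set $\epsa$ downstream (the $\epsa$ knob in \Cref{thm:boost-no-holdout} only controls losses incurred \emph{after} this step). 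The paper's proof of \Cref{thm:dist-free-polynomial} does not take this hit: it concludes that $A$ $(\eps,\delta)$-learns from samples drawn from $\mcD'$ with the \emph{same} parameters, treating the removed mass purely as a change of input distribution handled by distribution-independence and the added mass by \Cref{prop:mal-harder-oblivious}. Your instinct that the evaluation-distribution question is the delicate point of the whole argument is sound (the paper's write-up passes over the $\mcD^{\add}$-versus-$\mcD$ issue without comment), but an additive-$\eta$ concession is in any case too coarse to reach the claimed constants; to match the theorem you must carry out the transfer with no $\eta$-sized loss, as the paper asserts.

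Two smaller points. First, routing every parameter regime through amplification does not work when $\delta$ is large relative to $\eps$: to invoke \Cref{thm:boost-no-holdout} you must first pass to expected error, which costs an additive $\delta$ that then reappears in the final error (e.g.\ $\delta=0.3$, $\eps=0.01$ would give error far above $1.01\eps$). The paper's proof of \Cref{thm:dist-free-combined} handles this by a case split: when $\delta\ge 0.005\eps$ it applies \Cref{thm:dist-free-polynomial} directly with $\delta_{\additional}=0.01\delta$ and never amplifies, and only when $\delta<0.005\eps$ does it invoke \Cref{thm:dist-free-boost}; your plan needs the same split. Second, your description of the fixed-rate-to-standard exchange (``a constant multiplicative factor in the error and an additive $\exp(-\Omega(\eta n))$ in the failure probability'') does not match \Cref{prop:standard-nasty-harder}, which loses nothing in error and adds $\frac{n}{\sqrt m}+\frac n m$ to the failure probability; this is harmless since you defer to that proposition, but the bookkeeping should be corrected.
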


The proof of \Cref{thm:dist-free-combined} requires reasoning about a number of intermediate noise models. We summarize each of these steps in \Cref{fig:structure}.

\tikzstyle{result} = [rectangle, rounded corners, 
minimum width=3cm, 
minimum height=1cm,
text centered, 
text width=5cm, 
draw=black]
\tikzstyle{arrow} = [thick,->,>=stealth]

\begin{figure}[!t]
\label{fig:structure-of-proof}
\begin{tikzpicture}[node distance=2cm]

\node (nastytop) [result] {{\footnotesize{$\eta$-nasty noise,\\$(1.01(\eps(1-\eta) + \eta),1.01\delta)$-learning,
\\$\mathrm{poly}(d,1/\epsilon,\log(1/\delta))$ overhead \\(\Cref{thm:dist-free-combined})}}};

\node (thmII) [result, below of=nastytop, xshift = -5cm, yshift = -0.2 cm] {{\footnotesize{$\eta$-nasty noise,
\\$(1.005(\eps(1-\eta) + \eta)+\delta,\delta)$-learning,
\\$\mathrm{poly}(d,1/\epsilon,\log(1/\delta))$ overhead
\\(\Cref{thm:dist-free-boost})}}};


\node (thmI) [result, below of=thmII, xshift = 5cm, yshift = -0.4cm
] {{\footnotesize{$\eta$-nasty noise, 
\\$(\eps(1-\eta) + \eta,1.01\delta)$-learning, 
\\$\mathrm{poly}(d,1/\delta)$ overhead 
\\(\Cref{thm:dist-free-polynomial})
}}};

\node (fixedrate) [result, below of=thmI, yshift=-0.2cm
] {{\footnotesize{$\eta$-fixed rate nasty noise,
\\$(\eps(1-\eta) + \eta,O(\delta))$-learning,
\\$\poly(d,1/\delta)$ overhead}}};

\node (tv) [result, below of=fixedrate, yshift=0.2cm
] {{\footnotesize{$\eta$-TV noise \\$(\eps(1-\eta) + \eta,\delta)$-learning}}};

\node (huber) [result, below of=tv,yshift=0.35cm] {{\footnotesize{$\eta$-Huber contamination, 
\\$(\epsilon,\delta)$-learning}}};
\node (malicious) [result, below of=huber, yshift=0.2cm] {{\footnotesize{$\eta$-malicious noise, \\$(\epsilon,\delta)$-learning
\\(initial assumption)}}};

\draw [arrow] (thmII) -- (nastytop);
\draw [arrow] (thmI) -- (thmII) node[midway, left] {{\footnotesize{Success probability 
amplification (\Cref{thm:boost-no-holdout})~~~}}};
\draw [arrow] (fixedrate) -- (thmI) node[midway,right]{{\footnotesize{\Cref{prop:standard-nasty-harder}}}};
\draw [arrow] (thmI) -- (nastytop);
\draw [arrow] (tv) -- (fixedrate) node[midway,right]{{\footnotesize{\Cref{prop:TV-harder-nasty}} using \cite{BV24}}};;
\draw [arrow] (huber) -- (tv) node[midway, right] {{\footnotesize{Proof of \Cref{thm:dist-free-polynomial}}}} ;
\draw [arrow] (malicious) -- (huber) node[midway, right]{{\footnotesize{\Cref{prop:mal-harder-oblivious}}}};

\end{tikzpicture}
\caption{The structure of the proof of \Cref{thm:dist-free-combined} (formal version of \Cref{thm:main-distribution-free-informal}).}
\label{fig:structure}
\end{figure}
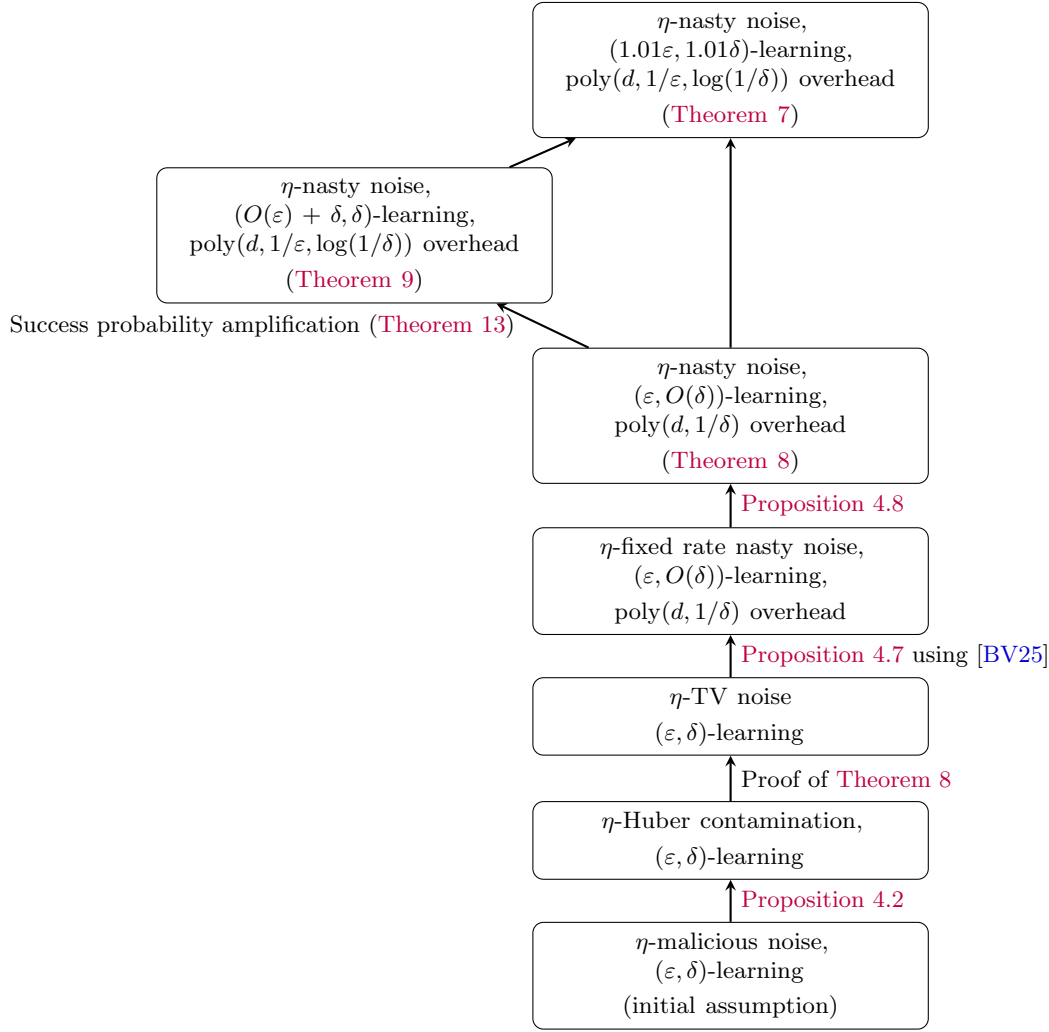

We will execute the proof structure shown in \Cref{fig:structure} in a top-down manner. \Cref{thm:dist-free-combined} follows easily from two different and incomparable statements about upgrading malicious noise learning to nasty noise learning.  The first of these statements is as follows:

\begin{theorem}[Nasty noise is no harder than malicious noise for distribution independent learning, I]
    \label{thm:dist-free-polynomial}
    For any $\eps, \delta, \delta_{\additional}, \eta > 0$ and concept class $\mcC$ of functions $X \to \bits$, if there is a time-$T$ algorithm $A$ that $(\eps, \delta)$-distribution-independently learns $\mcC$ using $n$ samples with $\eta$-malicious noise, for
    \begin{equation*}
        m \coloneqq O\paren*{\frac{n^4 \log(2|X|)^2}{(\delta_{\additional})^4}},
    \end{equation*}
    there is a time-$O(m + T)$ algorithm $A'$, using $m$ samples, that $(\eps(1-\eta) + \eta, \delta + \delta_{\additional})$-distribution-independently learns $\mcC$ with $\eta$-nasty noise.
\end{theorem}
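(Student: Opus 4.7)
The plan is to chain together the four reductions sketched in \Cref{fig:structure}, starting from the assumed $(\eps,\delta)$ distribution-independent $\eta$-malicious learner $A$ and arriving at an $\eta$-nasty noise learner $A'$ via three intermediate noise models: Huber contamination, TV noise, and fixed-rate nasty noise.

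First, I would apply \Cref{prop:mal-harder-oblivious} to reinterpret $A$ as an $(\eps,\delta)$-distribution-free learner in the presence of $\eta$-Huber contamination. Intuitively, Huber contamination is the oblivious (non-adaptive) version of malicious noise (the sample distribution is simply $(1-\eta)\mcD_c + \eta\mcD_{\mathrm{bad}}$ for an adversary-chosen $\mcD_{\mathrm{bad}}$), and is strictly no harder than adaptive malicious noise, so $A$ handles it with identical guarantees.

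Next, I would upgrade $A$ to an $\eta$-TV-noise learner with no parameter loss, which is the conceptually new step. Given any $\mcD'$ with $\dtv(\mcD_c,\mcD') \le \eta$, I would use the two-stage decomposition $\mcD' = (1-\alpha)\,\mcD_{\mathrm{keep}} + \alpha\,\mcD_{\mathrm{add}}$ for some $\alpha \le \eta$, where $\mcD_{\mathrm{keep}}$ is a sub-distribution of $\mcD_c$ (the ``remove'' stage) and $\mcD_{\mathrm{add}}$ is an arbitrary distribution (the ``add'' stage). Because $\mcD_{\mathrm{keep}}$ inherits the labels of $\mcD_c$, it equals $\tilde\mcD_c$, the clean labeled distribution of the target $c$ under the renormalized $x$-marginal $\tilde\mcD$ of $\mcD_{\mathrm{keep}}$. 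Thus the TV-noise sampling process is identical in distribution to a rate-$\alpha \le \eta$ Huber-contamination process with input distribution $\tilde\mcD$, noise $\mcD_{\mathrm{add}}$, and target $c$. Since $A$ is distribution-\emph{independent} and handles $\eta$-Huber contamination on any input distribution, it outputs a hypothesis with error at most $\eps$ with respect to $\tilde\mcD$, which is the form the downstream reductions will consume.

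Third, I would invoke \Cref{prop:TV-harder-nasty}, which applies the Blanc--Valiant adaptive-vs-non-adaptive equivalence \cite{BV24} to convert the TV-noise learner into a fixed-rate $\eta$-nasty noise learner. This is where the polynomial sample blowup enters: the BV24 reduction converts a polylogarithmic dependence on $1/\delta_{\additional}$ into a polynomial one, producing the $m = O(n^4 \log(2|X|)^2 / \delta_{\additional}^4)$ count in the theorem statement, with confidence degradation by $O(\delta_{\additional})$. Finally, I would apply \Cref{prop:standard-nasty-harder} to pass from the fixed-rate variant (exactly $\eta n$ corruptions) to the standard variant ($\Bin(n,\eta)$ corruptions), which by a straightforward concentration argument incurs only an additive loss in confidence, absorbed into the final $\delta + \delta_{\additional}$ bound. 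The main obstacle will be the bookkeeping in the second step: carefully verifying that the decomposition is well-defined for arbitrary $\mcD'$ at TV distance at most $\eta$, that invoking distribution-independence against the reweighted $\tilde\mcD$ yields precisely the guarantee form required as input to \Cref{prop:TV-harder-nasty}, and that no extra accuracy is lost in stitching the four steps together to land exactly at $(\eps, \delta + \delta_{\additional})$.
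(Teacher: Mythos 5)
Your proposal is correct and follows essentially the same route as the paper: the paper also chains \Cref{prop:mal-harder-oblivious}, a reinterpretation of any TV-corrupted $\mcD'$ as an $\eta$-Huber contamination of a \emph{shifted} clean distribution $(\mcD^{\add})_c$ (absorbed by distribution-independence of $A$), and then \Cref{lem:TV-harder-nasty}, i.e.\ \Cref{prop:TV-harder-nasty} (via \cite{BV24}) followed by \Cref{prop:standard-nasty-harder}. The only difference is cosmetic: you phrase the key step as an explicit remove/add mass decomposition $\mcD'=(1-\alpha)\tilde\mcD_c+\alpha\mcD_{\mathrm{add}}$ with $\alpha\le\eta$, while the paper realizes the same decomposition through a coupling and an event $\be$ of probability exactly $\eta$.
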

One downside of \Cref{thm:dist-free-polynomial} is that many learners have failure probabilities, $\delta$, that are exponentially small in the sample size. However, \Cref{thm:dist-free-polynomial} can only efficiently convert such learners into learners that have polynomially small failure probabilities. To deal with the regime where the desired failure probability is extremely small, we introduce a second version of \Cref{thm:dist-free-polynomial}:

\begin{theorem}[Nasty noise is no harder than malicious noise for distribution independent learning, II]
    \label{thm:dist-free-boost}
     For any $\eps, \delta, \epsa, \eta,\delta' > 0$ and concept class $\mcC$ of functions $X \to \bits$, if there is a time-$T$ algorithm $A$ that $(\eps, \delta)$-distribution-independently learns $\mcC$ using $n$ samples with $\eta$-malicious noise, for
    \begin{equation*}
        k \coloneqq O\paren*{\frac{\ln (1/\delta')}{(\epsa)^2}} \quad\quad\text{and}\quad\quad m \coloneqq O\paren*{\frac{n^4 \log(2|X|)^2}{(\epsa)^4}}
    \end{equation*}
     there is a time-$O(k(m + T))$ algorithm $A'$, using $mk$ samples, that $(\eps(1-\eta) + \eta + \delta + \epsa, \delta')$-distribution-independently learns $\mcC$ with $\eta$-nasty noise.
\end{theorem}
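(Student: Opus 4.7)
The plan is to obtain \Cref{thm:dist-free-boost} by a black-box two-step composition: first apply \Cref{thm:dist-free-polynomial} to the given malicious-noise learner $A$ to obtain a nasty-noise learner whose sample complexity and running time depend only polynomially on $1/\epsa$ (but whose failure probability is only modestly small), and then apply the nasty-noise success-amplification result \Cref{thm:boost-no-holdout-informal} to drive the overall failure probability down to $\delta'$ at only a $\log(1/\delta')$ additional multiplicative overhead in samples and time.

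Concretely, the first step is to invoke \Cref{thm:dist-free-polynomial} on $A$ with $\deltaa = \epsa/2$. This yields a distribution-independent nasty-noise learner $A_1$ that $(\eps, \delta + \epsa/2)$-learns $\mcC$ using $m = O(n^4 \log(2|X|)^2 / \epsa^4)$ samples and running in time $O(m + T)$. The second, purely definitional, step is to observe (as recorded in \Cref{sec:expected-error}) that any $(\eps', \delta'')$-PAC learner has expected error at most $\eps' + \delta''$ since the error of any hypothesis is bounded by $1$; applied to $A_1$, this shows that $A_1$ has expected error at most $\eps + \delta + \epsa/2$ against $\eta$-nasty noise on every underlying distribution $\mcD$.

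The third step is to feed $A_1$ into \Cref{thm:boost-no-holdout-informal}, invoked with amplification accuracy $\epsa/2$ and target confidence $1-\delta'$. Since $A_1$ is distribution-independent and the amplified learner produced by \Cref{thm:boost-no-holdout-informal} does not depend on the underlying distribution, the output is a distribution-independent learner $A'$ that, in the presence of $\eta$-nasty noise, returns a hypothesis with error at most $(\eps + \delta + \epsa/2) + \epsa/2 = \eps + \delta + \epsa$ with probability at least $1 - \delta'$. Its sample complexity is $m \cdot O(\log(1/\delta')/\epsa^2) = mk$ and its running time is $O((m+T) \cdot \log(1/\delta')/\epsa^2) = O(k(m + T))$, matching the bounds claimed in the theorem statement.

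Since both ingredients are already established, there is no substantial obstacle; the only real work is bookkeeping, namely splitting the allowed slack $\epsa$ into $\epsa/2$ for each reduction and verifying that the expected-error interface demanded by the amplification theorem is supplied for free by the $(\eps,\delta)$-PAC guarantee coming out of \Cref{thm:dist-free-polynomial}. If one wanted to squeeze the constants, the two halves of $\epsa$ could be replaced by $\tau \epsa$ and $(1-\tau)\epsa$ for any $\tau \in (0,1)$ without affecting the asymptotics.
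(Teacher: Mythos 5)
Your proposal is correct and is essentially the paper's own proof: apply \Cref{thm:dist-free-polynomial} to turn the malicious-noise learner into a nasty-noise learner with polynomially small failure probability, convert its $(\eps,\delta)$-guarantee into an expected-error bound using that hypothesis error is at most $1$, and then amplify with \Cref{thm:boost-no-holdout}, with the sample/time accounting exactly as you state. The only differences are cosmetic: you split the slack as $\epsa/2 + \epsa/2$ up front where the paper uses $\epsa$ in both steps and renames at the end, and the paper additionally dispatches the degenerate case $\delta \geq 1/2$ via the trivial random hypothesis, a case your argument in fact also covers (or which is handled trivially), so its omission is harmless.
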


In \Cref{thm:dist-free-boost}, the time and sample complexity of $A'$ depend only logarithmically on $\delta'$. In exchange, the error of $A'$ deteriorates by $\delta + \epsa$ is worse than that of $A$ by an additive factor of $\delta$ and $\epsa$. We remark that in general, the best error of a learning algorithm is only polynomial in the sample size, and we can choose to make $\epsa$ also polynomial in the sample size. Furthermore, \Cref{thm:dist-free-boost} is useful in the regime where $\delta$ is exponentially small in the sample size, so the additive $\delta$ term is dominated by the other terms.

We will see that the proof of \Cref{thm:dist-free-boost} is an easy consequence of \Cref{thm:dist-free-polynomial} and our general result that shows how to amplify success probabilities of learners even in the presence of nasty noise, \Cref{thm:boost-no-holdout}.
First, let us show how \Cref{thm:dist-free-combined} follows easily from \Cref{thm:dist-free-polynomial} and \Cref{thm:dist-free-boost}:

\begin{proof}[Proof of \Cref{thm:dist-free-combined}]
Suppose first that $\delta \geq 0.005\eps$.  In this case, we apply \Cref{thm:dist-free-polynomial} with $\delta_{\additional}=0.01\delta$ to get an $(\eps(1-\eta) + \eta,1.01\delta)$-distribution-independent learner in the presence of $\eta$-nasty noise.  Observing that
\[
        m \coloneqq O\paren*{\frac{n^4 \log(2|X|)^2}{(\delta_{\additional})^4}}
        = O\paren*{\frac{n^4 \log(2|X|)^2}{\eps^4}},
\]
the time and sample complexity are as claimed.

The other case is that $\delta < 0.005\eps$. 
In this case, we apply \Cref{thm:dist-free-boost} with $\eps_{\additional}=0.005\eps,\delta'=\delta$ to get a $(1.01(\eps(1-\eta) + \eta),\delta)$-distribution-independent learner in the presence of $\eta$-nasty noise, with time and sample complexity as claimed.
\end{proof}

\subsection{The ingredients needed in the proof of \texorpdfstring{\Cref{thm:dist-free-polynomial,thm:dist-free-boost}}{Theorems~\ref{thm:dist-free-polynomial}~and~\ref{thm:dist-free-boost}}}
In this section, we'll state, without proof, the main ingredients needed to prove \Cref{thm:dist-free-polynomial,thm:dist-free-boost} and show how those theorems follow from the ingredients. We provide proofs of the ingredients in following subsections.

We introduce two additional noise models that play a key role in this proof.
\begin{definition}[PAC learning in Huber's contamination model \cite{Hub64}]
    \label{def:Huber}
    In PAC learning with $\eta$-Huber contamination, the adversary chooses a distribution of outliers $\mcO$ over labeled examples\footnote{Note this distribution is entirely adversarial. In particular, it need not be consistent with any member of the concept class.}. Each sample the learner receives is then i.i.d.~generated as follows: With probability $1-\eta$, the sample is set to a clean point $(\bx, c(\bx))$ for $\bx \sim \mcD$. Otherwise, it is drawn from the outlier distribution, $(\bx, \by) \sim \mcO$.
\end{definition}

We'll show, quite easily, that a learner that succeeds with malicious noise also succeeds with Huber contamination.
\begin{restatable}[Malicious noise is at least as hard as Huber contamination]{proposition}{malHard}
    \label{prop:mal-harder-oblivious}
    For any concept class $\mcC$ of functions $X \to \bits$ and distribution $\mcD$ over $X$, if an algorithm $A$ $(\eps, \delta)$-learns $\mcC$ over distribution $\mcD$ with $\eta$-malicious noise, then it also $(\eps,\delta)$-learns $\mcC$ over distribution $\mcD$ with $\eta$-Huber contamination.
\end{restatable}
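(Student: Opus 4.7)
The plan is to exhibit a specific malicious-noise adversary strategy whose induced sample distribution is identical to the one produced by an arbitrary $\eta$-Huber contamination adversary. Once this is done, the proposition is immediate: since $A$ is guaranteed to $(\eps,\delta)$-learn $\mcC$ against \emph{every} malicious adversary strategy, it in particular succeeds against this specific strategy, and hence succeeds with the same accuracy and confidence on data generated by the Huber contamination process.

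Concretely, fix any outlier distribution $\mcO$ over $X \times \bits$ used by the Huber adversary. Define a malicious adversary that operates as follows: for each index $i$ for which the independent $\eta$-biased coin in step (ii) of \Cref{def:pac-learn-malicious} comes up heads, it outputs a fresh example $(\bx_i', \by_i')$ drawn i.i.d.~from $\mcO$, independently of everything else. This is a legitimate malicious strategy, since the $i$-th corruption is chosen using only fresh external randomness and does not depend on any ``future'' event such as the draw of $\bx_{i+1}\sim \mcD$. By construction, the joint distribution of the $n$ examples received by $A$ is the product distribution in which each example is, independently, a clean pair $(\bx,c(\bx))$ with probability $1-\eta$ and an outlier drawn from $\mcO$ with probability $\eta$; this is exactly the sample distribution under $\eta$-Huber contamination with outlier distribution $\mcO$.

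If one prefers to work with deterministic malicious adversaries only, a standard averaging argument suffices: conditioning on the adversary's internal random string $r$ yields, for each outcome, a deterministic malicious strategy against which $A$ fails with probability at most $\delta$, and averaging over $r$ preserves this bound. There is no real obstacle to this proof; its content lies entirely in recognizing that oblivious i.i.d.~replacement from a fixed outlier distribution is a permissible special case of the malicious-noise corruption process, so the claim follows with no loss in either the accuracy parameter $\eps$ or the confidence parameter $\delta$.
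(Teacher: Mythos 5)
Your proposal is correct and matches the paper's own argument: both construct the malicious adversary that, whenever its $\eta$-coin comes up heads, draws its corrupted example i.i.d.~from the Huber outlier distribution $\mcO$, observe that this induces exactly the Huber sample distribution, and conclude via the malicious-noise guarantee with no loss in $\eps$ or $\delta$. The extra remark about derandomizing via averaging is fine but unnecessary, since the malicious adversary is permitted to be randomized.
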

 Second, we introduce a noise model that has sometimes been referred to as ``general, non-adaptive, contamination"\cite{DK23book}, though we'll give it a different name that emphasizes the definition.
 \begin{definition}[TV noise, restatement of \Cref{def:TV-noise-intro}]
 \label{def:TV-noise}
    In PAC learning with $\eta$-TV noise, given a base distribution $\mcD$ and concept $c$, let $\mcD_c$ be the resulting distribution over clean labeled examples, $(\bx, c(\bx))$, where $\bx \sim \mcD$. The adversary chooses any $\mcD'$ where
    \begin{equation*}
        \TV(\mcD_c, \mcD') \leq \eta.
    \end{equation*}
    The sample is then generated by taking i.i.d.~draws from $\mcD'$.
\end{definition}
The second ingredient connects TV noise and nasty noise.
\begin{lemma}[TV noise is at least as hard as nasty noise]
    \label{lem:TV-harder-nasty}
    For any $\eps, \delta, \delta_{\additional} > 0$, concept class $\mcC$ of functions $X \to \bits$ and distribution $\mcD$ over $X$, if an algorithm $A$ using $n$ samples $(\eps, \delta)$-learns $\mcC$ over distribution $\mcD$ with $\eta$-TV noise, then, for
    \begin{equation*}
        m \coloneqq O\paren*{\frac{n^4 \log (2|X|)^2}{(\delta_{\additional})^4}}
    \end{equation*}
    and $A'$ being the $m$-sample algorithm which runs $A$ on a uniformly random size-$n$ subsample of its input, $A'$ $(\eps,\delta + \delta_{\additional})$-learns $\mcC$ over distribution $\mcD$ with $\eta$-nasty noise.
\end{lemma}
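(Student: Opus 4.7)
The goal is to exhibit, for any $\eta$-nasty adversary acting on $m$ samples, a distribution $\mcD'$ with $\TV(\mcD',\mcD_c)\le\eta$ such that the joint distribution of the uniformly random $n$-subsample that $A'$ feeds to $A$ is within total variation $\delta_{\additional}$ of $(\mcD')^n$. Once this coupling is in hand, the lemma follows by one line of data processing: since $A$ is promised to $(\eps,\delta)$-learn $\mcC$ on an i.i.d.\ sample from any such $\mcD'$, its hypothesis on the (at most $\delta_{\additional}$-far) subsample has error exceeding $\eps$ with probability at most $\delta+\delta_{\additional}$.

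First, fix an arbitrary (w.l.o.g.\ deterministic) $\eta$-nasty strategy $B$ and write $\bS^\ast=B(\bS)$ for the corrupted $m$-sample. Let $\mcD'$ be the unconditional marginal distribution of a single coordinate of the random subsample; by uniformity of subsampling, $\mcD'=\tfrac{1}{m}\sum_{i\in[m]}\mathcal{E}_i$, where $\mathcal{E}_i$ is the marginal distribution of coordinate $i$ of $\bS^\ast$. For each $i$ we have $\mathcal{E}_i=(1-\Pr[i\in\bZ])\mcD_c+\Pr[i\in\bZ]\nu_i$ for some distribution $\nu_i$, so $\TV(\mathcal{E}_i,\mcD_c)\le\Pr[i\in\bZ]$; averaging and using $\tfrac{1}{m}\sum_i\Pr[i\in\bZ]=\Ex[|\bZ|]/m=\eta$ yields $\TV(\mcD',\mcD_c)\le\eta$, so $\mcD'$ is a valid TV-noise distribution.

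Second, I would show that the joint distribution of the random $n$-subsample of $\bS^\ast$ is within total variation $\delta_{\additional}$ of $(\mcD')^n$. Two sources of correlation need to be controlled: (i) sampling without replacement from a size-$m$ multiset, contributing a birthday-type deviation of order $n^2/m$ between the subsample and i.i.d.\ draws from the empirical of $\bS^\ast$; and (ii) the adaptive correlations among the coordinates of $\bS^\ast$ coming from $B$ reading all of $\bS$. The essential observation for (ii)---which is the heart of the non-adaptive-to-adaptive coupling of \cite{BV24}---is that random subsampling decorrelates the adaptive adversary into an oblivious one: conditional on the uniformly random choice of the $n$ subsampled indices, the adversary's joint behaviour on those indices can be coupled with a non-adaptive corruption of matching marginals, at a TV cost controlled by the concentration of the empirical frequencies of points of $X$ under $\bS$. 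A Hoeffding bound combined with a union bound over the $|X|$ possible values of a single example produces a rate of order $\sqrt{\log(2|X|)/m}$; propagating this across the $n$ coordinates and demanding the aggregate TV deviation be at most $\delta_{\additional}$, then solving for $m$, produces the stated $m = O(n^4\log^2(2|X|)/\delta_{\additional}^4)$.

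Third, apply the data-processing inequality: for any (possibly randomised) test on the $n$-sample---in particular the indicator that $A$'s output has error exceeding $\eps$ under $\mcD$---the acceptance probability shifts by at most the TV distance between input distributions. Combined with the $(\eps,\delta)$ guarantee of $A$ on $(\mcD')^n$, this gives the claimed $(\eps,\delta+\delta_{\additional})$ guarantee for $A'$ under $\eta$-nasty noise.

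\textbf{Main obstacle.} The crux is step~2, and within it the adaptive-to-oblivious decorrelation (ii). A naive coupling would lose factors of $|X|$ rather than $\log|X|$, so obtaining the logarithmic dependence genuinely relies on the subsample-mediated coupling of \cite{BV24}. Carefully propagating the Hoeffding/union-bound TV deviations through this coupling to match the precise polynomial $m=O(n^4\log^2(2|X|)/\delta_{\additional}^4)$ is where most of the quantitative work lies.
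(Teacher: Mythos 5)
There is a genuine gap, and it sits exactly at the crux you identify: the claim in your step~2 that the uniformly random $n$-subsample of the nastily corrupted sample is within total variation $\delta_{\additional}$ of $(\mcD')^n$, for the \emph{single} distribution $\mcD'$ obtained by averaging coordinate marginals, is false in general. The nasty adversary is adaptive and may correlate its corruptions with a global function of the clean sample, which makes the subsample law a mixture of far-apart laws rather than (approximately) a product. Concretely, let $a,b\in X$ have negligible mass under $\mcD$, and let the adversary spend its $\Bin(m,\eta)$ budget replacing points with copies of $(a,+1)$ if some parity of the clean sample is even, and with copies of $(b,+1)$ otherwise. Your $\mcD'$ then puts mass about $\eta/2$ on each of $(a,+1)$ and $(b,+1)$, so $(\mcD')^n$ has, with high probability, about $\eta n/2$ copies of each; but the true subsample has (w.h.p., once $\eta n\gg 1$) about $\eta n$ copies of one and none of the other. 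This bimodal law is at total variation distance close to $1$ from \emph{every} product distribution, so no amount of Hoeffding/union-bound bookkeeping can rescue the coupling, and the $\sqrt{\log(2|X|)/m}$ rate you invoke is not the heart of \cite{BV24} in the sense you describe: \Cref{thm:BV-main} is a statement for one fixed test function $f$ at a time, comparing the adaptive adversary's best $\Ex[f\circ\submn]$ with the best oblivious corruption's $\Ex[f]$, where the optimizing oblivious $\mcD'$ may depend on $f$; it does not give TV-closeness of the subsample law to any single i.i.d.\ law.

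The paper's proof avoids exactly this trap by never approximating the subsample by one product distribution. It takes $f(S)=\Pr[A(S)\text{ fails}]$, observes that every valid oblivious corruption in \Cref{thm:BV-main} is an $\eta$-TV corruption under which $A$ fails with probability at most $\delta$ (so the oblivious supremum is at most $\delta$), and then reads off from \Cref{thm:BV-main} that the adaptive, \emph{fixed-rate} nasty failure probability of $A\circ\submn$ is at most $\delta+\delta_{\additional}$ (\Cref{prop:TV-harder-nasty}); a separate coupling (\Cref{prop:standard-nasty-harder}) converts fixed-rate nasty noise (exactly $\lfloor\eta m\rfloor$ corruptions, the model in \cite{BV24}) to standard nasty noise with its $\Bin(m,\eta)$ budget at cost $n/\sqrt{m}+n/m$ --- a discrepancy your plan also does not address. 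If you want to preserve the spirit of your argument, the statement you would need is closeness of the subsample law to a \emph{mixture} over valid $\mcD'$ of $(\mcD')^n$ (which would suffice, since $A$ succeeds under each $\mcD'$); but establishing that is essentially what \cite{BV24} provides in its per-test-function form, not something recoverable from marginal averaging plus concentration.
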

We show how to prove \Cref{thm:dist-free-polynomial} with these two ingredients.
\begin{proof}[Proof of \Cref{thm:dist-free-polynomial} assuming \Cref{prop:mal-harder-oblivious} and \Cref{lem:TV-harder-nasty}]
    We observe that if $A$ runs in time $T$, the $A'$ in \Cref{lem:TV-harder-nasty} runs in time $O(m + T)$. Therefore, by \Cref{lem:TV-harder-nasty}, it suffices to show that for any distribution $\mcD$ over $X$, $A$ $(\eps(1-\eta) + \eta,\delta)$-learns $\mcC$ over $\mcD$ with $\eta$-TV noise. We recall what this means (\Cref{def:TV-noise}): For any $c \in \mcC$, using $\mcD_c$ to refer to the distribution of $(\bx, c(\bx))$ where $\bx \sim \mcD$, the adversary is allowed to specify any $\mcD'$ for which $\TV(\mcD_c, \mcD') \leq \eta$, and $A$ receives i.i.d. samples from $\mcD'$.

    The main observation of this proof is to show that $\mcD'$ can be viewed as a Huber corruption to $(\mcD^{\add})_c$ where $\mcD^{\add}$ is a distribution over $X$ that is not necessarily equal to $\mcD$. By the definition of TV distance,
there is a coupling of $(\bx, c(\bx)) \sim \mcD_c$ and $(\bx', \by') \sim \mcD'$ for which,
    \begin{equation*}
        \Pr[(\bx, c(\bx)) \neq (\bx', \by')] \leq \eta.
    \end{equation*}
    Let $\be$ be an event that occurs with probability exactly $\eta$ and for which $(\bx, c(\bx)) \neq (\bx', \by')$ can only occur if $\be$ occurs (such an event always exists because $\Pr[(\bx, c(\bx)) \neq (\bx', \by')] \leq \eta$). Let $(\mcD^{\add})_c$ be the distribution over $(\bx, c(\bx))$ conditioned on $\overline{\be}$ and $\mcO$ be the distribution over $(\bx', \by')$ conditioned on $\be$. Then, $\mcD'$ can be written as the mixture,
    \begin{equation*}
        \mcD' = (1 - \eta) \cdot (\mcD^{\add})_c + \eta \mcO.
    \end{equation*}
    Hence, $\mcD'$ can be written as a $\eta$-Huber corruption of $(\mcD^{\add})_c$. Since $A$ is a distribution-independent learner, it should $(\eps,\delta)$ learn with $\eta$-malicious noise even over $\mcD^{\add}$. 
By \Cref{prop:mal-harder-oblivious}, $A$ $(\eps,\delta)$-learns over $\mcD_\add$ with $\eta$-Huber contamination. 
It therefore outputs, with probability at least $1-\delta$, a hypothesis $h$ that has error at most $\eps$ on the distribution $\mcD_{\add}$. We observe that
\begin{equation*}
    \mcD = (1-\eta)\cdot \mcD_{\add} + \eta \cdot \mcD_{\mathrm{remaining}},
\end{equation*}
where $\mcD_{\mathrm{remaining}}$ is the distribution of $\bx$ conditioned on $\be$. This implies that if the error of $h$ on $\mcD_{\add}$ is $\eps$, the error of $h$ on $\mcD$ is at most $\eps(1-\eta) + \eta$.
Therefore, we do have that $A$ $(\eps(1-\eta) + \eta ,\delta)$-learns $c$ given samples from $\mcD'$. The desired result then follows from \Cref{lem:TV-harder-nasty}.
\end{proof}
We now show how \Cref{thm:dist-free-polynomial} implies \Cref{thm:dist-free-boost}. This requires our generic success amplification technique, \Cref{thm:boost-no-holdout}, given in \Cref{sec:amplify}.
\begin{proof}[Proof of \Cref{thm:dist-free-boost}]
    First, we argue that we can, without loss of generality, assume that $\delta < 1/2$. If $\delta \geq 1/2$, it suffices to give a learner achieving $1/2$ error. Such an error is achieved by the trivial (randomized) hypothesis that, on every $x$, outputs $+1$ and $-1$ with equal probability.

    Hence, we assume $\delta < 1/2$. We will apply \Cref{thm:dist-free-polynomial} with $\delta_{\additional} = \epsa$. This gives a learner $A'$ running in time $O(m + T)$ using $m$ samples that $(\eps(1-\eta) + \eta, \delta + \eps_{\additional})$-learns $\mcC$ with $\eta$-nasty noise. The expected error of this $A'$ is at most $\eps(1-\eta) + \eta + \delta + \delta_{\additional}$, since even when $A'$ fails to have $\eps(1-\eta) + \eta$ error, its error is upper bounded by $1$. Therefore, by \Cref{thm:boost-no-holdout}, there is a learner $A''$ using $mk$ samples and running in time $O(k(m+T))$ that $(\eps(1-\eta) + \eta + \delta + 2\epsa, \delta')$ learns $\mcC$ with $\eta$-nasty noise. The desired result follows by renaming $\epsa' = \epsa/2$.
\end{proof}

\subsection{Proof of \texorpdfstring{\Cref{prop:mal-harder-oblivious}}{Proposition~\ref{prop:mal-harder-oblivious}}}
\label{subsec:proof-mal-harder-oblivious}
Here, we provide an easy proof that if a learner succeeds with malicious noise, it also succeeds with Huber contamination.
\begin{proof}[Proof of \Cref{prop:mal-harder-oblivious}]
    This proof follows by showing, for any choices of the Huber adversary, a malicious adversary can create the same distribution over samples. Therefore, if an algorithm succeeds against all malicious adversaries, it must also succeed for all Huber adversaries.

    Consider any concept $c \in \mcC$ and let $\mcD_c$ be the distribution over $(\bx, c(\bx))$ where $\bx \sim \mcD$. The Huber adversary chooses any outlier distribution $\mcO$ over labeled samples and the learning algorithm receives i.i.d.~samples from the mixture $(1-\eta) \cdot \mcD_c + \eta \cdot \mcO$. 
    
    Consider the malicious adversary that, whenever the $\eta$ coin comes up heads, draws $(\bx, \by) \sim \mcO$ as its chosen corrupted pair. Clearly, the resulting distributions over datasets with the original oblivious Huber and this malicious adversary are the same. Therefore, since $A$ is promised to $(\eps, \delta)$-learn with $\eta$-malicious noise, on the samples generated by the Huber adversary, it must with probability at least $1-\delta$ output a hypothesis that has at most $\eps$-error w.r.t. $c$. We conclude that $A$ $(\eps,\delta)$-learns with $\eta$-Huber contamination.
\end{proof}

\subsection{Proof of \texorpdfstring{\Cref{lem:TV-harder-nasty}}{Lemma~\ref{lem:TV-harder-nasty}}}
This proof is split into two pieces. We define a variant of nasty noise, which we refer to as \emph{fixed-rate nasty noise} and use it as an intermediate step. One piece connects TV noise to fixed-rate nasty noise, and the other connects fixed-rate nasty noise to standard nasty noise.

\begin{definition}[Fixed-rate-nasty noise]
    \label{def:fixed-rate-nasty}
    In the $\eta$-fixed-rate nasty noise variant of PAC learning, a sequence of $n$ labeled examples that is given as input to the learning algorithm is obtained as follows:
    \begin{itemize}
        \item [(i$'$)]  First,  a ``clean'' labeled data set of $n$ examples $(\bx_1,c(\bx_1)),\dots,(\bx_n,c(\bx_n))$ is drawn,  where each $\bx_i$ is an i.i.d.~draw from ${\cal D}$.
        \item [(ii$'$)] Then, an arbitrary subset of \emph{exactly} $k=  \floor{\eta n}$ of the $n$ labeled pairs $(\bx_i,c(\bx_i))$ are removed and replaced with $k$ \emph{arbitrary pairs} $(x,y) \in X \times \bits$, and
        the resulting data set is presented to the learning algorithm.
    \end{itemize}
\end{definition}
In order to state the two pieces in the proof of \Cref{lem:TV-harder-nasty}, it will be helpful to have concise notation to denote a uniform subsample.

\begin{definition}[Subsampling filter]
    \label{def:subsampling-filter}
    For any $m \geq n$ we define the \emph{subsampling filter} $\sub_{m \to n}: X^m \to X^n$ as the (randomized) algorithm that given $S \in X^m$, returns a sample of $n$ points drawn uniformly without replacement from $S$.
\end{definition}

We now give the two pieces that, together, easily imply \Cref{lem:TV-harder-nasty}.
\begin{restatable}[TV noise to fixed-rate nasty noise]{proposition}{TVNasty}
    \label{prop:TV-harder-nasty}
    For any concept class $\mcC$ of functions $X \to \bits$ and distribution $\mcD$ over $X$, if an algorithm $A$ using $n$ samples $(\eps, \delta)$-learns $\mcC$ over distribution $\mcD$ with $\eta$-TV noise, then, $A' \coloneqq A \circ \submn$, where
    \begin{equation*}
        m \coloneqq O\paren*{\frac{n^4 \log (2|X|)^2}{(\delta_{\additional})^4}},
    \end{equation*}
    $(\eps,\delta + \delta_{\additional})$-learns $\mcC$ over distribution $\mcD$ with $\eta$-fixed-rate nasty noise.
\end{restatable}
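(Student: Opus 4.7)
The plan is to invoke the recent result of Blanc and Valiant \cite{BV24}, already cited in the preceding discussion, which establishes that an adaptive contamination adversary that modifies an $\eta$-fraction of $m$ i.i.d.~samples after seeing them is no more powerful than a non-adaptive TV-shift adversary acting on the underlying distribution, with uniform subsampling providing the black-box reduction. In our language, the non-adaptive model is precisely $\eta$-TV noise (\Cref{def:TV-noise}) and the adaptive model is precisely $\eta$-fixed-rate nasty noise (\Cref{def:fixed-rate-nasty}), so the reduction should apply essentially directly, with $A' = A \circ \submn$ as the subsampling-composed learner.

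Concretely, I would recast PAC learning as a statistical problem in the BV24 framework by taking the performance functional to be the $\{0,1\}$-valued indicator of the event $\error_{\mcD}(h,c) \leq \eps$, which depends only on the algorithm's output $h$ (with target concept $c$ and distribution $\mcD$ held fixed), and is therefore a legitimate statistical functional in their sense. Instantiating the BV24 theorem on $A$ (viewed as an $(\eps,\delta)$-learner against the non-adaptive TV adversary) then yields the algorithm $A' = A \circ \submn$, and their quantitative bound --- specialized to a sample space of size $|X \times \bits| = 2|X|$, for which the relevant log-cardinality parameter is $O(\log|X|)$ --- produces the sample complexity $m = O(n^4 \log^2(2|X|)/\delta_\additional^4)$. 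This reduction guarantees that the failure probability of $A'$ under any $\eta$-fixed-rate nasty adversary exceeds that of $A$ under the worst-case $\eta$-TV adversary by at most $\delta_\additional$, giving the claimed $(\eps,\delta+\delta_\additional)$-learning bound.

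The main obstacle is bookkeeping rather than conceptual: one must verify that the BV24 adversary model matches fixed-rate nasty noise exactly (in particular, that the adversary can see the entire clean sample and must replace exactly $\lfloor \eta m \rfloor$ positions), and track the parametric dependence so that the $O(n^4 \log^2(2|X|)/\delta_\additional^4)$ form emerges as advertised. Should either check fail, a fallback plan would be to argue directly by coupling the subsampled output of the fixed-rate nasty process with $n$ i.i.d.~draws from the empirical distribution of the $m$ corrupted samples (which is within $\eta$ in TV of the clean empirical distribution), and bounding the coupling failure by $\delta_\additional$ through a uniform concentration argument on the $n$-fold subsample.
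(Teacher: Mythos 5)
Your proposal matches the paper's proof essentially exactly: the paper also fixes a concept $c$, takes the (randomized) test function $f:(X\times\bits)^n\to[0,1]$ recording the probability that $A$'s output meets the $\eps$-error criterion, sets the corruptible set to the full augmented domain of size $2|X|$, and invokes Theorem~3 of \cite{BV24} to transfer the $\delta$ bound against $\eta$-TV noise to a $\delta+\delta_{\additional}$ bound for $A\circ\submn$ against the fixed-rate nasty adversary. The bookkeeping issue you flag (``at most'' versus ``exactly'' $\floor{\eta m}$ replacements) is benign, since an adversary making exactly $\floor{\eta m}$ arbitrary replacements can simulate fewer by replacing entries with their original values.
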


\begin{restatable}[Fixed-rate to standard nasty noise]{proposition}{compareNasty}
    \label{prop:standard-nasty-harder}
    For any concept class $\mcC$ of functions $X \to \bits$ and distribution $\mcD$ over $X$ and $n < m$, let $A'$ be an algorithm of the form $A' = A \circ \submn$ for some algorithm $A$. If $A'$ $(\eps, \delta)$-learns $\mcC$ over distribution $\mcD$ with $\eta$-fixed-rate nasty noise, then, $A'$ $(\eps, \delta + \frac{n}{\sqrt{m}} + \frac{n}{m})$-learns $\mcC$ over $\mcD$ with $\eta$-(standard) nasty noise.
\end{restatable}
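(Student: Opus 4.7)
The plan is to establish the proposition by a direct coupling argument. Given any $\eta$-standard-nasty adversary $\mathcal{A}_{\mathrm{std}}$ and any target concept $c$, I will construct a coupled $\eta$-fixed-rate-nasty adversary $\mathcal{A}_{\mathrm{fixed}}$ --- operating on the same clean sample and sharing the internal randomness fed into $\submn$ --- such that the $n$-point subsamples that $A$ ultimately processes in the two experiments are literally identical except on an event of probability at most $n/\sqrt m + n/m$. Once this coupling is in hand, the proposition follows immediately, since for every $c \in \mcC$,
\[
  \Pr[A' \text{ fails under } \mathcal{A}_{\mathrm{std}}] \le \Pr[A' \text{ fails under } \mathcal{A}_{\mathrm{fixed}}] + \Pr[\text{coupling fails}] \le \delta + \tfrac{n}{\sqrt m} + \tfrac{n}{m}.
\]

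The construction of the coupling proceeds as follows. Generate one copy of the clean samples $(\bx_1,c(\bx_1)),\dots,(\bx_m,c(\bx_m))$ shared by both experiments, and let $\bZ \subseteq [m]$ with $|\bZ| \sim \Bin(m,\eta)$ be the (possibly randomized) corruption set chosen by $\mathcal{A}_{\mathrm{std}}$, together with its accompanying replacement pairs. Write $\bK \coloneqq |\bZ|$ and $K_0 \coloneqq \lfloor \eta m \rfloor$. If $\bK \le K_0$, instruct $\mathcal{A}_{\mathrm{fixed}}$ to apply the same replacements on $\bZ$ and additionally to mark $K_0 - \bK$ arbitrary further indices as ``corrupted'' via the trivial replacement $(\bx_i,c(\bx_i)) \mapsto (\bx_i,c(\bx_i))$, which is a legal fixed-rate corruption since the adversary may choose \emph{any} pair (including the original one). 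The corrupted $m$-samples then coincide position by position, so the subsamples produced by $\submn$ under shared subsampling randomness also coincide. If instead $\bK > K_0$, instruct $\mathcal{A}_{\mathrm{fixed}}$ to apply the $\mathcal{A}_{\mathrm{std}}$ replacements on only the first $K_0$ indices of $\bZ$ under some fixed canonical ordering of $[m]$, leaving the remaining set $\bZ_{\mathrm{ex}}$ of $\bK - K_0$ ``excess'' indices clean. The two $m$-samples now differ exactly on $\bZ_{\mathrm{ex}}$.

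Finally, bound the coupling failure probability. Since $\submn$ picks its $n$ output positions uniformly at random from $[m]$, each fixed index lands in the subsample with probability $n/m$; conditioning on $\bK$ and union-bounding over $\bZ_{\mathrm{ex}}$ gives $\Pr[\text{subsamples differ} \mid \bK] \le n(\bK - K_0)^+ / m$. Taking expectations,
\[
  \Pr[\text{coupling fails}] \le \frac{n}{m}\,\E\bigl[(\bK - K_0)^+\bigr] \le \frac{n}{m}\bigl(1 + \E[|\bK - \eta m|]\bigr) \le \frac{n}{m} + \frac{n}{m}\sqrt{m\eta(1-\eta)} \le \frac{n}{m} + \frac{n}{\sqrt m},
\]
using $|K_0 - \eta m| \le 1$ and Jensen's inequality together with $\mathrm{Var}(\bK) = m\eta(1-\eta) \le m/4$. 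The only real conceptual hurdle is noticing that although $\Pr[\bK > K_0]$ is approximately $1/2$, the typical excess $\bK - K_0$ is only $O(\sqrt m)$, which dilutes to $O(n/\sqrt m)$ once we project down to a size-$n$ subsample; everything else is a routine union-bound-plus-standard-deviation calculation.
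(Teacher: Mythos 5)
Your proof is correct and follows essentially the same route as the paper's: couple the standard nasty adversary to a fixed-rate adversary by truncating its corruptions at $\lfloor \eta m\rfloor$ (and padding with trivial replacements when the budget falls short), then bound the chance that any excess-corrupted index survives into the random size-$n$ subsample by $\frac{n}{m}\E[(\bK-\lfloor\eta m\rfloor)^+] \le \frac{n}{\sqrt m}+\frac{n}{m}$ via the standard deviation of $\Bin(m,\eta)$. The only difference is presentational: you phrase it as an explicit coupling with shared subsampling randomness and spell out the padding step, which the paper leaves implicit.
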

\Cref{prop:TV-harder-nasty,prop:standard-nasty-harder} together clearly imply \Cref{lem:TV-harder-nasty}, so all that remains is to prove them.
\subsubsection{Proof of \texorpdfstring{\Cref{prop:TV-harder-nasty}}{Proposition~\ref{prop:TV-harder-nasty}}}

This proof uses a recent result of Blanc and Valiant \cite{BV24} that connects various ``oblivious" statistical adversaries with their ``adaptive" counterparts. What they refer to as ``oblivious" adversaries are those that corrupt the distribution from which the learner receives i.i.d~samples (like TV noise) and ``adaptive" adversaries are  those that corrupt the sample directly (like nasty noise). In reading the following result, we encourage the reader to keep in mind the case where the set of corruptible inputs, $X_{\mathrm{corrupt}}$, consists of the entire domain $X$. In this case, the ``adaptive" adversary corresponds to fixed-rate nasty noise and the ``oblivious" adversary corresponds exactly to TV noise.
\begin{theorem}[Theorem 3 of \cite{BV24}]
    \label{thm:BV-main}
    For any domains $X_{\mathrm{corrupt}}\subseteq X$ with $|X|>1$, (possibly randomized) test function $f:X^n \to [0,1]$, distribution $\mcD$ over $X$, noise budget $\eta\in [0,1]$, $\eps > 0$, let
    \begin{equation*}
        m = O\paren*{\frac{n^4(\ln |X|)^2}{\eps^4}}.
    \end{equation*}
    Then, the following quantities are within $\pm \eps$ of one another.
    \begin{enumerate}
        \item The quantity
        \begin{equation*}
            \Ex_{\bS \sim \mcD^m}\bracket*{\sup_{S'\text{ is a valid corruption of }\bS}\set{\Ex[f \circ \submn(S')]}},
        \end{equation*}
        where $S'$ is a valid corruption of $S$ if the number of coordinates, $i \in [n]$, on which $S_i$ and $S'_i$ differ is at most $\floor{\eta m}$, and when they differ, $S_i \in X_{\mathrm{corrupt}}$.
        \item The quantity,
        \begin{equation*}
            \sup_{\mcD'\text{ is a valid corruption of }\mcD}\set{\Ex_{\bS' \sim (\mcD')^n}[f(\bS')]},
        \end{equation*}
        where $\mcD'$ is a valid corruption of $\mcD$ if there is a coupling of $\bx \sim \mcD$ and $\bx' \sim \mcD$ for which $\Pr[\bx \neq \bx'] \leq \eta$ and, when they differ, $\bx \in X_{\mathrm{corrupt}}$.
    \end{enumerate}
\end{theorem}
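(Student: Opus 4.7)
The plan is to prove the two inequalities separately: (a) Quantity~2 $\leq$ Quantity~1 $+\eps$, which is the routine direction, and (b) Quantity~1 $\leq$ Quantity~2 $+\eps$, which is the main technical content.

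For direction (a), I would argue by simulation: given any oblivious corruption $\mcD'$ of $\mcD$ with $\TV(\mcD,\mcD')\leq\eta$ (and agreement outside $X_{\mathrm{corrupt}}$), invoke the coupling characterization of $\TV$ to obtain a joint law of $\bS\sim\mcD^m$ and $\bS''\sim(\mcD')^m$ in which the number of differing coordinates is $\Bin(m,\eta)$, and the differing coordinates lie in $X_{\mathrm{corrupt}}$. By the multiplicative Chernoff bound (\Cref{fact:multiplicative-Chernoff-bound}), this number is at most $\floor{\eta m}$ except with probability $O(\eps)$, so on this event the adaptive adversary can legally set its corruption to $\bS''$. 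Then $\submn(\bS'')$ is $n$ coordinates sampled without replacement from $m$ i.i.d.~draws from $\mcD'$; by exchangeability this has exactly the same law as $(\mcD')^n$, so the $f$-averages agree up to $O(\eps)$.

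For direction (b), fix the optimal (possibly randomized) adaptive strategy, viewed as a corruption map $g:X^m\to X^m$ that differs from the identity on at most $\floor{\eta m}$ coordinates (all lying in $X_{\mathrm{corrupt}}$). Define the candidate oblivious distribution $\mcD'$ as the marginal law of a single uniformly chosen coordinate of $g(\bS)$ when $\bS\sim\mcD^m$. Each coordinate of $g(\bS)$ agrees with the corresponding coordinate of $\bS$ (which is $\mcD$-distributed) except with probability at most $\eta$, so $\TV(\mcD,\mcD')\leq\eta$ and, by construction, $\mcD'$ and $\mcD$ agree outside $X_{\mathrm{corrupt}}$; hence $\mcD'$ is a valid oblivious corruption. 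It then suffices to prove
\[
\Ex_{\bS}\bigl[\Ex[f\circ\submn(g(\bS))]\bigr]\;\leq\;\Ex_{\bU\sim(\mcD')^n}[f(\bU)]+\eps.
\]
The obstacle here is that the $n$ coordinates of $\submn(g(\bS))$ are not i.i.d.~from $\mcD'$: they are coupled through the shared draw $\bS$ and through sampling without replacement. I would handle the latter by noting $n\ll m$, so without-replacement and with-replacement sampling from $g(\bS)$ differ in total variation by $O(n^2/m)$, which is $\ll\eps$ for the stated $m$. After this reduction the left-hand side becomes $\Ex_{\bS}\bigl[\Ex_{\bU\sim\bmu^n}[f(\bU)]\bigr]$ where $\bmu$ is the empirical distribution of $g(\bS)$.

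The heart of the proof is then to show that the random distribution $\bmu$ concentrates around $\mcD'=\Ex[\bmu]$ well enough that the $n$-fold-product functional $\Phi(\nu):=\Ex_{\bU\sim\nu^n}[f(\bU)]$ concentrates to within $\eps$. This is the main obstacle: the naive bound $\TV(\bmu,\mcD')=O(\sqrt{|X|/m})$ is too weak because the target complexity is only polylogarithmic in $|X|$. I would follow the LP-duality/minimax route of Blanc--Valiant: rather than reasoning about a specific $g$, cast the worst-case $g$ as the solution of a linear program whose variables are distributions over corrupted datasets, exchange $\sup$ and $\Ex$ via a minimax theorem, and observe that the dual problem operates on a single marginal at a time. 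The concentration one needs is then on the dual side and reduces to a VC/Rademacher-style uniform bound over a class of complexity $\poly(n)\log|X|$ (coming from the at-most-$\floor{\eta m}$ swaps permitted to $g$), which converts into the target sample complexity $m=O(n^4\log(2|X|)^2/\eps^4)$ by a standard symmetrization and Hoeffding (\Cref{fact:hoeffding-inequality}) argument. Combining this uniform-convergence step with the without-replacement correction and the easy direction (a) completes the proof.
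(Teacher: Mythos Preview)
This statement is not proved in the paper at all: it is \emph{Theorem~3 of \cite{BV24}}, imported as a black box and used as an ingredient in the proofs of \Cref{prop:TV-harder-nasty} and \Cref{lem:compare-mal}. There is therefore no ``paper's own proof'' to compare your proposal against.

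That said, two remarks on your sketch. First, in direction~(a) you claim the number of differing coordinates is $\Bin(m,\eta)$ and is at most $\lfloor\eta m\rfloor$ except with probability $O(\eps)$; this is false as stated, since $\Bin(m,\eta)$ exceeds its (approximate) mean $\lfloor\eta m\rfloor$ with probability roughly $1/2$, not $O(\eps)$. A standard patch is to approximate an arbitrary valid $\mcD'$ by $(1-t)\mcD+t\mcD'$ for $t$ slightly below~$1$, which has strictly smaller TV distance and whose induced binomial concentrates below $\lfloor\eta m\rfloor$, at the cost of an $O(\eps)$ change in the $n$-fold product; but you should say this explicitly. Second, for direction~(b) your proposal ultimately reads ``follow the LP-duality/minimax route of Blanc--Valiant,'' which is a citation rather than an argument; the specific construction you give before that (take $\mcD'$ to be the one-coordinate marginal of $g(\bS)$ and hope the empirical measure of $g(\bS)$ concentrates around it) does not by itself yield the claimed $\poly\log|X|$ dependence, as you yourself note. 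So as a standalone proof attempt, direction~(b) is essentially deferred to the original source.
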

We now prove \Cref{prop:TV-harder-nasty}, restated below for the reader's convenience.
\TVNasty*
\begin{proof}
      Fix a concept $c \in \mcC$. For any sample $S \in X^n$, define
    \begin{equation*}
        f(S) \coloneqq \Pr[A(S)\text{ is not $\eps$ close to }c \text{~under~}\mcD],
    \end{equation*}
    where the probability is taken over the randomness of the learner $A$. Consider the augmented domain $X' \coloneqq X \times \bits$ and define:
    \begin{enumerate}
        \item $\mcD_c$ to be the distribution over $(\bx, c(\bx))$ where $\bx \sim \mcD$.
        \item $X_{\mathrm{corrupt}} = X'$.
    \end{enumerate}
    Observe, that in \Cref{thm:BV-main}, if $\mcD'_c$ is a valid corruption of $\mcD_c$, there is a coupling of $\bx \sim \mcD_c$ and $\bx' \sim \mcD'_c$ for which $\Pr[\bx \neq \bx'] \leq \eta$, and hence $\TV(\mcD_c, \mcD_c') \leq \eta$. Therefore, since $A$ $(\eps, \delta)$-learns $\mcC$ with $\eta$-TV
    noise, we have that
    \begin{equation*}
        \sup_{\mcD_c'\text{ is a valid corruption of }\mcD_c}\set{\Ex_{\bS' \sim (\mcD_c
        )^n}[f(\bS')]} \leq \delta.
    \end{equation*}
    Therefore, using that $\log|X'| =\log (2|X|))$
    and the choice of $m$ in \Cref{prop:TV-harder-nasty}, \Cref{thm:BV-main} gives that
    \begin{equation*}
        \Ex_{\bS \sim \mcD_c^m}\bracket*{\sup_{S'\text{ is a valid corruption of }\bS}\set{\Ex[f \circ \submn(S')]}} \leq \delta + \delta_{\additional}.
    \end{equation*}
    $S'$ is a valid corruption of $S$ iff they differ on at most $\floor{\eta m}$ coordinates. This exactly corresponds to the fixed-rate nasty adversary being able to corrupt $S$ to $S'$. Furthermore, on any sample $S'$, $\Ex[f \circ \submn(S')]$ corresponds to the probability that $A'$ fails to return a good hypothesis given the sample $S'$. Therefore, the above guarantees that $A'$ returns a good hypothesis with probability at least $1-(\delta + \delta_{\additional})$, as required.
\end{proof}

\subsection{Proof of \texorpdfstring{\Cref{prop:standard-nasty-harder}}{Proposition~\ref{prop:standard-nasty-harder}}}
We prove the below proposition, restated for the reader's convenience:
\compareNasty*
\begin{proof}
    Fix any concept $c$ and strategy for the (standard) nasty adversary on which $A'$ fails to produce a hypothesis with error at most $\eps$ with probability $\delta'$. We will show there is a corresponding strategy for the fixed-rate nasty adversary in which $A'$ fails in learning with probability at least $\delta' - \frac{n}{\sqrt{m}} - \frac{n}{m}$. The desired result then follows from the assumption that $A'$ fails with probability at most $\delta$ for any fixed-rate nasty adversary.

    Our fixed-rate adversary will do the following.
    \begin{enumerate}
        \item  Recall that, upon receiving a clean $m$-element sample $\bS$, the (standard) nasty adversary chooses a budget $\bz$ such that the marginal distribution of $\bz$ is $\Bin(m,\eta)$.
        \item If this budget $\bz$ is at most $\floor{m \eta}$, the fixed-rate adversary makes exactly the same corruptions as the (standard) nasty adversary makes to $\bS$.
        \item If this budget is more than $\floor{m \eta}$, the fixed-rate adversary makes the first $\floor{m\eta}$ corruptions the (standard) nasty adversary makes.
    \end{enumerate}
    Let $\bS'_{\fixed}$ and $\bS'_{\mathrm{standard}}$ refer to the corrupted samples made by the fixed-rate and standard nasty adversaries respectively, and $\Delta(\bS'_{\fixed}, \bS'_{\mathrm{standard}})$ refer to the number of coordinates on which these two samples differ. With this strategy,
    \begin{equation*}
        \Ex[\Delta(\bS'_{\fixed}, \bS'_{\mathrm{standard}})] \leq \Ex[\max(\bz - \floor{\eta m}, 0)] \leq \sqrt{\Var(\bz)} + \eta m - \floor{\eta m} \leq \sqrt{m} + 1.
    \end{equation*}
     Next, consider two corrupted samples, $S'_{\fixed}$ and $S'_{\mathrm{standard}}$, created by the fixed-rate and standard nasty adversaries respectively. Then, in order for $A'$ to differ on $S'_{\fixed}$ and $S'_{\mathrm{standard}}$, one of the $\Delta$ indices on which $S'_{\fixed}$ and $S'_{\mathrm{standard}}$ differ must be one of the $n$ indices chosen in the random subsample. Each of these $\Delta(S'_{\fixed}, S'_{\mathrm{standard}})$ indices makes it into the subsample with probability $\frac{n}{m}$, so
    \begin{equation*}
        \abs*{\Pr[A'(S'_{\fixed}) \text{ succeeds}] - \Pr[A'(S'_{\mathrm{standard}}) \text{ succeeds}]} \leq \frac{n \cdot \Delta(S'_{\fixed}, S'_{\mathrm{standard}})}{m}.
    \end{equation*}
    Combining this with the prior bound on the expected number of differences between $\bS'_{\fixed}$ and $\bS'_{\mathrm{standard}}$ gives the desired result.
    \end{proof}

\subsection{Applications of \texorpdfstring{\Cref{thm:main-distribution-free-informal}}{Theorem~\ref{thm:main-distribution-free-informal}}}
\label{sec:consequences}

In this subsection we record some consequences of \Cref{thm:dist-free-combined}, the formal version of \Cref{thm:main-distribution-free-informal}, for (i) upgrading \emph{nasty noise} tolerant weak learners to strong learners, and (ii) efficient learning in the presence of nontrivial-rate \emph{nasty noise}. In both of these cases, as described in \Cref{sec:intro}, analogous results were previously known for malicious noise but not for nasty noise, and \Cref{thm:dist-free-combined} lets us extend the results to the nasty noise setting.

\subsubsection{Upgrading weak learners to strong learners in the presence of noise.}
The smooth boosting algorithm described and analyzed in \cite{Servedio:03jmlr} implicitly yields the following result, showing that weak-learners that are tolerant to malicious noise can be upgraded to strong-learners that are tolerant to malicious noise:

\begin{claim} 
[Upgrading malicious-noise-tolerant weak learners to malicious-noise-tolerant strong learners; implicit in \cite{Servedio:03jmlr}]
\label{claim:S03}
Let ${\cal C}$ be a concept class over $\bits^d$, and let $\eta,\gamma$ be such that there is an efficient weak learning algorithm for ${\cal C}$, running in time $T=\poly(d,\log(1/\delta))$, that with probability at least $1-\delta$ achieves advantage $\gamma$ in the presence of malicious noise at rate $\eta$.
Then there is an efficient learning algorithm, running in time $\poly(1/\gamma,1/\eps,d,\log(1/\delta))$ that $(\eps,\delta \cdot \poly(1/\gamma,1/\epsilon))$-distribution-independently learns ${\cal C}$ in the presence of malicious noise at rate $\eta' = \Theta(\eta \eps).$
\end{claim}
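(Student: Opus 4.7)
The plan is to invoke the smooth boosting framework of \cite{Servedio:03jmlr}. Unlike classical boosters such as AdaBoost, a smooth booster maintains distributions $D_t$ at each round that are $(1/\eps)$-\emph{smooth} with respect to the underlying distribution $D$, meaning $D_t(x) \leq (1/\eps) \cdot D(x)$ for every $x$. A smooth booster such as Servedio's SmoothBoost calls the weak learner for $T = O(1/(\gamma^2 \eps))$ rounds and combines the resulting weak hypotheses (via a majority vote / thresholded sum) into a hypothesis of error at most $\eps$ under $D$.

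The key interaction with malicious noise is how the noise rate is amplified by the rejection-sampling process used to simulate $D_t$ from samples of $D$. Since $D_t$ is $(1/\eps)$-smooth with respect to $D$, a rejection sampler targeting $D_t$ accepts examples with probability at least $\eps$. A malicious example in the input sample therefore remains a malicious example after filtering --- the adversary's control over the feature vector and label is not weakened by the filter --- but the \emph{relative} rate of malicious examples in the filtered sample can blow up by a factor of at most $1/\eps$. Consequently, if the input to the strong learner is corrupted at malicious noise rate $\eta' = \Theta(\eta \eps)$, then the effective malicious noise rate that the weak learner sees at each round is at most $\eta$, which is exactly the rate it is promised to tolerate.

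Under this setup, the weak learner returns a hypothesis of advantage $\gamma$ with probability at least $1-\delta$ at each of the $T$ rounds. A union bound over the $T = \poly(1/\gamma, 1/\eps)$ rounds gives overall failure probability at most $\delta \cdot \poly(1/\gamma, 1/\eps)$, and the total runtime is $T \cdot T_{\mathrm{weak}} \cdot \poly(1/\eps) = \poly(d, 1/\gamma, 1/\eps, \log(1/\delta))$, dominated by the $T$ invocations of the weak learner plus the rejection-sampling overhead. Standard smooth-boosting error analysis then yields the desired $(\eps, \delta \cdot \poly(1/\gamma, 1/\eps))$-learning guarantee under $D$.

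The main technical obstacle is formalizing the noise-rate transformation under rejection sampling against a \emph{partially adaptive} malicious adversary: one must verify that the filtered sample can be coupled to a valid (rate at most $\eta$) malicious-noise sample from $D_t$, rather than some weaker or incomparable noise process. The worst-case $1/\eps$ blowup is a direct consequence of the $(1/\eps)$-smoothness of $D_t$ combined with the fact that a malicious corruption in the pre-filter stream is still an unrestricted adversarial pair after filtering. Once this is in place, the rest of the argument is a routine calculation plugging the modified noise rate into the standard smooth-boosting convergence analysis of \cite{Servedio:03jmlr}.
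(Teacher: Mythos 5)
Your proposal is correct and follows essentially the same route the paper intends: the paper gives no standalone proof of this claim, citing it as implicit in the smooth boosting analysis of \cite{Servedio:03jmlr}, and your sketch — $(1/\eps)$-smooth distributions, rejection-sampling with the noise rate blowing up by at most a $1/\eps$ factor so that rate $\eta'=\Theta(\eta\eps)$ at the top level becomes rate at most $\eta$ for the weak learner, and a union bound over the $\poly(1/\gamma,1/\eps)$ boosting rounds for the confidence loss — is exactly that argument. The coupling subtlety you flag (that the filtered stream against a partially adaptive adversary is dominated by a valid rate-$\eta$ malicious sample from $D_t$) is indeed the part handled in \cite{Servedio:03jmlr} rather than here, so acknowledging it as deferred is consistent with the paper's treatment.
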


By applying \Cref{thm:dist-free-combined} to the conclusion of \Cref{claim:S03}, we immediately get a stronger version of \Cref{claim:S03}, namely that weak learners which can tolerate \emph{malicious noise} can be upgraded to achieve strong learning even in the presence of \emph{nasty noise}.\footnote{Note that the most basic application of \Cref{thm:dist-free-combined} would obtain a version of \Cref{cor:S03} in which the nasty noise learner has error $\eps(1-\eta') + \eta'$ at noise rate $\eta' = \Theta(\eta \eps)$. It's straightforward to deduce the below version by changing the constant hidden by $\Theta(\cdot)$.}

\begin{corollary}
[Upgrading malicious-noise-tolerant weak learners to nasty-noise-tolerant strong learners]
\label{cor:S03}
Let ${\cal C}$ be a concept class over $\bits^d$, and let $\eta,\gamma$ be such that there is an efficient weak learning algorithm for ${\cal C}$, running in time $T=\poly(d,\log(1/\delta))$, that with probability at least $1-\delta$ achieves advantage $\gamma$ in the presence of malicious noise at rate $\eta$.
Then there is an efficient learning algorithm, running in time $\poly(1/\gamma,1/\eps,d,\log(1/\delta))$ that $(\eps,\delta \cdot \poly(1/\gamma,1/\epsilon))$-distribution-independently learns ${\cal C}$ in the presence of nasty noise at rate $\eta' = \Theta(\eta \eps).$
\end{corollary}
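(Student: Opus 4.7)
The plan is to prove \Cref{cor:S03} by a simple two-step composition of results already established in the paper. First, I would invoke \Cref{claim:S03} to upgrade the given malicious-noise-tolerant weak learner into a malicious-noise-tolerant \emph{strong} learner $A$. To accommodate the constant factor blow-up in accuracy and confidence that \Cref{thm:dist-free-combined} introduces, I would instantiate \Cref{claim:S03} with slightly sharper target parameters $\eps^\ast := \eps/1.01$ and $\delta^\ast := \delta \cdot \poly(1/\gamma, 1/\eps)/1.01$, so that $A$ $(\eps^\ast,\delta^\ast)$-distribution-free learns ${\cal C}$ in the presence of malicious noise at rate $\eta'' = \Theta(\eta \eps^\ast) = \Theta(\eta \eps)$, with running time $\poly(1/\gamma, 1/\eps, d, \log(1/\delta))$.

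Second, I would apply \Cref{thm:dist-free-combined} directly to $A$. This theorem preserves the noise rate while transforming a malicious-noise learner into a nasty-noise learner with parameters $(1.01\eps^\ast, 1.01\delta^\ast) = (\eps,\, \delta \cdot \poly(1/\gamma, 1/\eps))$. The resulting algorithm $A'$ is exactly what the corollary requires, with nasty noise rate $\eta' := \eta'' = \Theta(\eta \eps)$.

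Finally, I would verify the runtime bookkeeping. By \Cref{thm:dist-free-combined} the overhead over $A$ is a factor of $O(k(m+T))$ where $k = O(\log(1/\delta^\ast)/(\eps^\ast)^2)$ and $m = O(n^4 \log(2|X|)^2/(\eps^\ast)^4)$. Since $|X| = 2^d$ and since $n$ and $T$ are already $\poly(1/\gamma, 1/\eps, d, \log(1/\delta))$ from the first step, this overhead is absorbed into $\poly(1/\gamma, 1/\eps, d, \log(1/\delta))$ as stated in the conclusion.

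I do not anticipate a serious obstacle: the corollary is essentially a ``plug-and-play'' composition of two black-box statements. The only genuine care required is the parameter bookkeeping above, ensuring that the $1.01$ factors from \Cref{thm:dist-free-combined} are pre-compensated by tightening the target accuracy and confidence handed to \Cref{claim:S03}, and checking that the logarithmic dependence on $1/\delta$ (rather than polynomial) is preserved --- this is exactly the reason the paper developed the amplification-based variant \Cref{thm:dist-free-boost} feeding into \Cref{thm:dist-free-combined}, so this dependence is automatic.
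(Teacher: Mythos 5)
Your proposal is correct and matches the paper's own (very brief) argument: the corollary is obtained exactly by applying \Cref{thm:dist-free-combined} to the malicious-noise strong learner produced by \Cref{claim:S03}. Your extra step of pre-compensating the $1.01$ factors by targeting $\eps/1.01$ and absorbing the confidence blow-up into the unspecified $\poly(1/\gamma,1/\eps)$ factor is exactly the (implicit) bookkeeping the paper elides.
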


We make two remarks about \Cref{cor:S03}.  First, note that since any weak learner that can tolerate nasty noise can also tolerate malicious noise, \Cref{cor:S03} immediately implies the corresponding statement if the weak learner is assumed to tolerate nasty noise. Second, we note that the $\delta \cdot \poly(1/\gamma,1/\epsilon)$ confidence parameter of the learner provided by \Cref{cor:S03} can be upgraded using our efficient success probability amplification for nasty noise, \Cref{thm:boost-no-holdout}; the details are left to the interested reader.

\subsubsection{Efficient learning in the presence of non-trivial nasty noise.}

Let $m$ be the number of examples required to learn a concept class ${\cal C}$ in the original distribution-independent (noiseless) PAC learning model. It is clear that for either malicious noise or nasty noise, if the noise rate $\eta$ is much less than $1/m$, then with high probability in a sample of $m$ examples none of the examples will be noisy, so simply running the learning algorithm for the noiseless scenario will succeed with high probability.
Kearns and Li \cite{KearnsLi:93} showed that for the malicious noise model, it is possible to ``automatically'' handle a somewhat higher noise rate:

\begin{theorem}
[Non-trivial malicious noise tolerance for efficient learning algorithms] \label{thm:KL}
Let ${\cal C}$ be a concept class over $\bits^d$, and suppose that there is a polynomial-time algorithm $A$ that $(\eps/8,1/2)$-learns ${\cal C}$ (with no noise) using $m$ examples.
Then there is an algorithm that $(\eps,1-\delta)$-learns ${\cal C}$ in the presence of malicious noise at a rate $\eta=\Omega(\min(\eps,{\frac {\log m} m}))$, with running time that is polynomial in the running time of $A$ and $\log(1/\delta).$
\end{theorem}

Combining this with \Cref{thm:dist-free-combined}, we obtain the following analogous result for nasty noise:

\begin{theorem}
[Non-trivial nasty noise tolerance for efficient learning algorithms] 
Let ${\cal C}$ be a concept class over $\bits^d$, and suppose that there is a polynomial-time algorithm $A$ that $(\eps/8,1/2)$-learns ${\cal C}$ (with no noise) using $m$ examples.
Then there is an algorithm that $(\eps,1-\delta)$-learns ${\cal C}$ in the presence of nasty noise at a rate $\eta=\Omega(\min(\eps,\frac {\log m} m))$, with running time that is polynomial in the running time of $A$ and $\log(1/\delta).$
\end{theorem}


\section{Amplifying the success probability of nasty-noise learners: Proof of \texorpdfstring{\Cref{thm:boost-no-holdout-informal}}{Theorem~\ref{thm:boost-no-holdout-informal}}}
\label{sec:amplify}
In this section, we prove the formal version of \Cref{thm:boost-no-holdout-informal}.

\begin{theorem}[Amplifying the success probability with nasty noise]
    \label{thm:boost-no-holdout}
Given parameters $\epsa,\delta > 0$, let
    \begin{equation*}
        k = O\paren*{\frac{\ln (1/\delta)}{(\epsa)^2}}.
    \end{equation*}
Given any learning algorithm $A$ running in time $T$ and taking in $n$ samples, there is a learner $A'$ (described in \Cref{fig:Boost}) taking $nk$ samples and running in time $O(Tk)$ with the following property: For any $\eps,\eta > 0$, concept class $\mcC$, and distribution $\mcD$, if $A$ learns $\mcC$ with expected error $\eps$ in the presence of $\eta$-nasty noise, then $A'$ $(\eps'\coloneqq \eps + \epsa, \delta)$-learns $\mcC$ with $\eta$-nasty noise.
\end{theorem}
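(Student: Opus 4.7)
The plan is to define $A'$ as follows: on an input of $nk$ samples, partition it into $k$ disjoint groups of $n$ samples each, run $A$ on each group to obtain hypotheses $\bh_1, \ldots, \bh_k$, and output the randomized hypothesis $\bh$ that on query $x$ draws a uniform $i^\star \sim [k]$ and returns $\bh_{i^\star}(x)$. Because the error rate of a randomized hypothesis is its expected error over its internal randomness, $\error(\bh) = \tfrac{1}{k}\sum_{i=1}^k \error(\bh_i)$, so it suffices to show that with probability at least $1-\delta$ we have $\sum_i \error(\bh_i) \le (\eps + \epsa)k$; the claimed $nk$ sample size and $O(Tk)$ runtime then follow immediately.

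The main technical step is a concentration bound matching \Cref{eq:errors-concentrate}, namely
$$\Pr\bracket*{\sum_i \error(\bh_i) > \eps k + t} \le \exp\paren*{-\Omega(t^2/k)},$$
since plugging in $t = \Theta(\sqrt{k\ln(1/\delta)})$ and $k = \Theta(\ln(1/\delta)/(\epsa)^2)$ then delivers the desired $(\eps + \epsa, \delta)$-guarantee. The principal obstacle is that $\error(\bh_1),\dots,\error(\bh_k)$ are \emph{not} independent: the nasty adversary sees all $nk$ samples before committing and may concentrate its $\Bin(nk,\eta)$ corruption budget on whichever group's clean sample is most sensitive to $A$, correlating the errors across groups. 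Even the marginal mean $\E[\error(\bh_i)]$ is not entirely trivial to bound by $\eps$, since the restriction of a valid size-$nk$ nasty strategy to a single group is not in general a valid $\Bin(n,\eta)$-budgeted nasty strategy on its own.

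Following the paper's sketch, I would prove the concentration by a contrapositive coupling argument: if some multi-group adversary $\mathsf{Adv}$ achieves $\Pr[\sum_i \error(\bh_i) > \eps k + t] > \delta$, then I would construct a size-$n$ nasty adversary $\mathsf{Adv}'$ whose expected error against $A$ strictly exceeds $\eps$, contradicting the hypothesis on $A$. The skeleton of $\mathsf{Adv}'$ is to pick a uniform $i^\star \sim [k]$, embed its given size-$n$ input at position $i^\star$, fill the remaining $k-1$ groups with fresh i.i.d. draws from $\mcD_c$, couple the budgets so that drawing $\Bin(n,\eta)$ budgets independently for each group yields total $\Bin(nk,\eta)$ in distribution (matching what $\mathsf{Adv}$ expects), simulate $\mathsf{Adv}$ on the synthesized $nk$-sample, and output the corruption restricted to group $i^\star$, throttling or resampling if needed to keep the effective single-group budget exactly $\Bin(n,\eta)$. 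The hardest part of executing this plan is the quantitative tail-to-mean conversion that transforms the assumed $\delta$-mass violation at excess $t = \Omega(\sqrt{k\ln(1/\delta)})$ into an expected-error excess of at least $\epsa$ for $A$ under $\mathsf{Adv}'$; this likely requires biasing the choice of $i^\star$ toward the group with the largest simulated error (compensated by a rejection step that restores the correct marginal distribution of both $i^\star$ and the group-$i^\star$ budget). Once the concentration is in hand, the remainder — observing that the error of the randomized output hypothesis equals $\tfrac{1}{k}\sum_i \error(\bh_i)$, and bookkeeping the constants relating $k$, $\delta$, and $\epsa$ — is immediate.
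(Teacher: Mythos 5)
Your high-level architecture matches the paper's (mixture hypothesis whose error is the average of the $k$ group errors, plus a concentration bound proved by contrapositive through a coupling), but as written the proposal has two genuine gaps. First, you partition the $nk$ samples into $k$ \emph{fixed} disjoint groups, whereas the paper's \Amplify{} (\Cref{fig:Boost}) first applies a uniformly random permutation, and this randomness is not cosmetic: the nasty adversary places its $\Bin(nk,\eta)$ corruptions at positions of its choosing, so with a deterministic split it can load, say, $1.5\eta n$ corruptions into a constant fraction of the groups. A learner can have expected error $\eps$ under every $\Bin(n,\eta)$-coupled budget yet output garbage once it sees $1.5\eta n$ planted points (this exceeds the budget except with probability $\exp(-\Omega(n))$), so for the deterministic-split $A'$ the average error can be pushed to a constant with overwhelming probability and the theorem is simply false for that algorithm. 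The paper's \Cref{prop:group-dists} is exactly the missing ingredient: after the random permutation the clean groups are i.i.d.\ from $(\mcD_c)^n$ and the per-group corruption counts are dominated by i.i.d.\ $\Bin(n,\eta)$ budgets. Your proposed fix of ``throttling or resampling'' the restriction of the multi-group adversary to group $i^\star$ does not repair this, because once you throttle, the hypothesis your simulated adversary induces is no longer the hypothesis $\bh_{i^\star}$ whose error you assumed to be large, and the accounting collapses.

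Second, the step you yourself flag as hardest --- converting a probability-$\delta$ violation at excess $t=\Theta(\sqrt{k\ln(1/\delta)})$ into an expected-error excess of $\epsa$ for a single draw --- is left unsolved, and the mechanism you gesture at (biasing $i^\star$ toward the largest-error group, with a rejection step to restore marginals) is not the right tool: biasing distorts the marginal of the embedded (sample, budget) pair in a way you give no means to control. The paper resolves this with \Cref{prop:TV-small} and \Cref{lem:coupling-transfer}: condition on the rare bad event, pick a \emph{uniform} coordinate, and show via an exponential-moment (Hoeffding's lemma) argument that the conditional law of that coordinate is within total variation $\sqrt{\ln(1/\delta)/(2k)}$ of the true marginal; the excess $t/k=\sqrt{2\ln(1/\delta)/k}$ is chosen precisely to absorb twice this TV loss, yielding a coupling of a single clean sample with a single $\Bin(n,\eta)$ budget whose expected worst-case error is at least $\eps$. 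Note also that no explicit, implementable adversary (simulation, rejection sampling, budget coupling) is needed at all: the expected-error hypothesis on $A$ quantifies over \emph{all} couplings of clean sample and budget, so a pure existence argument suffices, and a final Hoeffding step (as in the proof of \Cref{lem:gen-hypotheses}) passes from the worst-case conditional means to the realized errors of $\bh_1,\dots,\bh_k$. Without the random grouping and without this quantitative tail-to-mean conversion, the proposal does not close.
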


As mentioned earlier, we remark that in \Cref{thm:boost-no-holdout} the new learner $A'$ does not depend on the underlying distribution $\mcD$. Therefore, \Cref{thm:boost-no-holdout} can be used to amplify success probability both for fixed-distribution and distribution-independent learners.

\begin{figure}[h] 
  \captionsetup{width=.9\linewidth}
    
    \begin{tcolorbox}[colback = white,arc=1mm, boxrule=0.25mm]
    \vspace{2pt} 
    \Amplify$(A,k, S_{\mathrm{big}}):$\vspace{6pt}
    
    \textbf{Input:} A learner $A$ taking in a size-$n$ data set, parameter $k \in \N$, and a size-$(nk)$ data set, $S_{\mathrm{big}}$.\vspace{6pt}

    \textbf{Output:} A hypothesis inheriting the accuracy of $A$ and with a $\approx 2^{-k}$ failure probability.\vspace{6pt}

    \begin{enumerate}
            \item[]\textbf{1. Split sample into groups:} Draw a uniform permutation $\bsigma:[nk] \to [nk]$ and define $\bsigma(S_{\mathrm{big}})$ as the size-$nk$ data set satisfying
            \begin{equation*}
                \bsigma(S_{\mathrm{big}})_i = (S_{\mathrm{big}})_{\bsigma(i)}.
            \end{equation*}
            Set the groups $\bS^{(1)}, \bS^{(2)}, \ldots, \bS^{(k)}$ to consist of the first $n$ elements of $\bsigma(S_{\mathrm{big}})$, the second $n$ elements, and so on.\vspace{6pt}
            \item[]\textbf{2. Generate hypotheses:} For each $i \in [k]$, let $\bh_i = A(\bS^{(i)})$. \vspace{6pt}
            \item[]\textbf{3. Combine hypotheses:} Let $\bh$ be the (randomized) hypothesis that, on input $x$, draws a uniform random $\bi \sim \{1,\dots,k\}$ and outputs $\bh_{\bi}(x)$. Output $\bh$.
    \end{enumerate}
    \end{tcolorbox}
\caption{An algorithm for amplifying the success probability of a learner.}
\label{fig:Boost}
\end{figure}

\subsection{The main amplification lemma}
\label{subsec:amplify-description}
The main ingredient underlying \Cref{thm:boost-no-holdout} is a way of generating $k$ candidate hypotheses that inherit the performance guarantees of the base learner.
\begin{lemma}[Generating candidate hypotheses]
    \label{lem:gen-hypotheses}
    For any algorithm $A$ taking in $n$ samples that learns a concept class $\mcC$ with expected error $\eps$ over distribution $\mcD$ with $\eta$-nasty noise, let $\Amplify$ be the algorithm in \Cref{fig:Boost}. Then, for any concept $c$, given a sample that was initially correctly labeled by $c$ and then corrupted by an $\eta$-nasty adversary, with probability at least $1 - \delta$ the $k$ hypothesis that $\Amplify$ generates satisfy
    \begin{equation*}
        \sum_{i \in [k]} \error_{\mcD}(\bh_i, c) \leq \eps k + O\paren*{\sqrt{k \ln(1/\delta)}}.
    \end{equation*}  
\end{lemma}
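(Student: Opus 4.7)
The plan is to prove this by contrapositive, following the hint in the overview. Suppose for contradiction that with probability exceeding $\delta$ the sum $\sum_{i \in [k]} \error(\bh_i)$ exceeds $\eps k + t$, where $t = C\sqrt{k \ln(1/\delta)}$ for a suitable constant $C$, and let $\tau$ be the underlying $nk$-sample $\eta$-nasty adversary witnessing this failure. My goal will be to construct a valid $n$-sample $\eta$-nasty adversary $\phi$ such that $\Ex[\error(A(\phi(\bS)))] > \eps$, contradicting the hypothesis that $A$ learns $\mcC$ with expected error $\eps$ in the presence of $\eta$-nasty noise.

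First I would build a baseline adversary $\phi_0$: given a clean $n$-sample $\bS$, it draws $n(k-1)$ additional i.i.d.~samples $\bS_{\text{extra}}$ from $\mcD_c$ (which the adversary can do since it has full knowledge of $\mcD$ and $c$), picks a uniformly random subset $\bT \subseteq [nk]$ of size $n$ independently of everything else, embeds $\bS$ at positions $\bT$ and $\bS_{\text{extra}}$ at the remaining positions to form $\bS_{\text{big}}$, then applies $\tau$ to obtain $\bS_{\text{big}}'$, and finally outputs $(\bS_{\text{big}}')_{\bT}$. A standard hypergeometric-mixture calculation (using that $\bT$ is uniform and independent of $\tau$'s corruption set $\bZ$, whose size is marginally $\Bin(nk,\eta)$) shows that the marginal size $|\bZ \cap \bT|$ is exactly $\Bin(n,\eta)$, so $\phi_0$ is a valid $n$-sample $\eta$-nasty adversary. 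Moreover, by the exchangeability induced by uniform $\bT$, the hypothesis $A(\phi_0(\bS))$ has the same marginal distribution as a uniformly chosen block $\bh_{\bi^*}$ of \Amplify, giving $\Ex[\error(A(\phi_0(\bS)))] = \tfrac{1}{k}\Ex[\sum_i \error(\bh_i)]$.

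The baseline $\phi_0$ alone yields only $\Ex[\error(A(\phi_0(\bS)))] \leq \eps$, which is not strict, so I would next refine $\phi$ to exploit the concentration failure. The refined $\phi$ runs the full amplification internally (applying \Amplify's permutation and running $A$ on all $k$ blocks), evaluates each $\error(\bh_i)$ exactly, and biases its extraction toward blocks with large error. On the failure event $E$ (of probability $>\delta$), pigeonhole guarantees at least one block with error $\geq \eps + t/k = \eps + \Omega(\sqrt{\ln(1/\delta)/k})$, so a biased extraction can hope to improve the expected per-sample error beyond $\eps$.

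I expect the principal obstacle to be preserving the validity of $\phi$ under this bias, since naive argmax-style selection of the extracted block introduces dependence between $\bT$ and $\bZ$ and therefore destroys the $\Bin(n,\eta)$ marginal required for nasty noise. The ``carefully constructed coupling'' mentioned in the overview is presumably a delicate randomization -- likely of rejection-sampling type, perhaps combining parallel simulations with fresh independent copies of $\bS_{\text{extra}}$ and $\tau$'s internal randomness, and using \Cref{thm:boost-no-holdout-informal}-style success amplification inside the reduction -- that transfers the biasing benefit into the marginal expected error while keeping the output's corruption-size distribution identically $\Bin(n,\eta)$. Once this coupling is in hand, tuning the bias strength against the failure probability $\delta$ and the per-block gap $t/k$, with the specific choice $t = \Theta(\sqrt{k\ln(1/\delta)})$, should force $\Ex[\error(A(\phi(\bS)))] > \eps$ and complete the contrapositive.
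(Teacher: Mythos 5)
Your overall plan (prove the contrapositive by turning a failure of the $k$-block experiment into a single $n$-sample nasty adversary with expected error above $\eps$) matches the paper's, and your baseline construction $\phi_0$ is sound: extracting a uniformly random size-$n$ block of the corrupted big sample is a legitimate $\eta$-nasty adversary because the hypergeometric mixture over a $\Bin(nk,\eta)$ budget is exactly $\Bin(n,\eta)$, and its expected error equals $\tfrac1k \Ex[\sum_i \error_{\mcD}(\bh_i,c)]$; this is essentially the content of \Cref{prop:group-dists}. But the argument stops exactly where the lemma's real content begins. The whole difficulty is to convert ``the sum exceeds $\eps k + \Theta(\sqrt{k\ln(1/\delta)})$ with probability at least $\delta$'' --- a statement about a possibly tiny-probability event --- into a bound on an \emph{expectation} for a single valid coupling of $(\mcD_c)^n$ with $\Bin(n,\eta)$, and you leave that step to an unspecified ``delicate randomization, likely of rejection-sampling type'' whose existence you only presume. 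That presumed mechanism \emph{is} the lemma; without it nothing is proved. Worse, the route you gesture at (biasing the extracted block toward high error, argmax-style selection) is precisely the move that breaks validity, as you yourself observe, and rejection sampling does not obviously repair the dependence it creates between the extraction and the corruption pattern.

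The paper's resolution is the idea you are missing, and it is different in kind: it never selects a block according to its error. It conditions on the bad event $\be$ (probability $\geq \delta$) and still takes a \emph{uniformly} random block; the key fact (\Cref{prop:TV-small}, proved via Hoeffding's lemma and an MGF bound) is that conditioning on an event of probability $\delta$ can shift the distribution of a uniformly random coordinate of $k$ i.i.d.\ draws by at most $\sqrt{\ln(1/\delta)/(2k)}$ in total variation. On $\be$ the average block value is at least $\eps + \sqrt{2\ln(1/\delta)/k}$, and coupling the conditioned block with fresh draws costs at most $2\sqrt{\ln(1/\delta)/(2k)}$, so the $\Theta(\sqrt{k\ln(1/\delta)})$ slack is exactly what survives, yielding a coupling with expected value $\geq \eps$ (\Cref{lem:coupling-transfer}) and contradicting expected-error learning. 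Two further points your sketch glosses over: the argument is run on $f(S,z) = \sup_{S'}\Ex[\error_{\mcD}(A(S'),c)]$ rather than on the realized $\error_{\mcD}(\bh_i,c)$, which both sidesteps the problem that your refined $\phi$ would evaluate errors using internal coins of $A$ different from those of the outer run, and means the coupling only has to \emph{exist} rather than be realized by an explicit sampling procedure; and a final Hoeffding step over $A$'s internal randomness is needed to relate $\sum_i \error_{\mcD}(\bh_i,c)$ to $\sum_i f(\bS^{(i)},\bz_i)$.
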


\paragraph{Proof structure for \Cref{lem:gen-hypotheses}.} It's not hard to show that each hypothesis  $\bh_1, \ldots, \bh_k$ generated in \Cref{fig:Boost} has an expected error of $\eps$. If these hypotheses were independent, then \Cref{lem:gen-hypotheses} would follow by a standard application of Hoeffding's inequality. However, as discussed before, these hypotheses need \emph{not} be independent, because the adversary chooses the corruption as a function of the entire data set of $nk$ points, and therefore the $k$ groups of points in \Cref{fig:Boost} can be dependent on another.

To prove \Cref{lem:gen-hypotheses}, we carefully track what quantities \emph{are} independent of one another. Let $\bS_{\mathrm{big}}$ be the ``clean" sample of $nk$ points and let 
$S_{\mathrm{big}}'$ be the corrupted version of this clean sample which is given as input to \Amplify. Then, let $\bS^{(1)}, \ldots, \bS^{(k)}$ and $(\bS^{(1)})', \ldots, (\bS^{(k)})'$ be the ``clean" and corrupted respectively groups of $k$ points created in \Cref{fig:Boost} (note that the pseudocode in \Cref{fig:Boost} only directly manipulates the corrupted points, but we will track the ``clean" points in the analysis as well). We show that:
\begin{enumerate}
    \item The random variables $\bS^{(1)}, \ldots, \bS^{(k)}$ are independent.
    \item It is possible to corrupt $\bS^{(i)}$ to $(\bS^{(i)})'$ by corrupting $\bz_i$ points in $\bS^{(i)}$ for $\bz_1,\ldots, \bz_k$ that are i.i.d.~according to $\Bin(n,\eta)$.
\end{enumerate}
Note that we make no assumption about the correlation structure between $\bS^{(i)}$ and $\bz_i$. The key quantity we define is $f(S, z)$, which captures the maximum expected error of $A$ (here the expectation is over internal randomness of $A$) given a sample $S'$ that is formed by corrupting $\leq z$ points of $S$. Intuitively, we think of the adversary as choosing a coupling between $\bS^{(1)}, \ldots, \bS^{(k)}$ and $\bz_1,\ldots, \bz_k \iid \Bin(n,\eta)$ to maximize $\sum_{i \in [k]} f(\bS^{(i)}, \bz_i)$. 

Note that since $A$ has expected error $\leq \eps$ in the presence of nasty noise, we know that for any given $i \in [k]$, there is no coupling of $\bS^{(i)}$ and $\bz_i$ that makes $\Ex[f(\bS^{(i)}, \bz_i)] > \eps$. Our goal is to use this to show that there is no coupling over all $k$ groups the adversary could make which causes $\sum_{i \in [k]} f(\bS^{(i)}, \bz_i)$ to often be large. We show this by contrapositive.
\begin{lemma}[Transfer of couplings]
    \label{lem:coupling-transfer}
    For $\eps, \delta > 0$, $k \in \N$, distributions $\mcA$ and $\mcB$ over domains $X_A$ and $X_B$ respectively, and function $f:X_A \times X_B \to [0,1]$, suppose there exists a coupling of $\ba_1, \ldots, \ba_k \iid \mcA$ and $\bb_1, \ldots, \bb_k \iid \mcB$ for which,
    \begin{equation*}
        \Pr\bracket*{\sum_{i \in [k]} f(\ba_i, \bb_i) \geq  \eps k + \sqrt{2k \ln(1/\delta)}} \geq \delta.
    \end{equation*}
     Then, there is a coupling of $\ba \sim \mcA$ and $\bb \sim \mcB$ for which $\Ex[f(\ba,\bb)] \geq \eps$.
\end{lemma}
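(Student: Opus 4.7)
The plan is to prove the contrapositive: assuming every coupling $\nu$ of $\mcA$ and $\mcB$ satisfies $\Ex_\nu[f(\ba,\bb)] < \eps$, I will show that for every coupling $\mu$ of $\mcA^k$ and $\mcB^k$, the event $\{\sum_i f(\ba_i,\bb_i) \geq \eps k + \sqrt{2k\ln(1/\delta)}\}$ has $\mu$-probability at most $\delta$. Writing $\eps^* := \sup_\nu \Ex_\nu[f(\ba,\bb)]$, so $\eps^* < \eps$, the argument combines Kantorovich (LP) duality, which replaces $f$ by a pointwise-separable upper envelope, with a Chernoff--Hoeffding step that exploits the fact that under $\mu$ each sequence $(\ba_i)$ and $(\bb_i)$ is marginally i.i.d., even though the two sequences may be correlated arbitrarily.

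For the first step I will invoke the LP dual of the optimal-coupling problem to obtain potentials $\phi\colon X_A \to [0,1]$ and $\psi\colon X_B \to [0,1]$ satisfying $\phi(a) + \psi(b) \geq f(a,b)$ for all $a, b$ and $\Ex_\mcA[\phi] + \Ex_\mcB[\psi] \leq \eps^*$ (allowing an arbitrarily small positive slack if the dual infimum is not attained). The Kantorovich--Rubinstein dual a priori produces potentially unbounded potentials, so a short argument is needed to ensure $\phi, \psi \in [0,1]$: one first shifts $(\phi,\psi) \mapsto (\phi+c, \psi-c)$, which preserves both feasibility and the objective, to enforce $\phi \geq 0$, whereupon $\phi + \psi \geq f \geq 0$ forces $\psi \geq 0$; then one truncates each potential above at $1$, which preserves feasibility because $f \leq 1$ and the other potential is nonnegative, and can only decrease the objective.

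For the second step, fix any coupling $\mu$ of $\mcA^k, \mcB^k$. From $\phi + \psi \geq f$ we get the pointwise bound $\sum_i f(\ba_i, \bb_i) \leq \sum_i \phi(\ba_i) + \sum_i \psi(\bb_i)$. Even though $\mu$ may correlate the two sequences, Cauchy--Schwarz decouples them inside the moment generating function at the price of doubling the exponent:
\begin{equation*}
\Ex_\mu\!\left[\exp\!\left(\lambda \sum_i (\phi(\ba_i)+\psi(\bb_i))\right)\right] \leq \sqrt{\Ex_\mu\!\left[\exp\!\left(2\lambda \sum_i \phi(\ba_i)\right)\right]}\cdot \sqrt{\Ex_\mu\!\left[\exp\!\left(2\lambda \sum_i \psi(\bb_i)\right)\right]}.
\end{equation*}
Each factor involves a sum of i.i.d.~$[0,1]$-valued random variables (since the $\mu$-marginals are $\mcA^k$ and $\mcB^k$), so Hoeffding's lemma gives $\Ex[\exp(2\lambda \phi(\ba))] \leq \exp(2\lambda \Ex_\mcA[\phi] + \lambda^2/2)$ and analogously for $\psi$. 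Multiplying the $k$ factors, taking square roots and combining yields $\Ex_\mu[\exp(\lambda \sum_i f(\ba_i,\bb_i))] \leq \exp(\lambda k \eps^* + \lambda^2 k/2)$. A Chernoff step optimized at $\lambda = t/k$ then gives $\Pr_\mu[\sum_i f(\ba_i, \bb_i) \geq k\eps^* + t] \leq \exp(-t^2/(2k))$, and plugging $t = \sqrt{2k\ln(1/\delta)}$ produces the bound $\delta$. Since $\eps > \eps^*$, the event appearing in the lemma is contained in this event, finishing the contrapositive.

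I anticipate the main subtlety to be the truncation step that produces $[0,1]$-valued dual potentials: this truncation is exactly what lets Hoeffding's lemma apply with the correct bounded-range constant, which in turn yields the precise $\sqrt{2k\ln(1/\delta)}$ deviation the lemma demands, and it relies essentially on $f \in [0,1]$. Without this truncation, the concentration constant would degrade. The remainder is a clean Cauchy--Schwarz plus Chernoff--Hoeffding calculation tailored precisely to the ``two marginally-i.i.d.~sequences, coupled arbitrarily'' structure of the hypothesis.
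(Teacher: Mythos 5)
Your proposal takes a genuinely different route from the paper's, and its quantitative core is sound: truncated Kantorovich potentials, the Cauchy--Schwarz decoupling of the moment generating function, and Hoeffding's lemma do combine to give $\Pr_\mu\bigl[\sum_i f(\ba_i,\bb_i) \ge k(\Ex_{\mcA}[\phi]+\Ex_{\mcB}[\psi]) + t\bigr] \le \exp(-t^2/(2k))$ for every coupling $\mu$ of $\mcA^k$ and $\mcB^k$, which is exactly the $\sqrt{2k\ln(1/\delta)}$ deviation the lemma demands. Two repairs are needed, one cosmetic and one substantive. Cosmetic: shifting so that $\phi \ge 0$ does not by itself force $\psi \ge 0$ (take $\phi \equiv 5$, $\psi \equiv -1$, $f \le 1$); you must shift so that $\inf_a \phi(a) = 0$, after which $\psi(b) \ge \sup_a\{f(a,b)-\phi(a)\} \ge -\inf_a \phi(a) = 0$, and then your truncation at $1$ goes through. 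Substantive: the contrapositive hypothesis ``no coupling has $\Ex[f] \ge \eps$'' only gives $\eps^\ast = \sup_\nu \Ex_\nu[f] \le \eps$, not $\eps^\ast < \eps$, and your argument needs the strict inequality (equivalently, attainment of the supremum). If $\eps^\ast = \eps$ but is not attained, your Chernoff step only yields $\Pr[\sum_i f > \eps k + t] \le \delta$, which does not contradict the assumed $\Pr[\sum_i f \ge \eps k + t] \ge \delta$; run forwards instead, duality only certifies that the supremum over couplings is $\ge \eps$, whereas the lemma asserts an actual coupling achieving $\ge \eps$. For the finite domains on which the paper applies the lemma this is harmless (LP duality plus compactness give attainment), but for the statement as written --- arbitrary distributions and merely measurable $f \in [0,1]$ --- you would need Kellerer-type duality and either an attainment argument or a supremum-form restatement of the conclusion.

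For comparison, the paper avoids duality and the attainment issue by constructing the coupling explicitly: it conditions on the bad event $\be$, considers a uniformly random coordinate $\bi$, and shows (\Cref{prop:TV-small}, itself a Jensen-plus-Hoeffding's-lemma MGF argument) that the conditional law of $\ba_{\bi}$ (resp.\ $\bb_{\bi}$) is within $\sqrt{\ln(1/\delta)/(2k)}$ of $\mcA$ (resp.\ $\mcB$) in total variation; re-coupling to fresh draws then costs at most $2\sqrt{\ln(1/\delta)/(2k)}$ in expectation, which is exactly the slack $\sqrt{2k\ln(1/\delta)}/k$ built into the hypothesis. Both arguments ultimately rest on Hoeffding's lemma and yield the same constant; yours buys a clean dual-certificate viewpoint and a tail bound valid for every product coupling, while the paper's direct construction is elementary, requires no measure-theoretic duality, and delivers the required coupling outright.
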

The statement of \Cref{lem:coupling-transfer} is a bit surprising because $\delta$ can be much smaller than $\eps$. For instance, it could be the case that $\delta = 2^{-0.1k}$, while $\eps = 0.1$. In this setting, \Cref{lem:coupling-transfer} transfers a coupling over $k$-tuples in which an ``interesting" thing only happens with tiny probability, to a coupling over single draws with a large (constant-sized) expectation, i.e.~one in which something ``interesting" happening with constant probability.

An intuition for why a statement like \Cref{lem:coupling-transfer} can hold in the context of this example is that if $\ba_1, \ldots, \ba_k \iid \mcA$ are collectively  ``interesting" with probability $2^{-0.1k}$, then a draw $\ba \sim \mcA$ should be a member of some ``interesting" group with a constant probability. For example, if only half of the probability mass of $\mcA$ is on members of an ``interesting" group, then, only $(2^{-k})$-fraction of the probability mass of $\mcA^k$ is ``interesting." \Cref{lem:coupling-transfer} takes advantage of this exponential difference.

\subsection{Transferring of couplings: Proof of \texorpdfstring{\Cref{lem:coupling-transfer}}{Lemma~\ref{lem:coupling-transfer}}}

The key ingredient in the proof is the following technical result.
\begin{proposition}
    \label{prop:TV-small}
    For any distribution $\mcD$ and $k \in \N$, let $\bx_1, \ldots, \bx_k \iid \mcD$ and let $\be$ be any event (coupled with the draw of $\bx_1, \ldots, \bx_k$) that occurs with probability $\delta$. For $\mcD_{\be}$ the distribution of a uniform element of $\{\bx_1,\dots,\bx_k\}$ conditioned on the event $\be$ occurring, we have
    \begin{equation*}
        \TV(\mcD,\mcD_{\be}) \leq \sqrt{\frac{\ln(1/\delta)}{2k}}.
    \end{equation*}
\end{proposition}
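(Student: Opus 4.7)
The plan is to bound $\TV(\mcD, \mcD_{\be})$ through a KL-divergence bound, exploiting the product structure of $\mcD^k$ to ``spread'' the $\log(1/\delta)$ cost of conditioning across the $k$ marginals, and then convert to TV via Pinsker's inequality.

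More concretely, let $Q$ denote the joint conditional distribution of $(\bx_1,\ldots,\bx_k)$ given the event $\be$, and let $P = \mcD^k$ be its unconditional law. Writing $r(x_1,\ldots,x_k) \coloneqq \Pr[\be \mid \bx_1=x_1,\ldots,\bx_k=x_k]$ (which averages to $\delta$ under $P$), we have $Q(x) = P(x)\cdot r(x)/\delta$, and so
\[
    \mathrm{KL}(Q \,\|\, P) \;=\; \Ex_Q\!\bracket*{\log\tfrac{r}{\delta}} \;\leq\; \log(1/\delta),
\]
since $r \leq 1$. (The possibility that $\be$ depends on external randomness beyond $\bx_1,\ldots,\bx_k$ is absorbed into $r$ by integrating out that external randomness before conditioning, which preserves this inequality.)

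Next I would use the standard fact that when $P$ is a product distribution, KL is superadditive with respect to marginals: letting $Q_i$ be the marginal of $Q$ on coordinate $i$ (equivalently, the conditional distribution of $\bx_i$ given $\be$),
\[
    \mathrm{KL}(Q \,\|\, P) \;=\; \sum_{i=1}^k \mathrm{KL}(Q_i \,\|\, \mcD) \;+\; \paren*{\sum_{i=1}^k H(Q_i) - H(Q)},
\]
and the bracketed remainder (the total correlation of $Q$) is non-negative. Combining with the previous step gives $\sum_{i=1}^k \mathrm{KL}(Q_i \,\|\, \mcD) \leq \log(1/\delta)$.

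Now observe that $\mcD_{\be} = \tfrac{1}{k}\sum_{i=1}^k Q_i$, since drawing a uniform element of $\{\bx_1,\ldots,\bx_k\}$ after conditioning on $\be$ is the same as first conditioning and then mixing the marginals uniformly. Applying convexity of KL in its first argument,
\[
    \mathrm{KL}(\mcD_{\be} \,\|\, \mcD) \;\leq\; \frac{1}{k}\sum_{i=1}^k \mathrm{KL}(Q_i \,\|\, \mcD) \;\leq\; \frac{\log(1/\delta)}{k},
\]
and Pinsker's inequality then gives
\[
    \TV(\mcD, \mcD_{\be}) \;\leq\; \sqrt{\tfrac{1}{2}\,\mathrm{KL}(\mcD_{\be} \,\|\, \mcD)} \;\leq\; \sqrt{\tfrac{\ln(1/\delta)}{2k}},
\]
which is the claimed bound. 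I do not anticipate a serious obstacle here: every step is a classical inequality (KL under conditioning, superadditivity of KL over product measures, convexity of KL, Pinsker), and the $1/k$ savings arising from the product decomposition is precisely what yields the desired $\sqrt{\ln(1/\delta)/k}$ dependence. The only point requiring a small amount of care is the external-randomness issue for $\be$, which is handled uniformly by working with the joint law rather than with $(\bx_1,\ldots,\bx_k)$ directly.
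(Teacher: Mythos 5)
Your proof is correct, but it takes a genuinely different route from the paper's. The paper works directly with the variational (test-function) characterization of $\TV$: it fixes $f:X\to[0,1]$, bounds the conditional average $\Ex[\frac1k\sum_i f(\bx_i)\mid \be]$ by a log-moment-generating-function estimate (paying the factor $1/\delta$ from conditioning, controlling the exponential moment by Hoeffding's lemma, and optimizing $\lambda$), i.e.\ essentially a Chernoff argument. You instead encode the cost of conditioning as $\mathrm{KL}(Q\,\|\,\mcD^{\otimes k})\le\ln(1/\delta)$, spread it over coordinates via superadditivity of KL relative to a product measure, handle the uniform mixture $\mcD_{\be}=\frac1k\sum_i Q_i$ by convexity of KL in its first argument, and finish with Pinsker; this yields exactly the same constant $\sqrt{\ln(1/\delta)/(2k)}$, which is no coincidence since Hoeffding's lemma and Pinsker's inequality are two faces of the same sub-Gaussian estimate. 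Your approach is more modular and generalizes more readily (other $f$-divergences, non-uniform mixtures), while the paper's is self-contained, using only Hoeffding's lemma and the definition of $\TV$. One small point: the identity you quote, $\mathrm{KL}(Q\|P)=\sum_i\mathrm{KL}(Q_i\|\mcD)+\bigl(\sum_i H(Q_i)-H(Q)\bigr)$, is the discrete-case form; for general $\mcD$ you should instead invoke the chain-rule identity $\mathrm{KL}(Q\,\|\,\mcD^{\otimes k})=\mathrm{KL}(Q\,\|\,\prod_i Q_i)+\sum_i\mathrm{KL}(Q_i\,\|\,\mcD)$, which gives the needed superadditivity without mentioning entropies; this is a cosmetic fix, and your handling of the external randomness in $\be$ (working with the conditional law of $(\bx_1,\dots,\bx_k)$ and noting the uniform index is independent of $\be$) is fine.
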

Our proof of \Cref{prop:TV-small} will use Hoeffding's lemma.
\begin{fact}[Hoeffding's lemma \cite{Hoeffding:63}]
    \label{fact:Hoeffding's lemma}
    Let $\bx \in [0,1]$ be a random variable with mean $\mu$. Then, for any $\lambda \in \R$,
    \begin{equation*}
        \Ex\bracket*{\exp(\lambda(\bx - \mu)) \leq \exp\pbra{\frac{\lambda^2}{8}}}
    \end{equation*}
\end{fact}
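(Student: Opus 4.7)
The plan is to give the standard textbook proof of Hoeffding's lemma, which proceeds via convexity followed by a second-order Taylor expansion. Let $\bx \in [0,1]$ have mean $\mu$. The first step exploits convexity of the map $t \mapsto e^{\lambda t}$: for any $x \in [0,1]$, since $x = (1-x) \cdot 0 + x \cdot 1$ is a convex combination of the endpoints of $[0,1]$, convexity yields $e^{\lambda x} \leq (1-x) + x e^{\lambda}$. Taking expectations over $\bx$ gives $\Ex[e^{\lambda \bx}] \leq (1-\mu) + \mu e^{\lambda}$, and multiplying through by $e^{-\lambda \mu}$ converts this into a bound on the centered moment generating function, $\Ex[e^{\lambda(\bx - \mu)}] \leq e^{-\lambda\mu}\bigl((1-\mu) + \mu e^{\lambda}\bigr)$.

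Next I would pass to logarithms and define $\psi(\lambda) \coloneqq -\lambda \mu + \log\bigl((1-\mu) + \mu e^{\lambda}\bigr)$, so that the problem reduces to showing $\psi(\lambda) \leq \lambda^2 / 8$ for all $\lambda \in \R$. A direct computation gives $\psi(0) = 0$, while $\psi'(\lambda) = -\mu + p(\lambda)$ where $p(\lambda) \coloneqq \mu e^\lambda / \bigl((1-\mu) + \mu e^\lambda\bigr) \in (0,1)$. Since $p(0) = \mu$, we also obtain $\psi'(0) = 0$. Differentiating once more and simplifying using $1 - p(\lambda) = (1-\mu)/\bigl((1-\mu)+\mu e^\lambda\bigr)$ yields the clean identity $\psi''(\lambda) = p(\lambda)\bigl(1 - p(\lambda)\bigr)$.

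The key estimate is that $\psi''(\lambda) \leq 1/4$ for every $\lambda \in \R$, which follows from the elementary inequality $q(1-q) \leq 1/4$ on $q \in [0,1]$ (immediate from AM-GM, or from noting the maximum is at $q = 1/2$). Given this, Taylor's theorem with Lagrange remainder yields $\psi(\lambda) = \psi(0) + \psi'(0)\lambda + \tfrac{1}{2}\psi''(\xi)\lambda^2 \leq \lambda^2 / 8$ for some $\xi$ between $0$ and $\lambda$. Exponentiating both sides recovers the claimed bound. The argument handles both signs of $\lambda$ uniformly since $\psi''$ is bounded by $1/4$ throughout $\R$, so there is no real obstacle: the one step where the structure of $[0,1]$ is used is the initial convexity bound on the endpoints, and the constant $1/8$ comes directly from the uniform $1/4$ bound on the second derivative via Taylor.
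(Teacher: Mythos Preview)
Your proof is correct and is the standard textbook argument for Hoeffding's lemma. The paper does not give its own proof of this statement; it is stated as a known fact with a citation to \cite{Hoeffding:63} and used as a black box in the proof of \Cref{prop:TV-small}.
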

\begin{proof}[Proof of \Cref{prop:TV-small}]
    By \Cref{def:dtv}, using $X$ to denote the domain of $\mcD$, we have
    \begin{equation*}
        \TV(\mcD,\mcD_{\be}) = \sup_{f:X \to [0,1]}\set*{\Ex\bracket*{\frac{1}{k} \sum_{i \in [k]}f(\bx_i)\mid \be } - \Ex\bracket*{\frac{1}{k} \sum_{i \in [k]}f(\bx_i)}}.
    \end{equation*}
    Fix any such $f$ and let $\mu = \Ex_{\bx \sim \mcD}[f(\bx)]$. Then, for any $\lambda > 0$, we can bound,
    \begin{align*}
        \Ex\bracket*{\frac{1}{k} \sum_{i \in [k]}f(\bx_i) - \mu\,\bigg|\, \be} &\leq \frac{\ln\paren*{\Ex\bracket*{\sum_{i \in [k]}\exp\paren*{(\lambda/k) \cdot (f(\bx_i ) - \mu)} \,\big|\,\be}}}{\lambda} \tag{Jensen's inequality}\\
        &\leq \frac{\ln\paren*{(1/\delta) \cdot \Ex\bracket*{\sum_{i \in [k]}\exp\paren*{(\lambda/k) \cdot (f(\bx_i) - \mu)}}}}{\lambda}  \tag{$\Pr[\be] = \delta$}\\
        &\leq \frac{\ln\paren*{(1/\delta) \cdot \Ex_{\bx \sim \mcD}\bracket*{\exp\paren*{(\lambda/k) \cdot (f(\bx) - \mu)}}^k}}{\lambda}  \tag{$\bx_1, \ldots, \bx_k \iid \mcD$}\\
        &\leq \frac{\ln\paren*{(1/\delta) \cdot \exp\paren*{\frac{\lambda^2}{8k}}}}{\lambda}  \tag{Hoeffding's lemma, \Cref{fact:Hoeffding's lemma}} \\
        &= \frac{\ln(1/\delta)}{\lambda} + \frac{\lambda}{8k}.
    \end{align*}
    Setting $\lambda = \sqrt{8 \ln(1/\delta)k}$, we can conclude that, for any $f:X\to [0,1]$,
    \begin{equation*}
        \Ex\bracket*{\frac{1}{k} \sum_{i \in [k]}f(\bx_i )\mid \be} - \Ex\bracket*{\frac{1}{k} \sum_{i \in [k]}f(\bx_i)} \leq \sqrt{\frac{\ln(1/\delta)}{2k}}. \qedhere
    \end{equation*}
\end{proof}

We are now ready to prove the main result of this subsection.
\begin{proof}[Proof of \Cref{lem:coupling-transfer}]
    Let $\be$ be the event occurring with probability at least $\delta$ indicating that $\sum_{i \in [k]} f(\ba_i, \bb_i) \geq  \eps k + \sqrt{2k \ln(1/\delta)}$.
Set $\mcA_{\be}$ ($\mcB_{\be}$, respectively) to be the distribution of $\ba_{\bi}$ where $\bi$ is uniform over $\{1,\dots,k\}$ (the distribution of $\bb_{\bi}$, respectively), conditioned on event $\be$ occurring. Then, by \Cref{prop:TV-small},
    \begin{equation*}
        \TV(\mcA, \mcA_{\be}) \leq \sqrt{\frac{\ln(1/\delta)}{2k}} \quad\quad\text{and}\quad\quad \TV(\mcB, \mcB_{\be}) \leq \sqrt{\frac{\ln(1/\delta)}{2k}}.
    \end{equation*}
    Let us condition on the event $\be$ occurring. Under this conditioning, it is possible to draw $\ba_{\mathrm{fresh}} \sim \mcA$ and $\bb_{\mathrm{fresh}} \sim \mcB$ coupled with $\ba_{\bi}$ and $\bb_{\bi}$ for which
    \begin{equation*}
        \Pr[\ba_{\mathrm{fresh}} \neq \ba_{\bi}] \leq\sqrt{\frac{\ln(1/\delta)}{2k}} \quad\quad\text{and}\quad\quad \Pr[\bb_{\mathrm{fresh}} \neq \bb_{\bi}] \leq\sqrt{\frac{\ln(1/\delta)}{2k}}.
    \end{equation*}
    Then, since we are conditioning on the event $\be$, by the hypothesis of \Cref{lem:coupling-transfer} we have
    \begin{equation*}
        \Ex[f(\ba_{\bi},\bb_{\bi})] \geq \frac{\eps k + \sqrt{2k \ln(1/\delta)}}{k}.
    \end{equation*}
    Then, since $f$ is bounded on $[0,1]$,
    \begin{align*}
        \Ex[f(\ba_{\mathrm{fresh}}, \bb_{\mathrm{fresh}})] &\geq \Ex[f(\ba_{\bi},\bb_{\bi})] - \Pr[\ba_{\mathrm{fresh}} \neq \ba_{\bi} \text{ or }\bb_{\mathrm{fresh}} \neq \bb_{\bi}] \\
        &\geq \Ex[f(\ba_{\bi},\bb_{\bi})] - 2\sqrt{\frac{\ln(1/\delta)}{2k}} \tag{Union bound} \\
        &\geq \eps.
    \end{align*}
    This is exactly the coupling whose existence we set out to show.
\end{proof}

\subsection{Generating candidate hypotheses: Proof of \texorpdfstring{\Cref{lem:gen-hypotheses}}{Lemma~\ref{lem:gen-hypotheses}}}

Before proving \Cref{lem:gen-hypotheses}, we prove the following simple claim about the distribution of groups of samples.
\begin{proposition}
    \label{prop:group-dists}
    For any distribution $\mcD$ and concept $c$, let $\mcD_c$ be the distribution of $(\bx, c(\bx))$ where $\bx \sim \mcD$. Suppose the algorithm $\Amplify$ in \Cref{fig:Boost} is given a sample $\bS'$ of $nk$ points generated by an $\eta$-nasty adversary where the clean distribution is $\mcD_c$, and let $(S^{(1)})', \ldots, (S^{(k)})'$ refer to the $k$ groups of samples created in Step~1 of \Cref{fig:Boost}. There exists random variables $\bS^{(1)}, \ldots, \bS^{(k)}$ and $\bz_1, \ldots, \bz_k$ with the following properties:
    \begin{enumerate}
        \item The marginal distribution of $\bS^{(1)}, \ldots, \bS^{(k)}$ is that of $k$ independent draws from $(\mcD_c)^n$.
        \item The marginal distribution of $\bz_1, \ldots, \bz_k$ is that of $k$ independent draws from $\Bin(n, \eta)$.
        \item With probability $1$, for all $i \in [k]$, the number of coordinates on which $\bS^{(i)}$ and $(S^{(i)})'$ differ is at most $\bz_i$.
    \end{enumerate}
\end{proposition}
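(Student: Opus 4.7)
The plan is to construct $\bS^{(1)}, \ldots, \bS^{(k)}$ and $\bz_1, \ldots, \bz_k$ directly from the joint distribution of the clean sample, the nasty adversary's choice of corrupted positions, and the permutation drawn in Step~1 of \Amplify. Let $\bS_{\mathrm{big}} \sim (\mcD_c)^{nk}$ denote the clean sample, let $\bZ \subseteq [nk]$ be the (possibly sample-dependent) set of positions the adversary corrupts, and let $\bsigma$ be the uniform permutation drawn independently by \Amplify. I define
\[
\bS^{(j)} \coloneqq \bigl((\bS_{\mathrm{big}})_{\bsigma((j-1)n+1)},\ldots,(\bS_{\mathrm{big}})_{\bsigma(jn)}\bigr),
\qquad
\bz_j \coloneqq \bigl|\{\ell\in[n] : \bsigma((j-1)n+\ell)\in\bZ\}\bigr|.
\]
Property~3 is then immediate: the coordinates on which $\bS^{(j)}$ and $(S^{(j)})'$ can differ are exactly those positions $\ell$ of group $j$ for which $\bsigma((j-1)n+\ell)\in \bZ$, and these number at most $\bz_j$ by construction.

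For Property~1, I would use that $\bsigma$ is uniform and independent of $\bS_{\mathrm{big}}$. Applying a uniform permutation to an i.i.d.~sample yields an i.i.d.~sample with the same marginal; breaking the resulting length-$nk$ sequence into $k$ consecutive blocks of length $n$ then gives $k$ independent samples, each distributed as $(\mcD_c)^n$, as claimed.

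Property~2 is the more delicate point, and I would prove it via an alternative generation of $(\bz_1, \ldots, \bz_k)$. Set $\bZ' \coloneqq \bsigma^{-1}(\bZ)$, so that $\bz_j = |\bZ' \cap \{(j-1)n+1, \ldots, jn\}|$. Because $\bsigma$ is a uniform permutation independent of $\bZ$, conditional on $|\bZ| = m$ the set $\bZ'$ is uniformly distributed over size-$m$ subsets of $[nk]$. By the definition of nasty noise the marginal distribution of $|\bZ|$ is $\Bin(nk, \eta)$, so $\bZ'$ has the same distribution as $\{i \in [nk] : \bb_i = 1\}$ where $\bb_1, \ldots, \bb_{nk} \iid \Ber(\eta)$: both distributions are characterized by (i) size distributed as $\Bin(nk, \eta)$ and (ii) being conditionally uniform given the size. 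Under this alternative generation $\bz_j = \sum_{\ell=1}^n \bb_{(j-1)n + \ell}$, a sum of $n$ independent $\Ber(\eta)$ draws coming from disjoint index blocks, so $(\bz_1, \ldots, \bz_k)$ is jointly distributed as $k$ independent $\Bin(n, \eta)$ variables.

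The main conceptual hurdle is the concern that adaptivity of the nasty adversary---which allows $\bZ$ to depend arbitrarily on $\bS_{\mathrm{big}}$---could correlate the group sizes or violate the independence of the clean groups. The reason this causes no trouble is precisely that \Amplify draws $\bsigma$ fresh and independently of everything the adversary sees: $\bsigma$ simultaneously randomizes which clean points land in which group (giving Property~1) and which positions of $\bZ$ fall into which group (giving Property~2), so the only feature of $\bZ$ that influences the joint law of $(\bz_1,\ldots,\bz_k)$ is the marginal distribution of its size.
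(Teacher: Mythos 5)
Your proposal is correct and follows essentially the same route as the paper: define the clean groups and budgets via the fresh uniform permutation applied to the clean sample and to the adversary's corruption set, then argue that permuting an i.i.d.\ sample preserves its law and that the permuted corruption set is distributed as the heads of $nk$ i.i.d.\ $\Ber(\eta)$ coins (the paper verifies this last point by the explicit computation $\Pr[\ba=a]=\eta^{w(a)}(1-\eta)^{nk-w(a)}$, which is exactly your ``conditionally uniform given a $\Bin(nk,\eta)$ size'' characterization). No gaps.
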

\begin{proof}
    We begin by recalling the process by which the nasty adversary generates the input to the algorithm (cf.~\Cref{def:variable-nasty-noise}). First, a clean sample $\bS_{\mathrm{big}} \sim (\mcD_c)^{nk}$ is drawn. Then, the adversary chooses some $\bZ \subseteq [nk]$ with the only constraint being that the marginal distribution of $|\bZ|$ is $\Bin(nk, \eta)$. Finally, the adversary can arbitrarily corrupt the coordinates of $\bS_{\mathrm{big}}$ in $\bZ$ to form $S'_{\mathrm{big}}$ which is given to $A$.

    Then, $\Amplify$ generates $(\bS^{(1)})', \ldots, (\bS^{(k)})'$ by drawing a uniform permutation $\bsigma:[nk] \to [nk]$ and setting $(\bS^{(1)})'$ to the first $n$ samples in $\bsigma(\bS'_{\mathrm{big}})$, $(\bS^{(2)})'$ to the next $n$ samples in $\bsigma(\bS'_{\mathrm{big}})$, and so on. In this analysis, we set,
    \begin{enumerate}
        \item $\bS^{(1)}$ to be the first $n$ elements of $\bsigma(\bS_{\mathrm{big}})$, $\bS^{(2)}$, the next $n$, and so on.
        \item $\bz_1$ to be the number of $i \in \bZ$ for which $\sigma(i)$ is between $1$ and $n$, $\bz_2$ the number of $i \in \bZ$ for which $\sigma(i)$ is between $n+1$ and $2n$, and so on.
    \end{enumerate}
    Now we prove that all three claimed properties hold. For the first, we observe that applying a random permutation (or indeed any permutation) to points drawn iid does not change their distribution, so the distribution of $\bsigma(\bS_{\mathrm{big}})$ is just that of $nk$ independent samples from $\mcD_c$. It therefore has the right distribution after being split into $k$ groups.

    For the second property, for all $i \in [nk]$, let $\ba_i$ be the indicator that there is a $j \in \bZ$ for which $\bsigma(j) = i$. We will show that the distribution of $\ba_1, \ldots, \ba_{nk}$ is that of $nk$ independent draws from $\Ber(\eta)$, which implies that $(\bz_1, \ldots, \bz_k)$ has the right marginal distribution. To show this, observe that for any $a \in \zo^{nk}$, writing $w(a) \coloneqq \sum_{i \in [nk]} a_i$, we have that
    \begin{align*}
        \Pr[\ba = a] &= \Pr_{\bZ}[\Pr[\ba = a]\mid\bZ] \\
        &= \Pr_{\bZ}\bracket*{\frac{\Ind[|\bZ| = w(a)]}{\binom{nk}{w(a)}}} \tag{$\bsigma$ is a uniform permutation}\\
        &= \eta^{w(a)}(1-\eta)^{nk - w(a)} \tag{$|\bZ| \sim \Bin(nk,\eta)$}
    \end{align*}
    which exactly corresponds to $\ba_1, \ldots, \ba_{nk}$ being $nk$ independent draws from $\Ber(\eta$).

    The final property follows from how we defined $\bz_1, \ldots, \bz_k$ and the fact that $(\bS_{\mathrm{big}})_j$ can only differ from $(\bS'_{\mathrm{big}})_j$ if $j \in \bZ$. 
\end{proof}

We are ready to prove the main result of this subsection.
\begin{proof}[Proof of \Cref{lem:gen-hypotheses}]
    For any size-$n$ sample $S$ and number of corruptions $z$, let
    \begin{equation*}
        f(S, z) = \sup_{S'\text{ differs from S on }\leq z\text{ coordinates}}\set*{\Ex[\error_{\mcD}(A(S'), c)]},
    \end{equation*}
    where the randomness in the expectation is over the randomness of the learner $A$.  Since we are given that $A$ learns with expected error $\eps$ even with $\eta$-nasty noise, we know that for any coupling of $\bS \sim (\mcD_{c})^n$ and $\bz \sim \Bin(n, \eta)$, we have $\Ex[f(S,z)] < \eps$. Now, let $\bS^{(1)}, \ldots, \bS^{(k)}$ and $\bz_1, \ldots, \bz_k$ be as in \Cref{prop:group-dists}. By the contrapositive formulation of \Cref{lem:coupling-transfer}, we have
    \begin{equation*}
        \Pr\bracket*{\sum_{i \in k} f(\bS^{(i)}, \bz_i) \geq  \eps k + \sqrt{2k \ln(2/\delta)}} < \frac{\delta}{2}.
    \end{equation*}
    Then, defining,
    \begin{equation*}
        \boldsymbol{\mathrm{err}}_i \coloneqq \error_{\mcD}(\bh_i, c)],
    \end{equation*}
  (the bold font on $\bh_i$ is because it depends on the randomness of $\bS^{(i)}$ and $\bz_i$)
    we have that $\boldsymbol{\mathrm{err}}_i$ is a random variable bounded on $[0,1]$ with mean at most $f(\bS^{(i)}, \bz_i)$. By Hoeffding's inequality (\Cref{fact:hoeffding-inequality}),
    \begin{equation*}
        \Pr\bracket*{\sum_{i \in k} \boldsymbol{\mathrm{err}}_i \geq \sum_{i \in [k]} f(\bS^{(i)}, \bz_i) + \sqrt{\frac{\ln(2/\delta)k}{2}}} \leq \delta/2.
    \end{equation*}
    Combining these with a union bound, we have that
    \begin{equation*}
        \Pr\bracket*{\sum_{i \in k} \boldsymbol{\mathrm{err}}_i\geq \eps k + \sqrt{\frac{9\ln(2/\delta)k}{2}}} < \delta. \qedhere
    \end{equation*}
\end{proof}

\subsection{Using the candidate hypotheses: Proof of \texorpdfstring{\Cref{thm:boost-no-holdout}}{Theorem~\ref{thm:boost-no-holdout}}}

\begin{proof}[Proof of \Cref{thm:boost-no-holdout}]
    Based on how we set $k$ in \Cref{thm:boost-no-holdout} and the guarantee of \Cref{lem:gen-hypotheses}, with probability at least $1-\delta$, the hypotheses generated have total error at most
    \begin{equation*}
        \eps k + O\paren*{\sqrt{k \ln(1/\delta)}} \leq k\cdot(\eps + \epsa).
    \end{equation*}
    The hypothesis $\Amplify$ outputs has error equal to the average error of the $k$ hypotheses. Using the above bound, this is at most $\eps + \epsa$ with probability at least $1 - \delta$.
\end{proof}


\section{A strong separation between malicious and nasty noise for fixed-distribution learning: Proof of \texorpdfstring{\Cref{thm:fixed-separation-informal}}{Theorem~\ref{thm:fixed-separation-informal}}} \label{sec:strong-sep}

Our main result in this section is \Cref{thm:fixed-separation}, which is a detailed version of the informal \Cref{thm:fixed-separation-informal} from the Introduction. This theorem shows that if we consider general fixed-distribution learning algorithms that need not be ICE algorithms, then in contrast with \Cref{thm:ICE-malicious-nasty}, there can be an arbitrarily large constant factor separation between even the strong malicious noise rate (\Cref{remark:strong-malicious}) and the nasty noise rate that efficient algorithms can tolerate, which of course implies the same separation between \emph{standard} malicious noise and nasty noise.
\begin{theorem} 
[Uniform-distribution learning with low-rate nasty noise can be hard even when uniform-distribution learning with high-rate strong malicious noise is easy]
\label{thm:fixed-separation}
For any constant ratio $r > 1$, there exist two noise rates $0 < \eta_N < \eta_M < 0.1$ that satisfy $\eta_M \ge r\eta_N$ and the following. 
If one-way functions exist, then for all sufficiently large $d$ (relative to $\eta_N,\eta_M$), there is a concept class ${\cal C} = {\cal C}_{\eta_N,\eta_M}$ over $\{-1, +1\}^d$ such that the following uniform-distribution learning results hold for ${\cal C}$:

\begin{itemize}

\item There is no $\poly(d)$-time uniform-distribution PAC learning algorithm for ${\cal C}$ in the presence of nasty noise at rate $\eta_N$ that achieves error rate  
with probability even as large as any $1/\poly(d)$; but

\item There is a $\poly(d/\eps)$-time uniform-distribution PAC learning algorithm for ${\cal C}$ in the presence of strong malicious noise at rate $\eta_M$ that achieves error rate $4\eta_M(1 + o(1)) + \eps$ 
with probability $1 - \exp(-\Omega(d))$.

\end{itemize}

\end{theorem}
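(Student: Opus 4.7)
The plan is to realize the sketch of \Cref{subsec:overview-fixed-separation} by carefully constructing a low-weight erasure-list-decodable code, mounting a concept class on top of it, and verifying both sides of the separation. I fix constants $\eta_M,\eta_N\in(0,0.1)$ with $\eta_M \ge r\eta_N$ and build a length-$n$ binary code $C$ of constant rate with three properties: (i) $C$ is efficiently erasure-list-decodable from some constant fraction $p$ of erasures with list size $L=O(1)$, via the random-linear-code result of \cite{DingJX14}; (ii) every codeword of $C$ has relative Hamming weight at most some $\rho \ll p$; and (iii) $|C| = 2^{\Omega(n)}$. To enforce (ii) I start with a random linear code of appropriate rate and discard every codeword of weight exceeding $\rho n$; an elementary Chernoff-style counting bound shows that an exponential number of low-weight codewords survive, and a union bound preserves the list-decoding guarantee on the sub-code. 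The key quantitative handle is that the ratio $p/\rho$ can be made arbitrarily large, since $p$ is bounded only by the list-decoding capacity of random linear codes while $\rho$ can be driven toward $0$ by shrinking the rate.

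Next I construct the concept class. Partition $\{-1,+1\}^d$ into a key-side $K$ of density $\alpha_K$ and a value-side $V=\{-1,+1\}^d\setminus K$, using a balanced map $i:K\to[n]$. Using the seeded extractor $\Ext:\{-1,+1\}^{\log|C|}\times\{-1,+1\}^{O(\log d)}\to\{-1,+1\}^\ell$ of \Cref{thm:seeded-extractor} and a PRF family $\{f_\kappa\}$, I define $\mathcal{C}$ to contain one concept $c_{w,s}$ for every $(w,s)\in C\times\{-1,+1\}^{O(\log d)}$, by $c_{w,s}(x)=w_{i(x)}$ on $K$ and $c_{w,s}(x)=f_{\Ext(w,s)}(x)$ on $V$. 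I choose $\alpha_K$ to be a small constant simultaneously satisfying $\rho\alpha_K \le \eta_N$ (so the nasty adversary can afford to wipe the $+1$ labels on $K$) and $\eta_M \le p\alpha_K$ (so the malicious-induced erasure rate stays within the list-decoding radius); these are jointly satisfiable precisely because $p/\rho \ge r \ge \eta_M/\eta_N$.

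For nasty-noise hardness the adversary, upon seeing the sample, flips every key-side example with label $+1$ to $-1$; since $w$ has relative weight at most $\rho$, the fraction of such examples is concentrated near $\rho\alpha_K \le \eta_N$, so by a Chernoff bound the adversary stays within its $\eta_N$ budget with probability $1-e^{-\Omega(d)}$. After this corruption the key-side is label-constant and carries no information about $w$. Treating the target concept as drawn uniformly from $\mathcal{C}$, \Cref{thm:seeded-extractor} makes $\Ext(\bw,\bs)$ statistically $1/\mathrm{poly}(d)$-close to uniform, so the value-side labels are computationally indistinguishable from a truly random function by \Cref{def:prf}; a Yao-style averaging argument converts this average-case hardness into worst-case hardness for a specific concept, ruling out any efficient learner that achieves accuracy $0.51$ with non-negligible probability. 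For malicious-noise learnability the learner tags bit $j\in[n]$ as erased if it sees both labels for that bit on the key-side, and otherwise sets $\tilde w_j$ to the unique observed label; since each erasure costs the adversary at least one wrong-label corruption on that bit, a Chernoff bound shows that the erased fraction is at most $\eta_M/\alpha_K + o(1) \le p$ w.h.p., so the erasure-list-decoder returns an $O(1)$-sized list of candidate codewords. Enumerating the $2^{O(\log d)} = \mathrm{poly}(d)$ seeds produces $\mathrm{poly}(d)$ candidate concepts, and a standard Chernoff-based hypothesis-testing step on the value-side (as in noise-tolerant learning under malicious noise) selects one whose empirical disagreement rate is lowest, yielding the claimed error bound $4\eta_M(1+o(1))+\eps$ with probability $1-e^{-\Omega(d)}$.

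The main obstacle is parameter balancing. The low-weight restriction forces almost all codewords of a random linear code to be discarded, so the rate must be high enough that an exponential sub-code survives yet low enough that the list-decoding guarantee of \cite{DingJX14} still applies; simultaneously $\alpha_K$ must thread between the nasty and malicious constraints above, while $\log|C|$ and $\ell$ must remain large enough to provide the extractor with a high-entropy source and to invoke PRF security. Verifying that every constant target ratio $r$ admits a simultaneous choice of $(\eta_M,\eta_N,\rho,p,\alpha_K,\ell)$, and tracking the leading constants in the final malicious-noise error bound, is the most delicate part of the argument; the cryptographic pieces (PRF indistinguishability and Yao averaging) and the coding pieces (list decoding and low-weight counting) slot in as black boxes once the parameter regime is pinned down.
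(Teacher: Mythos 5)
Your overall architecture is the same as the paper's: a low-Hamming-weight subcode of an efficiently erasure-list-decodable random linear code (Ding--Jin--Xing plus a counting bound on light codewords), a key-side/value-side domain split, a value side given by a PRF whose key is a seeded extractor applied to the codeword, a nasty adversary that wipes the key side within budget because codewords are light, and a malicious-noise learner that erasure-list-decodes the key side, enumerates the $\poly(d)$ extractor seeds, and hypothesis-tests. The hardness direction (including treating the concept as random so the extractor output is near-uniform, then invoking PRF security) is essentially the paper's argument.

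The genuine gap is in your malicious-noise learner's per-bit decoding rule. You declare codeword bit $j$ ``erased'' as soon as both labels appear in its block, and otherwise take the unique observed label, and you then claim the erased fraction is at most $\eta_M/\alpha_K+o(1)$ because each erasure ``costs at least one wrong-label corruption.'' But under this rule a \emph{single} corrupted example erases an entire bit, while the adversary's budget is about $\eta_M m$ where $m$ is the total sample size. Since $m$ must be $\poly(d/\eps)$ (every block must be hit many times, and the hypothesis-testing stage must resolve error $\eps$), the budget is polynomially larger than the codeword length $w$, so the adversary can erase essentially every bit while spending a vanishing fraction of its budget; after that, erasure-decoding returns an exponentially large list (or fails outright), and your bound $\eta_M/\alpha_K \le p$ never comes into play. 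You also cannot escape by shrinking the sample: keeping the adversary from erasing a constant fraction of bits at unit cost would require $\eta_M m \ll w$, whereas merely ensuring every block contains a clean example forces $m=\Omega(w\log w/\alpha_K)$, and these are incompatible for constant $\eta_M,\alpha_K$ and large $d$. This is precisely why the paper's algorithm $A_\mal$ uses a threshold rule: with $D$ the expected number of surviving clean examples per block, bit $i$ is set to $b$ only if $s_i^{b}\ge D-\Delta$ and $s_i^{\bar b}<D-\Delta$, so blocking the correct assignment costs the adversary roughly $D$ wrong-labeled examples in that block, giving at most $|S^{\new}|/(D-\Delta)\approx \frac{\eta_M}{(1-\eta_M)\kappa}\,w$ erasures, which is what is compared against the list-decoding radius $\tau$. (A smaller slip: ``driving the weight bound toward $0$ by shrinking the rate'' is backwards --- retaining exponentially many light codewords forces the rate up toward $1-H(\cdot)$, and the erasure radius is then about $H$ of the weight bound; the ratio still diverges, as in the paper, but not for the reason you state.)
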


In our proof of \Cref{thm:fixed-separation}, we will let $\eta_N = 2^{-8r}$ and $\eta_M = H(\eta_N)/(H(\eta_N) + 2)$ where $H(\eta_N) = -\eta_N\log_2(\eta_N) - (1 - \eta_N)\log_2(1 - \eta_N)$ is the binary entropy function. We have
\[\eta_M \le \frac{H(\eta_N)}2 < -\eta_N\log_2(\eta_N) < 2^{-4r} < 0.1\]
and
\[\eta_M \ge \frac{\eta_M}{2(1 - \eta_M)} = \frac{H(\eta_N)}4 \ge \frac{-\eta_N\log_2(\eta_N)}4 \ge 2r\eta_N.\]

\subsection{Erasure-list-decodable code}
In our construction of the concept class $\calC_{\eta_N, \eta_M}$ in \Cref{thm:fixed-separation}, we will make use of a low-hamming-weight subcode of an efficiently erasure-list-decodable binary code. Formally, we need the following lemma.
(Note that our choice of $\eta_N,\eta_M$ specified above satisfies the conditions of the lemma.)

\begin{lemma}\label{lem:eeldbc-with-hamming}
    For any constant $\eta_N, \eta_M$ such that $0 < \eta_N < \eta_M < 1 - 1/(1 + H(\eta_N)) < 0.5$, there exist constants $\rho, \tau, \lambda \in (0, 1)$ such that for sufficiently large $w$, there exists a $(\tau, \Theta(1))$-efficiently erasure-list-decodable binary code mapping $(\rho w)$-bit messages to $w$-bit codewords, such that at least $2^{\lambda w}$ many codewords have Hamming weight 
    at most $\zetaN w$, and moreover,
    \[\frac{\eta_M}{(1 - \eta_M)\tau} < 0.002 + 0.998 \cdot \frac{\eta_M}{(1 - \eta_M)H(\eta_N)} < 1.\]
\end{lemma}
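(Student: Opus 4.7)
Our plan is to use the probabilistic method on uniformly random binary linear codes, combining the Ding--Jin--Xing efficient erasure list-decoding theorem with an elementary weight-enumerator calculation.

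First, we set the parameters. The hypothesis $\eta_M < 1 - 1/(1 + H(\eta_N))$ rearranges to $q := \eta_M/((1-\eta_M)H(\eta_N)) < 1$, so $0.002 + 0.998\, q = q + 0.002(1-q) > q$ strictly. By continuity of the map $\tau \mapsto \eta_M/((1-\eta_M)\tau)$, choosing $\tau := H(\eta_N) - \epsilon_0$ for a sufficiently small constant $\epsilon_0 > 0$ makes the full chain of inequalities in the lemma hold. We then pick rate $\rho := 1 - \tau - \epsilon_1$ for a constant $0 < \epsilon_1 < \epsilon_0$, and $\lambda := \epsilon_0 - \epsilon_1 - \epsilon_2$ for some $0 < \epsilon_2 < \epsilon_0 - \epsilon_1$; observe that $\lambda > 0$ and $\rho + H(\eta_N) - 1 = \lambda + \epsilon_2$.

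Next, we sample a uniformly random generator matrix $\mathbf{G} \in \mathbb{F}_2^{w \times \rho w}$ and form the random binary linear code $\mathbf{C} = \{\mathbf{G} m : m \in \mathbb{F}_2^{\rho w}\}$, identifying $\{0,1\}$ with $\{-1,+1\}$ in the natural way. Two events must hold simultaneously: (i) $\mathbf{C}$ is $(\tau, L)$-efficiently erasure-list-decodable for a constant $L$, and (ii) at least $2^{\lambda w}$ codewords of $\mathbf{C}$ have Hamming weight at most $\eta_N w$. For (i), we invoke \cite{DingJX14}: since $\rho + \tau = 1 - \epsilon_1 < 1$, with probability $1 - o(1)$ the random code $\mathbf{C}$ admits efficient erasure list-decoding from a $\tau$-fraction of erasures with constant list size (the decoder performs Gaussian elimination on the surviving coordinates and enumerates the $O(1)$-dimensional affine solution set). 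For (ii), let $\mathbf{N}$ count the nonzero messages $m$ whose codeword $\mathbf{G} m$ has Hamming weight at most $\eta_N w$. Each $\mathbf{G} m$ with $m \neq 0$ is uniformly distributed in $\{-1,+1\}^w$, so by standard entropy estimates for binomial partial sums,
\[
\mathbb{E}[\mathbf{N}] \;\geq\; 2^{(\rho + H(\eta_N) - 1 - o(1))w} \;\geq\; 2 \cdot 2^{\lambda w}
\]
for all sufficiently large $w$. Distinct nonzero messages are $\mathbb{F}_2$-linearly independent, so the pair $(\mathbf{G} m_1, \mathbf{G} m_2)$ is uniform on $(\{-1,+1\}^w)^2$; the indicator summands of $\mathbf{N}$ are therefore pairwise independent, giving $\mathrm{Var}(\mathbf{N}) \leq \mathbb{E}[\mathbf{N}]$. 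Chebyshev's inequality then yields $\mathbf{N} \geq \mathbb{E}[\mathbf{N}]/2 \geq 2^{\lambda w}$ except with probability $O(2^{-\lambda w})$.

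A union bound over the two failure events shows that (i) and (ii) both hold with probability $1 - o(1)$, so a deterministic code meeting the lemma's requirements exists, and we instantiate the lemma with it. The main obstacle we anticipate is the parameter-balancing in the first step: the three slack constants $\epsilon_0, \epsilon_1, \epsilon_2$ must simultaneously be positive with $\epsilon_1 + \epsilon_2 < \epsilon_0$, while $\tau$ stays close enough to $H(\eta_N)$ to satisfy the first displayed inequality of the lemma. Once this juggling is done, the list-decoding side is a direct citation of \cite{DingJX14} and the weight-enumerator side is a textbook pairwise-independence second-moment argument.
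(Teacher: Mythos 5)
Your proposal is correct and follows essentially the same route as the paper: a uniformly random binary linear code, the Ding--Jin--Xing theorem for efficient erasure list-decoding, a second-moment count of low-weight codewords, and a union bound. The only differences are cosmetic --- you balance parameters with abstract slack constants $\epsilon_0,\epsilon_1,\epsilon_2$ where the paper fixes explicit values of $\rho,\xi,\tau,\lambda$, and you get the light-codeword count via pairwise independence plus Chebyshev (even yielding probability $1-o(1)$) where the paper computes the second moment exactly and applies Paley--Zygmund to get constant probability, which suffices for the same union bound.
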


The proof of
\Cref{lem:eeldbc-with-hamming}  will use \Cref{lem:ding-jin-xing} combined with \Cref{lem:random-code-hamming}.
\begin{lemma}[{\cite[Theorem 3.3]{DingJX14}}]\label{lem:ding-jin-xing}
    For every constant $\rho \in (0, 1)$ and $\xi \in (0, 1 - \rho)$, for sufficiently large $w$, with probability at least $1 - 2^{-\Omega(w)}$, a random binary linear code of mapping $(\rho w)$-bit messages to $w$-bit codewords is $(1 - \rho - \xi, 2^{O(1/\xi)})$-efficiently erasure-list-decodable.
\end{lemma}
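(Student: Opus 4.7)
} The plan is to take a uniformly random binary linear code of a carefully chosen rate $\rho$ and show that it simultaneously satisfies the two desired properties with high probability: (a) being $(\tau, \Theta(1))$-erasure-list-decodable and (b) containing at least $2^{\lambda w}$ codewords of Hamming weight at most $\eta_N w$. We combine \Cref{lem:ding-jin-xing} for property (a) with a first/second-moment calculation for property (b), and then apply a union bound.

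\paragraph{Parameter selection.} Define $R := 0.002 + 0.998 \cdot \eta_M/((1-\eta_M)H(\eta_N))$. The hypothesis $\eta_M < 1 - 1/(1+H(\eta_N))$ is equivalent to $\eta_M < H(\eta_N)(1-\eta_M)$, i.e.\ $\eta_M/((1-\eta_M)H(\eta_N)) < 1$, which immediately yields $R < 1$ (the second inequality in the lemma statement). The same hypothesis is equivalent, by a short algebraic manipulation (multiplying $0.002 \eta_M < 0.002(1-\eta_M)H(\eta_N)$ through by the appropriate factors), to
\[
\frac{\eta_M}{(1-\eta_M)R} \;<\; H(\eta_N).
\]
Hence we can pick a constant $\tau$ with $\eta_M/((1-\eta_M)R) < \tau < H(\eta_N)$; the first inequality in the lemma is then $1/\tau < R$, which holds by construction. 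Next choose a small constant $\xi > 0$ with $\tau + \xi < H(\eta_N)$, and set $\rho := 1 - \tau - \xi$, so in particular $\rho > 1 - H(\eta_N)$.

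\paragraph{Erasure list-decodability.} With these choices of $\rho$ and $\xi$, \Cref{lem:ding-jin-xing} says that for sufficiently large $w$, a uniformly random binary linear code $\mathbf{C}$ mapping $\rho w$-bit messages to $w$-bit codewords is $(1 - \rho - \xi,\, 2^{O(1/\xi)}) = (\tau, \Theta(1))$-efficiently erasure-list-decodable with probability $1 - 2^{-\Omega(w)}$.

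\paragraph{Counting low-weight codewords.} Let $\mathbf{X}$ count the nonzero messages $m \in \{0,1\}^{\rho w}$ whose encoding $\mathbf{G} m$ (under the random generator matrix $\mathbf{G}$) has Hamming weight at most $\eta_N w$. For any nonzero $m$, $\mathbf{G} m$ is uniform on $\{0,1\}^w$, so writing $p = |B_{\eta_N w}|/2^w$ we have $\E[\mathbf{X}] = (2^{\rho w} - 1)p$. Using $|B_{\eta_N w}| \geq 2^{H(\eta_N)w - O(\log w)}$ (standard Stirling bound, valid since $\eta_N < 1/2$), we get
\[
\E[\mathbf{X}] \;\geq\; 2^{(\rho + H(\eta_N) - 1)w - O(\log w)} \;=\; 2^{(H(\eta_N) - \tau - \xi)w - O(\log w)},
\]
which is $2^{\Omega(w)}$ since $H(\eta_N) - \tau - \xi > 0$. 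Over $\F_2$, any two distinct nonzero messages are linearly independent, so their encodings are independent uniform elements of $\{0,1\}^w$; this pairwise independence gives $\mathrm{Var}(\mathbf{X}) \leq \E[\mathbf{X}]$, and Chebyshev's inequality yields $\Pr[\mathbf{X} < \E[\mathbf{X}]/2] \leq 4/\E[\mathbf{X}] = 2^{-\Omega(w)}$. A union bound with the erasure-list-decodability event shows that a specific $\mathbf{C}$ satisfies both properties, completing the proof with $\lambda := (H(\eta_N) - \tau - \xi)/2$.

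\paragraph{Main obstacle.} The substantive content lies entirely in the parameter algebra of Step~1: we must verify that the single hypothesis $\eta_M < 1 - 1/(1+H(\eta_N))$ is exactly strong enough to make both ``$R < 1$'' and ``$\tau < H(\eta_N)$ is compatible with $\tau > \eta_M/((1-\eta_M)R)$'' hold. Once this is checked, the probabilistic construction and concentration analysis are routine, and the choice of $\xi$ can be made arbitrarily small (at the cost of a larger list size $2^{O(1/\xi)}$, which is still a constant).
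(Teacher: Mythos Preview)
Your proposal is labeled as a proof of \Cref{lem:eeldbc-with-hamming}, not of \Cref{lem:ding-jin-xing}; the latter is simply cited from \cite{DingJX14} and the paper does not prove it. Treating your intended target as \Cref{lem:eeldbc-with-hamming}, your argument is correct and follows the same skeleton as the paper: take a random binary linear code of rate $\rho$ slightly above $1 - H(\eta_N)$, invoke \Cref{lem:ding-jin-xing} for erasure-list-decodability, count low-weight codewords by a moment method, and union-bound.

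Two differences are worth flagging. First, your parameter selection leaves $\tau$ free in an open interval, whereas the paper commits to the explicit values $\tau = 0.998\,H(\eta_N) + 0.002\,\eta_M/(1-\eta_M)$, $\xi = 0.001\bigl(H(\eta_N) - \eta_M/(1-\eta_M)\bigr)$, $\rho = 1-\tau-\xi$; both choices work. (Your sentence ``the first inequality in the lemma is then $1/\tau < R$'' is a slip---you mean $\eta_M/((1-\eta_M)\tau) < R$---but the surrounding derivation is correct.) Second, and more interestingly, for the low-weight codeword count the paper (\Cref{lem:random-code-hamming}) uses the Paley--Zygmund inequality and obtains only \emph{constant} success probability, whereas you exploit the pairwise independence of $\mathbf{G}m_1$ and $\mathbf{G}m_2$ for distinct nonzero messages $m_1,m_2$ to get $\mathrm{Var}(\mathbf{X}) \le \Ex[\mathbf{X}]$ and hence failure probability $2^{-\Omega(w)}$ via Chebyshev. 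This is a small but genuine sharpening: the paper's union bound still goes through because the erasure-list-decodability event already has probability $1-2^{-\Omega(w)}$, but your version makes both events overwhelmingly likely and dispenses with the separate Paley--Zygmund lemma.
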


\begin{lemma}\label{lem:random-code-hamming}
    For every constant $\zetaN \in (0, 1/2)$ and $\rho \in (1 - H(\zetaN), 1)$, with constant probability, a random binary linear code mapping $(\rho w)$-bit messages to $w$-bit codewords satisfies that at least $2^{(\rho + H(\zetaN) - 1 - o(1))w}$ many codewords have Hamming weight at most $\zetaN w$.
\end{lemma}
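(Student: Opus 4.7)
The plan is to apply the second moment method to the random binary linear code $C$, viewed as the image of a uniformly random generator matrix $\mathbf{G} \in \mathbb{F}_2^{w \times \rho w}$. For each nonzero message $m \in \mathbb{F}_2^{\rho w}$, let $\mathbf{X}_m$ denote the indicator of the event that $\mathrm{ham}(\mathbf{G} m, (+1)^w) \leq \zetaN w$, and let $\mathbf{N} = \sum_{m \neq 0} \mathbf{X}_m$ be the number of nonzero codewords of Hamming weight at most $\zetaN w$.

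First I would compute $\E[\mathbf{N}]$. For any single nonzero $m$, the vector $\mathbf{G} m$ is uniformly distributed in $\mathbb{F}_2^w$, so $\Pr[\mathbf{X}_m = 1] = p$, where $p := 2^{-w} \sum_{i = 0}^{\lfloor \zetaN w \rfloor} \binom{w}{i}$. The standard binomial estimate $\binom{w}{\lfloor \zetaN w \rfloor} \geq 2^{H(\zetaN) w}/(w+1)$ (valid since $\zetaN < 1/2$) gives $p \geq 2^{-(1 - H(\zetaN) + o(1))w}$, and hence $\E[\mathbf{N}] \geq (2^{\rho w} - 1) \cdot p \geq 2^{(\rho + H(\zetaN) - 1 - o(1))w}$, which is an exponentially growing quantity in $w$ by the assumption $\rho > 1 - H(\zetaN)$.

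Next I would establish pairwise independence across distinct nonzero messages. The key observation is that over $\mathbb{F}_2$ any two distinct nonzero vectors are automatically linearly independent, since the only nontrivial scalar is $1$. Hence for $m \neq m'$, both nonzero, the pair $(\mathbf{G} m, \mathbf{G} m')$ is uniformly distributed on $\mathbb{F}_2^w \times \mathbb{F}_2^w$, so $\mathbf{X}_m$ and $\mathbf{X}_{m'}$ are independent. This pairwise independence yields $\mathrm{Var}(\mathbf{N}) = \sum_{m \neq 0} \mathrm{Var}(\mathbf{X}_m) \leq \E[\mathbf{N}]$.

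Finally, Chebyshev's inequality gives $\Pr[\mathbf{N} \leq \E[\mathbf{N}]/2] \leq 4/\E[\mathbf{N}] = o(1)$, so with probability $1 - o(1)$ (in particular with constant probability for sufficiently large $w$) the random code contains at least $\E[\mathbf{N}]/2 \geq 2^{(\rho + H(\zetaN) - 1 - o(1))w}$ codewords of Hamming weight at most $\zetaN w$. There is no serious obstacle beyond these routine calculations; the only mildly subtle point is the observation that pairwise independence comes for free over $\mathbb{F}_2$, which avoids the need (present in larger-alphabet analyses) to carve out a ``collinear'' fraction of messages before invoking the second moment method.
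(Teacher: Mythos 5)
Your proof is correct and takes essentially the same route as the paper's: both are second-moment arguments that count low-weight codewords and exploit the pairwise independence of distinct (nonzero) codewords of a random linear code over $\mathbb{F}_2$. The only cosmetic difference is that you conclude via Chebyshev's inequality (giving probability $1-o(1)$), whereas the paper computes the second moment exactly and applies Paley-Zygmund with $\theta = 1/2$, which yields just the constant probability the lemma claims.
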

We use the Paley-Zygmund inequality in the proof of \Cref{lem:random-code-hamming}:
\begin{fact}[Paley-Zygmund inequality]
    Let $\bZ \ge 0$ be a random variable, $\theta \in [0, 1]$, then
    \[\Pr[\bZ > \theta\Ex[\bZ]] \ge (1 - \theta)^2\frac{\Ex[\bZ]^2}{\Ex[\bZ^2]}.\]
\end{fact}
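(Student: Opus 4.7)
\smallskip

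The plan is to use the second moment method (Paley--Zygmund) with the random variable $\bZ$ counting the low-weight codewords. Let $\bG \in \mathbb{F}_2^{w \times \rho w}$ be a uniformly random matrix, giving rise to the code $\{\bG x : x \in \mathbb{F}_2^{\rho w}\}$, and let
\[
\bZ := \bigl| \{ x \in \mathbb{F}_2^{\rho w} : \mathrm{ham}(\bG x, 0^w) \le \zetaN w\} \bigr|.
\]
My goal is to show that $\bZ \ge 2^{(\rho + H(\zetaN) - 1 - o(1)) w}$ holds with probability bounded below by a positive constant, at which point the statement of the lemma follows immediately.

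\smallskip

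For the first moment, I would use the standard observation that for any \emph{fixed} nonzero $x \in \mathbb{F}_2^{\rho w}$, the vector $\bG x$ is uniformly distributed on $\mathbb{F}_2^w$ (pick a coordinate $i$ of $x$ equal to $1$; conditioned on the other rows of $\bG$, the $i$th row is uniform, so $\bG x$ inherits uniformity). Hence $p := \Pr[\mathrm{ham}(\bG x, 0^w) \le \zetaN w] = 2^{-w} \sum_{j \le \zetaN w} \binom{w}{j}$, and by the standard entropy bound on partial binomial sums (valid since $\zetaN < 1/2$), $p = 2^{-(1 - H(\zetaN))w - o(w)}$. Thus $\Ex[\bZ] = 1 + (2^{\rho w} - 1)\, p = 2^{(\rho + H(\zetaN) - 1 - o(1))w}$, which tends to infinity because $\rho + H(\zetaN) > 1$ is exactly the hypothesis.

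\smallskip

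The key structural property I would exploit for the second moment is pairwise independence: over $\mathbb{F}_2$, any two distinct nonzero messages $x, x' \in \mathbb{F}_2^{\rho w}$ are linearly independent (the only nontrivial $\mathbb{F}_2$-relation $ax + bx' = 0$ with $(a,b) \neq (0,0)$ would force $x = 0$, $x' = 0$, or $x = x'$). Consequently $(\bG x, \bG x')$ is uniform on $\mathbb{F}_2^w \times \mathbb{F}_2^w$, so the joint probability of both being low-weight factors as $p^2$. Expanding
\[
\Ex[\bZ^2] = \sum_{x, x' \in \mathbb{F}_2^{\rho w}} \Pr[\mathrm{wt}(\bG x) \le \zetaN w \text{ and } \mathrm{wt}(\bG x') \le \zetaN w]
\]
and partitioning the pairs into (i) $x = x' = 0$, (ii) exactly one of $x, x'$ equal to $0$, (iii) $x = x' \neq 0$, and (iv) distinct nonzero pairs, I get
\[
\Ex[\bZ^2] \le 1 + 3 \cdot 2^{\rho w} p + 2^{2\rho w} p^2 \le \Ex[\bZ]^2 + O(\Ex[\bZ]).
\]
Since $\Ex[\bZ] \to \infty$, this yields $\Ex[\bZ^2] \le (1 + o(1))\Ex[\bZ]^2$.

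\smallskip

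To conclude, I apply the Paley--Zygmund inequality with $\theta = 1/2$:
\[
\Pr\!\bigl[\bZ > \tfrac{1}{2} \Ex[\bZ]\bigr] \ge \bigl(1 - \tfrac{1}{2}\bigr)^2 \frac{\Ex[\bZ]^2}{\Ex[\bZ^2]} \ge \tfrac{1}{4}(1 - o(1)),
\]
which is a positive constant for large $w$. Since $\tfrac{1}{2}\Ex[\bZ] = 2^{(\rho + H(\zetaN) - 1 - o(1))w}$, this is exactly the bound claimed in the lemma. I don't anticipate a real obstacle here; the only thing that needs care is the routine bookkeeping in the second-moment expansion and the verification that the ``diagonal'' and ``zero-message'' contributions are negligible relative to $\Ex[\bZ]^2$, which is automatic once one knows $\Ex[\bZ] \to \infty$.
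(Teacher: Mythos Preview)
Your proposal does not address the stated result. The statement you were asked to prove is the Paley--Zygmund inequality itself, which the paper records as a standard fact without proof. What you have written is instead a proof of \Cref{lem:random-code-hamming} (the lemma asserting that a random linear code has many low-weight codewords with constant probability), which \emph{uses} Paley--Zygmund as a black box.

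If one reads your write-up as a proof of \Cref{lem:random-code-hamming}, then it is correct and follows essentially the same route as the paper: both arguments compute $\Ex[\bZ]$ and $\Ex[\bZ^2]$ by exploiting that distinct nonzero codewords $\bG x$ and $\bG x'$ are independent uniform strings, observe that the diagonal/zero contributions are lower-order because $\Ex[\bZ] \to \infty$ under the hypothesis $\rho > 1 - H(\zetaN)$, and conclude via Paley--Zygmund with $\theta = 1/2$. If you want to actually prove the stated fact, the classical argument is one line: write $\Ex[\bZ] = \Ex[\bZ \cdot \Indicator[\bZ \le \theta \Ex[\bZ]]] + \Ex[\bZ \cdot \Indicator[\bZ > \theta \Ex[\bZ]]] \le \theta \Ex[\bZ] + \sqrt{\Ex[\bZ^2]\,\Pr[\bZ > \theta \Ex[\bZ]]}$ by Cauchy--Schwarz, and rearrange.
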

\begin{proof}[Proof of \Cref{lem:random-code-hamming}]
The proof is essentially a calculation; the main idea is that since any two distinct code words in a random linear code are independent, we can exactly compute $\E[\bZ^2]$, where $\bZ$ is the number of ``light'' codewords in the random linear code, and this lets us apply Paley-Zygmund.

We proceed to the details. Let $\bC$ be a random linear code mapping $(\rho w)$-bit messages to $w$-bit codewords. Suppose $\bC_i$ is the codeword in $\bC$ corresponding to the message $e_i$ whose $i$-th bit is $-1$ and other bits are $+1$, then the $\bC_i$'s are independent and uniformly random in $\{-1, +1\}^w$. 
    
    Let $\alpha$ be the fraction of $w$-bit strings with Hamming weight at most $\zetaN w$, so $\alpha = \frac1{2^w}\sum_{i = 0}^{\zetaN w}\binom wi = 2^{-(1 - H(\zetaN) + o(1))w}$. Let $\bZ$ be the number of codewords in $\bC$ with Hamming weight at most $\zetaN n$. We have
    \[\bZ = \sum_{S \subseteq [\rho w]}\Indicator\left[\ham\left(\bigoplus_{i \in S}\bC_i, (+1)^w\right) \le \zetaN w\right].\]
    Therefore,
    \[\Ex[\bZ] = \sum_{S \subseteq [\rho w]}\Pr\left[\ham\left(\bigoplus_{i \in S}\bC_i, (+1)^w\right) \le \zetaN w\right] = (2^{\rho w} - 1)\alpha + 1.\]
    and
    \[\begin{split}
        \Ex[\bZ^2] ={}& \sum_{S_1, S_2 \subseteq [\rho w]}\Pr\left[\left(\ham\left(\bigoplus_{i \in S_1}\bC_i, (+1)^w\right) \le \zetaN w\right) \wedge \left(\ham\left(\bigoplus_{i \in S_2}\bC_i, (+1)^w\right) \le \zetaN w\right)\right]\\
        ={}& \sum_{\begin{subarray}{c}
                S_1, S_2 \subseteq [\rho w]\\
                S_1 \ne S_2            \end{subarray}}
            \Pr\left[\left(\ham\left(\bigoplus_{i \in S_1}\bC_i, (+1)^w\right) \le \zetaN w\right) \wedge \left(\ham\left(\bigoplus_{i \in S_2}\bC_i, (+1)^w\right) \le \zetaN w\right)\right] + \\
        &\sum_{\begin{subarray}{c}
                S_1, S_2 \subseteq [\rho w]\\
                S_1 = S_2            \end{subarray}}
            \Pr\left[\left(\ham\left(\bigoplus_{i \in S_1}\bC_i, (+1)^w\right) \le \zetaN w\right) \wedge \left(\ham\left(\bigoplus_{i \in S_2}\bC_i, (+1)^w\right) \le \zetaN w\right)\right]\\
        ={}&(2^{\rho w} - 1)(2^{\rho w} - 2)\alpha^2 + 2(2^{\rho w} - 1)\alpha + (2^{\rho w} - 1)\alpha + 1.
    \end{split}\]
    Note that $\rho > 1 - H(\zetaN)$ and thus $2^{\rho w}\alpha = 2^{\Theta(w)}$, so by Paley-Zygmund inequality with $\theta = 1/2$,
    \[\Pr[\bZ > 2^{\rho w - 1}\alpha] \ge \frac14 \cdot \frac1{\Theta(1) + 3 \cdot 2^{-\rho w}\alpha^{-1}} \ge \Theta(1).\qedhere\]
\end{proof}

\begin{proof}[Proof of \Cref{lem:eeldbc-with-hamming}]
    Note that $\eta_M < 1 - 1/(1 + H(\eta_N))$ implies $\eta_M/(1 - \eta_M) < H(\eta_N)$. Let
    \begin{align*}
        \rho &= 1 - 0.999 \cdot H(\eta_N) - 0.001 \cdot \frac{\eta_M}{1 - \eta_M} > 1 - H(\eta_N)\\
        \xi &= 0.001 \cdot H(\eta_N) - 0.001 \cdot \frac{\eta_M}{1 - \eta_M}\\
        \tau &= 1 - \rho - \xi = 0.998 \cdot H(\eta_N) + 0.002 \cdot \frac{\eta_M}{1 - \eta_M} > \frac{\eta_M}{1 - \eta_M}  \\
        \lambda &= \frac12(\rho + H(\zetaN) - 1).
    \end{align*}
    We then apply \Cref{lem:ding-jin-xing}, \Cref{lem:random-code-hamming}, and union bound.
\end{proof}

\subsection{The concept class $\calC$}
Let $\calU$ be the uniform distribution over $X$. Recall that we choose $\eta_N = 2^{-8r}$ and $\eta_M = H(\eta_N)/(H(\eta_N) + 2)$. Let $\rho, \tau, \lambda \in (0, 1)$ be the constants from \Cref{lem:eeldbc-with-hamming}. Let $\kappa = 0.998 \cdot \frac{\eta_M}{(1 - \eta_M)\tau} + 0.002$, then $\frac{\eta_M}{(1 - \eta_M)\tau} < \kappa < 0.004 + 0.996 \cdot \frac{\eta_M}{(1 - \eta_M)H(\eta_N)} = 0.502 < 1$.

For convenience we work over a domain $X$ of size $2^d/(1 - \kappa) = O(2^d)$. 
We view the domain as being partitioned into two subdomains, which we denote $X_\key$ and $X_\val$; we refer to $X_\key$ as the \emph{key side} and to $X_\val$ as the \emph{value side}. The key side contains $2^d\kappa/(1 - \kappa)$ elements of $X$ (say, the lexicographically first $2^d\kappa/(1 - \kappa)$ elements) and the value side contains the next $2^d$ elements of $X$. Note that a random draw from $\calU$ lands on the key side with probability $\kappa$.

Let $w$ be a sufficiently large 
$\Theta(d)$ and let $\nu$ be a sufficiently small $1/\poly(d)$. 
Let $\Ext: \{0, 1\}^w \times \{0, 1\}^u \to \{0, 1\}^\ell$ be the efficiently-computable $(\lambda w, \nu)$-extractor from \Cref{thm:seeded-extractor} where $u = O(\log(w/\nu)) = O(\log(w))$ and $\ell = \Theta(w)$.

By \Cref{lem:eeldbc-with-hamming}, let $C$ be a $(\tau, \Theta(1))$-efficiently erasure-list-decodable binary code mapping $(\rho w)$-bit messages to $w$-bit codewords, with associated encoding and decoding functions $\Enc, \Dec$, 
such that at least $2^{\lambda w}$ many codewords have Hamming weight at most $\zetaN w$.
For $p \in [2^{\lambda w}]$, let $W_p$ denote the lexicographically $p$-th codeword with Hamming weight at most $\zetaN w$.

We view the key side as being partitioned into $w$ blocks $X_\key = X_1 \sqcup \cdots \sqcup X_w$, where each $|X_i| = \frac1w|X_\key|$ (say, block $X_i$ contains the $i$-th block of $\frac1w|X_\key|$ many lexicographically consecutive strings from $X_\key)$.

Let $\calF = \{f_k: \{-1, +1\}^d \to \{-1, +1\}\}_{k \in \{-1, +1\}^\ell}$ be a pseudorandom function family.

Now we can describe the concept class $\calC$ over $X$. There are $2^{\lambda w + u}$ concepts $c_{p, q}$ parameterized by $p \in [2^{\lambda w}]$ and $q \in \{-1, +1\}^u$. 
The concept class $c_{p, q}$ is defined as follows:
\begin{itemize}
    \item \textbf{Key side:} Given $x \in X_\key$, let $j = j(x) \in [w]$ be the block such that $x \in X_j$. The value of $c_{p, q}(x)$ is $(W_p)_j$, the $j$-th bit of $W_p$. (So for each $j \in [w]$, every input example in block $X_j$ on the key side evaluates to the same output bit under $c_{p, q}$. Intuitively, the key side can be thought of as representing the $w$-bit string $W_p$ in a very redundant way.)
    \item \textbf{Value side:} Recall that there are $2^d$ strings in $X_\val$. We associate the lexicographically $i$-th string in $X_\val$ with the lexicographically $i$-th string in $\{-1, +1\}^d$. For $x \in X_\val$, the value of $c_{p, q}(x)$ is $f_{\Ext(W_p, q)}(x)$.
\end{itemize}

The structure of the domain and the way that concepts label domain elements is illustrated in \Cref{fig:domain-sep,fig:key-sep,fig:value-sep}.

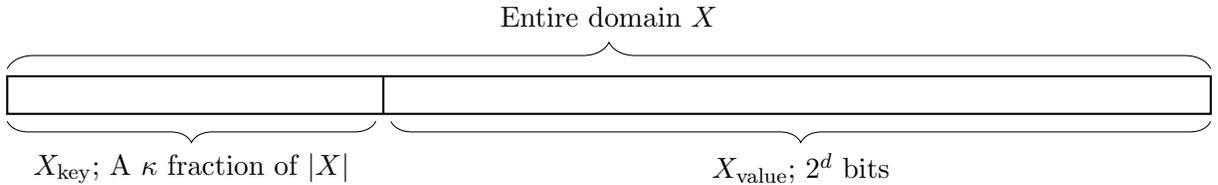
\begin{figure}[!h]

\begin{center}
    \begin{tikzpicture}
    \draw[thick] (0, 0) rectangle (16, 0.5);

    \draw[thick] (5, 0) -- (5, 0.5);

    \draw[decorate, decoration={brace, amplitude=10pt}] (0, 0.6) -- (16, 0.6) 
        node[midway, above, yshift=12pt] { Entire domain $X$};

    \draw[decorate, decoration={brace, amplitude=8pt, mirror}] (0, -0.1) -- (4.9, -0.1) 
        node[midway, below, yshift=-8pt] {$X_{\mathrm{key}}$; A $\kappa$ fraction of $|X|$};

    \draw[decorate, decoration={brace, amplitude=8pt, mirror}] (5.1, -0.1) -- (16, -0.1) 
        node[midway, below, yshift=-8pt] {$X_{\mathrm{value}}$; $2^d$ bits};

\end{tikzpicture}
\end{center}
\caption{Division of the domain into a key side and value side}
\label{fig:domain-sep}
\end{figure}
\medskip
\begin{figure}[!h]

    \begin{center}
        \begin{tikzpicture}
 
    \draw[very thick] (0, 0) rectangle (16, 0.5);

    \draw[very thick] (3, 0) -- (3, 0.5);

    \draw (0.5, 0) -- (0.5, 0.5);
    \node at (0.25, 0.25) {$1$};
    \draw (1, 0) -- (1, 0.5);
    \node at (0.75, 0.25) {$1$};
    \node at (1.75, 0.25) {$\cdots$};
    \draw (2.5, 0) -- (2.5, 0.5);
    \node at (2.75, 0.25) {$1$};

    \draw[very thick] (6, 0) -- (6, 0.5);

    \draw (3.5, 0) -- (3.5, 0.5);
    \node at (3.25, 0.25) {$0$};
    \draw (4, 0) -- (4, 0.5);
    \node at (3.75, 0.25) {$0$};
    \node at (4.75, 0.25) {$\cdots$};
    \draw (5.5, 0) -- (5.5, 0.5);
    \node at (5.75, 0.25) {$0$};

    \draw[very thick] (13, 0) -- (13, 0.5);
     \draw (13.5, 0) -- (13.5, 0.5);
    \node at (13.25, 0.25) {$1$};
    \draw (14, 0) -- (14, 0.5);
    \node at (13.75, 0.25) {$1$};
    \node at (14.75, 0.25) {$\cdots$};
    \draw (15.5, 0) -- (15.5, 0.5);
    \node at (15.75, 0.25) {$1$};

    \node at (8.3, 1.55) {$W_p = 1\,0\,\ldots\,1$};
    \draw[->] (1.5, 0.5) -- (8.2, 1.37);
    \draw[->] (4.5, 0.5) -- (8.5, 1.37);
    \draw[->] (14.5, 0.5) -- (9.45, 1.37);
    
    \draw[decorate, decoration={brace, amplitude=8pt, mirror}] (0.1, -0.1) -- (2.9, -0.1) 
        node[midway, below, yshift=-8pt] {$B_1$};

    \draw[decorate, decoration={brace, amplitude=8pt, mirror}] (3.1, -0.1) -- (5.9, -0.1) 
        node[midway, below, yshift=-8pt] {$B_2$};

    \draw[decorate, decoration={brace, amplitude=8pt, mirror}] (13.1, -0.1) -- (15.9, -0.1) 
        node[midway, below, yshift=-8pt] {$B_w$};

    \node at (9.5, -0.5) {\huge$\cdots$};

    \draw[decorate, decoration={brace, amplitude=5pt}] (8.2, 1.8) -- (9.45, 1.8) 
        node[midway, above, yshift=3pt] {$w=\mathrm{poly}(d)$ bits long};
\end{tikzpicture}
    \end{center}
    \caption{How the key side $X_{\mathrm{key}}$ is labeled by a concept $c_{p, q}$.}
    \label{fig:key-sep}
\end{figure}
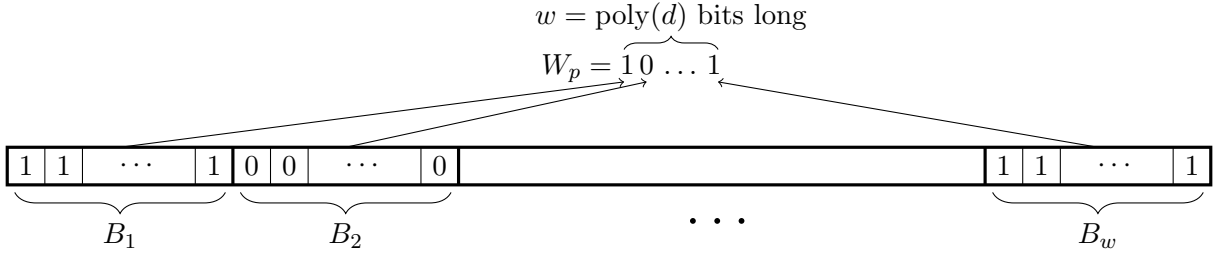
\begin{figure}[!h]

\begin{center}
    \begin{tikzpicture}

    \draw[very thick] (0, 0) rectangle (16, 0.55);

     \draw[thick] (7.3, 0) -- (7.3, 0.55);
    \draw[thick] (9.7, 0) -- (9.7, 0.55);
    \node at (8.5, 0.275) {$f_{\Ext(W_p, q)}(x')$};

    \node at (8.5, 0.85) {input $x' \in \{0,1\}^d$};

    \draw[decorate, decoration={brace, amplitude=8pt, mirror}] (0.1, -0.1) -- (15.9, -0.1) 
        node[midway, below, yshift=-8pt] {$2^d$ bits; these give the truth table of $f_{\Ext(W_p, q)}$};
\end{tikzpicture}
\end{center}
\caption{How the value side $X_{\mathrm{value}}$ is labeled by a concept $c_{p, q}$.}
\label{fig:value-sep}
\end{figure}
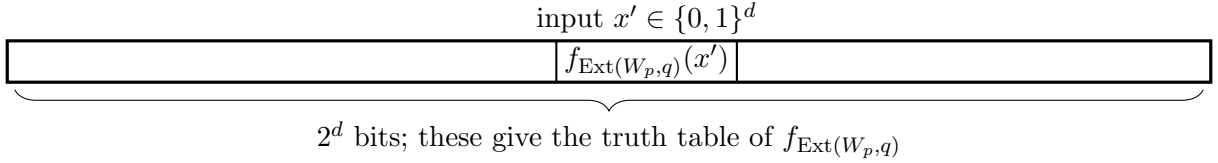

\subsection{Proof of part (1) of \texorpdfstring{\Cref{thm:fixed-separation}}{Theorem~\ref{thm:fixed-separation}}: no algorithm can efficiently learn $\calC$ with $\eta_N$-rate nasty noise}

Let $A$ be any $\poly(d)$-time algorithm which uses $n \le \poly(d)$ many examples. 

An outline of our argument is as follows. We first describe an adversary strategy. We then argue that the adversary strategy is feasible within the ``noise budget'' $\eta_N$. Next, we argue that if the nasty-noise adversary carries out the strategy, then  
(since $\calF$ is a PRF) 
the algorithm $A$ has probability less than any $1/\poly(d)$ 
of constructing a hypothesis $h$ that has accuracy as high as $\frac12 + \frac \kappa2 + 1/\poly(d)$. This gives part (1) of \Cref{thm:fixed-separation}.

\noindent\textbf{The adversary strategy.} Intuitively, the adversary strategy works as follows: for each block on the key side in which every example is labeled $-1$, the adversary flips every example in the block to $+1$. Note that the string $W_p$ has Hamming weight at most $\zetaN w$, so in expectation the adversary has to flip a $\zetaN$ fraction of examples on the key side, which is a $\zetaN \kappa$ fraction of all examples and is within the ``noise budget'' $\eta_N$. (The rationale for doing this is that every example on the key side in the corrupted data set will have label $+1$ while the distribution of the key side examples stays the same, i.e.~uniform random, so the learning algorithm will gain no information from the examples on the key side. Thus, the learner has to learn the concept $c_{p, q}$ using only examples from the value side, but this is cryptographically hard since on the value side $c_{p, q}$ is the pseudorandom function $f_{\Ext(W_p, q)}$, and the key $\Ext(W_p, q)$ comes from a distribution with total variation distance $\nu$ from the uniform distribution on $\{-1, +1\}^\ell$.)

Formally, the adversary strategy is as follows.
\begin{enumerate}
    \item Let $\bS = (\bx_1, \dots, \bx_n)$ be the initial multiset of $n$ unlabeled ``clean'' examples drawn i.i.d. from $\calU$, and let $\bS = \bS_1 \sqcup \cdots \sqcup \bS_w \sqcup \bS_\val$ be the partition of $\bS$ where $\bS_j = \bS \cap X_j$ for each $j \in [w]$ and $\bS_\val = \bS \cap X_\val$.
    
    \item For each $j \in [w]$, let $\bx_{j_1}, \dots, \bx_{j_{|\bS_j|}}$ be the elements of $\bS_j$. For each $j \in [w]$, if $(W_p)_j = -1$ (which means $\bx_{j_t}$ has label $-1$ for all $t \in [|\bS_j|]$), the adversary puts $j_1, \dots, j_{|\bS_j|}$ into the ``"corruption set" $\bZ' \subseteq [n]$, and replaces the sample $(\bx_{j_t}, -1)$ with $(\bx_{j_t}, +1)$ for all $t \in [|\bS_j|]$.
\end{enumerate}

Let $\bS_\nasty$ be the $n$-element data set which is the result of applying the nasty adversary strategy to $\bS$.

\noindent\textbf{An $\eta_N$-rate nasty noise adversary can always carry out the adversary strategy.} Recall that an $\eta_N$-rate nasty noise adversary can corrupt a subset $\bZ$ of the $n$ indices in its sample as long as the marginal distribution of $|\bZ|$ is equal to $\Bin(n, \eta_N)$. Also recall that for any $j \in [w]$, each example drawn from $\calU$ lies in $X_j$ with probability $\frac \kappa w$, and there are at most $\zetaN w$ many $j$ such that $(W_p)_j = -1$. Thus by inspection of the adversary strategy, we see that the distribution of the size $|\bZ'|$ of the corruption set is stochastically dominated by $\Bin(n, \zetaN \kappa)$, which is in turn dominated by $\Bin(n, \eta_N)$ since $\kappa < 1$.

\noindent\textbf{The end of the argument.} By inspection of the idealized adversary strategy, it is clear that all of the examples in $\bS_\nasty$ on the key side are labeled $+1$, and moreover the key-side examples in $\bS_\nasty$ are uniformly distributed over $X_\key$. Intuitively, this means that there is ``no useful information'' for algorithm $A$ in the key-side examples, and consequently algorithm $A$ must learn using only the value-side examples, but this is a cryptographically-hard problem, which is essentially an immediate consequence of the definition of pseudorandom functions and the property of $\Ext(p, q)$ having a distribution close to uniform. To make this more precise, we argue as follows.

Let $\bT$ be the data set of $n$ i.i.d. noiseless labeled examples $(\tilde\bx^1, c_{p, q}(\tilde\bx^1)), \dots, (\tilde\bx^n, c_{p, q}(\tilde\bx^n))$ where each $\tilde \bx^i$ is drawn uniformly from $X_\val$. Note that $c_{p, q}(\tilde\bx^i) = f_{\Ext(p, q)}(\tilde \bx^i)$. Let $\bT'$ be the data set of $(\tilde \bx^1, f_k(\tilde \bx^1)), \dots, (\tilde \bx^n, f_k(\tilde \bx^n))$ where $k$ is drawn uniformly from $\{-1, +1\}^\ell$. Since the distribution of $\Ext(p, q)$ has total variation distance at most $\nu$ from the distribution of $k$, the total variation distance between $\bT$ and $\bT'$ is at most $\nu$.

Via a standard argument, it follows directly from the definition of a PRF  that no polynomial-time algorithm $A'$, given the data set $\bT'$, can with probability even $1/\poly(d)$ construct a hypothesis which has accuracy $1/2 + 1/\poly(d)$ on value-side examples. Thus, no polynomial-time algorithm $A'$, given the data set $\bT$, can with probability $1/\poly(d) + \nu$ construct a hypothesis which has accuracy $1/2 + 1/\poly(d)$ on value-side examples. On the other hand, we have the following claim:

\begin{claim}\label{claim:simulate-S-nasty-sep}
    A polynomial-time algorithm $\tilde A$ can, given the $n$-element data set $\bT$, construct a data set $\bT_\nasty$ which is distributed precisely like $\bS_\nasty$.
\end{claim}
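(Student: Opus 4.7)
\medskip

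\noindent\textbf{Proof plan.} The simulator $\tilde A$ will produce a length-$n$ sequence by choosing, independently for each index $i \in [n]$, whether to place a ``key-side slot'' (with probability $\kappa$) or a ``value-side slot'' (with probability $1-\kappa$), and then filling in each slot in a way that matches the marginal distribution of the corresponding coordinate of $\bS_\nasty$. The crucial observation, read off directly from the idealized adversary strategy, is that every key-side coordinate of $\bS_\nasty$ is just a uniformly random element of $X_\key$ labeled $+1$ (all the $-1$ labels on the key side were flipped by the adversary), and every value-side coordinate is just $(\bx, f_{\Ext(W_p,q)}(\bx))$ for $\bx$ uniform on $X_\val$. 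In particular, knowledge of $W_p$ and $q$ is not needed to produce a key-side coordinate, and value-side coordinates have exactly the distribution of a single example drawn from $\bT$.

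Concretely, $\tilde A$ proceeds as follows. It draws independent bits $\bb_1,\dots,\bb_n$ with $\Pr[\bb_i=\mathrm{key}]=\kappa$ and $\Pr[\bb_i=\mathrm{val}]=1-\kappa$. For each $i$ with $\bb_i=\mathrm{key}$, $\tilde A$ samples $\bz_i$ uniformly from $X_\key$ and outputs $(\bz_i,+1)$ in position $i$; for each $i$ with $\bb_i=\mathrm{val}$, $\tilde A$ pops the next unused pair from $\bT$ and places it in position $i$. Since $\bT$ contains $n$ pairs and at most $n$ positions are value-side slots, $\tilde A$ never runs out. All operations are uniform sampling and sequential reads from $\bT$, so $\tilde A$ clearly runs in time $\poly(d)$.

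To verify that the output $\bT_\nasty$ has the same distribution as $\bS_\nasty$, note first that the coordinates of $\bS_\nasty$ are i.i.d.: each $\bx_i$ is drawn i.i.d.~from $\calU$, and the adversary strategy acts on each coordinate as a deterministic function of that coordinate alone (it flips the label to $+1$ exactly when $\bx_i \in X_j$ for some $j$ with $(W_p)_j=-1$). Hence it suffices to compare the marginal distribution of one coordinate. A single coordinate of $\bS_\nasty$ lands on the key side with probability $\kappa$, and conditioned on being on the key side it is uniform on $X_\key$ with label $+1$; and with probability $1-\kappa$ it lands on the value side, in which case it is uniform on $X_\val$ with label $f_{\Ext(W_p,q)}(\bx)$. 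By construction, $\tilde A$ reproduces exactly these two cases with the same probabilities: the key-side branch is handled by direct uniform sampling, and the value-side branch is handled by reading a coordinate of $\bT$, which by definition has the required distribution. Finally, since the value-side entries of $\bT$ are i.i.d., using them in the order determined by the independent bits $\bb_1,\dots,\bb_n$ yields an i.i.d.~sequence of value-side coordinates, so the joint distribution of $\bT_\nasty$ matches that of $\bS_\nasty$.

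\medskip

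\noindent\textbf{Where the work is.} There is no real obstacle beyond bookkeeping: the claim is essentially an unwinding of definitions once one observes that the adversary's post-corruption key side is \emph{information-free} in the precise sense that its conditional distribution given $(p,q)$ is the fixed distribution ``uniform on $X_\key$, labeled $+1$.'' The one mild subtlety is to confirm that reading a prefix of $\bT$ of random (but independent) length does not perturb the i.i.d.~structure, which follows from the fact that the entries of $\bT$ are i.i.d.~and the stopping time $\bn_\val := |\{i : \bb_i = \mathrm{val}\}|$ is independent of $\bT$.
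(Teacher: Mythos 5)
Your proof is correct and follows essentially the same route as the paper's: simulate the key side directly (exploiting that every post-corruption key-side coordinate is uniform on $X_\key$ with label $+1$, independent of the secret $(p,q)$) and splice in examples from $\bT$ for the value-side slots. The only cosmetic difference is that you sample key-side points uniformly from $X_\key$ in one shot, whereas the paper first samples a block index in $[w]$ and then a uniform point of that block — an equivalent formulation since the blocks are equal-sized.
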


If $A$ can with probability $1/\poly(d) + \nu$ construct a hypothesis which has accuracy $(1 + \kappa)/2 + 1/\poly(d) < 0.51$ over $\calU$ given $\bS_\nasty$, then an algorithm $A'$ that given $\bT$ simulates $\tilde A$ on $\bT$ to construct the data set $\bT_\nasty$ as in \Cref{claim:simulate-S-nasty-sep} and then simulates $A$ on $\bT_\nasty$, can construct a hypothesis which has accuracy at least
\[\frac{(1 + \kappa)/2 + 1/\poly(d) - \kappa}{1 - \kappa} \ge \frac12 + \frac1{\poly(d)},\]
contradicting with the argument above.

\begin{proof}[Proof of \Cref{claim:simulate-S-nasty-sep}]
    $\tilde A$ carries out the following process: For $i = 1, \dots, n$, let $\bb_i$ be an independent random variable which takes each value in $[w]$ with probability $\kappa/w$ (recall that this is precisely the probability that a uniform random example lands in any particular block on the key side), and takes a special value $\bot$ with the remaining $1 - a$ probability. Let $\bT_j \subseteq [n]$ be the elements $i$ such that $\bb_i = j$. The elements of $\bT_j$ will correspond precisely to the elements $j_1, \dots, j_{|\bS_j|}$ in step 2 of the adversary strategy for which $\bx_{j_1}, \dots, \bx_{j_{|\bS_j|}}$ belong to $\bS_j$.
    
    Given the outcomes of $\bb_1, \dots, \bb_n$, algorithm $\tilde A$ uses those outcomes and the elements of $\bT$ to create the data set $\bT_\nasty$. In more detail, for each $i \in [n]$ for which $\bb_i = \bot$, $\tilde A$ uses a ``fresh'' labeled example from $\bT$ to play the role of $(\bx_i, c_{p, q}(\bx_i))$ (these correspond to the elements of $\bS_\val$). For each $j \in [w]$, for the $|\bT_j|$ elements of $\bT_j$, $\tilde A$ creates $\bT_j$ examples in the form of $(\bx, +1)$, where each example $\bx$ is an independent uniform random element of $X_j$.
    
    It is clear from inspection that the distribution of $\bT_\nasty$ is identical to the distribution of $\bS_\nasty$.
\end{proof}

This concludes the proof of part (1) of \Cref{thm:fixed-separation}.

\subsection{Proof of part (2) of \texorpdfstring{\Cref{thm:fixed-separation}}{Theorem~\ref{thm:fixed-separation}}: an algorithm can efficiently learn with $\eta_M$-rate strong malicious noise}

We first describe a learning algorithm which is called $A_\mal$: it will be clear that $A_\mal$ uses $\poly(d)$ many examples and runs in $\poly(d)$ time. We then argue that $A_\mal$ learns $\calC$ under the uniform distribution $\calU$ to accuracy $4\eta_M(1 + o(1)) + \eps$ and confidence $1 - \exp(-\Omega(d))$ in the presence of strong malicious noise at rate $\eta_M$.

Throughout the subsequent discussion we let $c = c_{m^*, q^*}$ be the target concept from $\calC$ (which is of course unknown to the learning algorithm).

\subsubsection{The algorithm $A_\mal$, and some intuition for it}

Let $n$ be a suitably large polynomial in $d/\eps$. Let $S = ((x_1, y_1), \dots, (x_n, y_n))$ be the data set of labeled examples from $X \times \{-1, +1\}$ that algorithm $A_\mal$ is given as the output of an $\eta$-noise-rate strong malicious noise process. Let $S_1 \sqcup \cdots \sqcup S_w \sqcup S_\val$ be the partition of $S$ given by $S_i = S \cap X_i$, $S_\val = S \cap X_\val$. We define the quantity
\[D := \frac{(1 - \eta_M)\kappa n}{w},\]
which is the expected number of clean examples from block $i$. Let $\Delta = n^{0.51}$, which should be thought of as an ``allowable'' amount of error between the actual values of various quantities and their expected values.

Algorithm $A_\mal$ is given in \Cref{fig:Amal-sep}.

\begin{figure}[H] 
    \captionsetup{width=.9\linewidth}
    
    \begin{tcolorbox}[colback = white,arc=1mm, boxrule=0.25mm]
        \vspace{2pt} 
        $A_\mal(S):$\vspace{6pt}
        
        \textbf{Input:} A data set $S=((x_1,y_1),\dots,(x_n,y_n))$ that is the output of an $\eta_M$-noise-rate strong malicious noise process on a ``clean'' $n$-element data set of examples drawn from ${\cal U}$ and labeled by $c$.\vspace{6pt}
        
        \textbf{Output:} A hypothesis $h$ for $c$ that with probability $1 - \exp(-\Omega(d))$ has error rate at most $4\eta_M(1 + o(1)) + \eps$.\vspace{6pt}
        
        \begin{enumerate}
            \item[]\textbf{1. Compute each key bit (if possible):} For every $i \in [w]$, let $s_i^{+1}$, $s_i^{-1}$ be the number of examples in $S_i$ with label $+1$, $-1$, respectively. Set
            \[z_i = \begin{cases}
                +1 & s_i^{-1} < D - \Delta \le s_i^{+1}\\
                -1 & s_i^{+1} < D - \Delta \le s_i^{-1}\\
                ? & \text{otherwise}
            \end{cases}.\]
            
            \vspace{6pt}
            \item[]\textbf{2. Decode a Boolean vector:} Run $\Enc(\Dec(z))$ to obtain $m^{(1)}, \dots, m^{(L')}$, where $L' \le 2^{1/\xi} = O(1)$ and each $m^{(i)}$ is a string in $\{-1, +1\}^w$ such that for all $t \in [w]$, $m^{(i)}_t \ne z_t$ if and only if $z_t ={} ?$.
            
          	\vspace{6pt}
            \item[]\textbf{3. Hypothesis testing to choose a final hypothesis:} For each $i \in [L']$, $q \in \{-1, +1\}^u$, perform hypothesis testing of the hypothesis $c_{m^{(i)}, q}$ using $S$ (more precisely, compute the empirical fraction $\hat{\eps}_{i, q}$ of examples $(x, y) \in S$ for which $c_{m^{(i)}}(x) \ne y$, and outputs as the final hypothesis $h$ the $c_{m^{(i)}, q}$ for which $\hat{\eps}_i$ is smallest, breaking ties arbitrarily).
        \end{enumerate}
    \end{tcolorbox}
    \caption{The algorithm $A_\mal$ for learning $\calC$ in the presence of $\eta_M$-rate strong malicious noise.}
    \label{fig:Amal-sep}
\end{figure} 

\subsubsection{Useful notation}
While the analysis is not particularly complicated, it requires keeping track of a number of different quantities; doing this is made easier with some systematic notation, which we provide in this subsection.

\begin{itemize}
    
    \item $\bS^\original = (\bx_1,c(\bx_i)),\dots,(\bx_n,c(\bx_n))$ is the original data set of $n$ i.i.d.~``clean'' labeled examples drawn from ${\cal D}$ in the first phase of the strong malicious noise process.
    
    \item $\bZ_{\mal} \subseteq [n]$ is the set of indices that are randomly selected for corruption in the second phase of the strong malicious noise process (recall \Cref{remark:strong-malicious}).
    
    \item $\bS^{\original,\survivenoise}$ is the elements of $\bS^\original$ that are not replaced by the $\eta_M$-noise-rate strong malicious adversary (i.e. ``survive'') into $S$.
    
    \item $S^{\new}=((x'_{j},y'_{j}))_{j \in \bZ_\mal}$ is the subsequence of noisy examples that the malicious adversary uses to replace the elements of $\bS^{\original,\replaced}$.  So the original $n$-element data set $S$ that is input to the algorithm is 
    \[
    S = \bS^{\original,\survivenoise} \sqcup S^{\new}.
    \]
    
    \item For each of the above sets, we append a subscript $_i$ to denote the subset consisting of those examples which lie in the $i$-th block of the key side, and we use a subscript $_\val$ to denote the subset of those examples which lie on the value side.
    So, for example, $S^\new_i$ is the subset of labeled examples that the malicious adversary introduces which belong to block $i$.
    
\end{itemize}
Let $\balpha_{i} \coloneq |\bS^{\original, \survivenoise}_i|$.

\subsubsection{Useful claims about typical events}\label{sec:useful-typical-sep}
We require two claims, both of which essentially say that with high probability over the draw of a ``clean'' random sample and the random choice of which examples can be corrupted by the strong malicious noise process,  
various sets have sizes that are close to their expected sizes.  These claims are easy consequences of standard tail bounds, namely \Cref{fact:hoeffding-inequality,} and \Cref{fact:multiplicative-Chernoff-bound}.
We record the statements that we will need below.

First some setup:  Observe that for each $i \in [w]$ we have 
\[
\E[\balpha_i]=D,
\]
and that moreover we have
\[
\E[|\bZ_\mal|] = \eta_M n, \quad
\]
Recall that we defined
\[
\Delta := n^{0.51}.
\]
We write
\[
x \approx_{\Delta} y
\]
as shorthand for $x \in [y - \Delta, y + \Delta]$.

\begin{lemma}
    [Sets typically have the ``right'' size]
    \label{lem:typical-sep}
    For a suitably large value of $n=\poly(d,1/\eta_M,1/\eps)=\poly(d/\eps)$, with probability $1-\exp(-\Omega(d))$ 
    over the draw of $(\bS^\original,\bZ_\mal)$ and the internal randomness of $A_\mal$, both of the following statements hold:
    
    \begin{itemize}
        
        \item [(A)] $\balpha_i \approx_{\Delta} D$ for each $i \in [w]$; 
        
        \item [(B)] $|\bZ_\mal| \approx_{\Delta} \eta_M n$ (and hence  $|S^\new| \approx_{\Delta}  \eta_M n$, since $|S^\new|=|\bZ_\mal|$).
        
    \end{itemize}
    
\end{lemma}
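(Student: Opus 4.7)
The plan is to observe that in the strong malicious noise model, the independent coin flips that determine $\bZ_\mal$ are independent of the draws $\bx_1,\dots,\bx_n \sim \calU$, which makes both $|\bZ_\mal|$ and each $\balpha_i$ a sum of $n$ i.i.d.\ Bernoulli random variables. Concentration then follows from Hoeffding, and a union bound over the $w = \Theta(d)$ blocks gives the claim.

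First I would handle part (B). By \Cref{remark:strong-malicious}, $|\bZ_\mal| \sim \Bin(n,\eta_M)$, so Hoeffding's inequality (\Cref{fact:hoeffding-inequality}) immediately gives
\[
\Pr\bigl[\,|\,|\bZ_\mal| - \eta_M n\,| > \Delta\,\bigr] \;\le\; 2\exp\!\left(-\tfrac{2\Delta^2}{n}\right) \;=\; 2\exp(-2n^{0.02}).
\]
Since we can take $n$ to be a sufficiently large polynomial in $d/\eps$ (say, $n \ge d^{C}$ for a large enough constant $C$), the exponent $2n^{0.02}$ is $\Omega(d)$, so this probability is $\exp(-\Omega(d))$.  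The ``hence'' statement about $|S^\new|=|\bZ_\mal|$ is immediate from the definition.

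Next I would handle part (A) for a single block $i \in [w]$. For each $j \in [n]$, let $\bxi_j \coloneqq \Ind[\bx_j \in X_i \text{ and } j \notin \bZ_\mal]$.  Because the coin tosses used to form $\bZ_\mal$ in the strong malicious model are independent across $j$ and independent of the $\bx_j$'s, and because $\bx_j \sim \calU$ assigns mass $\kappa/w$ to block $X_i$, the indicators $\bxi_1,\dots,\bxi_n$ are i.i.d.\ $\Ber\!\bigl((1-\eta_M)\kappa/w\bigr)$. Thus $\balpha_i = \sum_j \bxi_j$ has mean exactly $D$, and Hoeffding gives
\[
\Pr\bigl[\,|\balpha_i - D| > \Delta\,\bigr] \;\le\; 2\exp(-2n^{0.02}).
\]
A union bound over the $w = \Theta(d)$ blocks then bounds the failure probability of (A) by $w \cdot 2\exp(-2n^{0.02})$, which is still $\exp(-\Omega(d))$ for $n$ a sufficiently large polynomial in $d/\eps$. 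A final union bound of (A) and (B) preserves the $\exp(-\Omega(d))$ bound.

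The only real obstacle is bookkeeping: choosing $n$ as a large enough polynomial in $d$, $1/\eps$, $1/\eta_M$ so that $\exp(-2n^{0.02})$ dominates the polynomial-in-$d$ union bound factors and also matches the sample-complexity needs of the later steps of the algorithm $A_\mal$ (e.g., hypothesis testing in Step~3). Any polynomial of degree at least $50$ in $d$ suffices for the concentration piece here.
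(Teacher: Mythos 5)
Your proposal is correct and takes essentially the same route as the paper, which simply records that (A) and (B) follow from the standard tail bounds (Hoeffding/Chernoff): you spell out the i.i.d.\ Bernoulli structure of $|\bZ_\mal|$ and of each $\balpha_i$ (using that the strong-malicious coin tosses are independent of the uniform draws, so the per-example indicators are i.i.d.\ with means $\eta_M$ and $(1-\eta_M)\kappa/w$ respectively), apply Hoeffding with $\Delta = n^{0.51}$, and union bound over the $w=\Theta(d)$ blocks, with $n$ a large enough $\poly(d/\eps)$ so that $\exp(-\Omega(n^{0.02}))=\exp(-\Omega(d))$. This matches the intended argument; no gaps.
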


\subsubsection{Correctness of $A_\mal$}  In the rest of the argument, we assume that both of the ``typical'' events (A)-(B) described in \Cref{sec:useful-typical-sep} take place.  We proceed to argue that under these conditions, 

\begin{itemize}
    \item [(I)] With probability $1 - \exp(-\Omega(d))$, the string $z$ computed in step 1 satisfies that for all $t \in [w]$, $m^*_t \ne z_t$ if and only if $z_t = {}?$, and the number of $t \in [w]$ such that $z_t = {}?$ is upper-bounded by $\frac{\eta_M}{\kappa(1 - \eta_M)}(1 + o(1))w$.
    
    \item [(II)] With probability $1 - \exp(-\Omega(d))$, $m^\ast$ is one of the $w$-bit strings $m^{(1)},\dots,m^{(L')}$ that is obtained in step~2 of Algorithm $A_\mal$ (recall that the target concept is $c_{m^*, q^*}$); and
    
    \item [(III)] Given that (I) occurs, with probability at least $1-\exp(-\Omega(d))$ the hypothesis $c_{m^{(i)}}$ that Algorithm $A_\mal$ outputs has error at most $\eps + 4\eta_M(1+o(1))$.
\end{itemize}
The correctness of $A_\mal$ is a direct consequence of (I), (II) and (III). We prove these items in reverse order.

\textbf{Proof of (III).}
This is a fairly standard hypothesis testing argument. Fix any possible string $m \in \{0, 1\}^w$ and vector $q \in \{-1, 1\}^u$. Let the value $\hat{\eps}_{m,q}$ be the empirical fraction of examples $(x, y) \in S$ for which $c_{m, q}(x) \ne y$, let $\tilde \beps_{m,q}$ be the empirical fraction of examples $(x, y) \in \bS_\original$ for which $c_{m, q}(x) \ne y$, and let $\eps_{m,q}$ be the true error rate of hypothesis $c_{m, q}$ on noiseless uniform examples. A standard additive Hoeffding bound give us that with probability $1 - \exp(-\Omega(d))$, the value of $\tilde \beps_{m,q}$ is within an additive $\eps$ of $\eps_{m,q}$.
By a union bound over all pairs $(m, q) \in \{-1, +1\}^w \times \{-1, +1\}^u$, with probability $1 - \exp(-\Omega(w)) \cdot 2^w \cdot 2^u = 1 - \exp(-\Omega(w))$ we have this closeness for every $(m, q)$. Additionally, we have
\[|\hat{\eps}_{m,q} - \tilde \beps_{m,q} |n = \left|\hat{\eps}_{m,q}|S| - \tilde \beps_{m,q}|\bS^\original|\right| \le |S \mathop{\triangle} \bS^\original| \le 2|\bZ_\mal|.\]
By \Cref{lem:typical-sep} (B) we have that $|\bZ_\mal| \approx_{\Delta} \eta_M n$, so for every $(m, q)$ we have that $|\hat{\eps}_{m,q} - \tilde \beps_{m,q}| \le \frac1n(\eta_M n + \Delta) = 2\eta_M(1 + o(1))$.

Let $\eps_{i, q} \coloneq \eps_{m^{(i)}, q}$ be the true error rate of hypothesis $c_{m^{(i)}, q}$ on noiseless examples, and recall that $\hat{\eps}_{i, q}$ is the empirical error $c_{m^{(i)}, q}$ has on $S$. For the value $i^*$ such that $c_{m^{(i^*)},q^*}= c_{m^*,q^*}$ (there is such an $i^*$ because of (II)), we have $\eps_{i^*,q^*}=\tilde \beps_{i^*,q^*} = 0$, so $\hat{\eps}_{i^*,q^*} \le  2\eta_M(1 + o(1))$. Hence in step 3 the final hypothesis $c_{m^{(i)},q}$ that is output must have $\hat{\eps}_{i,q} \le 2\eta_M(1 + o(1))$, which means that $\tilde{\beps}_{i,q} \le 4\eta_M(1 + o(1))$ and therefore $\eps_{i,q} \le \eps + 4\eta_M(1 + o(1))$ as claimed.\qed

\textbf{Proof of (II).} By (I), with probability $1 - \exp(-\Omega(d))$, $z$ can be obtained from $m^*$ by changing at most $\frac{\eta_M}{\kappa(1 - \eta_M)}(1 + o(1))w < \tau w$ entries in $m^*$ to $?$. Therefore, by the erasure-list-decodability of the code $C$ as given by \Cref{lem:eeldbc-with-hamming}, $m^*$ is one of the $w$-bit strings $m^{(1)}, \dots, m^{(L')}$ obtained by the list-decoding algorithm.\qed

\textbf{Proof of (I).} By \Cref{lem:typical-sep} we have that with probability $1 - \exp(-\Omega(d))$, $\balpha_t \approx_\Delta D$ for each $t \in [w]$ and $|S_\new| \approx_\Delta \eta_M n$. Therefore, for each $t$, $s_t^{m^*_t} \ge D - \Delta$ (recall that $s_t^{+1}, s_t^{-1}$ are number of examples in $S_t$ with label $+1, -1$ respectively). It follows that $z_i$ is either set to $m^*_t$ or $?$ in step 1 (i.e. $m_t^* \ne z_t$ if and only if $z_t = ?$), and moreover, when $z_t$ is set to $?$, it holds that $|S_t^\new| \ge s_t^{-m^*_t} \ge D - \Delta$. Since $S^\new = S_1^\new \sqcup \cdots \sqcup \cdots S_w^\new$, the number of $?$ is upper-bounded by
\[\frac{|S^\new|}{D - \Delta} \ge \frac{\eta_M n - \Delta}{(1 - \eta_M)\kappa n/w - \Delta} \ge \frac{\eta_M}{\kappa(1 - \eta_M)}(1 + o(1))w.\qed\]


\section{Fixed-distribution malicious-noise-tolerant ICE algorithms and nasty noise: Proof of \texorpdfstring{\Cref{thm:ICE-malicious-nasty-informal}}{Theorem~\ref{thm:ICE-malicious-nasty-informal}}}
\label{sec:ICE}

\subsection{ICE algorithms} \label{sec:ICE-algorithms}
If a learning algorithm is given a data set which contains two copies $(x,-1)$ and $(x,1)$ of the same example point but with different labels, a natural way to proceed is to simply disregard the two conflicting examples. The following definition formalizes this notion:

\begin{definition} [ICE-algorithm] \label{def:ICE-algorithm}
A learning algorithm $A$ that uses $n$ labeled examples is said to \emph{Ignore Contradictory Examples} (equivalently, is said to be an \emph{ICE-algorithm}) if it works in the following two-stage way:
Let $S = (x_1,y_1),\dots,(x_n,y_n) \in (X \times \bits)^n$ be the sequence of examples that is given as input to $A$.

\begin{enumerate}

\item 
In the first stage, algorithm $A$  repeats the following step until no more repetitions are  possible:

\begin{itemize}
\item If there is a pair 
of examples $(x_i,-1)$, $(x_j,1)$ in $S$ with $x_i=x_j$ (i.e. a pair of \emph{contradictory examples}), then each of these examples is removed from $S$.
\end{itemize}

Let $S' \subseteq S$ be the result of removing all pairs of contradictory examples in this way.

\item In the second stage, $A$ calls some other learning algorithm $A'$ on $S'$ and returns the hypothesis $h: X \to \bits$ that $A'$ returns.

\end{enumerate}

\end{definition}

Note that if $A$ is an ICE-algorithm that is run in the standard noise-free PAC setting, then $A$ will simply run the algorithm $A'$ on the original data set $S$, since there will be no contradictory examples.
Note further that if $A$ is an ICE-algorithm that uses $n$ labeled examples and is run in the presence of malicious or nasty noise at rate $\eta$, where $n \gg 1/\eta$, then with very high probability in the second stage the algorithm $A'$ will be run on a data set $S'$ of at least (approximately) $(1-\eta)n$ labeled examples. $S'$ may still contain noisy examples, but it will not contain any pair of examples whose labels explicitly contradict each other.

As discussed in the introduction, ICE-algorithms capture a natural and intuitive way to deal with noisy data.
In the rest of this section we show that 
in the fixed distribution setting, if an efficient \emph{ICE algorithm} can tolerate malicious noise at rate $\eta$, then there is an efficient algorithm that can tolerate nasty noise at rate (almost) $\eta/2$ (\Cref{thm:ICE-malicious-nasty}). Then, in \Cref{sec:ICE-bad-news}, we show this factor of 1/2 in \Cref{thm:ICE-malicious-nasty} is the best possible (\Cref{thm:ICE-bad-news}).

\subsection{From malicious to nasty noise for ICE-algorithms}

The main result of this subsection, \Cref{thm:ICE-malicious-nasty}, shows that in the distribution-dependent setting, any ICE algorithm $A$ that succeeds with $\eta$-malicious noise can be upgraded to an algorithm $A'$ that succeeds with $\kappa\eta$-nasty noise, where $\kappa$ can be any constant less than $1/2.$ (This is the detailed version of the informal \Cref{thm:ICE-malicious-nasty-informal} from the Introduction.) The overhead in sample complexity and runtime of this conversion is only polynomial in the original sample complexity and runtime, polynomial in $1/\eps$, and logarithmic in $1/\delta$, and moreover the accuracy and confidence of the nasty noise learner are very nearly as good as the accuracy and confidence of the original malicious noise learner.

\begin{theorem} [Efficient malicious-noise-tolerant fixed-distribution learning by ICE algorithms implies efficient nasty-noise-tolerant fixed-distribution learning]
\label{thm:ICE-malicious-nasty}
Fix any distribution ${\cal D}$ over domain $X$ and any constant $\kappa<1/2$. Let ${\cal C}$ be a concept class over $X$.
For any $\eta < 1/2$, suppose that $A$ is an efficient ICE-algorithm which uses $n\geq C \log(1/\delta)/\eta$ examples (for a suitable constant $C$ depending on $\kappa$) and $(\eps,\delta)$-learns ${\cal C}$ in the presence of malicious noise at rate $\eta$. Then, for some
\begin{equation*}
    N \leq \poly(n, \log(2|X|+1), 1/\eps, \log(1/\delta)), 
\end{equation*}
there is an efficient algorithm $A'$ using $N$ examples that $(1.02\eps, 1.02\delta)$-learns ${\cal C}$ over distribution ${\cal D}$ in the presence of nasty noise at rate $\kappa\eta$.\footnote{Similar to \Cref{thm:dist-free-combined}, it will be clear from the proof that each occurrence of 1.02 could be replaced by $1+\tau$ for any fixed constant $\tau > 0$.}

\end{theorem}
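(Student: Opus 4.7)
I would follow the two-step strategy sketched in Section~\ref{subsec:overview-ICE-upper}, with a final parameter-cleanup step that invokes the success-amplification result of Section~\ref{sec:amplify}. \emph{Step 1} (standard malicious $\Rightarrow$ strong malicious): invoke Lemma~\ref{lem:compare-mal} to turn $A$ into an ICE-algorithm $\tilde A$ that $(\eps,\delta+\delta_1)$-learns $\mcC$ over $\mcD$ in the presence of $\eta$-rate \emph{strong} malicious noise, for some $\delta_1$ we choose very small (polynomially small in $n$ suffices). The ICE property is preserved because $\tilde A$ just runs $A$ on the received sample. \emph{Step 2} (strong malicious $\Rightarrow$ nasty): design a nasty-noise learner $A'$ that, on a $\kappa\eta$-nasty-corrupted sample $S_\nasty$ of size $n$, simply pads $S_\nasty$ with $\tfrac{1}{2}(n'-n)$ dummy contradictory pairs (e.g.\ $(x_\ast,+1),(x_\ast,-1)$) to form a data set of size $n'$, and runs $\tilde A$ on it. Since $\tilde A$ is ICE, the padding is erased, so $\tilde A$'s output depends only on $S_\nasty$. \emph{Step 3} (parameter cleanup): to recover the $1.02\eps$, $1.02\delta$ guarantees, apply \Cref{thm:boost-no-holdout} to amplify confidence, which costs us $k=O(\ln(1/\delta)/\eps^2)$ independent runs and an additive $\epsa=0.01\eps$ in the error.

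The heart of the argument is the coupling that validates Step~2. Fix a concept $c$ and a $\kappa\eta$-nasty adversary strategy $\mathrm{NStrat}$ that, on a clean sample $\bS\sim\mcD_c^n$, chooses $S_\replaced\subseteq\bS$ and $S_\new$ (with $|S_\replaced|=|S_\new|=\bk\sim\Bin(n,\kappa\eta)$) to produce $S_\nasty=(\bS\setminus S_\replaced)\sqcup S_\new$. I couple $\bS$ to a strong-malicious clean sample $\bS'\sim\mcD_c^{n'}$ by setting $\bS=\bS'[1{:}n]$ and $\bS''=\bS'[n{+}1{:}n']$. Given $\bS'$ and the random noise set $\bZ_\mal\subseteq[n']$ with $|\bZ_\mal|\sim\Bin(n',\eta)$, the strong-malicious adversary carries out the following strategy with its $|\bZ_\mal|$ corruption slots: (i) write a contradictory label $(x,-c(x))$ for each $(x,c(x))\in S_\replaced$ (cost $\bk$), (ii) write a contradictory label for each element of $\bS''$ (cost $n'-n$), (iii) place the $\bk$ elements of $S_\new$, and (iv) ``waste'' the remaining $|\bZ_\mal|-2\bk-(n'-n)$ slots by either replacing the clean example with itself (which uses one slot without altering the multiset) or adding additional dummy contradictory pairs (two at a time). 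A direct multiset count shows that the ICE reduction of the resulting size-$n'$ data set equals exactly $S_\nasty$. Feasibility requires $|\bZ_\mal|\geq 2\bk+(n'-n)$, and since $|\bZ_\mal|$ concentrates near $\eta n'$ while $2\bk$ concentrates near $2\kappa\eta n$, this holds with probability $1-\exp(-\Omega(\eta n))$ whenever $n'\in\bigl[n,\ \tfrac{1-2\kappa\eta}{1-\eta}\,n\bigr]$; the interval is non-empty exactly because $\kappa<1/2$, and choosing $n'=n+\Theta((1-2\kappa)\eta n)$ works. Because $\tilde A$ is ICE, $\tilde A$ applied to the strong-malicious data set equals $\tilde A$ applied to $S_\nasty$, so $\tilde A$'s malicious-noise guarantee transfers directly: on the nasty sample, $A'$ outputs an $\eps$-accurate hypothesis with probability at least $1-\delta-\delta_1-\exp(-\Omega(\eta n))$.

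Composing the three steps and using the sample-size hypothesis $n\geq C\log(1/\delta)/\eta$ to absorb the $\exp(-\Omega(\eta n))$ term into, say, $0.005\delta$, we obtain an $(\eps+0.01\eps,\,\delta+0.01\delta)$ learner for $\kappa\eta$-nasty noise after the amplification step; renaming the $1.01$'s as $1.02$ absorbs the $\delta_1$ term from Step~1. The main technical obstacle is ensuring the coupling is valid \emph{uniformly over nasty adversary strategies}: the strong-malicious adversary's slot set $\bZ_\mal$ is a random subset of $[n']$ rather than an adversarial one, so the argument must account for the joint distribution of $\bZ_\mal$ with $\bS'$ and show that the multiset-matching is always achievable (up to the parity/wastage subtleties above) on the high-probability event that $|\bZ_\mal|$ is large enough. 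A secondary subtlety is that one cannot literally feed $A$ a smaller-than-expected sample, so the padding-by-dummy-contradictions device is essential: it is the mechanism that converts the theoretical coupling into an actual algorithm without needing $A'$ to know $c$ or simulate $\bS''$.
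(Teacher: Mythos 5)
Your overall architecture (\Cref{lem:compare-mal} to pass to strong malicious noise, a coupling in which the strong-malicious adversary simulates the nasty one by contradicting the replaced examples and inserting the new ones, then \Cref{thm:boost-no-holdout} for the final parameters) matches the paper, but the coupling at the heart of your Step~2 has a genuine quantitative flaw: you account for the malicious adversary's corruptions as if they were pure \emph{insertions} on top of the full size-$n'$ clean sample, when in fact each index of $\bZ_\mal$ \emph{replaces} the clean example sitting there. Because you fix the split $\bS=\bS'[1{:}n]$, $\bS''=\bS'[n{+}1{:}n']$ independently of $\bZ_\mal$, roughly an $\eta$ fraction of the slots land on elements of $\bS\setminus S_\replaced$ that must survive into $S_\nasty$; each such slot must be spent reproducing that keeper (self-replacement, or re-adding it from another slot), so it cannot carry a contradiction or an element of $S_\new$. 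The only slots genuinely available for useful work are those landing on the to-be-erased multiset $S_\replaced\sqcup\bS''$, of which there are only about $\eta(\bk+(n'-n))$, whereas the work required is about $(1-\eta)(\bk+(n'-n))$ contradictions plus $\bk$ insertions. The correct feasibility condition is $2t\geq 2\bk+(n'-n)$, where $t$ is the number of slots hitting $S_\replaced\sqcup\bS''$, not your condition $|\bZ_\mal|\geq 2\bk+(n'-n)$; since $t\approx\eta(\bk+(n'-n))$, this fails with overwhelming probability for every $\eta<1/2$, so the simulation breaks for essentially all samples, not just on a negligible bad event. Your wastage device has the same defect: adding a dummy pair $(x_\ast,+1),(x_\ast,-1)$ consumes two slots and destroys the two clean (typically keeper) examples that occupied them.

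The paper avoids this precisely by reversing the roles of the sample sizes: it identifies the nasty learner's clean sample with the \emph{survivors} of the strong-malicious process (the coordinates outside $\bZ_\mal$), conditioning on $|\bZ_\mal|=m$ for a well-chosen typical $m$ (an averaging argument picks $m$ so the conditional failure probability is at most $1.002\delta$, and $n'=n-2\lfloor m/2\rfloor$ becomes the nasty sample size). With that identification every one of the $m$ slots is free for useful work, and the budget needed is only $2\bk\leq 2\lfloor m/2\rfloor$, which holds with high probability exactly because $\kappa<1/2$; the leftover slots are burned on self-cancelling pairs, whose underlying clean examples were never part of the nasty sample and so cost nothing. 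In other words, the nasty learner must run the (strong-)malicious learner on a \emph{smaller} effective sample; your padding-by-dummy-pairs trick, which pads the nasty sample \emph{up} to the malicious sample size, cannot be rescued, because in the malicious world the extra coordinates $\bS''$ and the dummy pairs are not free. A secondary point: your assertion that the strong-malicious learner $\tilde A$ from \Cref{lem:compare-mal} "just runs $A$ on the received sample," and hence inherits the ICE property, is not accurate — $\tilde A=A\circ\submn$ subsamples first, and adding a contradictory pair to the large sample can change which of its members survive the subsampling; the paper's own proof also applies \Cref{thm:ICE-strong-malicious-nasty} to this composed algorithm without comment, so this is a shared glossing rather than an error specific to you, but your stated justification for it does not hold.
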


Before proceeding with the detailed proof of \Cref{thm:ICE-malicious-nasty}, we first outline the main strand of the argument (the actual argument is a bit more involved for technical reasons that are discussed in the detailed proof).  The first step is to use \Cref{lem:compare-mal} to pass from the initial $\eta$-rate malicious noise tolerant learner $A$ to a learning algorithm $A'$ that can tolerate \emph{strong} malicious noise at rate $\eta$ (recall that strong malicious noise was defined in \Cref{remark:strong-malicious}).  The second step is a variant of \Cref{thm:ICE-malicious-nasty} which applies to \emph{strong} malicious noise, which we will prove later:

\begin{theorem} [Variant of \Cref{thm:ICE-malicious-nasty} for strong malicious noise]
\label{thm:ICE-strong-malicious-nasty}
Fix any distribution ${\cal D}$ over domain $X$ and any constant $\kappa<1/2$. Let ${\cal C}$ be any concept class over $X$. For any $\eta < 1/2$,
suppose that $A$ is an efficient ICE-algorithm which uses $n\geq C \log(1/\delta)/\eta$ examples (for a suitable constant $C$ depending on $\kappa$) and $(\eps,\delta)$-learns ${\cal C}$ in the presence of \emph{strong} malicious noise at rate $\eta$.
Then for some value $n' \leq n$, there is an efficient algorithm $A'$ which uses $n'$ examples and $(\eps, 1.005\delta)$-learns ${\cal C}$ over distribution ${\cal D}$ in the presence of nasty noise at rate $\kappa\eta$.
\end{theorem}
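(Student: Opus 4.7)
The plan is to prove \Cref{thm:ICE-strong-malicious-nasty} via a coupling argument that follows the intuition sketched in \Cref{subsec:overview-ICE-upper}. I take $n' = n$ and define $A'$ to be $A$ itself: on input $S_{\nasty}$ of size $n$, $A'$ runs $A$, whose ICE property ensures the output depends only on $\mathrm{ICE}(S_{\nasty})$. The idea is to exhibit, for every $\kappa\eta$-rate nasty adversary $\mathrm{Adv}_N$, a matching $\eta$-rate strong-malicious adversary $\mathrm{Adv}_M$ and a coupling under which the two corrupted samples agree after ICE. Then $A(S^{\mathrm{mal}}) = A'(S_{\nasty})$ on the coupled event, and $A$'s strong-malicious guarantee transfers to $A'$.

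Fix $\mathrm{Adv}_N$. The coupling identifies the clean samples, $\bS = \bS'$, and draws $\bZ_{\mal}$ by independent $\eta$-biased coin flips on each coordinate, independent of $\bS'$. Given $(\bS', \bZ_{\mal})$, $\mathrm{Adv}_M$ internally simulates $\mathrm{Adv}_N(\bS')$ to obtain $\bZ_{\nasty}$ and $S_{\new}$, and then spends its $\bZ_{\mal}$-slots in three classes. For each $i \in \bZ_{\nasty} \cap \bZ_{\mal}$ it places $(S_{\new})_i$ directly at slot $i$, consuming one slot. For each $i \in \bZ_{\nasty} \setminus \bZ_{\mal}$ it consumes two slots in $\bZ_{\mal} \setminus \bZ_{\nasty}$, using one to place a contradictory example that cancels $\bS'_i$ under ICE and the other to place $(S_{\new})_i$. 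Finally, any remaining slots in $\bZ_{\mal} \setminus \bZ_{\nasty}$ receive trivial (identity) replacements, with leftover slots paired into dummy contradictory pairs as needed, so that the surviving ICE-multiset matches $\mathrm{ICE}(S_{\nasty})$.

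The total slot requirement is $|\bZ_{\nasty}| + |\bZ_{\nasty} \setminus \bZ_{\mal}| \leq 2|\bZ_{\nasty}|$, so the construction succeeds whenever $|\bZ_{\mal}| \geq 2|\bZ_{\nasty}|$. Because $\kappa < 1/2$, the means of $|\bZ_{\mal}| \sim \mathrm{Bin}(n,\eta)$ and $2|\bZ_{\nasty}| \sim 2\,\mathrm{Bin}(n,\kappa\eta)$ are separated by a multiplicative constant, and a Chernoff bound combined with the hypothesis $n \geq C \log(1/\delta)/\eta$ (for a suitable constant $C$ depending on $\kappa$) yields $|\bZ_{\mal}| \geq 2|\bZ_{\nasty}|$ except on an event of probability at most $0.005\delta$. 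On this good event one obtains $\mathrm{ICE}(S^{\mathrm{mal}}) = \mathrm{ICE}(S_{\nasty})$ and hence $A(S^{\mathrm{mal}}) = A'(S_{\nasty})$; applying $A$'s assumed guarantee and union-bounding with the coupling-failure event then gives $\Pr[A'(S_{\nasty})\text{ fails}] \leq \delta + 0.005\delta = 1.005\delta$, which is the claim.

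The main obstacle I expect is the multiset bookkeeping in the third slot class: each ``hard'' task (for $i \in \bZ_{\nasty} \setminus \bZ_{\mal}$) consumes two $\bZ_{\mal} \setminus \bZ_{\nasty}$-slots and thus displaces two clean examples that should otherwise appear in $S_{\nasty}$. Verifying that the combination of trivial placements and dummy contradictory pairs recovers exactly the right multiset on the nose, so that $\mathrm{ICE}(S^{\mathrm{mal}}) = \mathrm{ICE}(S_{\nasty})$ rather than merely satisfying a strict subset relation, requires a careful case analysis; this is the step where the factor-of-two bound $\kappa < 1/2$ is felt most sharply, and may in fact call for allowing $n$ to be mildly larger than $n'$ in order to leave enough clean-preserving headroom.
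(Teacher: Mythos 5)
Your high-level plan---simulate each nasty replacement by one contradictory example plus one new example, at a cost of at most two malicious slots per replacement, and use a Chernoff bound with $\kappa<1/2$ to ensure the budget suffices---is the same intuition as the paper's, but the concrete construction with $n'=n$ and $A'=A$ on the \emph{same} clean sample does not work, and the difficulty you flag at the end is not multiset bookkeeping but a genuine obstruction. In your coupling the malicious process's clean sample is all $n$ points, so every example at a position in $[n]\setminus\bZ_\mal$ must survive into $S^{\mathrm{mal}}$, while every slot of $\bZ_\mal$ you spend on a contradictory example or on a copy of $(S_\new)_i$ destroys the clean example that originally occupied that slot. Hence whenever $\bZ_\nasty\not\subseteq\bZ_\mal$ (which happens with overwhelming probability, since $\bZ_\mal$ is an independent $\eta$-biased set), $\mathrm{ICE}(S^{\mathrm{mal}})$ is a strict sub-multiset of $\mathrm{ICE}(S_\nasty)$, missing about $2|\bZ_\nasty\setminus\bZ_\mal|$ (typically $\Theta(\kappa\eta n)$) clean examples; re-inserting them would consume further slots and displace further clean points, so exact equality is unattainable at $n'=n$. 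Without equality you cannot conclude $A(S^{\mathrm{mal}})=A'(S_\nasty)$: the malicious guarantee only covers the input distribution the malicious process actually produces, and there is no general principle that an arbitrary learner does at least as well when extra clean examples are appended.

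The paper's proof resolves exactly this by making the nasty clean sample strictly smaller than $n$, so that the malicious corruptible slots are \emph{extra} examples disjoint from it: it conditions on $|\bZ_\mal|=m$ for a typical value $m\approx\eta n$ chosen so that the conditional failure probability of $A$ is at most $1.002\delta$, sets $n'=n-2\lfloor m/2\rfloor$, treats the $n'$ surviving clean examples as the nasty adversary's clean sample, and has the strong malicious adversary spend its $m$ slots on $k=|\bZ_\nasty|$ contradictory examples for the replaced points, $k$ copies of the new points, and $\lfloor m/2\rfloor-k$ dummy contradictory pairs; this is feasible once $|\bZ_\nasty|<\lfloor m/2\rfloor$, which a multiplicative Chernoff bound using $n\ge C\log(1/\delta)/\eta$ guarantees up to probability $0.001\delta$, and then the ICE output of the simulated malicious data set coincides exactly with the nasty-corrupted sample, giving $1.005\delta$ overall. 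This is also why the theorem only promises ``some $n'\le n$'': the reduction inherently feeds the nasty learner fewer examples. Your closing remark that one ``may in fact'' need the nasty sample to be smaller is the right instinct, but the proposal neither carries out that step nor supplies the conditioning-on-$m$ device needed to fix a deterministic $n'$ and transfer the conditional failure probability, so as written the proof has a gap precisely at its central claim $\mathrm{ICE}(S^{\mathrm{mal}})=\mathrm{ICE}(S_\nasty)$.
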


It may appear that we are done at this point, but some more ingredients are required in order to achieve the strong quantitative parameters of \Cref{thm:ICE-malicious-nasty} in all parameter regimes. The next step is the simple observation that any $(\eps',\delta')$-algorithm for learning in the presence of nasty noise is trivially also an $(\eps'+\delta')$-expected-error  algorithm for learning in the presence of nasty noise (cf.~the discussion at the end of \Cref{sec:expected-error}).  Then the final step is to apply \Cref{thm:boost-no-holdout}, which efficiently amplifies the success probability of expected-error learning algorithms in the nasty noise setting.

We first, in \Cref{sec:proofofthm:ICE-malicious-nasty}, give the proof of \Cref{thm:ICE-malicious-nasty} using the ingredients described above. Then we prove \Cref{thm:ICE-strong-malicious-nasty} in \Cref{sec:proofofthm:ICE-strong-malicious-nasty}. Finally, we present the equivalence between standard and strong malicious noise in \Cref{sec:standard-strong-equivalent}.

\subsection{Proof of \texorpdfstring{\Cref{thm:ICE-malicious-nasty}}{Theorem~\ref{thm:ICE-malicious-nasty}}}
\label{sec:proofofthm:ICE-malicious-nasty}

As discussed in the proof overview given in the previous subsection, the first step is to apply \Cref{lem:compare-mal}; we do this with the $\delta_{\additional}$ parameter set to $0.00005\eps$.  
We get from \Cref{lem:compare-mal} that there is an algorithm $A'$ which uses $m = \poly(n,\log(2|X|+1),1/\eps)$ examples and $(\eps,\delta + 0.00005\eps)$-learns ${\cal C}$ over distribution ${\cal D}$ in the presence of strong malicious noise at rate $\eta.$

Next, we apply \Cref{thm:ICE-strong-malicious-nasty} to algorithm $A'$.  This gives us that for some value $m' = \poly(n,\log(2|X|+1),1/\eps)$, there is an efficient algorithm $A''$ which uses $m'$ examples and $(\eps, 1.005(\delta + 0.00005\eps))$-learns ${\cal C}$ over distribution ${\cal D}$ in the presence of nasty noise at rate $\kappa \eta$. The argument now splits into two cases depending on the relative sizes of $\delta$ and $\eps$:

\medskip
\noindent {\bf Case 1:} $\delta > 0.004\eps.$ In this case we have that $1.005 \cdot 0.00005\eps < 0.00006\eps < 0.015\delta$, so $A''$ is already an $(\eps,1.02\delta)$-learner for ${\cal C}$ over distribution ${\cal D}$ in the presence of nasty noise at rate $\kappa \eta$, as desired, and by inspection the sample complexity and running time are as claimed in \Cref{thm:ICE-malicious-nasty}.

\medskip
\noindent {\bf Case 2 (the main strand of the argument):} $\delta \leq 0.004\eps.$
In this case, we next make the simple observation (cf.~the discussion at the end of \Cref{sec:expected-error}) that $A''$ is an $(\eps + 1.005(\delta + 0.00005\eps))$-expected-error  algorithm for learning in the presence of nasty noise, i.e.~an $(1.00006\eps + 1.005\delta)$-expected-error  algorithm for learning in the presence  of nasty noise. Finally, we apply \Cref{thm:boost-no-holdout} with its $\eps_{\additional}$ parameter set to $0.01\eps$ and its $\delta'$ parameter set to $\delta$.  We get that the resulting algorithm is a $(1.02\eps,\delta)$-learner for ${\cal C}$ over distribution ${\cal D}$ in the presence of $\kappa \eta$-rate nasty noise, and it is immediate from \Cref{thm:boost-no-holdout} that the running time and sample complexity are as claimed. This concludes the proof of \Cref{thm:ICE-malicious-nasty}, modulo the proof of \Cref{thm:ICE-strong-malicious-nasty}.
\qed

\subsection{Proof of \texorpdfstring{\Cref{thm:ICE-strong-malicious-nasty}}{Theorem~\ref{thm:ICE-strong-malicious-nasty}}}
\label{sec:proofofthm:ICE-strong-malicious-nasty}
{\bf Overview of the argument.} 
Before giving the detailed proof we give the high-level idea of the argument, which is quite simple.
Consider a corruption $S_{\nasty}$ of an initial $n'$-element ``clean'' data set $\bS$
that can be achieved by a $\kappa\eta$-rate nasty noise adversary. 
Let $S_{\new}$ be the $k$ ``noisy'' examples that the nasty adversary introduces, and let $S_{\replaced}$ be the $k$ ``clean'' examples from $S$ that the adversary chooses to replace with $S_{\new},$ so $S_{\nasty} = (\bS \setminus S_{\replaced}) \cup S_{\new}$.
We show that for some choice of the parameter $n' \leq n$ mentioned above, with high probability an $\eta$-noise-rate malicious adversary can corrupt a ``clean'' $n$-element sample $\bS^\ast$,
turning it into an $n$-element data set $S_{\mal}$, in such a way that after contradictory examples are removed from $S_{\mal}$, the result is precisely $S_{\nasty}$. 
(Intuitively, this is done by using half of the $\eta$ ``malicious noise budget'' to add the $k$ new points in $S_{\new}$ that the nasty adversary introduces, and using half of the $\eta$ malicious noise budget to generate contradictory versions of the $k$ points in $S_{\mathrm{replace}}$, thereby causing those points to be removed by the ICE-algorithm. This can be accomplished within budget, with high probability, because the nasty noise rate is only $\kappa \eta$, 
and $\kappa$ is a constant strictly less than $1/2$.)  
Since the ICE-algorithm $A$ successfully learns in the presence of malicious noise, it successfully learns given $S_{\mal}$; this means that with overall high probability, the algorithm $A'$ (recall \Cref{def:ICE-algorithm}) generates a high-accuracy hypothesis given $S_{\nasty}$ --- i.e.~it successfully learns in the presence of $\kappa\eta$-rate nasty noise.

\begin{proof}
Consider an execution of the $\eta$-noise-rate strong malicious adversary. Let $\bS^\ast$
be an initial sequence of $n$ noiseless labeled examples that is generated in the first phase of such an execution (recall \Cref{remark:strong-malicious}).
Let $\bZ_\mal
\subseteq [n]$ 
be the randomly chosen set of indices such that the ``strong malicious noise coin''  came up heads in the second phase, so each $j \in [n^\ast]$ independently belongs to $\bZ_\mal$ with probability $\eta$.  
Let $\bS$ be the sequence of $n-|\bZ_\mal|$ noiseless examples obtained by removing the elements of $\bZ_\mal$ from the $\bS^\ast$ sequence.

Since the size of $|\bZ_\mal|$ is distributed according to $\Bin(n,\eta)$, by Chebychev's inequality there is an absolute constant $c_1>0$ such that with probability at least $0.999$, the size $|\bZ_\mal|$ of $\bZ_\mal$ lies in the interval $[\eta n - c_1 \sqrt{\eta n},\eta n + c_1 \sqrt{\eta n}]$; we refer to such an outcome of $\bZ_\mal$ as a \emph{typical} outcome.
Since $A$ is an $(\eps,\delta)$-PAC learner in the presence of strong malicious noise at rate $\eta$, we have that with probability at most $\delta$ over the draw of $\bS^*$ and the outcome of $\bZ_\mal$ (and any internal randomness of $A$, which we denote by \$), the error of the hypothesis $h$ that $A$ generates is greater than $\eps$.
It follows that 
\[
\Prx_{\bS^*,\bZ_\mal,\$}[\text{error}(h) > \eps \ | \ 
\bZ_\mal \text{~is typical}] \leq {\frac 1 {0.999}}\delta < 1.002 \delta.
\]
Consequently there must be some value $m\in [\eta n - c_1 \sqrt{\eta n},\eta n + c_1 \sqrt{\eta n}]$ such that 
\begin{equation} \label{eq:2delta}
\Prx_{\bS^*,\bZ_\mal,\$}[\text{error}(h) > \eps \ | \ |\bZ_\mal|=m] \leq 1.002 \delta.
\end{equation}
We fix the value $n'$ to be 
\[
n' := n - 2 \lfloor m/2 \rfloor.
\]  
We observe that conditioned on $|\bZ_\mal|=m$, $\bS$ is distributed precisely as a sequence of $n' \geq (1 - {\frac 3 2} \eta)n$ i.i.d.~noiseless labeled examples drawn from ${\cal D}$, where the inequality holds by the upper bound on $m$ (recalling again that $n \geq C/\eta$).

We now change perspective, and view $\bS = ((\bx_i,c(\bx_i))_{i \in [n']}$ as the initial ``clean'' data set 
for a $\kappa\eta$-rate nasty adversary (recall Step~(1) in \Cref{def:variable-nasty-noise}).
Let $\bZ_\nasty$ be a random variable denoting the subset of $[n']$ corresponding to the indices that a $\kappa\eta$-rate nasty adversary can corrupt.  $\bZ_\nasty$ depends on $\bS$, but by the definition of nasty noise, the marginal distribution of $|\bZ_\nasty|$ is $\Bin(n',\kappa\eta)$. So using a multiplicative Chernoff bound with the lower bounds $n \geq C\log(1/\delta)/\eta$ and $n' \geq (1-{\frac 3 2}\eta)n$, with overall probability at least $1 - 0.001\delta$ over the draw of $(\bZ_\nasty,\bS)$, we have that
$|\bZ_\nasty|\leq {\frac 1 2} (\kappa \eta n' + {\frac 1 2} \eta n') < \lfloor m/2 \rfloor$.
We say that such an outcome of $(\bZ_\nasty,\bS)$ is a \emph{malleable} outcome of $(\bZ_\nasty,\bS)$.

Fix any malleable outcome of $(\bZ_\nasty,\bS)$, and consider an execution of the $\kappa\eta$-rate nasty adversary on data set $\bS$ when it can corrupt the set $\bZ_\nasty$. 
Let
$S_{\replaced}=((\bx_{i_1},c(\bx_{i_1})),\dots,$ $(\bx_{i_k},c(\bx_{i_k})))$ be the sequence of $k:=|\bZ_\nasty| < \lfloor m/2 \rfloor$ examples from $\bS$ that the nasty adversary chooses to replace,  let $S_{\new}=((x'_{i_1},y'_{i_1}),\dots,(x'_{i_k},y'_{i_k}))$ be the corresponding sequence of noisy examples that are used to replace them, and let $S'$ be the final data set of $n'$ examples resulting from all this.

We return to the strong malicious adversary setting, where the initial data set is $\bS^\ast$ (recall that $|\bS^\ast|=n$).
Recall that the strong malicious adversary is given ``full knowledge'' of the outcome of $\bZ_\mal$ and of $\bS^\ast$, and hence ``full knowledge'' of $\bS$, and that for each $j \in \bZ_\mal$ it can replace the $j$-th labeled example 
with an arbitrary labeled example from $X \times \bits.$  
Now for the crux of the argument:  when $|\bZ_\mal|=m$, 
the strong malicious adversary can operate as follows:

\begin{itemize}

\item [(1)] It uses $k < \lfloor m/2 \rfloor$ elements of $\bZ_\mal$ to construct contradictory examples for the $k$ elements of $\bS_{\replaced}$; i.e.~it constructs an example $(\bx_{i_j},\overline{c(\bx_{i_j})})$ for each $j \in [k]$.

\item [(2)] It uses $k<\lfloor m/2 \rfloor$ elements of $\bZ_\mal$ to introduce the $k$ elements $(x'_{i_1},y'_{i_1}),\dots,(x'_{i_k},y'_{i_k})$ of $S_{\new}$

\item [(3)] Finally, it uses $2(\lfloor m/2 \rfloor - k)$ elements of $\bZ_\mal$  to construct $\lfloor m/2 \rfloor - k$ pairs of contradictory examples.  (If $m$ is odd, it doesn't use the one remaining element of $\bZ_\mal$.)

\end{itemize}
Call the resulting $n$-element data set $S''$.

Now consider the execution of the ICE-algorithm $A$ on the $n$-element data set $S''$. Since $A$ is an ICE-algorithm, the $\lfloor m/2 \rfloor$ contradictory pairs ($k$ of which are from (1), and $\lfloor m/2 \rfloor -k$ of which are from (3)) will be removed, and the resulting final data set is precisely $S'$.
Combining \Cref{eq:2delta} with the $0.001\delta$ failure probability of not achieving a malleable outcome, it follows that $A$ $(\eps, 1.005\delta)$-learns in the presence of nasty noise at rate $\kappa\eta$.
\end{proof}


\subsection{Standard and strong malicious noise are equivalent}
\label{sec:standard-strong-equivalent}

It's easy to see that a learner which can tolerate strong malicious noise can also tolerate standard malicious noise, since the standard malicious noise adversary is a restricted version of the strong malicious noise adversary. In this section we prove a converse, which we will use in the proof of \Cref{thm:ICE-malicious-nasty}.
Recall, from \Cref{def:subsampling-filter}, that $\submn$ refers to taking a size-$n$ uniform subsample of a size-$m$ data set.
\begin{lemma}[Standard and strong malicious noise are equivalent]
    \label{lem:compare-mal}
     For any $\eps, \delta, \eta > 0$ and concept class $\mcC$ of functions $X \to \bits$, suppose that an algorithm $A$ using $n$ samples $(\eps, \delta)$-learns $\mcC$ over distribution $\mcD$ with (standard) malicious noise at rate $\eta$. Given $\delta_{\additional}>0$, let
    \begin{equation*}
        m \coloneqq O\paren*{\frac{n^4 \log(2|X| + 1)^2}{(\delta_{\additional})^4}}
    \end{equation*}
and let $A' \coloneqq A \circ \submn$. Then $A'$ $(\eps, \delta + \delta_{\additional})$-learns $\mcC$ over distribution $\mcD$ with strong malicious noise at rate $\eta$.
\end{lemma}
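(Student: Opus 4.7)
The plan is to reduce strong-malicious-noise learning on $m$ samples (subsampled via $\submn$) to Huber-contamination learning on $n$ samples, via the Blanc--Valiant transfer result (\Cref{thm:BV-main}). As a first step, \Cref{prop:mal-harder-oblivious} upgrades the assumed $(\eps,\delta)$-guarantee of $A$ under $\eta$-standard-malicious noise on $\mcD$ to an identical $(\eps,\delta)$-guarantee under $\eta$-Huber contamination on $\mcD$. It therefore suffices to bound the worst-case failure probability of $A \circ \submn$ under $\eta$-strong-malicious noise on $m$ samples by the worst-case failure probability of $A$ under $\eta$-Huber contamination on $n$ samples, up to an additive $\delta_{\additional}$.

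To invoke \Cref{thm:BV-main}, I would work over the augmented domain $\tilde X := (X \times \bits) \times \bits$ with clean distribution $\tilde \mcD_c := \mcD_c \times \Ber(\eta)$, interpreting the extra ``corruption-flag'' coordinate as the $\Ber(\eta)$ indicator of membership in $\bZ_\mal$. Set $X_{\mathrm{corrupt}} := (X \times \bits) \times \{1\}$ and apply \Cref{thm:BV-main} with noise rate $\eta$ and BV accuracy parameter $\delta_{\additional}/2$; since $\log|\tilde X| = O(\log(2|X|+1))$, the BV sample-size requirement matches the $m$ in the statement. Fix a target concept $c$ and take the randomized test $\tilde f: \tilde X^n \to [0,1]$ defined by $\tilde f(\tilde S) := \Pr[\error_{\mcD}(A(\pi(\tilde S)), c) > \eps]$ (over $A$'s internal randomness), where $\pi$ is the coordinate-wise projection that strips the flag, and crucially $\pi$ commutes with $\submn$. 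On the oblivious side of BV, the admissible distributions $\tilde\mcD'$ are those coupled to $\tilde\mcD_c$ with all disagreements supported on $X_{\mathrm{corrupt}}$, forcing $\tilde\mcD' = (1-\eta)(\mcD_c \times \{0\}) + \eta\, \tilde\mcQ$ for some distribution $\tilde\mcQ$ on $\tilde X$. The projection onto $X\times\bits$ is then exactly the Huber distribution $(1-\eta)\mcD_c + \eta\, \pi\tilde\mcQ$, so the BV oblivious supremum is bounded by the worst-case Huber failure probability of $A$, which is at most $\delta$ by the first step.

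The main obstacle is matching the BV adaptive side to the strong-malicious model: BV caps the adversary at $\floor{\eta m}$ corruptions, whereas the strong-malicious adversary corrupts \emph{all} of $\bZ_\mal$, which has $\Bin(m,\eta)$ size and may exceed $\floor{\eta m}$. The plan is a truncation coupling, in the style of \Cref{prop:standard-nasty-harder}: couple the given strong-malicious adversary with its ``capped'' variant that corrupts only the first $\floor{\eta m}$ indices of $\bZ_\mal$ (with identical corruption values on those), which is exactly an instance of the BV adaptive adversary under the identification above. The two corrupted $m$-samples differ at exactly $(|\bZ_\mal| - \floor{\eta m})_+$ positions, whose count has expectation $O(\sqrt m)$ by the mean-absolute-deviation of $\Bin(m,\eta)$, and each such position enters the uniform size-$n$ subsample $\submn$ with probability $n/m$, so by a union bound the probability that $A \circ \submn$ sees any difference between uncapped and capped is $O(n/\sqrt m) \leq \delta_{\additional}/2$ for the chosen $m$. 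Chaining these inequalities, uncapped-strong-malicious failure of $A \circ \submn$ $\leq$ capped failure $+\, \delta_{\additional}/2 = $ (BV LHS) $+\, \delta_{\additional}/2 \leq$ (BV RHS) $+\, \delta_{\additional} \leq \delta + \delta_{\additional}$, as required.
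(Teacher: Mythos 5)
Your proposal is correct, and it goes through the same high-level pipeline as the paper (augment the domain, invoke \Cref{thm:BV-main}, and bound the oblivious side by $\eta$-Huber contamination via \Cref{prop:mal-harder-oblivious}), but it differs in the one step where the real work happens: matching the strong-malicious adversary, whose corruption set has size $\Bin(m,\eta)$, to BV's adaptive adversary, which is capped at $\floor{\eta m}$ corruptions. You encode corruptibility as an extra $\Ber(\eta)$ flag bit, keep the BV budget at $\eta$, and then repair the binomial overflow with a truncation coupling (cap at $\floor{\eta m}$, bound the expected overflow by $O(\sqrt m)$, and argue the size-$n$ subsample misses all overflow positions except with probability $O(n/\sqrt m)\le \delta_{\additional}/2$) — essentially re-using the trick from \Cref{prop:standard-nasty-harder} and splitting $\delta_{\additional}$ between this step and the BV accuracy. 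The paper instead sidesteps the cap entirely: it replaces each corruptible clean point by a special erasure symbol $\varnothing$ (so the base distribution is $\mcD_{c,\eta}$, which outputs $\varnothing$ with probability $\eta$), sets $X_{\mathrm{corrupt}}=\{\varnothing\}$, and sets the BV budget to $1$, so the adversary may rewrite every $\varnothing$ position with no counting constraint; the test function is defined to be $0$ on samples containing $\varnothing$ so that an oblivious adversary gains nothing by leaving $\varnothing$ in place, and the oblivious side then reads off directly as Huber contamination. Your route costs an extra coupling argument and a factor-of-two bookkeeping in $\delta_{\additional}$ but requires no special handling of leftover erasure symbols; the paper's budget-$1$/erasure trick is slightly cleaner because the $\Bin(m,\eta)$ randomness is absorbed into the base distribution itself. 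Both arguments yield the stated parameters, since $\log|\tilde X|=O(\log(2|X|+1))$ and your extra $O(n/\sqrt m)$ loss is dominated by $\delta_{\additional}/2$ for the prescribed $m$.
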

\begin{proof}
    Fix any concept $c \in \mcC$. We will apply \Cref{thm:BV-main} in order to prove that the probability $A'$ learns $c$ to accuracy at least $1 - \eps$ with $\eta$-strong-malicious noise is at least $1- \delta-\delta_{\additional}$. Doing so requires specifying all the parameters of \Cref{thm:BV-main}:
    \begin{enumerate}
        \item We set the domain to
        \begin{equation*}
            X' \coloneqq (X \times \bits) \cup \varnothing,
        \end{equation*}
        and the corruptible portion of it to just $X_{\mathrm{corrupt}} = \set{\varnothing}$.
        \item We set the budget in \Cref{thm:BV-main} (what is called ``$\eta$" in the theorem statement) to $1$.
        \item We apply \Cref{thm:BV-main} to the base distribution $\mcD_{c, \eta}$, which is the distribution that with probability $1 - \eta$, outputs $(\bx, c(\bx))$ for $\bx \sim \mcD$, and with probability $\eta$, outputs $\varnothing$.
        \item We set the test function $f:(X')^n \to [0,1]$ as 
        \begin{equation}
            \label{eq:def-test-additive}
            f(S) \coloneqq \begin{cases}
                0&\text{ if }\varnothing \in S \\
                \Pr[A(S) \text{ is not $\eps$-close to $c$ under ${\cal D}$}]&\text{otherwise,}
            \end{cases}
        \end{equation}
        where the probability is taken over the randomness of the learner $A$.
    \end{enumerate}
    With this setup, we claim that
    \begin{equation}
        \label{eq:oblivious-ub-additive}
        \sup_{\mcD'\text{ is a valid corruption of }\mcD_{c, \eta}}\set{\Ex_{\bS' \sim (\mcD')^n}[f(\bS')]} \leq \delta.
    \end{equation}
    To see this, consider any $\mcD'$ that is a valid corruption of $\mcD_{c,\eta}$. Then, there is a coupling of $\bx \sim \mcD_{c,\eta}$ and $\bx' \sim \mcD'$ for which, with probability $1$, either $\bx = \bx'$ or $\bx = \varnothing$ (since $X_{\mathrm{corrupt}}$ contains only $\varnothing$). Let $\mcO$ be the distribution of $\bx'$ conditioned on $\bx = \varnothing$. We can write $\mcD'$ as the mixture
    \begin{equation*}
        \mcD' = (1-\eta) \mcD_{c} + \eta \mcO,
    \end{equation*}
    where $\mcD_c$ is the distribution that outputs $(\bx, c(\bx))$ for $\bx \sim \mcD$. Recall, from \Cref{prop:mal-harder-oblivious}, that since $A$ $(\eps, \delta)$-learns $\mcC$ with $\eta$- malicious noise, it also $(\eps, \delta)$-learns $\mcC$ with $\eta$-Huber contamination. This exactly corresponds to $\Ex_{\bS' \sim (\mcD')^n}[f(\bS')]$ being at most $\delta$ as long as $\mcO$, and therefore $\mcD'$, never (i.e. with probability $0$) outputs $\varnothing$.

    Next, consider the case where $\mcD'$ outputs $\varnothing$ with nonzero probability. We can instead consider some $\mcD'_{\text{no }\varnothing}$ which is identical to $\mcD'$ except that whenever $\mcD'$ outputs $\varnothing$, $\mcD'_{\text{no }\varnothing}$ outputs an arbitrary element of $X \times \bits$. If $\mcD'$ is a valid corruption of $\mcD_{c,\eta}$, then so is $\mcD'_{\text{no }\varnothing}$ (since $\varnothing$ is a corruptible element and the corruption budget is $1$). Furthermore, based on how we defined $f$ in \Cref{eq:def-test-additive},
    \begin{equation*}
        \Ex_{\bS' \sim (\mcD')^n}[f(\bS')] \leq \Ex_{\bS' \sim (\mcD'_{\text{no }\varnothing})^n}[f(\bS')],
    \end{equation*}
so by appealing to the earlier case we get that \Cref{eq:oblivious-ub-additive} holds in general. 

Now we can apply \Cref{thm:BV-main}, which for our choice of $m$ gives that,
    \begin{equation}
        \label{eq:apply-BV-additive}
         \Ex_{\bS \sim (\mcD_{c,\eta})^m}\bracket*{\sup_{S'\text{ is a valid corruption of }\bS}\set{\Ex[f \circ \submn(S')]}} \leq \delta + \delta_{\additional}.
    \end{equation}
    Now, consider any strategy for the strong malicious adversary. Recall that, for this adversary, we first draw $\bS \sim (\mcD_c)^m$ and $\bZ_{\mal} \subseteq [m]$ where each $i \in [m]$ is independently included in $\bZ_{\mal}$ with probability $\eta$. Given these, the adversary chooses an arbitrary sample $S' \in (X \times \bits)^m$ with the constraint that $\bS_i \neq S_i'$ can only occur if $i \in \bZ_{\mal}$.

    Consider the sample $\bS_{\varnothing} \in (X')^{m}$ where, for each $i \in [m]$,
    \begin{equation*}
        (\bS_{\varnothing})_i \coloneqq \begin{cases}
            \varnothing&\text{if $i \in \bZ_{\mal}$}.\\
            \bS_i&\text{otherwise}.
        \end{cases}
    \end{equation*}
    Then, the distribution of $\bS_{\varnothing}$ is exactly $(\mcD_{c,\eta})^m$. Furthermore, if the strong malicious adversary can corrupt $(\bS, \bZ_{\mal})$ to $\bS'$, then $\bS'$ is a valid corruption of $\bS_{\varnothing}$. Therefore, \Cref{eq:apply-BV-additive} says that the probability $A'$ fails with $\eta$-strong malicious noise is at most $\delta + \delta_{\additional}.$
\end{proof}

\section{(Near)-Optimality of \texorpdfstring{\Cref{thm:ICE-malicious-nasty-informal}}{Theorem~\ref{thm:ICE-malicious-nasty-informal}}: Proof of \texorpdfstring{\Cref{thm:ICE-bad-news-informal}}{Theorem~\ref{thm:ICE-bad-news-informal}}}
\label{sec:ICE-bad-news}

In this section, we prove the formal version of \Cref{thm:ICE-bad-news-informal} stated below.

\begin{theorem} [The factor 1/2 between nasty and malicious noise rates in \Cref{thm:ICE-malicious-nasty} is best possible]
\label{thm:ICE-bad-news}
Let ${\cal U}$ denote the uniform distribution over $\bits^d$.  
Fix any $\eta \in [1/\poly(d),0.1]$
and any constant $1/2 < \kappa < 1$. Assume that one-way functions exist.  
Then there is a concept class ${\cal C} = {\cal C}_{\eta,\kappa}$ of concepts over $\bits^d$ with the following properties:

\begin{itemize}

\item [(1)] Any $\poly(d)$-time 
algorithm for learning ${\cal C}$ under ${\cal U}$ in the presence of nasty noise at rate $\kappa\eta$ cannot achieve accuracy better than ${\frac 1 2} + \kappa \eta$ with probability even as large as any $1/\poly(d)$; but

\item [(2)] There is an ICE-algorithm which runs in $\poly(d/\eps)$ time and $(\eps + 8\eta(1+o(1)),\exp(-\Omega(d)))$-learns ${\cal C}$ under ${\cal U}$ in the presence of strong malicious noise at rate $\eta$.
\end{itemize}

\end{theorem}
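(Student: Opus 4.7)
The plan is to follow the template of \Cref{thm:fixed-separation}, with two substantive changes: (i) replace the erasure-list-decodable code with an efficiently \emph{bit-flip} list-decodable binary code; and (ii) construct the malicious-noise learner as an \emph{ICE} algorithm. The key observation that will drive the whole argument is that once we insist the learner is ICE, the malicious adversary must ``pay twice'' to flip a post-ICE label of some key-side point $x$ from $b$ to $\bar b$: it must introduce one contradictory $(x, \bar b)$ to cancel the clean $(x, b)$, and a second $(x, \bar b)$ to contribute the flipped vote. A nasty adversary achieves the same effect at unit cost by directly replacing $(x,b)$ with $(x,\bar b)$; this factor-of-two discrepancy is exactly what will produce the $\kappa > 1/2$ barrier.

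\textbf{Construction of $\mcC$.} I would partition $\bits^d$ into a key-side $X_{\key}$ of uniform measure $\beta$ and a value-side $X_{\val}$ of measure $1-\beta$, choosing $\beta$ with $\eta < \beta < 2\kappa\eta$ (possible since $\kappa > 1/2$). By a random-linear-code argument analogous to \Cref{lem:eeldbc-with-hamming} but targeting bit-flip list-decoding, I would pick a constant-rate binary linear code $C$ of block length $w = \Theta(d)$ that is $(\tau, O(1))$-efficiently bit-flip list-decodable, for some constant $\tau$ with $\tfrac{1}{2}\cdot\tfrac{\eta}{\beta} < \tau < \tfrac{1}{2}$ (such $\tau$ exists since $\eta < \beta$). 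Partitioning $X_{\key}$ into $w$ equal blocks $X_1, \ldots, X_w$, the concepts $c_{\key}$ are then indexed by PRF keys $\key \in \bits^\ell$: $c_{\key}$ labels every $x \in X_j$ by $\Enc(\key)_j$ and every $x \in X_{\val}$ by $f_{\key}(x)$. Unlike \Cref{thm:fixed-separation}, no seeded extractor is needed, since $\key$ is already uniform over its range.

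\textbf{Hardness for $\kappa\eta$-rate nasty noise.} For each key-side block, the nasty adversary will flip roughly half of its clean samples so that both labels appear in equal proportion in the corrupted sample. The total corruption count is at most $\tfrac{\beta}{2}n(1+o(1)) < \kappa\eta n$ with probability $1-\exp(-\Omega(d))$, so the adversary stays within its $\kappa\eta$-rate budget. Post-corruption, the empirical label distribution on each key-side block is balanced independent of $\key$, so the key-side is statistically uninformative about $\key$. Consequently any learner must extract its advantage from $X_{\val}$, where $c_{\key}$ agrees with the pseudorandom $f_{\key}$; a standard PRF-breaking reduction then shows that no polynomial-time algorithm can output a hypothesis predicting $f_{\key}$ on fresh inputs with accuracy $\tfrac{1}{2} + 1/\poly(d)$ with probability $1/\poly(d)$. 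Overall accuracy is thus at most $\tfrac{1}{2} + 1/\poly(d) < \tfrac{1}{2} + \kappa\eta$ in the regime of $\eta$ allowed by the theorem.

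\textbf{ICE learner for $\eta$-rate strong malicious noise --- the main obstacle.} The learner will first ICE-cancel contradictory pairs; then for each block $j$ it will observe surviving counts $s_j^+, s_j^-$ and produce $\mathbf{z}_j \in \bits$ via a randomized rounding subroutine $\Round$. A naive majority vote fails because the adversary can flip a block's outcome with a single extra contradictory example past a tie, making one unit of corruption yield a full bit-flip. Instead $\Round$ will set $\mathbf{z}_j = +1$ with a probability that interpolates smoothly in $(s_j^+, s_j^-)$, calibrated so that each additional adversarial corruption within a block increases the probability of the wrong guess by at most $\approx 1/2$ (matching the factor-of-two observation above). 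The central technical claim, to be proved by a careful per-block accounting over all mixtures of ``pure erasure'' (one contradictory example) and ``pure flip'' (two contradictory examples used together) corruptions, is that for any $\eta$-rate strong malicious adversary strategy,
\[
\Ex[\ham(\mathbf{z}, \Enc(\key))] \;\le\; \tfrac{1}{2}\cdot\tfrac{\eta}{\beta}(1+o(1))\,w;
\]
independence of the $\Round$ randomness across the $w$ blocks then yields $\ham(\mathbf{z}, \Enc(\key)) \le \tau w$ with probability $1-\exp(-\Omega(w))$. Running $\Dec(\mathbf{z})$ produces $O(1)$ candidate keys, and hypothesis testing on a held-out portion of the sample (as at the end of the proof of \Cref{thm:fixed-separation}) selects one achieving error at most $\eps + 8\eta(1+o(1))$ --- the factor $8$, rather than the $4$ of \Cref{thm:fixed-separation}, is precisely the price of the factor-of-two blowup in codeword error radius. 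The hardest part will be designing $\Round$ and proving the linear-in-corruptions bound uniformly over all adversary strategies, since each block can be attacked with a different mix of erasures and flips, and the bound must be tight enough to preserve the intended noise-rate gap.
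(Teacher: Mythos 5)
Your architecture is essentially the paper's: a key/value split whose key side has measure strictly between $\eta$ and $2\kappa\eta$, a bit-flip list-decodable code written redundantly on the key side, a PRF truth table on the value side with no extractor, a nasty adversary that neutralizes the key side and reduces to PRF security, and an ICE learner that forms a per-block real-valued score, randomly rounds it so that each post-ICE corruption shifts the wrong-bit probability by only ``half a unit,'' list-decodes, and hypothesis-tests; your central bound $\E[\ham(\mathbf{z},\Enc(\key))]\le \tfrac{\eta}{2\beta}(1+o(1))w$ is exactly the paper's bound with $\beta=2\kappa'\eta$, and the rounding-plus-Chernoff step is the paper's \Cref{lem:rounding}. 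However, two concrete steps do not go through as written. First, the code: you propose to obtain an efficiently \emph{bit-flip} list-decodable code from a random linear code ``analogously to \Cref{lem:eeldbc-with-hamming},'' but the Ding--Jin--Xing result (\Cref{lem:ding-jin-xing}) is specific to \emph{erasures}; for bit flips, random linear codes are only known to be combinatorially list-decodable near radius $1/2$, and no efficient decoder for them is known, so this instantiation fails. The fix is what the paper does: use explicit Guruswami--Sudan-type codes (\Cref{thm:list-decodable-binary-bit-flip}), which give radius $\tfrac12-\tau$ with list size $\poly(d)$; neither constant rate nor $O(1)$ list size is needed, since the hypothesis-testing step tolerates polynomially many candidates.

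Second, the nasty-adversary strategy ``flip roughly half of each key-side block'' does not make the key side uninformative. In a block containing an odd number $s$ of clean examples, flipping $\lfloor s/2\rfloor$ of them leaves a one-vote majority for the true codeword bit, so a learner taking per-block majorities recovers a constant fraction of $\Enc(\key)$ and your indistinguishability claim breaks. You need exact balance, e.g.\ as in the paper: corrupt examples in contradictory/balanced pairs and spend one extra corruption per odd block (replacing the leftover key-side example with a fresh correctly labeled value-side point), then verify the budget via stochastic dominance: the corruption count is dominated by $\tfrac12\Bin(n,\beta)+w$, which is $(1-\exp(-\Omega(d)))$-dominated by $\Bin(n,\kappa\eta)$ (this also matters because the nasty model requires the marginal of the corruption count to be exactly $\Bin(n,\kappa\eta)$, not merely ``within budget w.h.p.''). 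Relatedly, your accuracy bookkeeping is off: even with an uninformative key side the learner can be granted the key side for free, so the bound is $\tfrac12+\beta/2+1/\poly(d)$, not $\tfrac12+1/\poly(d)$; this is precisely why $\beta$ must be chosen a constant factor below $2\kappa\eta$ (the paper's $\kappa'=(\kappa+\tfrac12)/2$), so that $\beta/2+1/\poly(d)<\kappa\eta$ even when $\eta=1/\poly(d)$. A last cosmetic point: the factor $8$ in the final error comes from the hypothesis-testing step on the ICE'd sample (ICE removal makes $|S'\mathop{\triangle}\bS^{\original}|\le 4|\bZ_\mal|$), not from a blowup in the codeword error radius.
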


\subsection{Required technical ingredients}

We require the following result from the theory of error-correcting codes:

\begin{theorem} [see \cite{GuruswamiSudan00,GuruswamiRudra11} and Section~5 of \cite{Guruswami09IWCC}] \label{thm:list-decodable-binary-bit-flip}
Let $\tau>0$ be any (arbitrarily small) constant.  There exists a $(p={\frac 1 2} - \tau,L=\poly(d/\tau)=\poly(d))$-efficiently bit-flip-list-decodable binary code $C$ mapping $
d$-bit messages to $w=\poly(d/\tau)=\poly(d)$-bit codewords.
\end{theorem}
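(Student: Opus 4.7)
The plan is to prove \Cref{thm:list-decodable-binary-bit-flip} via the classical concatenated-code construction: an outer code that is efficiently list-decodable over a large alphabet, composed with a binary inner code of relative distance approaching $1/2$, together with an efficient soft-decision outer decoder that combines the per-block candidate lists produced by the inner decoder. Concretely, I would take the outer code to be a folded Reed--Solomon code $C_{\mathrm{out}}$ of rate $R_{\mathrm{out}} = \Theta(\tau)$ over an alphabet $\mathbb{F}_q$ with $q = \poly(1/\tau)$, which by the Guruswami--Rudra theorem is efficiently list-decodable from a $1 - R_{\mathrm{out}} - \Theta(\tau)$ fraction of symbol errors with list size $\poly(d/\tau)$. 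The inner code $C_{\mathrm{in}} \colon \mathbb{F}_q \to \bits^q$ would be the Hadamard code, which has relative distance exactly $1/2$, so concatenation yields a binary code of length $w = n_{\mathrm{out}} \cdot q = \poly(d/\tau)$ encoding $d$-bit messages, and the encoder is polynomial-time computable.

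For the inner decoder, I would use the Goldreich--Levin algorithm, which given any received block $y \in \bits^q$ efficiently outputs the list of all Hadamard codewords within relative Hamming distance $1/2 - \tau'$ of $y$; this list has size at most $O(1/(\tau')^{2})$. Applied block by block, this converts a received binary word $y' \in \bits^w$ at relative distance $\le 1/2 - \tau$ from some codeword of the concatenated code into a ``weighted'' received word for the outer decoder, i.e.\ for each outer coordinate a short list of candidate $\mathbb{F}_q$-symbols tagged with agreement scores coming from the inner Hamming distances.

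The key step is to feed this soft information into the soft-decision (Guruswami--Sudan-style) list-decoder for the outer folded Reed--Solomon code, which outputs all $C_{\mathrm{out}}$-codewords whose weighted agreement with the received soft vector exceeds the relevant Johnson-type threshold. A standard calculation (cf.\ Section~5 of \cite{Guruswami09IWCC}) shows that if the relative Hamming distance between the received word and the true concatenated codeword is at most $1/2 - \tau$, then the correct outer codeword clears this threshold; choosing $R_{\mathrm{out}}$ and $q$ as above (with $\tau' = \Theta(\tau)$) makes the final output list size $\poly(d/\tau)$ and keeps the overall decoding time polynomial in $d$ and $1/\tau$.

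I expect the main obstacle to be bookkeeping rather than conceptual: carefully choosing the three interacting parameters ($R_{\mathrm{out}}$, $q$, and the inner decoding radius $\tau'$) so that (i) the outer soft-decoding threshold is met whenever the overall error fraction is at most $1/2 - \tau$, (ii) the per-block inner list sizes remain $\poly(1/\tau)$, and (iii) the blocklength $w$ and the final list size $L$ are both $\poly(d/\tau)$. Since the theorem only asks for $p = 1/2 - \tau$ for arbitrarily small constant $\tau$ with polynomial (in $d$) list size and no claim about rate, we have considerable slack: in particular we may take $R_{\mathrm{out}}$ as small as we like, so the Johnson-type bound is comfortably satisfied, and the construction yields the desired $(1/2 - \tau, \poly(d))$-efficiently bit-flip-list-decodable binary code, completing the proof.
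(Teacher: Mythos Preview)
The paper does not prove \Cref{thm:list-decodable-binary-bit-flip}; it states the result as a known fact from the literature, citing \cite{GuruswamiSudan00,GuruswamiRudra11} and Section~5 of \cite{Guruswami09IWCC}, and uses it as a black box. Your sketch is precisely the standard concatenated construction (folded Reed--Solomon outer code, Hadamard inner code, Goldreich--Levin per-block list decoding, soft-decision/list-recovery outer decoding) that those references develop, so your proposal is correct and matches the approach the paper is implicitly invoking.
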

We write $\Enc$ for the encoding function $\Enc: \bits^d \to \bits^w$ and
$\Dec$ for the decoding function $\Dec: \bits^w \to (\bits^d)^{\leq L}$ of the code given by \Cref{thm:list-decodable-binary-bit-flip}.

\subsection{The concept class ${\cal C}_{\eta,\kappa}$} \label{sec:cc}

Let $\kappa' = {\frac {\kappa + 1/2} 2}$, and recall that since $\kappa>1/2$ we have $1/2 < \kappa' < \kappa$.
For convenience we work over a domain $X$ of size $2^d/(1-2\kappa'\eta)=O(2^d)$. 
We view the domain $X$ as being partitioned into two subdomains, which we denote $X_{\key}$ and $X_{\val}$; we refer to $X_{\key}$ as the \emph{key side} and to $X_{\val}$ as the \emph{value side}.  The key side contains $2\kappa'\eta 2^{d'}$ elements of $X$ (say, the lexicographically first $2\kappa'\eta 2^{d'}$ elements) and the value side contains the next 
Note that a random draw from ${\cal U}$ lands on the key side with probability $2\kappa'\eta$.

Fix $\tau = {\frac {\kappa - 1/2} {8}}$, and let $C$ be the $({\frac 1 2} - \tau,\poly(d))$-efficiently bit-flip-list-decodable binary code mapping $d$-bit messages to $w=\poly(d)$-bit codewords, with associated encoding and decoding functions $\Enc$, $\Dec$, whose existence is given by \Cref{thm:list-decodable-binary-bit-flip}.

We view the key side as being partitioned into $w$ \emph{blocks} $X_{\key} = X_1 \sqcup \cdots \sqcup X_w$, where each $|X_i| = {\frac 1 w} |X_{\key}|$ (say, block $X_i$ contains the $i$-th block of ${\frac 1 {w}} |X_{\key}|$  many lexicographically consecutive strings from $X_{\key}$). 

Let ${\cal F} = \{f_k: \bits^d \to \bits\}_{k \in \bits^d}$ be a pseudorandom function family.

Now we can describe the concept class ${\cal C}={\cal C}_{\eta,\kappa}$ over $X$.  There are $2^d$ functions, one for each possible key $k \in \bits^d$ for the PRF  ${\cal F}$. The $k$-th concept in ${\cal C}$, denoted $c_k$, is defined as follows:

\begin{itemize}

\item {\bf Key side:}  Given $x \in X_{\key}$, let $j=j(x) \in [w]$ be the block such that $x \in X_j$.  The value of $c_k(x)$ is $\Enc(k)_j$, the $j$-th bit in the encoding of the key $k$.  (So for each $j \in [w]$, every input example in block $X_j$ on the key side evaluates to the same output bit under $c_k$. Intuitively, the key side can be thought of as representing the $w$-bit string $\Enc(k)$ in a very redundant way.)

\item {\bf Value side:} Recall that there are $2^d$ strings in $X_{\val}$. Fix an efficiently computable and efficiently invertible bijection between $X_{\val}$ and $\bits^d$ (for concreteness, say the lexicographically $i$-th string in $X_{\val}$ is associated with the lexicographically $i$-th string in $\bits^d$); given $x \in X_{\val}$, write $x'$ for the corresponding element of $\bits^d$ under this bijection.  For $x \in X_{\val}$, the value of $c_k(x)$ is $f_{k}(x')$. (Intuitively, the $2^d$ outputs of $c_k$ across $X_{\val}$ correspond to the $2^d$-bit truth table of the pseudorandom function $f_k$.)

\end{itemize}

The structure of the domain and the way that concepts label domain elements is illustrated in \Cref{fig:domain,fig:key,fig:value}.

\begin{figure}[!h]

\begin{center}
    \begin{tikzpicture}
    \draw[thick] (0, 0) rectangle (16, 0.5);

    \draw[thick] (5, 0) -- (5, 0.5);

    \draw[decorate, decoration={brace, amplitude=10pt}] (0, 0.6) -- (16, 0.6) 
        node[midway, above, yshift=12pt] { Entire domain $X$};

    \draw[decorate, decoration={brace, amplitude=8pt, mirror}] (0, -0.1) -- (4.9, -0.1) 
        node[midway, below, yshift=-8pt] {$X_{\mathrm{key}}$; A $2\kappa'\eta$ fraction of $|X|$};

    \draw[decorate, decoration={brace, amplitude=8pt, mirror}] (5.1, -0.1) -- (16, -0.1) 
        node[midway, below, yshift=-8pt] {$X_{\mathrm{value}}$; $2^d$ bits};

\end{tikzpicture}
\end{center}
\caption{Division of the domain into a key side and value side}
\label{fig:domain}
\end{figure}
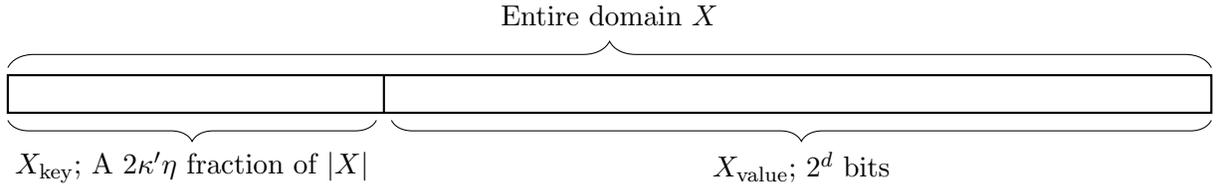
\medskip
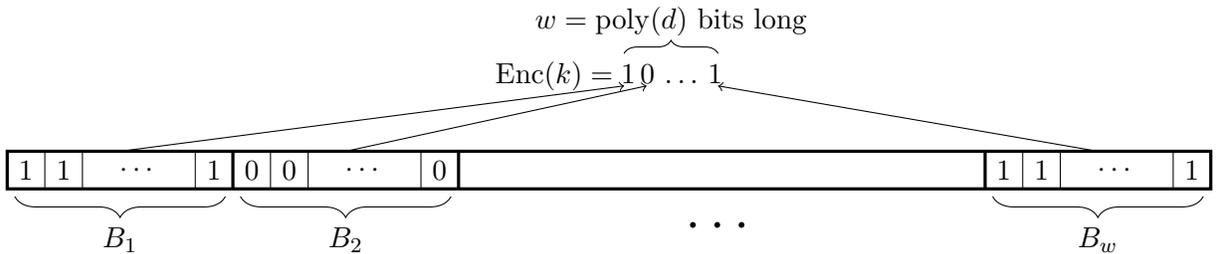
\begin{figure}[!h]

    \begin{center}
        \begin{tikzpicture}
 
    \draw[very thick] (0, 0) rectangle (16, 0.5);

    \draw[very thick] (3, 0) -- (3, 0.5);

    \draw (0.5, 0) -- (0.5, 0.5);
    \node at (0.25, 0.25) {$1$};
    \draw (1, 0) -- (1, 0.5);
    \node at (0.75, 0.25) {$1$};
    \node at (1.75, 0.25) {$\cdots$};
    \draw (2.5, 0) -- (2.5, 0.5);
    \node at (2.75, 0.25) {$1$};

    \draw[very thick] (6, 0) -- (6, 0.5);

    \draw (3.5, 0) -- (3.5, 0.5);
    \node at (3.25, 0.25) {$0$};
    \draw (4, 0) -- (4, 0.5);
    \node at (3.75, 0.25) {$0$};
    \node at (4.75, 0.25) {$\cdots$};
    \draw (5.5, 0) -- (5.5, 0.5);
    \node at (5.75, 0.25) {$0$};

    \draw[very thick] (13, 0) -- (13, 0.5);
     \draw (13.5, 0) -- (13.5, 0.5);
    \node at (13.25, 0.25) {$1$};
    \draw (14, 0) -- (14, 0.5);
    \node at (13.75, 0.25) {$1$};
    \node at (14.75, 0.25) {$\cdots$};
    \draw (15.5, 0) -- (15.5, 0.5);
    \node at (15.75, 0.25) {$1$};

    \node at (8, 1.5) {$\mathrm{Enc}(k) = 1\,0\,\ldots\,1$};
    \draw[->] (1.5, 0.5) -- (8.2, 1.37);
    \draw[->] (4.5, 0.5) -- (8.5, 1.37);
    \draw[->] (14.5, 0.5) -- (9.45, 1.37);
    
    \draw[decorate, decoration={brace, amplitude=8pt, mirror}] (0.1, -0.1) -- (2.9, -0.1) 
        node[midway, below, yshift=-8pt] {$B_1$};

    \draw[decorate, decoration={brace, amplitude=8pt, mirror}] (3.1, -0.1) -- (5.9, -0.1) 
        node[midway, below, yshift=-8pt] {$B_2$};

    \draw[decorate, decoration={brace, amplitude=8pt, mirror}] (13.1, -0.1) -- (15.9, -0.1) 
        node[midway, below, yshift=-8pt] {$B_w$};

    \node at (9.5, -0.5) {\huge$\cdots$};

    \draw[decorate, decoration={brace, amplitude=5pt}] (8.2, 1.8) -- (9.45, 1.8) 
        node[midway, above, yshift=3pt] {$w=\mathrm{poly}(d)$ bits long};
\end{tikzpicture}
    \end{center}
    \caption{How the key side $X_{\mathrm{key}}$ is labeled by a concept $c_k$ for a key $k$.}
    \label{fig:key}
\end{figure}
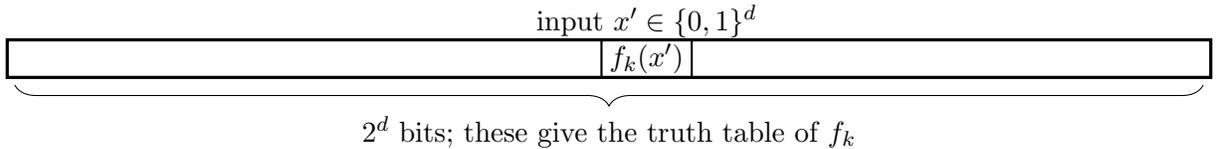
\begin{figure}[!h]

\begin{center}
    \begin{tikzpicture}

    \draw[very thick] (0, 0) rectangle (16, 0.5);

     \draw[thick] (7.9, 0) -- (7.9, 0.5);
    \draw[thick] (9.1, 0) -- (9.1, 0.5);
    \node at (8.5, 0.25) {$f_k(x')$};

    \node at (8.5, 0.75) {input $x' \in \{0,1\}^d$};

    \draw[decorate, decoration={brace, amplitude=8pt, mirror}] (0.1, -0.1) -- (15.9, -0.1) 
        node[midway, below, yshift=-8pt] {$2^d$ bits; these give the truth table of $f_k$};
\end{tikzpicture}
\end{center}
\caption{How the value side $X_{\mathrm{value}}$ is labeled by a concept $c_k$ for a key $k$.}
\label{fig:value}
\end{figure}

\subsection{Proof of part (1) of \texorpdfstring{\Cref{thm:ICE-bad-news}}{Theorem~\ref{thm:ICE-bad-news}}:  no 
algorithm can efficiently learn ${\cal C}$ with $\kappa\eta$-rate nasty noise, for $\kappa>1/2$} \label{sec:proof-1}

Let $A$ be any $\poly(d)$-time 
algorithm which uses $n \leq \poly(d)$ many examples. 
We recall that by \Cref{rem:too-many-examples}, we may assume that $n \geq Cw/\eta$ for a suitably large absolute constant $C$ (we will use this later).

An outline of our argument is as follows:  We first describe an ``idealized'' adversary strategy.
We then argue that while an actual $\kappa\eta$-rate nasty-noise adversary cannot always carry out the idealized adversary strategy, it can carry out the idealized adversary strategy with very high probability, namely at least $1-\exp(-d)$ (this probability is over the random examples drawn by algorithm $A$).
Next, we argue that conditioned on the idealized adversary strategy being carried out, 
and since $\calF$ is a PRF,
algorithm $A$ has probability less than any $1/\poly(d)$ of 
constructing a hypothesis $h$ that has accuracy as high as $2\kappa'\eta + (1-2\kappa'\eta)({\frac 1 2} + 1/\poly(d)) = {\frac 1 2} + \kappa' \eta + 1/\poly(d) < {\frac 1 2 + \kappa \eta}.$
This gives part (1) of \Cref{thm:ICE-bad-news}.

\noindent{\bf The idealized adversary strategy.}
Intuitively, the idealized adversary strategy works as follows: for each block on the key side, the adversary selects half\footnote{Some care must be taken because of even/odd issues; see step~(2)(b) of the actual idealized adversary description below.} of the examples in that block and converts them into contradictory examples for the other half of the examples in that block. (The rationale for doing this is that given such a data set of corrupted examples, 
the learning algorithm will gain no information at all from the examples on the key side, so it will have to learn the concept $c_k$ using only examples from the value side; but this is cryptographically hard since on the value side $c_k$ is simply the pseudorandom function $f_k$.)

Formally, the idealized adversary strategy is as follows.

\begin{enumerate} 

\item 
Let $\bS=(\bx_1,\dots,\bx_n)$ be the initial multiset of $n$ unlabeled ``clean'' examples drawn i.i.d.~from ${\cal U}$, and let $\bS = \bS_1 \sqcup \cdots \sqcup \bS_w \sqcup \bS_{\val}$ be the partition of $\bS$ where $\bS_j = \bS \cap X_j$ for each $j \in [s]$ and $\bS_{\val}=\bS \cap X_{\val}$.

\item 
 For each $j \in [w]$, let $\bx_{j_1},\dots,\bx_{j_{|\bS_j|}}$ be the elements of $\bS_j.$
 For each $j \in [w]$: 

\begin{enumerate}

\item If $|\bS_j|$ is even, then the idealized adversary puts $j_1,\dots,j_{|\bS_j|/2}$ into the ``corruption set'' $\bZ' \subseteq [n].$ (Note that all of these examples have the same true label, which is $\Enc(k)_j$.)
For $1 \leq t \leq |\bS_j|/2$, the sample $(\bx_{j_t},c(\bx_{j_t})) = (\bx_{j_t},\Enc(k)_j)$ is replaced with $(\bx_{j_{t+|\bS_j|/2}},\overline{\Enc(k)_j})$.  Observe that with this choice of replacement examples, each replacement example $(\bx_{j_{t+|\bS_j|/2}},\overline{\Enc(k)_j})$ is a contradictory example with the example $(\bx_{j_{t+|\bS_j|/2}},c(\bx_{j_{t+|\bS_j|/2}}))=
(\bx_{j_{t+|\bS_j|/2}},\Enc(k)_j)$. 
\item If $|\bS_j|$ is odd,
then the idealized adversary puts $j_1,\dots,j_{\lceil |\bS_j|/2 \rceil}$ into $\bZ'$ (as in the previous bullet, all of these examples have the same true label $\Enc(k)_j$).  For $1 \leq t \leq \lfloor |\bS_j|/2 \rfloor$, the sample $(\bx_{j_t},c(\bx_{j_t})) = (\bx_{j_t},\Enc(k)_j)$ is replaced with $(\bx_{j_{t+\lceil |\bS_j|/2 \rceil}},\overline{\Enc(k)_j})$.
Similar to the even case, with this choice of replacement examples, for $1 \leq t \leq \lfloor |\bS_j|/2 \rfloor$ each replacement example $(\bx_{j_{t+\lceil |\bS_j|/2 \rceil}},\overline{\Enc(k)_j})$ is a contradictory example with the example $(\bx_{j_{t+\lceil |\bS_j|/2 \rceil}},c(\bx_{j_{t+\lceil |\bS_j|/2 \rceil}})) = (\bx_{j_{t+\lceil |\bS_j|/2 \rceil}},\overline{\Enc(k)_j})$.
Finally, the labeled example $(\bx_{j_{\lceil |\bS_j|/2 \rceil}},c(\bx_{j_{\lceil |\bS_j|/2 \rceil}}))=(\bx_{j_{\lceil |\bS_j|/2 \rceil}},\Enc(k)_j)$ is replaced with a correctly labeled uniform random example from the value side, i.e.~with $(\bx,c_k(\bx))$ where $\bx$ is uniform random over the value side $X_\val$.

\end{enumerate}

\end{enumerate}

Let $\bS_\nasty$ be the $n$-element data set which is the result of applying the idealized nasty adversary strategy to $\bS$.

\noindent {\bf A $\kappa\eta$-rate nasty noise adversary can carry out the idealized adversary strategy with extremely high probability.}
Recall that in actuality, a $\kappa \eta$-rate nasty noise adversary can corrupt a subset $\bZ$ of the $n$ indices in its sample as long as the marginal distribution of $|\bZ|$ is equal to $\Bin(n,\kappa\eta)$.
By inspection of the idealized adversary strategy, recalling that each example drawn from ${\cal U}$ lies on the key side with probability $2\kappa'\eta$, we see that the size $|\bZ'|$ of the corruption set for the idealized adversary is stochastically dominated by the random variable $\by$ whose distribution is given by ${\frac 1 2}\Bin(n,2\kappa' \eta) + w.$  (The $\Bin(n,2\kappa'\eta)$ comes from the fact that each of the $n$ examples drawn from ${\cal U}$ independently lands on the key side with probability $2\kappa' \eta$, and the extra ``$+w$'' comes from the fact that each of the $w$ blocks could require (at most) one extra corruption because of $|\bS_j|$ being odd.)
Next we will use the following claim:

\begin{claim} \label{claim:sd}
The distribution ${\frac 1 2}\Bin(n,2\kappa' \eta) + w$  is $(1-\zeta)$-stochastically dominated by $\Bin(n,\kappa \eta)$, where $\zeta = \exp(-d)$.
\end{claim}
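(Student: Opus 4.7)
\textbf{Proof plan for Claim~\ref{claim:sd}.} The plan is to exhibit a coupling that witnesses the stochastic dominance, and the simplest candidate is the \emph{independent} coupling: let $\bX \sim \Bin(n, 2\kappa'\eta)$ and $\bY \sim \Bin(n, \kappa\eta)$ be independent, and set $\bx := \tfrac12 \bX + w$ and $\by := \bY$. Then it suffices to show
\[
\Pr\bigl[\tfrac12 \bX + w \leq \bY\bigr] \geq 1 - \exp(-d).
\]
Note that $\E[\by] - \E[\bx] = \kappa\eta n - \kappa'\eta n - w = (\kappa - \kappa')\eta n - w$. Recalling $\kappa' = (\kappa + 1/2)/2$, we have $\kappa - \kappa' = (\kappa - 1/2)/2 > 0$, a positive absolute constant. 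Since $n \geq Cw/\eta$ for a suitably large constant $C$ (chosen below), this mean gap is at least $(\kappa - \kappa')Cw - w \geq \Omega(w)$ and in fact grows linearly with $n$, namely $\Omega(\eta n)$, which is the key driver.

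First I would split the target inequality using two threshold values $\alpha\eta n$ and $\beta\eta n$ with $\kappa' < \beta < \alpha < \kappa$; concretely I take $\alpha := (3\kappa + \kappa')/4$ and $\beta := (\kappa + 3\kappa')/4$, so that $\alpha - \beta = (\kappa - \kappa')/2$. It then suffices to show each of
\[
\Pr\bigl[\tfrac12 \bX > \beta\eta n\bigr] \leq \tfrac12 \exp(-d),
\qquad
\Pr\bigl[\bY < \alpha\eta n\bigr] \leq \tfrac12 \exp(-d),
\]
together with the deterministic inequality $\beta\eta n + w \leq \alpha\eta n$, i.e.\ $(\kappa - \kappa')\eta n / 2 \geq w$. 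This last inequality holds for $C \geq 2/(\kappa - \kappa')$ because $\eta n \geq Cw$.

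Next I would establish the two tail bounds via the multiplicative Chernoff bound (Fact~\ref{fact:multiplicative-Chernoff-bound}). For $\bY$, write $\alpha\eta n = (1 - \delta)\kappa\eta n$ with $\delta = (\kappa - \alpha)/\kappa = (\kappa - \kappa')/(4\kappa)$, a positive constant; Chernoff yields $\Pr[\bY < \alpha\eta n] \leq \exp(-\delta^2 \kappa \eta n / 2)$. For $\bX$, write $2\beta\eta n = (1 + \delta')\cdot 2\kappa'\eta n$ with $\delta' = (\beta - \kappa')/\kappa' = (\kappa - \kappa')/(4\kappa')$, again a positive constant; Chernoff yields $\Pr[\bX > 2\beta\eta n] \leq \exp(-\delta'^2 \cdot 2\kappa'\eta n / (2 + \delta'))$. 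Both exponents are of the form $-\Omega(\eta n)$, where the hidden constant depends only on $\kappa$. Using $n \geq Cw/\eta$ and the fact that the code maps $d$-bit messages to $w$-bit codewords (so $w \geq d$), we get $\eta n \geq Cw \geq Cd$, and choosing $C$ large enough in terms of $\kappa$ makes both tail probabilities at most $\tfrac12 \exp(-d)$.

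The only subtlety, which I don't expect to be a real obstacle, is keeping the constant $C$ consistent: the same $C$ must simultaneously (i)\ force the deterministic gap $(\kappa - \kappa')\eta n / 2 \geq w$ and (ii)\ make the Chernoff exponents dominate $d$. Both requirements are linear in $C$ with constants depending only on $\kappa$, so a single large enough absolute constant $C = C(\kappa)$ works and can be absorbed into the ``$C$'' assumed in the remark just before the claim.
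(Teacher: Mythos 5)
Your proposal is correct and follows essentially the same route as the paper's proof: both interpolate between the two means with intermediate thresholds (the paper uses $\kappa' + (\kappa-\kappa')/3$ and $\kappa' + 2(\kappa-\kappa')/3$ where you use the quarter points), apply multiplicative Chernoff bounds to each binomial, use $n \geq Cw/\eta$ (and $w \geq d$) to absorb the additive $w$ and push each tail below $\tfrac12\exp(-d)$, and conclude by a union bound, which is exactly the coupling you make explicit. The differences are purely cosmetic.
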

\begin{proof}
This follows easily from multiplicative Chernoff bounds and the bound
$n \geq Cw/\eta$ (recall the beginning of \Cref{sec:proof-2}). In more detail, let $\kappa'' = \kappa' + {\frac {\kappa - \kappa'} 3}$ and let $\kappa''' = \kappa' + {\frac {2(\kappa-\kappa')} 3}$.  The expected value of a draw from ${\frac 1 2} \Bin(n,2\kappa'\eta)$ is $\kappa'\eta n$, so a multiplicative Chernoff bound using the lower bound on $n$ (and recalling that $w \geq d$) gives us that with probability at least $1-\zeta/2$, a draw from ${\frac 1 2} \Bin(n,2\kappa'\eta)$ takes value at most $\kappa'' \eta n$.  Similarly, a multiplicative Chernoff bound gives that with probability at least $1-\zeta/2$, a draw from $\Bin(n,\kappa \eta)$ takes value at least $\kappa''' \eta n$.  Since the lower bound on $n$ implies that $\kappa'' \eta n + w < \kappa''' \eta n$, the claim follows.
\end{proof}

Given \Cref{claim:sd}, it follows that $|\bZ'|$ is $(1-\exp(-d))$-stochastically dominated by $\Bin(n,\kappa \eta)$. This means that a $\kappa\eta$-rate nasty-noise adversary can carry out the idealized adversary strategy with probability $1-\exp(-d)$ over the choice of the $n$ random examples $\bS$ drawn by the ICE-algorithm.

\noindent {\bf The end of the argument.}  Let us say that an outcome of the clean sample $\bS$ is
\emph{vulnerable} if the $\kappa \eta$-rate nasty-noise adversary can carry out the idealized adversary strategy described above; so we have argued above that $\Pr[\bS$ is vulnerable$] \geq 1-\exp(-d)$.

By inspection of the idealized adversary strategy, it is clear that if $\bS$ is vulnerable, then all of the examples in $\bS_\nasty$ which are on the key side come in contradictory pairs, and that moreover the key-side examples in $\bS_\nasty$ are uniformly distributed over $X_\key$.
Intuitively, this means that there is ``no useful information'' for algorithm $A$ in the key-side examples, so consequently algorithm $A$ must learn using only the value-side examples; but this is a well-known cryptographically hard problem (essentially as an immediate consequence of the definition of pseudorandom functions).
To make this more precise, we argue as follows.

Let $\tilde{A}$ be a polynomial-time algorithm which is given as input a data set $\bT$ of $n$ i.i.d.~noiseless labeled examples $(\tilde{\bx}^1,c(\tilde{\bx}^1)),\dots,(\tilde{\bx}^{n},c(\tilde{\bx}^{n}))$ where each $\tilde{\bx}^i$ is drawn uniformly from $X_\val$.

Via a standard argument, it follows directly from the definition of a PRF that no polynomial-time algorithm $\tilde{A}$, given such a data set $\bT$, can with probability even $1/\poly(d)$ construct a hypothesis which has accuracy $1/2 + 1/\poly(d)$ on value-side examples.
On the other hand, we have the following claim:

\begin{claim} \label{claim:tilde-simulate-A}
A polynomial-time algorithm $\tilde{A}$ can, given the $n$-element data set $\bT$, construct a data set $\bT_\nasty$ which is distributed precisely like $\bS_\nasty$.
\end{claim}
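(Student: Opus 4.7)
The high-level plan is for $\tilde{A}$ to internally sample the latent randomness that governs $\bS$ --- namely, which of the $n$ positions falls in each block $X_j$ of $X_\key$ and which falls in $X_\val$ --- and then to use the value-side samples $(\tilde{\bx}^i, c(\tilde{\bx}^i))$ from $\bT$ to populate (i) the positions assigned to $X_\val$ and (ii) the ``extra'' value-side sample that the idealized adversary introduces whenever a block size $|\bS_j|$ is odd. The key-side positions, which must look like contradictory pairs, are filled in with freshly generated pairs $(\bx,+1),(\bx,-1)$ for $\bx$ drawn uniformly from the appropriate $X_j$. The crucial observation is that --- under a mild additional randomization of the idealized adversary --- the joint distribution of the key-side portion of $\bS_\nasty$ does not depend on $k$ or on $\Enc(k)$, and so can be simulated by $\tilde{A}$ without ever seeing either.

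\textbf{The simulation.} For each $i \in [n]$, $\tilde{A}$ independently draws an indicator $\bz_i$ taking the value $X_j$ with probability $2\kappa'\eta/w$ for each $j \in [w]$ and $X_\val$ with probability $1 - 2\kappa'\eta$; let $P_j = \{i : \bz_i = X_j\}$, $\bm_j = |P_j|$, $P_\val = \{i : \bz_i = X_\val\}$, $\bm_\val = |P_\val|$. For each block $j$ with $\bm_j$ odd, $\tilde{A}$ uniformly picks one position $i_j^\star \in P_j$ to be the block's ``value-side escape,'' and partitions $P_j \setminus \{i_j^\star\}$ uniformly at random into $\lfloor \bm_j/2 \rfloor$ unordered pairs; for each block $j$ with $\bm_j$ even, $\tilde{A}$ partitions $P_j$ uniformly at random into $\bm_j/2$ pairs. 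For each pair $\{p,q\}$ so formed, $\tilde{A}$ draws $\tilde{\bx} \sim \Unif(X_j)$ and an independent uniform $\bb \in \{-1,+1\}$, and writes $(\tilde{\bx}, \bb)$ into one position of the pair and $(\tilde{\bx}, -\bb)$ into the other (with the roles of $p$ and $q$ chosen by an independent fair coin). Finally, at each position $i \in P_\val \cup \{i_j^\star : \bm_j \text{ odd}\}$, $\tilde{A}$ places the next unused labeled example from $\bT$.

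\textbf{Correctness and the main obstacle.} The sample budget checks out: the total number of $\bT$-samples consumed is $\bm_\val + \#\{j : \bm_j \text{ odd}\} \le \bm_\val + \sum_j \bm_j = n$, and everything $\tilde{A}$ does is polynomial time. For the distributional match, the joint law of $(P_1,\dots,P_w,P_\val)$ coincides by construction with the law of the block-membership pattern of $\bS$. Conditioned on this pattern, the value-side portion of $\bS_\nasty$ is an i.i.d.\ draw of $(\bx, c_k(\bx))$ with $\bx \sim \Unif(X_\val)$ --- including the ``extras'' introduced by odd blocks --- and $\bT$ supplies exactly these. The main obstacle, and the only nontrivial point, concerns the key-side: as literally written in Section~\ref{sec:proof-1}, the label of a block-$j$ position is a deterministic function of $\Enc(k)_j$ and the position's rank within the block, which $\tilde{A}$ cannot reproduce without knowing $\Enc(k)$. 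I would handle this by noting that the idealized adversary is free to further randomize its strategy --- choosing uniformly at random which $\lfloor |\bS_j|/2 \rfloor$ of the block-$j$ positions to treat as the ``anchor'' (kept) side vs.\ the ``flipped'' side, and, when $|\bS_j|$ is odd, which single position plays the value-side escape role. This remains a valid $\kappa\eta$-nasty strategy, since the budget analysis of Claim~\ref{claim:sd} depends only on $|\bZ'|$, not on which indices $\bZ'$ contains. After this randomization, the joint distribution of (input, label) pairs at the block-$j$ positions is exactly: $\lfloor |\bS_j|/2 \rfloor$ i.i.d.\ uniform inputs from $X_j$, each doubled across a uniformly random pair of positions with oppositely-labeled contradictions in a uniformly random orientation --- precisely matching $\tilde{A}$'s construction, and in particular independent of $\Enc(k)$. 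Combined with the matching value-side distribution, this establishes that $\bT_\nasty$ and $\bS_\nasty$ have identical distributions.
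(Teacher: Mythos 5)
Your proposal is correct and follows essentially the same route as the paper's own proof: sample block-membership indicators with the right probabilities, use the fresh value-side examples from $\bT$ to fill the value-side positions and the single ``extra'' position arising from each odd-size block, and synthesize the key-side positions as contradictory pairs built from uniform random points of the corresponding block. The one place you go beyond the paper is in making explicit the symmetrization of the idealized adversary --- randomizing which member of each contradictory pair sits at which position (and which position serves as the odd-block escape) --- whereas the paper's write-up fixes the replaced indices as $j_1,\dots,j_{\lceil |\bS_j|/2\rceil}$ for an unspecified enumeration and declares the distributional match ``clear from inspection.'' Your observation is the right way to discharge this: with the orientation randomized, the key-side portion of $\bS_\nasty$ becomes a uniform pairing with uniform label orientation and is genuinely independent of $\Enc(k)_j$, and the modification costs nothing because the feasibility analysis (\Cref{claim:sd}) depends only on $|\bZ'|$, not on which indices are corrupted. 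Your accounting that at most $\bm_\val + \#\{j:\bm_j \text{ odd}\} \le n$ examples from $\bT$ are consumed matches the paper's implicit bookkeeping, so the simulation is well-defined and the distributions coincide.
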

\begin{proof}
$\tilde{A}$ carries out the following process:  For $i=1,\dots,n$, let $\bb_i$ be an independent random variable which takes each value in $[w]$ with probability  $2\kappa'\eta/w$ (recall that this is precisely the probability that a uniform random example lands in any particular block on the key side), and takes a special value $\bot$ with the remaining $1-2\kappa'\eta$ probability.  
Let $\bT_j \subseteq [n]$ be the elements $i$ such that $\bb_i=j$.
The elements of $\bT_j$ will correspond precisely to the elements $j_1,\dots,j_{|\bS_j|}$ in step~2 of the idealized adversary strategy for which $\bx_{j_1},\dots,\bx_{j_{|\bS_j|}}$ belong to $\bS_j$.

Given the outcomes of $\bb_1,\dots,\bb_n$, algorithm $\tilde{A}$ uses those outcomes and the elements of $\bT$ to carry out the idealized adversary strategy and create the data set $\bT_\nasty$. In more detail, for each $i \in [n]$ for which $\bb_i=\bot$, 
$\tilde{A}$ uses a ``fresh'' labeled example from $\bT$ to play the role of $(\bx_i,c(\bx_i))$ (these correspond to the elements of $\bS_\val$).  For each $j \in [w]$, for the $|\bT_j|$ elements of $\bT_j$,

\begin{itemize}

\item If $|\bT_j|$ is even, then $\tilde{A}$ creates $|\bT_j|/2$ contradictory pairs of labeled examples $(\bx,-1),(\bx,1)$, where each pair is obtained by independently choosing the example $\bx$ to be a uniform random element of $X_j$.

\item If $|\bT_j|$ is odd, then $\tilde{A}$ creates $\lfloor |\bT_j|/2 \rfloor$ contradictory pairs of labeled examples as described above, and $\tilde{A}$ uses one more ``fresh'' labeled example from $\bT$ to play the role of  the correctly labeled uniform random example from the value side that is used to replace $(\bx_{j_{\lceil |\bS_j|/2 \rceil}},c(\bx_{j_{\lceil |\bS_j|/2 \rceil}}))$ in $\bS_j$, as described in the last sentence of step~2(b) of the idealized adversary strategy

\end{itemize}

It is clear from inspection that the distribution of $\bT_\nasty$ is identical to the distribution of $\bS_\nasty$.
\end{proof}

Since $\Pr[\bS$ is vulnerable$] \geq 1-\exp(-d)$, it follows that with overall probability at least $1 - \exp(-d) - 1/\poly(d)$, the accuracy of $A$'s final hypothesis is at most 
\[
(1-2\kappa'\eta)\pbra{{\frac 1 2} + 1/\poly(d)} + 2 \kappa' \eta= {\frac 1 2} + \kappa' \eta + 1/\poly(d) < {\frac 1 2} + \kappa \eta,
\] where the $2 \kappa' \eta$ factors on the left-hand side are because a random example drawn from ${\cal U}$ lands on the key side with probability $2 \kappa' \eta$.  This concludes the proof of part (1) of \Cref{thm:ICE-bad-news}.

\subsection{Proof of part (2) of \texorpdfstring{\Cref{thm:ICE-bad-news}}{Theorem~\ref{thm:ICE-bad-news}}:  an ICE-algorithm can efficiently learn with $\eta$-rate strong malicious noise} \label{sec:proof-2}

We first describe a learning algorithm which is called $A_\mal$; it will be clear that $A_\mal$ is an ICE-algorithm and that it uses $\poly(d)$ many examples and runs in $\poly(d)$ time. We then argue that $A_\mal$ learns ${\cal C}$ under the uniform distribution $\calU$ to accuracy $1-\eps - 8\eta(1+o(1))$ and confidence $1-\exp(-\Omega(d))$ in the presence of strong malicious noise at rate $\eta.$

Throughout the subsequent discussion we let $c = c_{m^\star}$ 
be the target concept from ${\cal C}$ (which is of course unknown to the learning algorithm).

\subsubsection{A useful tool:  List-decoding ``continuous received words'' via randomized rounding}

We define the randomized function $\Round: \R \to \bits$ as follows:
\begin{align*}
& \bullet \ \ \ \text{for~}v \in [-1,1], \quad \Round(v) = 
\begin{cases}
1 & \text{with probability~}{\frac {1+v} 2}\\
-1 &\text{with probability~}{\frac {1-v} 2},
\end{cases}\\
& \bullet \ \ \ \text{for~}|v|>1, \quad  \ \ \ \ \ \Round(v) = \sign(v).
\end{align*}
Note that $\E[\Round(v)]=v$ if $|v| \leq 1$.

 We abuse notation and for $v = (v_1,\dots,v_w) \in \R^w$, we write $\Round(v)$ to denote the vector-valued random function
\[
\Round(v)=(\Round(v_1),\dots,\Round(v_w)) \in \bits^w.
\]

We will need the following basic technical lemma:

\begin{lemma} \label{lem:rounding}
Let $v \in \R^w$, $u \in \bits^w$ be such that $\|v-u\|_1 \leq (1-\kappa)w$ where $\kappa>0$ is a positive constant.  Then with probability $1 - \exp(-\Omega(w))$ we have 
\[
\ham(\Round(v),u) \leq {\frac 1 2} \pbra{1 - {\frac \kappa 2}}w.
\]
\end{lemma}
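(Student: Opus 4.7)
The plan is to reduce the lemma to a standard Chernoff/Hoeffding tail bound by first showing that the mean of $\ham(\Round(v),u)$ is upper bounded by $\|v-u\|_1/2$, and then exploiting the independence of the coordinate-wise rounding decisions.

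First I would analyze a single coordinate $i \in [w]$ and show that $p_i := \Pr[\Round(v_i) \neq u_i]$ satisfies $p_i \leq |v_i - u_i|/2$. This is a short case analysis: when $|v_i| \leq 1$, the rounding probabilities give exactly $p_i = (1 - u_i v_i)/2 = |v_i - u_i|/2$ (using $u_i \in \{-1,+1\}$ to handle both signs of $u_i$); when $|v_i|>1$, either $\sign(v_i)=u_i$ and $p_i=0 \leq |v_i-u_i|/2$ trivially, or $\sign(v_i) \neq u_i$ in which case $|v_i - u_i| \geq 2$ and $p_i = 1 \leq |v_i-u_i|/2$. Summing over $i$, the expectation $\mu := \E[\ham(\Round(v),u)] = \sum_i p_i$ is at most $\|v-u\|_1/2 \leq (1-\kappa)w/2$.

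Next, since the rounding on each coordinate is carried out independently, the indicator variables $\mathbf{1}[\Round(v_i) \neq u_i]$ are independent $[0,1]$-valued random variables. I would apply Hoeffding's inequality (\Cref{fact:hoeffding-inequality}) with deviation parameter $t = \kappa w / 4$ to the sum $\bX := \ham(\Round(v),u)$, yielding
\[
\Pr\!\bra*{\bX \geq \mu + \tfrac{\kappa w}{4}} \;\leq\; \exp\!\pbra*{-\tfrac{\kappa^2 w}{8}}.
\]
The elementary identity $(1-\kappa)w/2 + \kappa w/4 = (1-\kappa/2)w/2$ then gives the desired conclusion: with probability $1 - \exp(-\Omega(w))$ we have $\ham(\Round(v),u) \leq \tfrac{1}{2}(1-\kappa/2)w$.

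I don't anticipate any real obstacle here; the lemma is essentially a ``rounding + concentration'' statement and both pieces are routine. The only subtlety worth being explicit about is the case analysis showing $p_i \leq |v_i - u_i|/2$ uniformly in $v_i$ (including the clipped regime $|v_i|>1$), since this is what allows $\|v-u\|_1$ to cleanly control $\E[\bX]$ and thereby leave the slack $\kappa w/4$ that Hoeffding exploits.
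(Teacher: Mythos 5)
Your proposal is correct and follows essentially the same route as the paper: bound $\E[\ham(\Round(v),u)]$ coordinate-wise by $\tfrac12\|v-u\|_1 \leq \tfrac12(1-\kappa)w$, then use independence of the coordinate roundings and a standard concentration bound (the paper invokes a multiplicative Chernoff bound where you use additive Hoeffding, an immaterial difference). Your explicit case analysis for the clipped regime $|v_i|>1$ is a fine way to justify the per-coordinate inequality the paper states without detail.
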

\begin{proof}
For each $i \in [w]$ the random variables $\ham(\Round(v_i),u_i)$ are independent Bernoulli random variables whose sum is $\ham(\Round(v),u)$. Moreover, for each $i \in [w]$ we have 
\[
\E[\ham(\Round(v_i),u_i)] \leq {\frac 1 2} |v_i - u_i| \quad \text{(with equality if $|v| \leq 1$)},
\] 
so $\E[\ham(\Round(v),u)]\leq{\frac 1 2} \|v-u\|_1 \leq {\frac 1 2} (1-\kappa) w.$
The lemma now follows from a standard multiplicative Chernoff bound.
\end{proof}

\Cref{lem:rounding} can be interpreted in the following way:  we can think of a vector $v \in \R^w$ as a ``continuous received word'' for a bit-flip error-correcting code, and of $u \in \bits^w$ as an (unknown) codeword such that the $\ell_1$ distance $\|v-u\|_1$ from $v$ to $u$ is bounded below $w$ by some $\Omega(w)$ amount.
\Cref{lem:rounding} gives a very simple randomized procedure, which with extremely high probability, converts $v$ into a Boolean vector $\Round(v)$ for which the Hamming distance $\ham(\Round(v),u)$ is bounded below ${\frac w 2}$ by some $\Omega(w)$ amount. This Hamming distance bound is useful because it will enable us to apply list-decoding to $\Round(v).$

\subsubsection{The algorithm $A_\mal$ and some intuition for it}

Let $n$ be a suitably large polynomial in $d/\eps$.
We define the quantity
\begin{equation}
\label{eq:R}
R := {\frac {2\kappa' \eta} w} n,
\end{equation}
which will play an important role in our algorithm and analysis. Note that for each $i \in [w]$, the value $R$ is the expected number of correctly-labeled examples from block $i$ which would appear in the 
set 
$S'$ (see Step~1 of Algorithm~$A_\mal$) if there were no noise.

Let $S=((x_1,y_1),\dots,(x_n,y_n))$ be the data set of labeled examples from $X \times \bits$ that algorithm $A_\mal$ is given as the output of an $\eta$-noise-rate strong malicious noise process.  Let  $S_1 \sqcup \cdots \sqcup S_w \sqcup S_{\val}$ be the partition of $S$ given by $S_i = S \cap X_i$, $S_\val = S \cap X_\val$.
Algorithm $A_\mal$ is given in \Cref{fig:Amal}.

\begin{figure}[H] 
  \captionsetup{width=.9\linewidth}
    
    \begin{tcolorbox}[colback = white,arc=1mm, boxrule=0.25mm]
    \vspace{2pt} 
$A_\mal(S):$\vspace{6pt}
    
    \textbf{Input:} A data set $S=((x_1,y_1),\dots,(x_n,y_n))$ that is the output of an $\eta$-noise-rate strong malicious noise process on a ``clean'' $n$-element data set of examples drawn from ${\cal U}$ and labeled by $c=c_{m^\star}$.\vspace{6pt}

    \textbf{Output:} A hypothesis $h$ for $c$ that with probability $1 - \exp(-\Omega(d))$ has error rate at most $\eps + 8\eta(1+o(1))$.\vspace{6pt}

    \begin{enumerate}
            \item[]\textbf{1. Remove contradictory examples:} 
            Remove contradictory examples from $S$ as described in Step~1 of \Cref{def:ICE-algorithm}. Let $S' = ((x_{i_1},y_{i_1}),\dots,(x_{i_{n'}},y_{i_{n'}}))$ be the subsequence of $n' \leq n$ labeled examples from $S$ consisting of the examples that survive this removal process, and let $S'_1 \sqcup \cdots \sqcup S'_w \sqcup S'_{\val}$ be the partition of $S'$ defined by $S'_\ell = \{(x_{i_j},y_{i_j}) \in S': x_{i_j} \in X_\ell\}$, $S'_\val = \{(x_{i_j},y_{i_j}) \in S' : x_{i_j} \in X_\val\}.$
            
             \vspace{6pt}
            \item[]\textbf{2. Compute a (real-valued) guess for each key bit:}
            For each $i \in [w],$ let $v_i \in \R$ be the value 
            $v_i := \keybitguess(S'_i)$ 
            (see \Cref{eq:keybitguess}).
This defines a vector $v=(v_1,\dots,v_w) \in [-1,1]^w.$  
             \vspace{6pt}
	\item[]\textbf{3. Round the guess vector to a Boolean vector:}
Let $\bz=\Round(v)=
(\Round(v_1),\dots,\Round(v_w))$ be the result of applying $\Round$ to each coordinate of $v$.
	\vspace{6pt}
	\item[]\textbf{4. Decode the Boolean vector:}
Run the decoding algorithm $\Dec$ on $\bz$ to obtain $m^{(1)},\dots,m^{(L')}$, where each $m^{(i)}$ is a string in $\bits^d$ and $L' \leq L = \poly(d)$.	\vspace{6pt}
	\item[]\textbf{5. Hypothesis testing to choose a final hypothesis:}
For each $i \in [L']$, perform hypothesis testing of the hypothesis $c_{m^{(i)}}$ using 
$S'$ 
(more precisely, compute the empirical fraction $\widehat{\eps}_i$ of examples $(x,y) \in S'$ for which $c_{m^{(i)}}(x) \neq y$, and outputs as the final hypothesis $h$ the $c_{m^{(i)}}$ for which $\widehat{\eps}_i$ is smallest, breaking ties arbitrarily).     
    \end{enumerate}
    \end{tcolorbox}
\caption{The algorithm $A_\mal$ for learning ${\cal C}_{\eta,\kappa}$ in the presence of $\eta$-rate strong malicious noise.}
\label{fig:Amal}
\end{figure}

We now describe what is the value $\keybitguess(\cdot)$ that $A_\mal$ constructs in Step~2.  Given a data set $S'_{i}$ that contains labeled examples from $X_i$, for $b \in \bits$ let $n_{i,b}$ be the number of examples in $S'_{i}$ with label $b$.
The value of $\keybitguess(S'_{i})$ is
\begin{equation} \label{eq:keybitguess}
\keybitguess(S'_{i}) := {\frac {n_{i,1} - n_{i,-1}} {R(1-\eta)}} \in \R.
\end{equation}
We give intuition for this definition below: 

\begin{itemize}

\item Recall that if there were no noise (i.e. $\eta=0$), then all examples in $S'_{i}$ would have the same (correct) label bit $b=\Enc(m^\star)_i$, and we would have $n_{i,-b}=0$ and $n_{i,b} \approx R$. So in this case we would have $\keybitguess(S'_{i}) \approx b/(1-\eta)=b$, i.e.~we would have a very good ``real-valued guess'' of the correct value of bit $b$.  (Intuitively, the $1-\eta$ in the denominator is because in the presence of noise at rate $\eta$, we only expect there to be $R(1-\eta)$ non-noisy examples in block $i$.)

\item Because of the initial step of ignoring contradictory examples, intuitively the malicious adversary must introduce $\rho R$ many mislabeled examples 
into block $i$ in order to decrease the value of $n_{i,b}$ from $R$ to $(1-\rho)R$.
Similarly, the adversary must introduce an additional $\rho R$ many mislabeled examples into block $i$ to increase the value of $n_{i,-b}$ from 0 to $\rho R$.

\end{itemize}

\begin{remark} \label{remark:keybitguess}
We remark that our randomized rounding procedure $\Round(\cdot)$ ``fully takes advantage'' of the fact that $\keybitguess(\cdot)$ outputs a real value to force the adversary to spend many corruptions in order to influence the final bit that is generated for each block.  This can be seen by contrasting the performance of $\Round(\cdot)$ with the performance of an alternative deterministic rounding scheme  in the following example. 

Consider the natural deterministic rounding procedure $\mathrm{DetRound}: \R \to \bits$ which simply outputs $\mathrm{DetRound}(v)=\sign(v)$.  Suppose that the noiseless data set contained $R$ examples in block $i$ that are correctly labeled with bit $b$.  

\begin{itemize}

\item If we were using $\mathrm{DetRound}$ rather than $\mathrm{Round}$, then an adversary could cause $\mathrm{DetRound}$ to output the wrong bit for block $i$ with probability 1 by generating only $R+1$ noisy examples. This is done by having $R$ of the noisy examples be contradictory examples for the correctly labeled examples, and having one additional example in block $i$ labeled with the incorrect bit $-b$.  Since contradictory examples are removed, for block $i$ we would have $n_{i,b}=0,n_{i,-b}=1$, and the output of $\keybitguess$ on that block would be $-b/R$, resulting in an output of $-b$ from $\mathrm{DetRound}.$

\item In contrast, in order to force $\Round$ to output the wrong bit with probability 1, an adversary would need to generate $2R$ noisy examples:  $R$ contradictory examples for the correctly labeled examples to get rid of them and cause $n_{i,b}=0$, and an additional $R$ incorrectly labeled examples to cause $n_{i,-b}=1$.  (If the adversary used only $R+1$ noisy examples as in the previous bullet, then $\Round$ would output the correct bit with probability almost 1/2.)

\end{itemize}

\end{remark}

\subsubsection{Useful notation} \label{sec:useful-notation}

While the analysis is not particularly complicated, it requires keeping track of a number of different quantities; doing this is made easier with some systematic notation, which we provide in this subsection.

\noindent {\bf Notation pertaining to the original data set (before contradictory examples are removed):}

\begin{itemize}

\item $\bS^\original = (\bx_1,c(\bx_i)),\dots,(\bx_n,c(\bx_n))$ is the original data set of $n$ i.i.d.~``clean'' labeled examples drawn from ${\cal D}$ in the first phase of the strong malicious noise process.

\item $\bZ_{\mal} \subseteq [n]$ is the set of indices that are randomly selected for corruption in the second phase of the strong malicious noise process (recall \Cref{remark:strong-malicious}).

\item $\bS^{\original,\replaced}=((\bx_{j},c(\bx_{j})))_{j \in \bZ_\mal}$ is the subsequence of the original examples that are replaced by the $\eta$-noise-rate strong malicious adversary.  

\item $\bS^{\original,\survivenoise}$ is the elements of $\bS^\original$ that ``survive'' into $S$, so $\bS^{\original,\survivenoise} = \bS^\original \setminus \bS^{\original,\replaced}.$

\item $S^{\new}=((x'_{j},y'_{j}))_{j \in \bZ_\mal}$ is the subsequence of noisy examples that the malicious adversary uses to replace the elements of $\bS^{\original,\replaced}$.  So the original $n$-element data set $S$ that is input to the algorithm is 
\[
S = \bS^{\original,\survivenoise} \sqcup S^{\new}.
\]

\item For each of the above sets, we append a subscript $_i$ to denote the subset consisting of those examples which lie in the $i$-th block of the key side, and we use a subscript $_\val$ to denote the subset of those examples which lie on the value side.
So, for example, $S^\new_i$ is the subset of labeled examples that the malicious adversary introduces which belong to block $i$.

\end{itemize}

\noindent {\bf Breaking down the new (noisy) examples in more detail.} We distinguish three different kinds of noisy examples (elements of $S^\new$) that the malicious adversary may introduce:
\[
S^\new_i =  S^{\new,\correct}_i \sqcup S^{\new,\incorrect,\contradictory}_i \sqcup S^{\new,\incorrect,\noncontradictory}_i
\]
where

\begin{itemize}

\item $S^{\new,\correct}_i$ consists of the examples in $S^\new_i$ which have the correct label $\Enc(m^\ast)_i$.

\item $S^{\new,\incorrect,\contradictory}_i$ consists of the examples in $S^\new_i$ which have the incorrect label $-\Enc(m^\ast)_i$ and moreover contradict some other example in $S_i$ in the sense of \Cref{def:ICE-algorithm}.

\item $S^{\new,\incorrect,\noncontradictory}_i$ consists of the examples in $S^\new_i$ which have the incorrect label $-\Enc(m^\ast)_i$ and do not contradict some other example in $S_i$.

\end{itemize}

We give concise variable names for the sizes of some of the sets defined above that will be of particular interest:
\begin{align*}
\balpha_i &:= |\bS^{\original,\survivenoise}_i| \tag{note bold font because this is a random variable}\\
\beta_i &:= |S^{\new,\correct}_i|\\
\gamma_i &:= |S^{\new,\incorrect,\contradictory}_i|\\
\delta_i &:= |S^{\new,\incorrect,\noncontradictory}_i|.
\end{align*}
so 
\begin{align}
|S_i| &= \balpha_i + \beta_i + \gamma_i + \delta_i \nonumber\\
|\bZ_\mal| = |S^\new| &= \sum_{i=1}^w \beta_i + \gamma_i + \delta_i.
\label{eq:Zmal}
\end{align}

\noindent {\bf Notation capturing the effect of ignoring contradictory examples.}
As described in step~1 of the algorithm, the first thing that the ICE-algorithm $A_\mal$ must do is remove contradictory examples to construct the $n'$-element subsequence $S' = ((x_{i_1},y_{i_1}),\dots,(x_{i_{n'}},y_{i_{n'}}))$ of $S$.  
The effect of ignoring contradictory examples is to remove all $\gamma_i$ of the examples in $S^{\new,\incorrect,\contradictory}_i$, as well as $\gamma_i$ of their doppelgangers in $\bS^{\original,\survivenoise}_i$ and $S^{\new,\correct}_i$.
Some useful notation:

\begin{itemize}

\item $\bS^{\original,\survivenoise,\surviveice}_i$ consists of the examples in $\bS^{\original,\survivenoise}_i$ that are not removed in Step~1 of $A_\mal.$

\item $S^{\new,\correct,\surviveice}_i$ consists of the examples in $S^{\new,\correct}_i$ that are not removed in Step~1 of $A_\mal$.

\end{itemize}

Writing
\begin{align*}
\balpha'_i &:= |\bS^{\original,\survivenoise,\surviveice}_i|\\
\beta'_i &:= |S^{\new,\correct,\surviveice}_i|,
\end{align*}
we have
\begin{align}
\balpha'_i + \beta'_i &= \balpha_i + \beta_i - \gamma_i, \label{eq:pickle}\\
S'_i &= \bS^{\original,\survivenoise,\surviveice}_i \sqcup S^{\new,\correct,\surviveice}_i \sqcup S^{\new,\incorrect,\noncontradictory}_i, \nonumber\\
|S'_i| &= \balpha'_i + \beta'_i + \delta_i = \balpha_i + \beta_i - \gamma_i + \delta_i. \nonumber
\end{align}

\subsubsection{Useful claims about typical events} \label{sec:useful-typical}

We require two claims, both of which essentially say that with high probability over the draw of a ``clean'' random sample and the random choice of which examples can be corrupted by the strong malicious noise process, 
various sets have sizes that are close to their expected sizes.  These claims are easy consequences of standard tail bounds, namely \Cref{fact:hoeffding-inequality,} and \Cref{fact:multiplicative-Chernoff-bound}.
We record the statements that we will need below.

First some setup:  Observe that for each $i \in [w]$ we have 
\[
\E[\balpha_i]=R(1-\eta),
\]
and that moreover we have
\[
\E[|\bZ_\mal|] = \eta n, \quad
\]
We define the quantity
\[
\Delta := n^{0.51},
\]
which should be thought of as an ``allowable'' amount of error between the actual values of various quantities and their expected values. We write
\[
x \approx_{\Delta} y
\]
as shorthand for $x \in [y - \Delta, y + \Delta]$.

\begin{lemma}
[Sets typically have the ``right'' size]
\label{lem:typical}
For a suitably large value of $n=\poly(d,1/\eta,1/\eps)=\poly(d/\eps)$, with probability $1-\exp(-\Omega(d))$ over the draw of $(\bS^\original,\bZ_\mal)$ and the internal randomness of $A_\mal$, both of the following statements hold:

\begin{itemize}

\item [(A)] $\balpha_i \approx_{\Delta} 2R(1-\eta)$ for each $i \in [w]$; 

\item [(B)] $|\bZ_\mal| \approx_{\Delta} \eta n$ (and hence  $|S^\new| \approx_{\Delta}  \eta n$, since $|S^\new|=|\bZ_\mal|$).

\end{itemize}

\end{lemma}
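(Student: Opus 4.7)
The plan is to treat both parts of the lemma as routine concentration statements, established by applying Hoeffding's inequality (\Cref{fact:hoeffding-inequality}) to sums of i.i.d.~$\{0,1\}$-valued random variables and then union-bounding over the $w$ blocks. The one parameter that has to be tracked is the gap $\Delta = n^{0.51}$: since $\Delta^2/n = n^{0.02}$ and we are free to take $n = \poly(d/\eps)$ to be any sufficiently large polynomial, we may arrange that $n^{0.02} \geq d \cdot \log w \cdot \omega(1)$, so each individual Hoeffding application gives failure probability $\exp(-\Omega(d))$ with enough slack to absorb the union bound over $w = \poly(d)$ blocks.

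For part (B), I would simply observe that $|\bZ_\mal|$ is by definition a sum of $n$ independent $\Ber(\eta)$ random variables (one per example in the first phase of the strong malicious noise process), so $|\bZ_\mal| \sim \Bin(n,\eta)$ with mean $\eta n$. Hoeffding's inequality then gives
\[
\Pr\bracket*{\big||\bZ_\mal| - \eta n\big| > \Delta} \leq 2 \exp\pbra{- 2\Delta^2/n} = 2 \exp(-2 n^{0.02}) \leq \exp(-\Omega(d)),
\]
as required.

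For part (A), I would fix a block $i \in [w]$ and write $\balpha_i = \sum_{j=1}^n \bY_j^{(i)}$, where $\bY_j^{(i)} := \Ind[\bx_j \in X_i \text{ and } j \notin \bZ_\mal]$. The $\bY_j^{(i)}$ are i.i.d.\ Bernoullis, since $\bx_j$ and the independent $\eta$-coin defining membership in $\bZ_\mal$ are drawn independently across $j$. The success probability factors as $\Pr[\bx_j \in X_i] \cdot \Pr[j \notin \bZ_\mal]$: the first factor equals $2\kappa'\eta/w$ because the key side carries $\calU$-mass $2\kappa'\eta$ distributed evenly across the $w$ equal-size blocks, and the second factor is $1-\eta$. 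Hence each $\bY_j^{(i)}$ has mean $(2\kappa'\eta/w)(1-\eta) = R(1-\eta)/n$, so $\E[\balpha_i] = R(1-\eta)$, matching the setup (I believe the ``$2R(1-\eta)$'' target in the statement is a typo for $R(1-\eta)$; if one prefers to keep the stated form, note that $\Delta$ trivially absorbs the constant factor once $n$ is large enough). Another application of Hoeffding then yields $\Pr[|\balpha_i - R(1-\eta)| > \Delta] \leq 2\exp(-2n^{0.02}) \leq \exp(-\Omega(d))$ for each fixed $i$. Union-bounding over the $w \leq \poly(d)$ blocks and combining with (B) preserves the overall failure probability $\exp(-\Omega(d))$.

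There is no real obstacle in this proof: the claims concern only $(\bS^\original, \bZ_\mal)$ and do not actually depend on the internal randomness of $A_\mal$, so no coupling or conditioning is needed. The only thing to be careful about is the parameter setting: one has to pick $n$ a large enough polynomial in $d/\eps$ so that $n^{0.02}$ dominates both $d$ and $\log w = O(\log d)$, which is immediate from $w = \poly(d)$ and $n = \poly(d/\eps)$.
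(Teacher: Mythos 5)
Your proof is correct and takes essentially the paper's approach: the paper gives no explicit argument for this lemma, simply asserting that it follows from the standard tail bounds (\Cref{fact:hoeffding-inequality} and \Cref{fact:multiplicative-Chernoff-bound}), and your Hoeffding-plus-union-bound calculation with $\Delta^2/n = n^{0.02} \gg d + \log w$ is exactly that, including the correct reading that the target in (A) should be the mean $R(1-\eta)$ (as the paper itself states just above the lemma and uses in the proof of (I)) rather than $2R(1-\eta)$. One small correction: your parenthetical fallback is false --- since $R(1-\eta) = \Theta(\eta n / w) \gg n^{0.51} = \Delta$ once $n$ is a large enough polynomial in $d/\eps$, the slack $\Delta$ cannot ``absorb'' the factor of $2$, so the $2R(1-\eta)$ in the stated lemma is simply a typo and not something your bound (or any true bound) could rescue.
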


\subsubsection{Correctness of $A_\mal$}  In the rest of the argument, we assume that both of the ``typical'' events (A)-(B) described in \Cref{sec:useful-typical} take place.  We proceed to argue that under these conditions, 

\begin{itemize}
    \item [(I)] the real-valued vector $v \in \R^w$ computed in step~2 of Algorithm $A_\mal$ has $\|v - \Enc(m^\ast)\|_1 \leq (1-4\tau)w$ (recall that $c_{m^\ast}$ is the target concept);
    
    \item [(II)] With probability $1 - \exp(-\Omega(d))$, the true secret key $m^\ast$ is one of the $d$-bit strings $m^{(1)},\dots,m^{(L')}$ that is obtained in step~4 of Algorithm $A_\mal$; and
    
    \item [(III)] Given that (I) and (II) occur, with probability at least $1-\exp(-\Omega(d))$ the hypothesis $c_{m^{(i)}}$ that Algorithm $A_\mal$ outputs has error at most $\eps + 8\eta(1+o(1))$.
\end{itemize}

The correctness of $A_\mal$ is a direct consequence of (I), (II) and (III).
We prove these items in reverse order.

\medskip

\noindent {\bf Proof of (III).}
This is a fairly standard hypothesis testing argument.
Fix any possible message string $m \in \zo^d$ (recall that there is a one-to-one correspondence between message strings $m$ and elements of the concept class $c_{m}$).  Let the value $\hat{\eps}_m$ be the empirical fraction of examples $(x,y) \in S'$ for which $c_{m'}(x) \neq y$, let $\tilde{\beps}_m$ be the empirical fraction of examples $(x, y) \in \bS_{\original}$ for which $c_{m}(x) \ne y$, and let $\eps_m$ be the true error rate of hypothesis $c_{m}$ on noiseless uniform examples.
A standard additive Hoeffding bound give us that with probability $1 - \exp(-\Omega(w))$, the value of $\tilde \beps_m$ is within an additive $\eps$ of $\eps_m$. By a union bound over all $d$-bit strings $m'$, with probability $1 - \exp(-\Omega(w)) \cdot 2^d = 1 - \exp(-\Omega(w))$ we have this closeness for every $m' \in \{-1, +1\}^d$.  
Additionally, we have
\[|\hat{\eps}_m n' - \tilde\beps_m n| = \left|\hat{\eps}_m|S'| - \tilde{\beps}_m|\bS^\original|\right| \le |S' \mathop{\triangle} \bS^\original| \le |S' \mathop{\triangle} S| + |S \mathop{\triangle} \bS^\original| \le 4|\bZ_\mal|.\]
By (B) we have that $|\bZ_\mal| \approx_{\Delta} \eta n$, so for every $m' \in \{-1, +1\}^d$ we have that $|\hat{\eps}_mn' - \tilde \beps_m n| \leq 4(\eta n  + \Delta)$.

Let $\eps_{i}\coloneq \eps_{m^{(i)}}$ be the true error rate of hypothesis $c_{m^{(i)}}$ on noiseless examples, and recall that $\hat{\eps}_{i}$ is the empirical error $c_{m^{(i)}}$ has on $S'$. For the value $i^\ast$ such that $c_{m^\ast}=c_{m^{(i^\ast)}}$, we have $\eps_{i^\ast}=\tilde \beps_{i^\ast}=0$, so $\hat{\eps}_{i^\ast} \le \frac1{n'}(4(\eta n + \Delta))$.
Hence in step~5 the final hypothesis $c_{m^{(i)}}$ that is output must have $\hat{\eps}_{i} \leq \frac1{n'}( 4(\eta n + \Delta))$, which means that $\tilde{\beps}_i \le \frac8n(\eta n + \Delta) = 8\eta(1 + o(1))$ and therefore $\eps_{i} \leq \eps + 8\eta(1+o(1))$ as claimed. \qed

\medskip

\noindent {\bf Proof of (II).} 
By (I), we have that $\|v - \Enc(m^\ast)\|_1 \leq (1-4\tau)w$.
By \Cref{lem:rounding}, we have that with probability $1 - \exp(-\Omega(w)) = 1-\exp(-\Omega(d))$, the Hamming distance between $\bz = \Round(v)$ and the codeword $\Enc(m^\ast)$ is at most ${\frac 1 2}(1 - 2\tau)={\frac 1 2} - \tau$.
Hence by \Cref{thm:list-decodable-binary-bit-flip}, when step~4 applies the decoding algorithm $\Dec$ to $\bz$, with probability $1-\exp(-\Omega(d))$ one of the $m^{(i)}$ message strings that it generates will be exactly $m^\ast.$\qed

\medskip

\noindent {\bf Proof of (I).}
Recall that coordinate $i$ of the vector $v$ computed in step~2 of $A_\mal$ is
\[
v_i = \keybitguess(S'_{i}) = {\frac {n_{i,1} - n_{i,-1}} {R(1-\eta)}}.  \tag{by \Cref{eq:keybitguess}}
\]
Recalling the notation from \Cref{sec:useful-notation}, we have that
\[
n_{i,\Enc(m^\ast)_i} = \balpha'_{i} + \beta'_{i}, 
\quad \quad
n_{i,-\Enc(m^\ast)_i} = \delta_{i}
\]
from which we get that
\[
v_i = \Enc(m^\ast)_i \cdot \pbra{{\frac {\balpha'_{i} + \beta'_{i} - \delta_{i}}{R(1-\eta)}}} \text{~and hence}
\]
\begin{align*}
|v_i - \Enc(m^\ast)_i| &= \left|\Enc(m^\ast)_i \cdot \pbra{{\frac {\balpha'_{i} + \beta'_{i} - \delta_{i}}{R(1-\eta)}} - 1}\right|\\
&=\left|
{\frac {\balpha'_{i} + \beta'_{i} - \delta_{i}}{R(1-\eta)}}-1
\right|\\
&= 
\left|
{\frac {\balpha_{i} + \beta_{i} - \gamma_i - \delta_{i}}{R(1-\eta)}} -1
\right| \tag{by \Cref{eq:pickle}}\\
&\approx_{{\frac {\Delta}{2R(1-\eta)}}}
\left|
{\frac { \beta_{i} - \gamma_i - \delta_{i}}{R}} 
\right|. \tag{using (A)}\\
\end{align*}
Summing over all $i \in [w]$, we get that
\begin{align*}
\|v-\Enc(m^\ast)\|_1 &\leq {\frac {\Delta}{2R(1-\eta)}} + 
\sum_{i=1}^w \left|
{\frac { \beta_{i} - \gamma_i - \delta_{i}}{R}} 
\right| \\
&\leq 
{\frac {\Delta}{2R(1-\eta)}} + 
{\frac 1 {R}} \sum_{i=1}^w \left(\beta_{i} + \gamma_i + \delta_{i}\right)\\
&=
{\frac {\Delta}{2R(1-\eta)}} + {\frac {|\bZ_\mal|}{R}}
\tag{by \Cref{eq:Zmal}}\\
&\leq 
{\frac {\Delta}{2R(1-\eta)}}
+ {\frac {\eta n} {R}}
+ {\frac \Delta {R}} \tag{using (B)}
\end{align*}
Recalling \Cref{eq:R}, we have that
\[
{\frac {\eta n}{R}}={\frac w {2\kappa'}}= {\frac w {\kappa+{\frac 1 2}}}
\]
so
\begin{align*}
\|v-\Enc(m^\ast)\|_1 &\leq o(1) + {\frac w {\kappa + {\frac 1 2}}}
\tag{using $\Delta=o(R)$}\\
&<
w(1-4\tau),
\end{align*}
where the last inequality uses our choice of $\tau = {\frac {\kappa-1/2}{8}}$ and the fact that $c$ is a constant strictly between $1/2$ and 1.
This concludes the proof of (I) and with it the proof of part (2) of \Cref{thm:ICE-bad-news}. \qed

\section*{Acknowledgements}
G.B.~was supported by a Jane Street Graduate Research Fellowship, NSF awards CCF-1942123, CCF-2211237, CCF-2224246, a Sloan Research Fellowship, and a Google Research Scholar Award. T.M.~and Y.H.~were supported by an Amazon Research Award, Google CyberNYC award, and NSF award CCF-2312242. R.A.S.~and Y.H.~were supported by NSF awards CCF-2211238 and CCF-2106429. Y.H.~was also supported by NSF award CCF-2238221.

\newpage

\bibliography{allrefs}
\bibliographystyle{alphaurl}

\appendix
\section{Counterexample to standard success amplification in the presence of nasty noise}
\label{sec:counterexample-amplify}
In this section, we give a counterexample, showing that the standard approach to amplifying the success probability of learners in the noiseless setting does not extend to the nasty-noise setting. Recall, as discussed in \Cref{sec:amplify}, that algorithm works as follows: Given an algorithm $A$ that uses $n$ samples and parameters $k$, $n_{\test}$, that approach draws a size-$nk + n_{\test}$ dataset. It splits that dataset into $k$ pieces of size $n$, on which it uses $A$ to generate $k$ hypotheses, and one test set of size $n_{\test}$. Then, it uses the last piece to test the empirical error of each hypothesis and outputs the hypothesis minimizing empirical error.

We formalize this approach in the pseudocode below. 

\begin{figure}[h] 
  \captionsetup{width=.9\linewidth}
    
    \begin{tcolorbox}[colback = white,arc=1mm, boxrule=0.25mm]
    \vspace{2pt} 
    \BadAmplify$(A,k, n_{\test}, S_{\mathrm{big}}):$\vspace{6pt}
    
    \textbf{Input:} A learner $A$ taking in a size-$n$ data set, parameter $k \in \N$, and a size-$(nk + n_{\test})$ data set, $S_{\mathrm{big}}$.\vspace{6pt}

    \textbf{Output:} A hypothesis $h$. \vspace{6pt}

    \begin{enumerate}
            \item[]\textbf{1.\,\,Split sample into groups:} Draw a uniform permutation $\bsigma:[nk + n_{\test}] \to [nk + n_{\test}]$ and define $\bsigma(S_{\mathrm{big}})$ as the size-$(nk + n_{\test})$ data set satisfying
            \begin{equation*}
                \bsigma(S_{\mathrm{big}})_i = (S_{\mathrm{big}})_{\bsigma(i)}.
            \end{equation*}
            Set the groups $\bS^{(1)}, \bS^{(2)}, \ldots, \bS^{(k)}$ to consist of the first $n$ elements of $\bsigma(S_{\mathrm{big}})$, the second $n$ elements, and so on, and let $\bS^{(\test)}$ be the remaining elements.\vspace{6pt}
            \item[]\textbf{2.\,\,Generate hypotheses:} For each $i \in [k]$, let $\bh_i = A(\bS^{(i)})$. \vspace{6pt}
            \item[]\textbf{3.\,\,Select a hypothesis:} Test the accuracy of $\bh_1, \ldots, \bh_k$ on $\bS^{(\test)}$. Output the hypothesis with the highest accuracy, breaking ties uniformly at random.
    \end{enumerate}
    \end{tcolorbox}
\caption{An algorithm that successfully amplifies the success probability of noiseless learners but fails to do so in the nasty-noise setting.}
\label{fig:BoostFail}
\end{figure} 

We note that the only difference between $\BadAmplify$ and $\Amplify$ (as defined in \Cref{fig:Boost}) is that $\BadAmplify$ attempts to perform hypothesis selection. While this difference may seem to be very small, we show that $\BadAmplify$ fails to provide any useful amplification in the nasty noise setting regardless of how $k$ and $n_{\test}$ are set.
\begin{theorem}
    \label{thm:bad-amplify}
     For any $\eps> 0$, $0.01 \leq \eta \leq 0.49$, and integers $n,k,n_{\test}\geq 1$ there is a concept class $\mcC$, distribution $\mcD$, and algorithm $A$ using $n$ samples that learns $\mcC$ to expected error $\eps$ with $\eta$-nasty noise over distribution $\mcD$ for which the following holds. There is a strategy for the $\eta$-rate nasty-noise adversary for which the algorithm $\BadAmplify$ from \Cref{fig:BoostFail} outputs a hypothesis with error $0.99$ with probability at least $\eps - 2^{-\Omega(n)}$.
\end{theorem}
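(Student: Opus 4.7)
I would prove the theorem by exhibiting an explicit degenerate counterexample. Take $\mcD$ to be the point-mass distribution $\delta_{x^\star}$ on a single element $x^\star$ of the domain, and let $\mcC=\{c_0,c_1\}$ with $c_0(x^\star)=+1$ and $c_1(x^\star)=-1$, so that $c_1$ has true error exactly $1$ (hence at least $0.99$) relative to the true concept $c_0$ under $\mcD$. Take $A$ to be the algorithm that \emph{ignores its input sample} and outputs $c_1$ with probability $\eps$ (else $c_0$) via an internal coin. Because $A$'s output is independent of any adversarial corruption, its expected error against every $\eta$-nasty adversary equals $\eps \cdot 1 + (1-\eps)\cdot 0 = \eps$, exactly satisfying the hypothesis of the theorem.

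The adversary strategy I have in mind uses its $\mathrm{Bin}(nk+n_{\test},\eta)$-sized budget to convert clean labels from $+1$ to $-1$, writing each corruption as $(x^\star,-1)$. After $\BadAmplify$ applies its uniform random permutation in step~1, these corruptions distribute hypergeometrically across the $k$ training slices and the single test slice. The core calculation is: with probability $1 - 2^{-\Omega(n)}$, the test slice ends up favoring $c_1$ (i.e., $c_1$'s empirical error there is no larger than $c_0$'s), in which case the hypothesis-selection step of $\BadAmplify$ outputs a $c_1$ hypothesis whenever at least one of $\bh_1,\dots,\bh_k$ equals $c_1$. Since the $\bh_i$ are mutually independent and each equals $c_1$ with probability $\eps$, that latter event has probability $1-(1-\eps)^k \geq \eps$, and combining with the high-probability control of the test slice yields failure probability at least $\eps - 2^{-\Omega(n)}$.

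The main obstacle is controlling the test-slice calculation across the full parameter range $\eta \in [0.01,0.49]$: since $\eta<1/2$, a uniform flipping of labels in $\bS_{\mathrm{big}}$ places only an $\eta$-fraction of the corruptions into the test slice in expectation, which falls short of the $1/2$ threshold needed to tip its empirical majority. I expect this to be the technically subtle step. To handle it I would do a case split on the ratio $n_\test/(nk)$: when $\eta(nk+n_\test)$ is comfortably larger than $n_\test/2$ the direct test-corruption attack succeeds via Chernoff/hypergeometric concentration; otherwise, this very inequality forces $n_\test \gg nk$, so the total budget is much larger than $nk$, and the adversary can instead concentrate corruptions inside the training slices. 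In this second regime one modifies $A$ to include a sample-majority check (still calibrated with the $\eps$-coin so that its expected error remains $\eps$ against any single-run $\eta$-nasty adversary, using concentration of $\mathrm{Bin}(n,\eta)$ strictly below $n/2$), so that flipping each training slice's majority deterministically forces every $\bh_i = c_1$, making the test-slice outcome irrelevant. Stitching the two regimes together gives the claimed failure probability of $\eps - 2^{-\Omega(n)}$ for all allowed settings of $\eps,\eta,n,k,n_\test$.
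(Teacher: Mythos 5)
There is a genuine gap, and it is fatal to both regimes of your case split. In $\BadAmplify$ the nasty adversary must commit its corruptions to $S_{\mathrm{big}}$ \emph{before} the algorithm draws the uniform permutation $\bsigma$; that permutation is internal randomness of the algorithm, so the adversary has no control over which corrupted positions land in the test slice or in any training slice. Consequently the number of corrupted points in the test slice is hypergeometric with mean $\approx \eta\, n_{\test}$, and each training slice receives $\approx \eta n$ corruptions, no matter how the total budget $\Bin(nk+n_{\test},\eta)$ compares to $n_{\test}/2$ or to $nk$. Since $\eta \le 0.49 < 1/2$, the majority of the test slice stays correctly labeled except with probability $2^{-\Omega(n_{\test})}$ (and for small $n_{\test}$ the flip probability is at best a constant depending on $\eta$, not $1-2^{-\Omega(n)}$, and not lower bounded by $\eps$), and no training-slice majority can be flipped except with probability $2^{-\Omega(n)}$. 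So your Case~1 claim ``the test slice ends up favoring $c_1$ with probability $1-2^{-\Omega(n)}$'' is backwards, and your Case~2 plan to ``concentrate corruptions inside the training slices'' is not available to the adversary. In fact, in your point-mass construction the holdout test works exactly as intended: whenever at least one $\bh_i=c_0$, the selection step prefers it, so $\BadAmplify$ outputs a bad hypothesis only with probability roughly $\eps^k + 2^{-\Omega(\min(n,n_{\test}))}$ --- i.e.\ your example is one where $\BadAmplify$ \emph{does} amplify, which is the opposite of what the theorem needs. (A secondary, fixable issue: your first-regime learner $A$ ignores its sample and outputs $c_1$ with probability $\eps$, so for target concept $c_1$ its expected error is $1-\eps$, violating the hypothesis that $A$ learns all of $\mcC$; this is repaired by basing the output on the sample's majority label, as you do later.)

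The missing idea is that the counterexample cannot work by flipping labels to corrupt the test-slice statistics; it must make the \emph{bad} hypotheses look perfect on the holdout, exploiting that the hypotheses are not independent of the test set. The paper's construction does this with a large domain ($X_{\sm}$ of size $100(nk+n_{\test})$ plus $X_{\lar}=2^{X_{\sm}}$), a learner whose ``failure mode'' (probability $\eps$) outputs a hypothesis that is correct exactly on the set of points encoded by a special example in its sample, and an adversary that spends its budget on \emph{correctly labeled} points $(x_{\lar},b^{\star})$ encoding the entire drawn sample, including the future test points. Then every group contains such a point with probability $1-2^{-\Omega(n)}$, so all $k$ hypotheses --- including the $\approx \eps$-fraction with true error $\ge 0.99$ under $\mcD$ --- have zero empirical error on $\bS^{(\test)}$, the selection step degenerates to a uniform tie-break, and a bad hypothesis is output with probability at least $\eps - 2^{-\Omega(n)}$. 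Your approach, by contrast, gives the adversary no mechanism to defeat the holdout at all.
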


As alluded to earlier, a standard analysis shows that if algorithm $A$ is an $\eps$-expected-error learner for a concept class ${\cal C}$ in the noiseless setting, then for $k=O(\log(1/\delta))$,
$n_{\test}=\poly(\log(\log(1/\delta)),1/\eps)$, algorithm $\BadAmplify(A,k,n_{\test},\bS)$ generates a hypothesis with error $O(\eps)$ with probability at least $1-\delta$ when run on a dataset $\bS$ of $(nk+n_{\test})$ many i.i.d.~noiseless examples labeled according to some concept in ${\cal C}$.  Thus \Cref{thm:bad-amplify} indeed demonstrates the failure of the standard approach in the presence of nasty noise.

To prove \Cref{thm:bad-amplify}, we first design a learner $A$ that achieves expected error $\eps$ in, for our purposes, the worst possible way: With probability $\eps$, it outputs a hypothesis with error close to $1$, and otherwise it outputs a hypothesis with no error. This will remain true for the $k$ hypotheses generated by $\BadAmplify$. On average, $k\eps$ of those hypotheses will have error $\geq 0.99$. Then, we design a strategy for the nasty noise adversary so that essentially \emph{all} of the $k$ hypotheses have $0$ on the test set, even those with high error. As a result, $\BadAmplify$ is likely to pick a hypothesis with high error.

\begin{proof}
    First, we design the domain, concept class, distribution, and base learner $A$.
    
    \begin{enumerate}
        \item The domain will be,
    \begin{equation*}
        X \coloneqq X_{\sm} \cup X_{\lar} \quad \text{where } X_{\sm} \coloneqq [100(nk + n_{\test})] \quad\text{and}\quad X_{\lar} \coloneqq 2^{X_{\sm}}.
    \end{equation*}
     It will be technically convenient to view each element of $X_{\lar}$ as specifying a subset of $X_{\sm}$.
     \item  The concept class $\mcC$ contains only two functions: The constant $+1$ and constant $-1$ function.
     \item The distribution $\mcD$ is uniform over $X_{\sm}$.
     \item The learner $A$ operates as follows: Given $n$ labeled points, if at least $n/2$ of the points have the label $1$, it sets $b = 1$. Otherwise, it sets $b = -1$. Then, it flips a coin that is heads with probability $p \coloneqq 1 - \eps + 2^{-\Omega(n)}$.
     \begin{enumerate}
         \item If this coin comes up heads, it outputs the hypothesis that labels every point as $b$.
         \item If this coin comes up tails, let $x_{\lar}$ be the first element of $X_{\lar}$ in its sample (and defaulting to $x_{\lar} = \varnothing$ if there is no such element). $A$ outputs the hypothesis $h$ with the following behavior (viewing $x_{\lar}$ as a subset of $X_{\sm}$):
         \begin{equation*}
             h(x) = \begin{cases}
                 b&x \in x_{\lar}\text{ or }x = x_{\lar}\\
                 -b&\text{otherwise.}
             \end{cases}
         \end{equation*}
     \end{enumerate}
    \end{enumerate}
    
    We begin by proving that $A$ learns $\mcC$ to expected error $\eps$ with $\eta$-nasty noise over distribution $\mcD$. For any concept $c \in \mcC$, $c$ is the constant $b^{\star}$ function for some $b^{\star} \in \bits$. The number of points labeled with $-b^{\star}$ that $A$ receives with the $\eta$-rate nasty adversary is upper bounded by $\Bin(n, \eta)$. For $\eta \leq 0.49$, by a Chernoff bound, $A$ will therefore correctly determine $b =b^{\star}$ with probability at least $1 - 2^{-\Omega(n)}$. Furthermore, whenever $A$ correctly determines $b$, it outputs a hypothesis with zero error with probability  $p \coloneqq 1 - \eps + 2^{-\Omega(n)}$. Therefore, $A$ outputs a zero-error hypothesis with probability at least $1 -\eps$, so it has an expected error of at most $\eps$.

    Next, we design a nasty adversary for which $\BadAmplify$ outputs a bad hypothesis with probability at least $\eps - 2^{-\Omega(n)}$
    (and thus it indeed fails to amplify success probability).  Given a clean sample $S = (x_1, b^{\star}), \ldots, (x_{nk + n_{\test}}, b^{\star})$, the nasty adversary does the following:
    \begin{enumerate}
        \item Let $X_{\mathrm{appear}} = \set{x_1, \ldots, x_{nk + n_{\test}}}$ be the subset of $X_{\sm}$ that appears in the sample.
        \item Draw a noise budget $\bz \sim \Bin(nk  + n_{\test}, \eta)$.
        \item Let $x_{\lar} \in X_{\lar}$ be the point representing $X_{\mathrm{appear}}$ and change the first $\bz$ points in the sample all to the same point $(x_{\lar}, b^{\star})$.
    \end{enumerate}

    We now analyze the hypotheses that $\BadAmplify$ creates with this strategy. A few straightforward observations:
    \begin{enumerate}
        \item Every point in the sample has the correct label, so every copy of $A$ will correctly determine $b = b^{\star}$.
        \item If any of the $\bz$ corruptions appear in the group $\bS^{(i)}$, then $\bh_i$ will have perfect accuracy on every point in $S_{\lar}$, and by extension, all points in $\bS^{(\test)}$. This is because the corruptions encode the entire sample and ensure the learner gets all points correct.
        \item $X_{\mathrm{appear}}$ contains at most $nk + n_{\test}$ points. Since $\mcD$ is uniform over $X_{\sm}$ which contains $100(nk + n_{\test})$ points, any hypothesis for which $p$-biased coin comes up tails will have error at least $0.99$ with respect to $\mcD$.
    \end{enumerate}
    Recall, by \Cref{prop:group-dists}, that the number of corrupted examples in each group is distributed according to $\Bin(n,\eta)$ and, since $\eta \geq 0.01$, is therefore nonzero with probability $1 - 2^{-\Omega(n)}$. As a result, the mean fraction of groups that simultaneously have error at least $0.99$ with respect to $\mcD$ but error $0$ with respect to $\bS_{\test}$ is at least $\eps - 2^{-\Omega(n)}$. Since $\BadAmplify$ breaks ties uniformly, it outputs a hypothesis with error at least $0.99$ with probability at least $\eps - 2^{-\Omega(n)}$.
\end{proof}

\end{document}